\newcommand{\toggleformat}[2]{#1}
\newcommand{\togglevspace}[2]{\vspace{\toggleformat{#1}{#2}}}
\newtheorem{theorem}{Theorem}
\newtheorem{lemma}[theorem]{Lemma}
\newtheorem{definition}[theorem]{Definition}
\newtheorem{proposition}[theorem]{Proposition}
\newtheorem{remark}[theorem]{Remark}
\newcommand{\beq}{\begin{equation}}
\newcommand{\eeq}{\end{equation}}
\newcommand{\bea}{\begin{eqnarray}}
\newcommand{\eea}{\end{eqnarray}}
\newcommand{\ba}{\begin{array}}
\newcommand{\calB}{{\cal B}}
\newcommand{\calE}{{\cal E}}
\newcommand{\calF}{{\cal F}}
\newcommand{\calG}{{\cal G}}
\newcommand{\calI}{{\cal I}}
\newcommand{\calM}{{\cal M}}
\newcommand{\calO}{{\cal O}}
\newcommand{\calT}{{\cal T}}
\newcommand{\calV}{{\cal V}}
\newcommand{\smallheading}[1]{\textit{#1}: }
\newcommand{\eg}{\emph{e.g.,}\xspace}
\newcommand{\ie}{\emph{i.e.,}\xspace}
\newcommand{\myParagraph}[1]{{\bf #1.}\xspace}
\newcommand{\M}[1]{{\bm #1}} \renewcommand{\boldsymbol}[1]{{\bm #1}}
\newcommand{\hide}[1]{}
\newcommand{\hiddenText}{{\color{gray} hidden text.}}
\newcommand{\hideWithText}[1]{\hiddenText}
\newcommand{\Natural}[1]{ { {\mathbb N}^{#1} } }
\DeclareMathOperator*{\argmin}{arg\,min}
\newcommand{\normsq}[2]{\left\|#1\right\|^2_{#2}}
\newcommand{\norm}[1]{\left\| #1 \right\|}
\newcommand{\tran}{^{\mathsf{T}}}
\newcommand{\diag}[1]{\mathrm{diag}\left(#1\right)}
\newcommand{\trace}[1]{\mathrm{tr}\left(#1\right)}
\newcommand{\inv}{^{-1}}
\newcommand{\setdef}[2]{ \{#1 \; {:} \; #2 \} }
\newcommand{\SEthree}{\ensuremath{\mathrm{SE}(3)}\xspace}
\newcommand{\ME}{\M{E}}
\newcommand{\MM}{\M{M}}
\newcommand{\MT}{\M{T}}
\newcommand{\MX}{\M{X}}
\newcommand{\MOmega}{\M{\Omega}}
\newcommand{\vxx}{\boldsymbol{x}}
\newcommand{\blue}[1]{{\color{blue}#1}}
\newcommand{\linkToPdf}[1]{\href{#1}{\blue{(pdf)}}}
\newcommand{\linkToPpt}[1]{\href{#1}{\blue{(ppt)}}}
\newcommand{\linkToCode}[1]{\href{#1}{\blue{(code)}}}
\newcommand{\linkToWeb}[1]{\href{#1}{\blue{(web)}}}
\newcommand{\linkToVideo}[1]{\href{#1}{\blue{(video)}}}
\newcommand{\linkToMedia}[1]{\href{#1}{\blue{(media)}}}
\newcommand{\award}[1]{\xspace}
\newcommand{\TD}[1]{\text{TD}\ensuremath{\left[ #1 \right]}}
\newcommand{\layers}{\ell}
\newcommand{\CliqueSetOf}[1]{\ensuremath{\mathcal{C}\left(#1\right)}}
\newcommand{\subgraph}{sub-graph\xspace}
\newcommand{\subgraphs}{sub-graphs\xspace}
\newcommand{\Graph}{\ensuremath{\mathcal{G}}\xspace}
\newcommand{\Nodes}{\ensuremath{\mathcal{V}}}
\newcommand{\Edges}{\ensuremath{\mathcal{E}}}
\newcommand{\bag}{\calB}
\newcommand{\Tree}{\ensuremath{\calT}}
\newcommand{\treeDecomposition}{tree decomposition\xspace}
\newcommand{\TreeDecomposition}{Tree Decomposition\xspace}
\newcommand{\Treedecomposition}{Tree decomposition\xspace}
\newcommand{\treeDecompositions}{tree decompositions\xspace}
\newcommand{\treewidth}[1]{\ensuremath{\text{tw}\left[ #1 \right]}}
\newcommand{\JTH}[1]{\ensuremath{\mathcal{J}_{#1}}}
\newcommand{\TSDF}{TSDF\xspace}
\newcommand{\ESDF}{ESDF\xspace}
\newcommand{\GVD}{GVD\xspace}
\newcommand{\ESDFs}{ESDFs\xspace}
\newcommand{\name}{Hydra\xspace}
\newcommand{\igx}{\emph{GT-Trajectory}\xspace}
\newcommand{\ivx}{\emph{VIO}\xspace}
\newcommand{\ivl}{\emph{VIO+V-LC}\xspace}
\newcommand{\ivd}{\emph{VIO+SG-LC}\xspace}
\newcommand{\ivn}{\emph{VIO+GNN-LC}\xspace}
\newcommand{\percFound}{\emph{\% Found}\xspace}
\newcommand{\percCorrect}{\emph{\% Correct}\xspace}
\newcommand{\positionError}{\emph{Position Error}\xspace}
\newcommand{\bigo}[1]{\ensuremath{\mathcal{O}\left( #1\right)}}
\newcommand{\nrSymbols}{L}
\newcommand{\dplace}{d^p}
\newcommand{\placeGraph}{\calG_p}
\newcommand{\sgf}{SceneGraphFusion}
\newcommand{\HydraGT}{\name{} (GT)}
\newcommand{\HydraBest}{\name{} (OneFormer)}
\newcommand{\mb}[1]{\SI{#1}{\mebi\byte}}
\newcommand{\semantic}[1]{{\emph{#1}}}
\newcommand{\mlp}[1]{$\mathcal{M}\left(#1\right)$}
\newcommand{\hydraURL}{\url{https://github.com/MIT-SPARK/Hydra}\xspace}
\newcommand{\videoURL}{\url{https://youtu.be/AEaBq2-FeY0}\xspace}
\begin{document}

\toggleformat{
\title{Foundations of Spatial Perception for Robotics: \\
  {Hierarchical Representations and Real-time Systems} }
  \author{Nathan Hughes, Yun Chang, Siyi Hu, Rajat Talak, Rumaisa Abdulhai, Jared Strader, Luca Carlone\thanks{The authors are with the Laboratory for Information \& Decision Systems (LIDS), Massachusetts Institute of Technology, Cambridge, USA\@.
          {Email: \{na26933, yunchang, siyi, talak, rumaisa, jstrader, lcarlone\}@mit.edu}}
  \thanks{This work was partially funded by the AIA CRA FA8750-19-2-1000,
    ARL DCIST CRA W911NF-17-2-0181,
    ONR RAIDER N00014-18-1-2828,
    MIT Lincoln Laboratory's Autonomy al Fresco program,
    Lockheed Martin Corporation’s Neural Prediction in 3D Dynamic Scene
    Graphs program, and by Carlone's Amazon Research Award.
}
}
}{
\title{Foundations of spatial perception for robotics: Hierarchical representations and real-time systems}
\author{Nathan Hughes, Yun Chang, Siyi Hu, Rajat Talak, Rumaia Abdulhai, Jared Strader, and Luca Carlone}
\affiliation{The authors are with the Laboratory for Information and Decision Systems (LIDS), Massachusetts Institute of Technology, Cambridge, USA.}
\corrauth{Nathan Hughes, Laboratory for Information and Decision Systems, 77 Massachusetts Avenue, Cambridge, MA 02139}
\email{na26933@mit.edu}
\runninghead{Hughes et al.}

\begin{abstract}
3D spatial perception is the problem of building and maintaining an actionable and persistent representation of the environment in real-time using sensor data and prior knowledge.
Despite the fast-paced progress in robot perception, most existing methods either build purely geometric maps (as in traditional SLAM) or ``flat'' metric-semantic maps that do not scale to large environments or large dictionaries of semantic labels.
The first part of this paper
is concerned with representations:
we show that scalable representations for spatial perception need to be \emph{hierarchical} in nature.
Hierarchical representations are efficient to store, and lead to layered graphs with small \emph{treewidth}, which enable provably efficient inference.
We then introduce an example of hierarchical representation for indoor environments, namely a \emph{3D scene graph}, and discuss its structure and properties.
The second part of the paper focuses on algorithms to incrementally construct a 3D scene graph as the robot explores the environment. Our algorithms combine 3D geometry (\eg to cluster the free space into a graph of places), topology (to cluster the places into rooms), and
geometric deep learning (\eg to classify the type of rooms the robot is moving across).
The third part of the paper focuses on algorithms to maintain and correct 3D scene graphs during long-term operation.
We propose hierarchical descriptors for loop closure detection and describe how to correct a scene graph in response to loop closures, by solving a \emph{3D scene graph optimization problem}.
We conclude the paper by combining the proposed perception algorithms into \emph{Hydra}, a real-time spatial perception system that builds a 3D scene graph  from visual-inertial data in real-time.
We showcase Hydra's performance in photo-realistic simulations and real data collected by a Clearpath Jackal robots and a Unitree A1 robot. We release an open-source implementation of Hydra at \url{https://github.com/MIT-SPARK/Hydra}.
\end{abstract}

 \keywords{SLAM, Spatial Perception, 3D Scene Graphs, Graph Learning, Computer Vision}
}

\maketitle

\toggleformat{

\begin{abstract}
3D spatial perception is the problem of building and maintaining an actionable and persistent representation of the environment in real-time using sensor data and prior knowledge.
Despite the fast-paced progress in robot perception, most existing methods either build purely geometric maps (as in traditional SLAM) or ``flat'' metric-semantic maps that do not scale to large environments or large dictionaries of semantic labels.
The first part of this paper
is concerned with representations:
we show that scalable representations for spatial perception need to be \emph{hierarchical} in nature.
Hierarchical representations are efficient to store, and lead to layered graphs with small \emph{treewidth}, which enable provably efficient inference.
We then introduce an example of hierarchical representation for indoor environments, namely a \emph{3D scene graph}, and discuss its structure and properties.
The second part of the paper focuses on algorithms to incrementally construct a 3D scene graph as the robot explores the environment. Our algorithms combine 3D geometry (\eg to cluster the free space into a graph of places), topology (to cluster the places into rooms), and
geometric deep learning (\eg to classify the type of rooms the robot is moving across).
The third part of the paper focuses on algorithms to maintain and correct 3D scene graphs during long-term operation.
We propose hierarchical descriptors for loop closure detection and describe how to correct a scene graph in response to loop closures, by solving a \emph{3D scene graph optimization problem}.
We conclude the paper by combining the proposed perception algorithms into \emph{Hydra}, a real-time spatial perception system that builds a 3D scene graph  from visual-inertial data in real-time.
We showcase Hydra's performance in photo-realistic simulations and real data collected by a Clearpath Jackal robots and a Unitree A1 robot. We release an open-source implementation of Hydra at \url{https://github.com/MIT-SPARK/Hydra}.
\end{abstract}

 }{}

\section{Introduction}\label{sec:introduction}
The next generation of robots and autonomous systems will need to build
actionable, metric-semantic, multi-resolution, persistent representations of large-scale unknown environments in real-time.
\emph{Actionable} representations are required for a robot to understand and execute complex humans instructions  (\eg{} ``bring me the cup of tea I left on the dining room table'').
These representations include both \emph{geometric and semantic} aspects of the environment (\eg{} to plan a path to the dining room, and understand where the table is); moreover, they should allow reasoning over relations between objects (\eg{} to understand what it means for the cup of tea to be \emph{on} the table).
These representations need to be \emph{multi-resolution}, in that they might need to capture information at multiple levels of abstractions (\eg{} from objects to rooms, buildings, and cities) to interpret human commands  and enable fast planning (\eg{} by allowing planning over compact abstractions rather than dense low-level geometry).
Such representations must be built in \emph{real-time} to support just-in-time decision-making.
Finally, these representations must be \emph{persistent} to support long-term autonomy:
(i) they need to scale to large environments,
(ii) they should allow fast inference and corrections as new evidence is collected by the robot, and
(iii) their size should only grow with the size of the environment they model.

\begin{figure}
    \centering
\includegraphics[trim={3mm, 0, 2mm, 0}, clip, width=1.02\columnwidth]{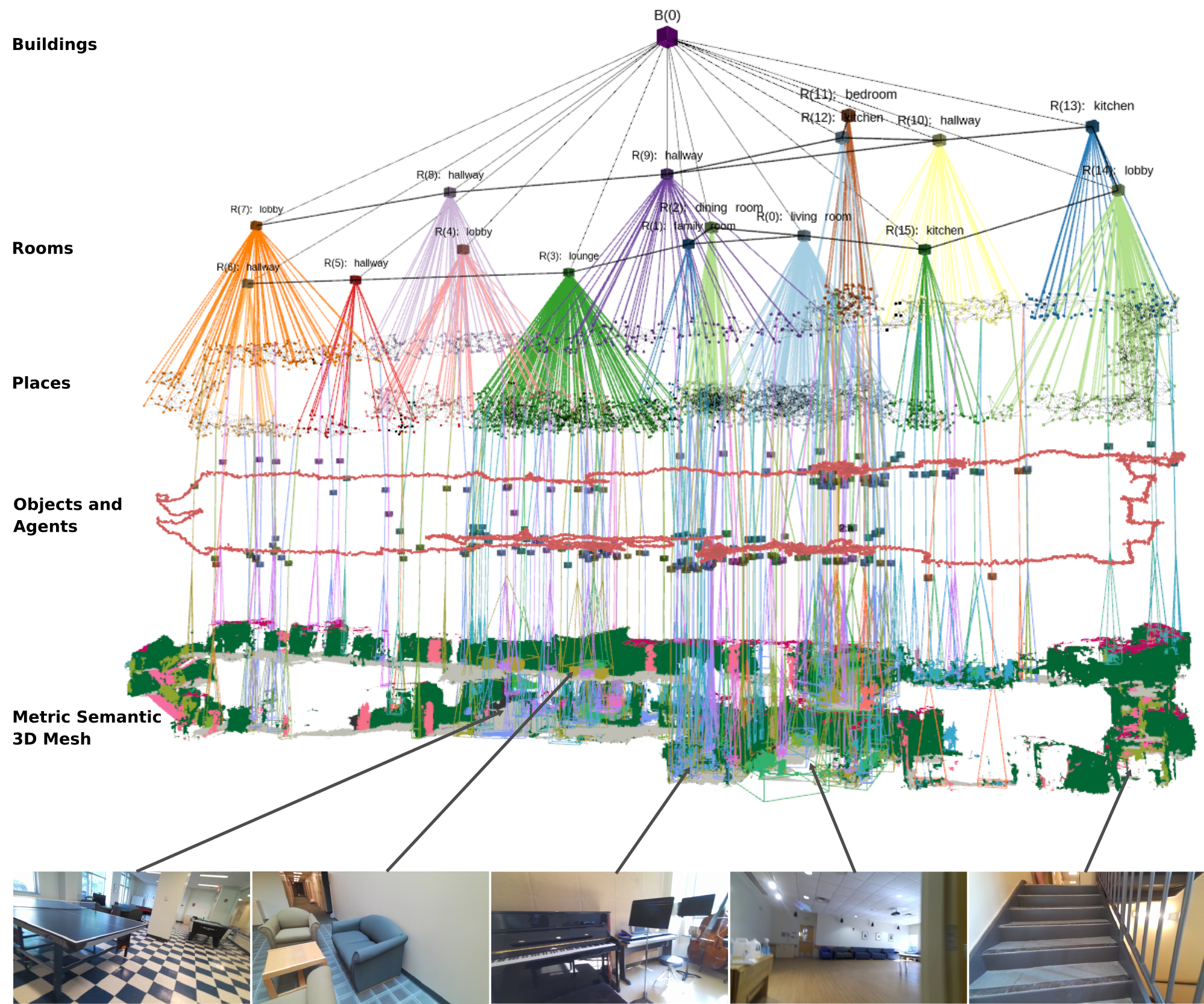}
\caption{We introduce \emph{\name}, a highly parallelized system that builds 3D scene graphs from sensor data in real-time, by combining geometric reasoning (e.g., to build a 3D mesh and cluster the free space into a graph of places), with topology (to cluster the places into rooms), and geometric deep learning (e.g., to classify the type of rooms the robot is moving across).
    The figure shows sample input data and the 3D scene graph created by \name{} in a large-scale real environment.}\label{fig:intro}
     \togglevspace{0mm}{0mm}
\end{figure}

{\bf 3D spatial perception} (or Spatial AI~\citep{Davison18-futuremapping}) is the problem of building actionable and persistent representations from sensor data and prior knowledge in real-time.
This problem is the natural evolution of Simultaneous Localization and Mapping (SLAM), which also focuses on building persistent map representations in real-time, but is typically limited to geometric understanding.
In other words, if the task assigned to the robot is purely geometric (\eg{} ``go to position [X, Y, Z]''), then spatial perception reduces to SLAM and 3D reconstruction, but as the task specifications become more advanced (\eg{} including semantics, relations, and affordances), we obtain a much richer problem space and SLAM becomes only a component of a more elaborate spatial perception system.

The pioneering work~\citep{Davison18-futuremapping} has introduced the notion of spatial AI.
Indeed the requirements that spatial AI has to build actionable and persistent representations can  already be found in~\cite{Davison18-futuremapping}. In this paper we take a step further by arguing that
such representations must be \emph{hierarchical} in nature, since a suitable hierarchical organization reduces storage during long-term operation and leads to provably efficient inference.
Moreover, going beyond the vision in~\cite{Davison18-futuremapping}, we discuss how to
combine different tools (metric-semantic SLAM, 3D geometry, topology, and geometric deep learning) to implement  a real-time spatial perception system for indoor environments.

\myParagraph{Hierarchical Representations}
3D Scene Graphs~\citep{Rosinol21ijrr-Kimera,Armeni19iccv-3DsceneGraphs,Rosinol20rss-dynamicSceneGraphs,Kim19tc-3DsceneGraphs,Wald20cvpr-semanticSceneGraphs,Wu21cvpr-SceneGraphFusion} have recently emerged as expressive hierarchical representations of 3D environments.
A 3D scene graph (\eg \cref{fig:intro}) is a layered graph where nodes represent spatial concepts at multiple levels of abstraction (from low-level geometry to objects, places, rooms, buildings, etc.) and edges represent relations between concepts.
\citet{Armeni19iccv-3DsceneGraphs} pioneered the use of 3D scene graphs in computer vision and proposed the first algorithms to parse a metric-semantic 3D mesh into a 3D scene graph.
\citet{Kim19tc-3DsceneGraphs} reconstruct a 3D scene graph of objects and their relations.
\citet{Rosinol21ijrr-Kimera,Rosinol20rss-dynamicSceneGraphs} propose a novel 3D scene graph model that
(i) is built directly from sensor data,
(ii) includes a \subgraph of places (useful for robot navigation),
(iii) models objects, rooms, and buildings, and
(iv) captures moving entities in the environment.
Recent work~\citep{Wald20cvpr-semanticSceneGraphs, Wu21cvpr-SceneGraphFusion,Izatt21tr-sceneGraphs,Gothoskar21arxiv-3dp3} infers objects and relations from point clouds, RGB-D sequences, or object detections. In this paper, we formalize the intuition behind these works that suitable representations for robot perception have to be hierarchical in nature, and discuss guiding principles behind the choice of ``symbols'' we have to include in these representations.

\myParagraph{Real-time Systems}
While 3D scene graphs can serve as an advanced ``mental model'' for robots, methods to build such a rich representation in real-time remain largely unexplored.
The works~\citep{Kim19tc-3DsceneGraphs,Wald20cvpr-semanticSceneGraphs, Wu21cvpr-SceneGraphFusion} allow real-time operation but are restricted to ``flat'' 3D scene graphs that only include objects
and their relations while disregarding the top layers in \cref{fig:intro}.
The works~\citep{Rosinol21ijrr-Kimera,Armeni19iccv-3DsceneGraphs,Rosinol20rss-dynamicSceneGraphs}, which focus on building truly hierarchical representations, run offline and require several minutes to build a 3D scene graph (\cite{Armeni19iccv-3DsceneGraphs} even assumes the availability of a complete metric-semantic mesh of the environment built beforehand).
Extending our prior works~\citep{Rosinol21ijrr-Kimera,Rosinol20rss-dynamicSceneGraphs} to operate in real-time is non-trivial.
These works utilize an Euclidean Signed Distance Function (\ESDF) of the entire environment to build the scene graph.
Unfortunately, \ESDFs{} memory requirements  scale poorly in the size of the environment; see~\cite{Oleynikova17iros-voxblox} and \cref{sec:symbolGroundingAndHieRep}.
Moreover, the extraction of places and rooms in~\cite{Rosinol21ijrr-Kimera,Rosinol20rss-dynamicSceneGraphs} involves batch algorithms that process the entire \ESDF, whose computational cost grows over time and is incompatible with real-time operation.
Finally, the \ESDF{} is reconstructed from the robot trajectory estimate which keeps changing in response to loop closures.
The approaches~\citep{Rosinol21ijrr-Kimera,Rosinol20rss-dynamicSceneGraphs} would therefore need to rebuild the scene graph from scratch after every loop closure, clashing with real-time operation.

The present paper extends our prior work~\citep{Hughes22rss-hydra} and proposes the first
real-time system to build hierarchical 3D scene graphs of large-scale environments.
Following~\citep{Hughes22rss-hydra}, recent works has explored constructing \emph{situational graphs}~\citep{Bavle22ral-SGraph,Bavle22arxiv-SGraphPlus}, a hierarchical representation for scene geometry with layers describing free-space traversability, walls, rooms, and floors.
While related to this research line, the works~\citep{Bavle22ral-SGraph,Bavle22arxiv-SGraphPlus} focus on LIDAR-based systems, which mostly reason over geometric features (\eg{} walls, rooms, floors), but lack the rich semantics we consider in this paper (\eg{} objects and room labels).
We postpone a more extensive literature review to \cref{sec:relatedWork}.

\myParagraph{Contribution 1: Foundations of Hierarchical Representations (\cref{sec:symbolGroundingAndHieRep})}
We start by observing that flat metric-semantic representations scale poorly in the size of the environment and the size of the vocabulary of semantic labels the robot has to incorporate in the map.
For instance, a voxel-based metric-semantic map picturing the floor of an office building with ${40}$ semantic labels per voxel (as the one underlying the approaches of~\cite{Rosinol21ijrr-Kimera} and~\cite{Grinvald19ral-voxbloxpp}) already requires roughly \mb{450} to be stored.
Envisioning future robots to operate on much large scales (\eg{} an entire city) and using a much larger vocabulary (\eg{} the English dictionary includes roughly 500,000 words), we argue that research should move beyond flat representations.
We show that hierarchical representations allow to largely reduce the memory requirements, by enabling loss-less compression of semantic information into a layered graph, as well as lossy compression of the geometric information into meshes and graph-structured representations of the free space.
Additionally, we show that hierarchical representations are amenable for efficient inference.
In particular, we prove that the layered graphs appearing in hierarchical map representations have small \emph{treewidth}, a property that enables efficient inference;
for instance, we conclude that the treewidth of the scene graph modeling an indoor environment does not scale with the number of nodes in the graph (\ie{} roughly speaking, the number of nodes is related to the size of the environment), but rather with the maximum number of objects in each room. While most of the results above are general and apply to a broad class of hierarchical representations, we conclude this part by introducing a specific hierarchical representation for indoor environments, namely \emph{3D scene graphs}.

\myParagraph{Contribution 2: Real-time Incremental 3D Scene Graph Construction (\cref{sec:incrementalLayers})}
After establishing the importance of hierarchical representations, we move to developing a suite of algorithms to estimate 3D scene graphs from sensor data. In particular, we develop real-time algorithms that can incrementally estimate a 3D scene graph of an unknown building from visual-inertial sensor data.
We use existing methods for geometric processing to incrementally build a metric-semantic mesh of the environment and reconstruct a sparse graph of ``places'';
intuitively, the mesh describes the occupied space (including objects in the environment), while the graph of places provides a succinct description of the free space. Then we propose novel algorithms to efficiently cluster the places into rooms; here we use tools from topology, and in particular the notion of \emph{persistent homology}~\citep{Huber21idsc-persistent}.
Finally, we use novel architectures for geometric deep learning, namely \emph{neural trees}~\citep{Talak21neurips-neuralTree}, to infer the semantic labels of each room (\eg{} bedroom, kitchen, etc.) from the labels of the object within.
Towards this goal, we show that our 3D scene graph representation allows to quickly and incrementally compute a tree-decomposition of the 3D scene graph, which ---together with our bound on the scene graph treewidth--- ensures that the neural tree runs in real-time on an embedded GPU.

\myParagraph{Contribution 3: Persistent Representations via Hierarchical Loop Closure Detection and 3D Scene Graph Optimization (\cref{sec:LCD-and-SGO})}
Building a persistent map representation requires the robot to recognize it is revisiting a location it has seen before, and correcting the map accordingly.
We propose a novel hierarchical approach for loop closure detection: the approach involves
(i)~a \emph{top-down loop closure detection} that uses hierarchical descriptors ---capturing statistics across layers in the scene graph--- to find putative loop closures, and
(ii)~a \emph{bottom-up geometric verification} that attempts estimating the loop closure pose by registering putative matches.
Then, we propose the first algorithm to optimize a 3D scene graph in response to loop closures;
our approach relies on \emph{embedded deformation graphs}~\citep{Sumner07siggraph-embeddedDeformation} to simultaneously and consistently correct all the layers of the scene graph,
 including the 3D mesh, objects, places, and the robot trajectory.

\myParagraph{Contribution 4: \name, a Real-Time Spatial Perception System  (\cref{sec:hydra})}
We conclude the paper by integrating the proposed algorithms into a highly parallelized perception system, named \emph{\name}, that combines fast early and mid-level perception processes (\eg{} local mapping) with slower high-level perception (\eg{} global optimization of the scene graph).
We demonstrate Hydra in challenging simulated and real datasets, across a variety of environments,
including an apartment complex, an office building, and two student residences.
Our experiments (\cref{sec:experiments}) show that
(i) we can reconstruct 3D scene graphs of large, real environments in real-time,
(ii) our online algorithms achieve an accuracy comparable to batch offline methods and build a richer representation compared to competing approaches~\citep{Wu21cvpr-SceneGraphFusion}, and
(iii) our loop closure detection approach outperforms standard approaches based on bag-of-words and visual-feature matching in terms of quality and quantity of detected loop closures.
The source code of \name{} is publicly available at \hydraURL{}.

\myParagraph{Novelty with respect to~\cite{Hughes22rss-hydra,Talak21neurips-neuralTree}}
This paper builds on our previous conference papers~\citep{Hughes22rss-hydra,Talak21neurips-neuralTree} but includes several novel findings.
First, rather than postulating a 3D scene graph structure as done in~\cite{Hughes22rss-hydra}, we formally show that  hierarchical representations are crucial to achieve scalable scene understanding and fast inference.
Second, we propose a novel room segmentation method based on the notion of persistent homology, as a replacement for the heuristic method in~\cite{Hughes22rss-hydra}.
Third, we develop novel learning-based hierarchical descriptors for place recognition, that further improve performance with respect to the handcrafted hierarchical descriptors in~\cite{Hughes22rss-hydra}.
Fourth, the real-time system described in this paper is able to also assign room labels, leveraging the neural tree architecture from~\cite{Talak21neurips-neuralTree}; while~\cite{Talak21neurips-neuralTree} uses the neural tree over small graphs corresponding to a single room, in this paper we provide an efficient way to obtain a tree-decomposition of the top layers of the scene graph, hence extending~\cite{Talak21neurips-neuralTree} to simultaneously operate over \emph{all} rooms and account for their relations.
Finally, this paper includes further experiments and evaluations on real robots (Clearpath Jackal robots and Unitree A1 robots) and comparisons with recent scene graph construction baselines~\citep{Wu21cvpr-SceneGraphFusion}.

\begin{figure*}[t]
\centering
\subfloat[\label{fig:map1}]{\includegraphics[trim=160mm 25mm 215mm 10mm, clip, width=0.333\textwidth]{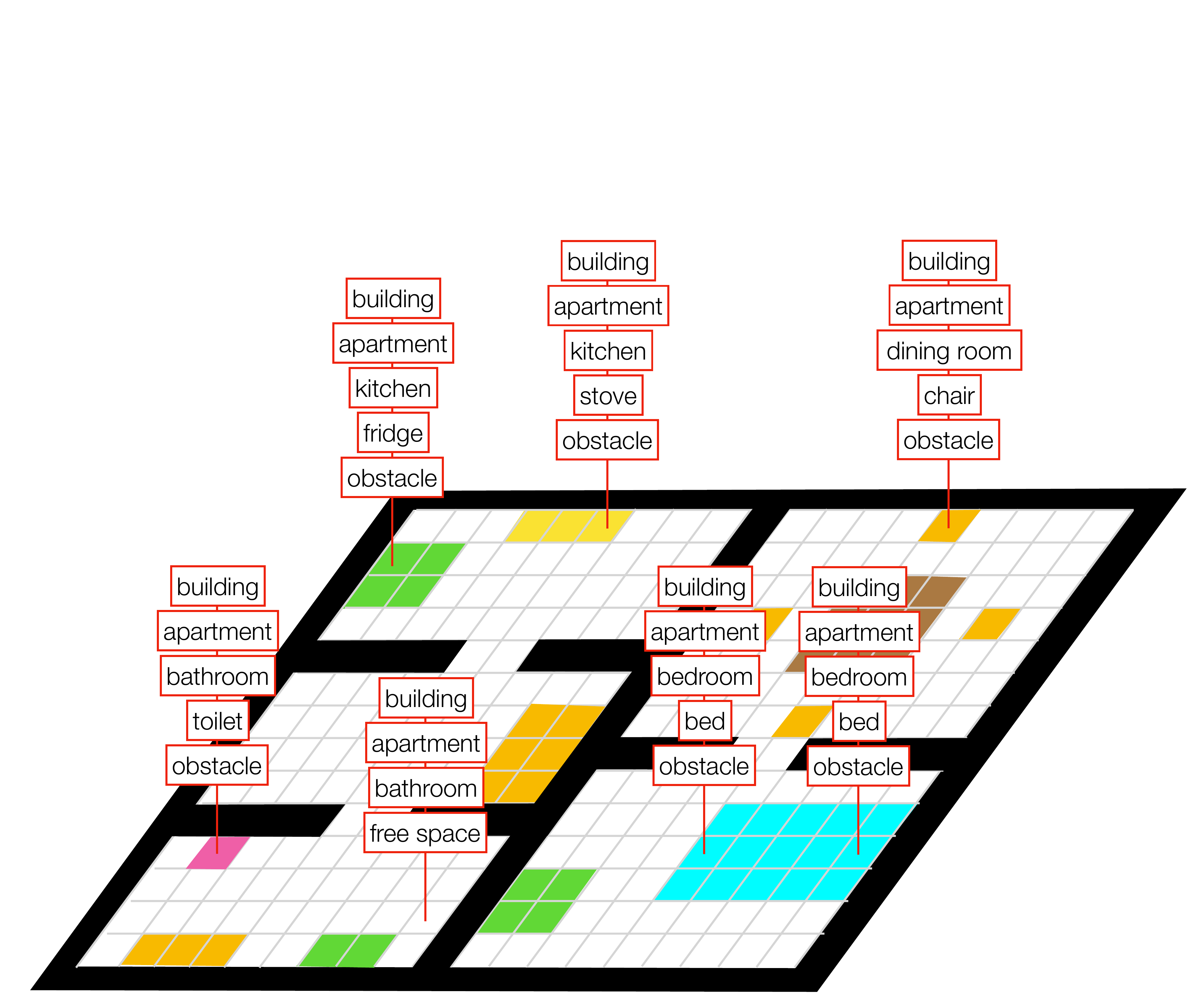}}
\subfloat[\label{fig:map2}]{\includegraphics[trim=160mm 25mm 215mm 10mm, clip, width=0.333\textwidth]{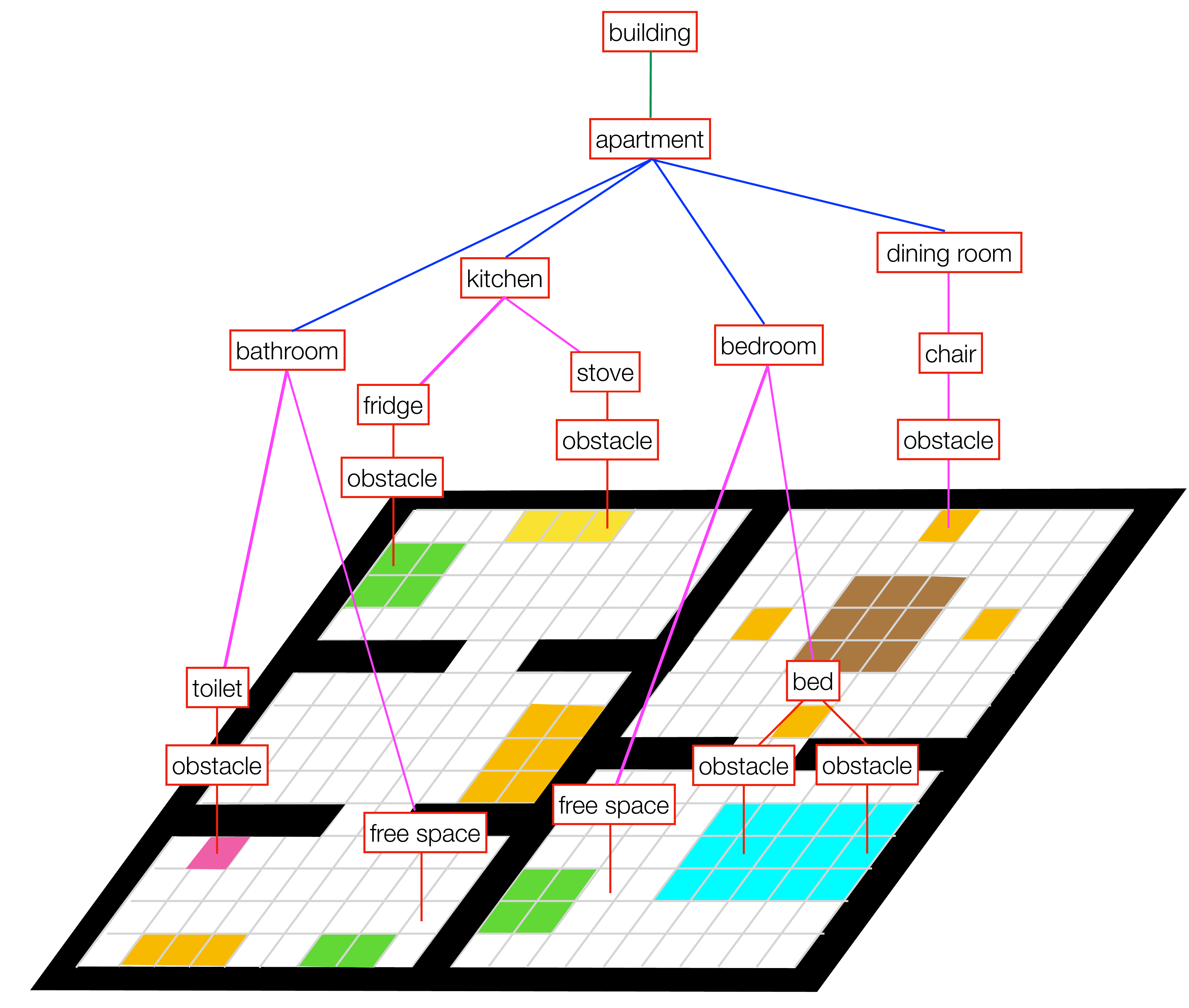}}
\subfloat[\label{fig:map3}]{\includegraphics[trim=160mm 25mm 215mm 10mm, clip, width=0.333\textwidth]{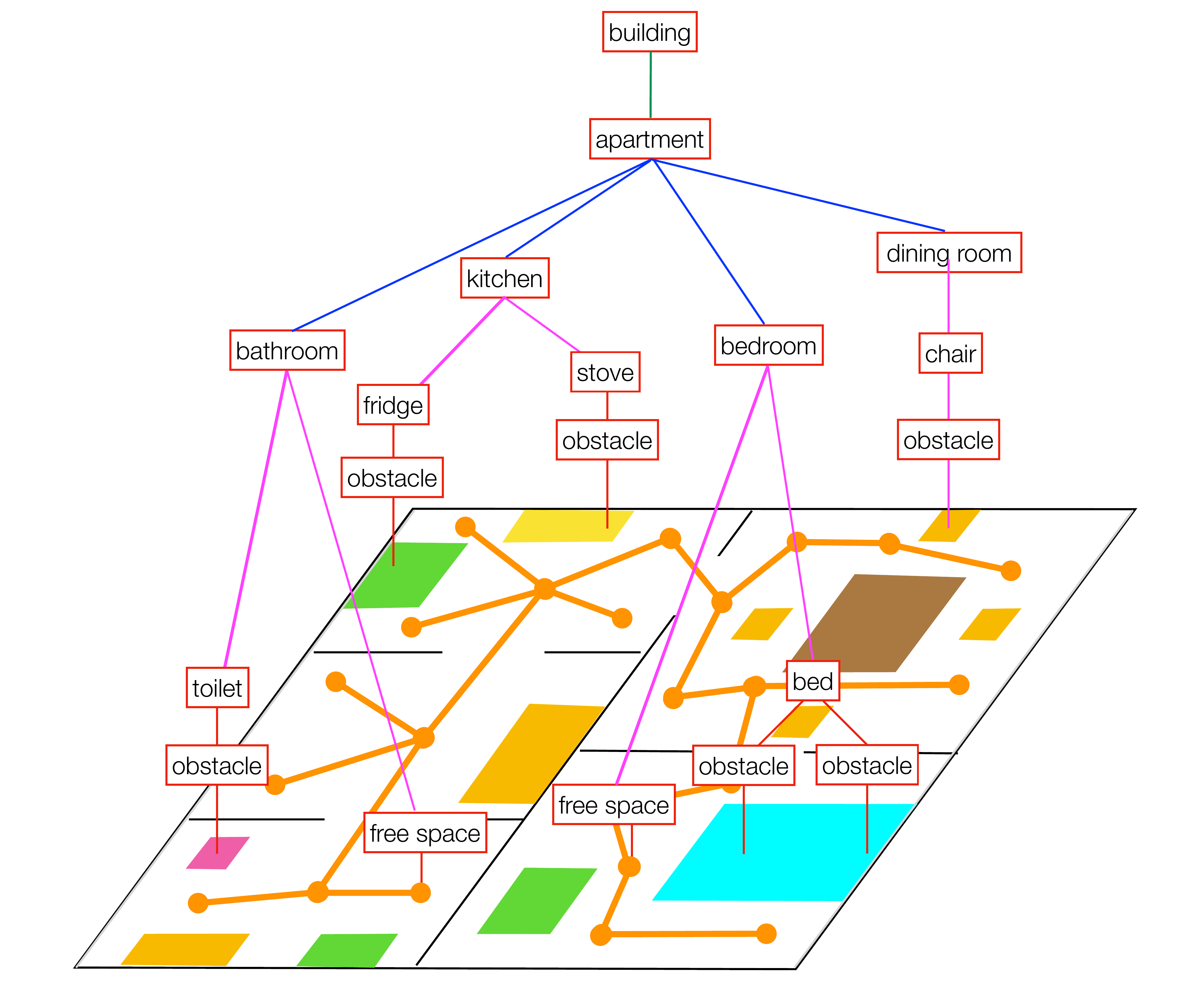}}
\caption{(a) Example of flat metric-semantic representation. We show labels only for a subset of the cells in the grid map for the sake of readability. (b) Example of hierarchical metric-semantic representation with a grid-map as sub-symbolic layer. (c) A hierarchical metric-semantic representation with compressed sub-symbolic layer. The cells representing each object are compressed into bounding polygons, and the free-space cells are compressed into a sparse graph where each node and edge are also assigned a radius that defines a circle of free-space around it.}\label{fig:fig_map}
\end{figure*}

\section{Symbol Grounding and the Need for Hierarchical Representations}\label{sec:symbolGroundingAndHieRep}
The goal of this section is twofold. First, we remark that in order to support the execution of complex instructions,
map representations must be metric-semantic, and hence ground symbolic knowledge (\ie{} semantic aspects in the scene) into the map geometry (\cref{sec:symbol_grounding}). Second, we observe that ``flat'' representations, which naively store semantic attributes for each geometric entity in the map (\eg{} attach semantic labels to each voxel in an ESDF) scale poorly in the size of the environment; on the other hand, hierarchical representations scale better to large environments (\cref{sec:hieRep_scalability}) and enable efficient inference (\cref{sec:hieRep_inference}).
We conclude the section by introducing the notion of \emph{3D scene graph}, an example of hierarchical representation for indoor environments, and discussing its structure and properties  (\cref{sec:hieRep_sceneGraphs}).

\subsection{Symbols and Symbol Grounding}\label{sec:symbol_grounding}

High-level instructions issued by humans (\eg{} ``go and pick up the chair'') involve symbols.
A \emph{symbol} is the representation of a concept that has a specific meaning for a human (\eg{} the word ``chair'' is a symbol in the sense that, as humans, we understand the nature of a chair).
For a symbol to be useful to guide robot action, it has to be \emph{grounded} into the physical world.
For instance, for the robot to correctly execute the instruction ``go and pick up the chair'', the robot needs to ground the symbol ``chair'' to the physical location the chair is situated in.\footnote{In general, this is referred to as the \emph{symbol grounding} problem~\citep{Harnard90physica-symbolGrounding} in artificial intelligence.}
In principle, we could design the perception system of our robot to ground symbols directly in the sensor data.
For instance, a robot with a camera could map image pixels to appropriate symbols (\eg{} ``chair'', ``furniture'', ``kitchen'', ``apartment''), as commonly done in 2D semantic segmentation~\citep{GarciaGarcia17arxiv}.
However,
grounding symbols directly into sensor data is not scalable: sensor data (\eg{} images) is collected at high rates and is relatively expensive to store.
This is neither convenient nor efficient when grounding symbols for long-term operation.
Instead, we need intermediate (or \emph{sub-symbolic}) representations that compress the raw sensor data into a more compact format, and that can be used to ground symbols.
Traditional geometric map models used in robotics (\eg{} 2D occupancy grids or 3D voxel-based maps) can be understood as sub-symbolic representations: each cell in a voxel grid does not represent a semantic concept, and instead is used to ground two symbols: ``obstacle'' and ``free-space''.
Therefore, this paper is concerned with building \emph{metric-semantic} representations, which
ground symbolic knowledge into geometric (\ie{} sub-symbolic) representations.

{\bf Which Symbols Should a Map Contain?}
In this paper, we directly specify the set of symbols the robot is interested in (\ie{} certain classes of objects and rooms, to support navigation in indoor environments).
However, it is worth discussing how to choose the symbols more generally.
 The choice of symbols clearly depends on the task the robot has to execute.
A mobile robot may implement a path planning query by just using the ``free-space'' symbol and the corresponding grounding, while a more complex domestic robot may instead need additional symbols (\eg{} ``cup of coffee'', ``table'', and ``kitchen'') and their groundings to execute instructions such as ``bring me the cup of coffee on the table in the kitchen''.
A potential way to choose the symbols to include in the map therefore consists of inspecting the set of instructions the robot has to execute, and then extracting the list of symbols (\eg{} objects and relations of interest) these instructions involve.
For instance, in this paper, we mostly focus on indoor environments and assume the robot is tasked with navigating to certain objects and rooms in a building.
Therefore, the symbols we include in our maps include free-space, objects, rooms, and buildings.
After establishing the list of relevant symbols, the goal is for the robot
 to build a compact sub-symbolic representation (\ie{} a geometric map), and then ground these symbols into the sub-symbolic representation.
A pictorial representation of this idea is given in \cref{fig:map1}:
the robot builds an occupancy map (sub-symbolic representation) and attaches a number of semantic labels (symbols) to each cell.
We refer to this representation as ``flat'' since each cell is assigned a list of symbols of interest.
As we will see shortly, there are more clever ways to store the same information.

\subsection{Hierarchical Representations Enable Large-Scale Mapping}\label{sec:hieRep_scalability}
While the flat representation of \cref{fig:map1} may contain all the information the robot needs to complete its task, it is unnecessarily expensive to store.
Here we show that the symbolic knowledge underlying such representation naturally lends itself to be efficiently stored using a hierarchical representation.

Consider the case where we use a flat representation (as shown in \cref{fig:map1}) to represent a 3D scene and assume our dictionary contains $\nrSymbols$ symbols of interest.
Let $\delta$ be the cell (or voxel) size and call $V$ the volume of the scene the representation describes.
Then, the corresponding metric-semantic representation would require a memory of:
\begin{equation}
\label{eq:naive-memory}
m = \bigo{\nrSymbols \cdot V / \delta^3}
\end{equation}
to store $\nrSymbols$ labels for each voxel.
Note that $m$ grows with the number of symbols $\nrSymbols$, multiplied by the size of the sub-symbolic representation $V / \delta^3$ (\ie{} the number of voxels in the volume).
When mapping large environments the term $V / \delta^3$ quickly becomes unsustainably large; for instance, covering a {$10$}{km} $\times$ $10$km area with a $10$cm grid resolution, even disregarding the vertical dimension, would require $10^{10}$ voxels. In addition, it would be desirable for the next generation of robots to execute a large set of tasks, hence requiring them to ground a large dictionary of symbols $\nrSymbols$.\footnote{For reference, the English dictionary includes around $500,000$ words.}
The size of the sub-symbolic representation, the number of symbols, and their multiplicative relation in~\eqref{eq:naive-memory} make a flat metric-semantic representation unsuitable for large-scale spatial perception.

A key observation here is that multiple voxels encode the same grounded symbols (\eg{} a chair).
In addition, many symbols naturally admit a hierarchical organization where
higher-level concepts (\ie{} buildings or rooms for indoor environments) contain lower-level symbols (\eg{} objects).
This suggests that we can more efficiently store information \emph{hierarchically}, where all voxels
associated with a certain object (\eg{} all voxels belonging to a chair) are mapped to the same symbolic node (\eg{} an object node with the semantic label ``chair''), object nodes are associated with the room they belong to, room nodes are associated to the apartment unit they belong to, and so on.
This transforms the flat representation of \cref{fig:map1} into the hierarchical model of \cref{fig:map2}, where only the lowest level symbols are directly grounded into voxels, while the top layers are arranged hierarchically.
This hierarchical representation is more parsimonious and reduces the required memory to
\begin{equation}
\label{eq:naive-orb-memory}
  m =  \bigo{V / \delta^3  + N_{\text{objects}}  + N_{\text{rooms}} + \cdots + N_{\text{buildings}}},
\end{equation}
where $N_{\text{layer}}$ (with $\text{layer} \in \{\text{objects}, \text{rooms}, \ldots\}$) is the number of symbols at the specific layer of the hierarchy.
This representation is more memory efficient than~\eqref{eq:naive-memory} because it decouples
the number of symbols from the size $V / \delta^3$ of the sub-symbolic representation. For instance, the scene in \cref{fig:fig_map} has 336 voxels, and we assume $L = 5$.\footnote{The symbols are stored as building type, apartment type, room type, object type, and free-space/obstacle.}
Then, the flat representation of \cref{fig:map1} involves storing $1680$ symbols, while \cref{fig:map2} only requires storing $355$ symbols and $354$ edges describing their hierarchical relations.
Crucially, the compression we performed when moving from \cref{fig:map1} to \cref{fig:map2} is \emph{lossless}: the two representations contain exactly the same amount of information.

While we ``compressed'' the symbolic representation using a hierarchical data structure,
the term $V / \delta^3$ in~\eqref{eq:naive-orb-memory} that corresponds to the sub-symbolic layer is still impractically large for many applications.
Therefore, our robots will also typically need some compression mechanism for the sub-symbolic layer that provides a more succinct description of the occupied and free space as compared to voxel-based maps.
Fortunately, the mapping literature already offers alternatives to standard 3D voxel-based maps such as OctTree~\citep{Zeng2013graphical-octree} or neural implicit representations~\citep{Park19cvpr-deepSDF}.
In general, this compression reduces the memory requirement to
\begin{equation}
\label{eq:total-compression}
  m =  \bigo{N_{\text{sub-sym}} + N_{\text{objects}} + N_{\text{rooms}} + \cdots + N_{\text{buildings}}},
\end{equation}
where $N_{\text{sub-sym}}$ is the size of the compressed sub-symbolic representation.
The behavior of $N_{\text{sub-sym}}$ is driven by the compression approach used, but in general $N_{\text{sub-sym}}$ ends up being much smaller than $V / \delta^3$;
in the ideal case $N_{\text{sub-sym}}$ would grow gracefully with respect to the complexity of the scene and the resolution required by the task. We show a notional example of a compressed sub-symbolic layer in \cref{fig:map3}.
This can represent the 336 original cells of the sub-symbolic layer of \cref{fig:map2} with roughly 23 nodes, 22 edges, and a 2D polygonal mesh with roughly 132 vertices.
Note that moving from \cref{fig:map2} to \cref{fig:map3} may entail a \emph{lossy} compression of the sub-symbolic layer, \ie{} the compressed representation may only be an approximate representation of the geometry of the environment.
We consider this to be a feature rather than a bug: the general idea is that we can always compress the sub-symbolic representation to fit into the available memory of our robot, although such a compression might induce some performance degradation in the execution of the task (\eg{} coarser maps might lead to longer paths in motion planning).

\subsection{Hierarchical Representations Enable Efficient Inference}\label{sec:hieRep_inference}

While above we showed that hierarchical representations are more scalable in terms of memory requirements,
this section shows that the graphs underlying hierarchical representations also enable fast inference.
Specifically, we show that these hierarchical graphs have small \emph{treewidth}: their treewidth does not grow with the size of the graph (\ie{} proportionally to the size of the explored scene), but rather with the treewidth of each layer in the hierarchy.
The treewidth is a well-known measure of complexity for many problems on graphs~\citep{Bodlaender2006c-Chapter-TreewidthComputation, Dechter07ai, Feder93stoc, Grohe18talgo-GraphIsomorphism-BndTreeWidth}.
\citet{Venkat12uai} show that the graph treewidth is the only structural parameter that influences tractability of probabilistic inference on graphical models: while inference is NP-hard in general for inference on graphical models~\citep{Cooper90ai}, proving that a graph has small treewidth opens the door to efficient, polynomial-time inference algorithms.
Additionally, in our previous work we have shown that the treewidth is also the main factor impacting the expressive power and tractability of novel graph neural tree architectures, namely \emph{neural trees}~\citep{Talak21neurips-neuralTree}. The results in this section therefore open the door to the efficient use of powerful tools for learning over graphs; see \cref{sec:rooms_classification}.

In the rest of this section, we formalize the notion of hierarchical graph and show that the treewidth of a hierarchical graph is always bounded by the maximum treewidth of each of its layers. We do this by proving that the \treeDecomposition{} of the hierarchical graph can be obtained by a suitable concatenation of the \treeDecompositions{} of its layers. The results in this section are general and apply to arbitrary hierarchical representations (as defined below) beyond the representations of indoor environments we consider later in the paper.
\begin{definition}[Hierarchical Graph]\label{def:hieGraph}
    A graph $\calG = (\calV, \calE)$ is said to be a \emph{hierarchical graph} if the set of nodes $\calV$ can be partitioned into $\layers$ layers, \ie{} $\calV = \cup_{i=1}^{\layers} \calV_i$, such that
    \begin{enumerate}
        \item \emph{single parent:} each node $v \in \calV_i$ at layer $i$ shares an edge with at most one node in the layer $\calV_{i+1}$ above,
        \item \emph{locality:} each $v \in \calV_i$ can only share edges with nodes in the same or adjacent layers, \ie{} $\calV_{i-1}$, $\calV_i$, or $\calV_{i+1}$,
        \item \emph{disjoint children:}  for any $u, v \in \calV_i$ the children of $u$ and $v$, namely $C(v)$ and $C(u)$ are disjoint (\ie{} they share no nodes or edges), where
            \begin{equation}
                C(u) \triangleq \left\{ w \in \calV_{i-1}~|~(w, u) \in \calE \right\}
            \end{equation}
            denotes the children of $u$, \ie{} the set of nodes in layer $\calV_{i-1}$ sharing an edge with $u$.
\end{enumerate}
    We refer to $\calG$ as an $\layers$-layered hierarchical graph and denote with $\calV_i$  the set of nodes in layer $i$.
    Moreover, we conventionally order the layers from the bottom to the top, \ie{} the lowest layer is $\calV_1$, while the top layer is $\calV_\layers$.
\end{definition}

To gain some insight into \cref{def:hieGraph}, consider a hierarchical graph where the bottom layer $\calV_1$ describes the objects in the environment, while the higher layers $\calV_2$ and $\calV_3$ describe room and buildings, respectively.
Then, the first condition in \cref{def:hieGraph} requires that each object belongs to a single room, and each room belongs to a single building.
The second condition restricts the inter-layer edges to connect objects to rooms, and rooms to buildings.
Finally, the third condition says that objects in a room are not connected to objects in another room, and that rooms in a building do not share edges with rooms in other buildings.
These conditions are relatively mild; we note that they are easily met when the edges in the graph represent inclusion or adjacency (\ie{} for the graphs considered in the rest of this paper).

We now show that a \treeDecomposition{} of a hierarchical graph can be constructed by concatenating the \treeDecomposition{} of each of its layers.
This result will allow us to obtain the treewidth bound in \cref{prop:tw-hierarchy}.
The resulting \treeDecomposition{} algorithm (\cref{algo:td-hierarchical}) will also be used in the neural tree approach used for room classification in \cref{sec:rooms_classification}.
The interested reader can find a refresher about \treeDecomposition{} and treewidth in \cref{sec:treeDecomposition}\footnote{We leave the proof of~\cref{thm:td-hierarchy} in the main text since it contains a description of \cref{algo:td-hierarchical}, while we postpone other proofs to the appendix.}
and an example of execution of \cref{algo:td-hierarchical} in~\cref{fig:tree_decomposition}.
\begin{theorem}[\TreeDecomposition{} of Hierarchical Graph]\label{thm:td-hierarchy}
    {Let $\calG = (\calV\!=\!\cup_{i=1}^{\layers} \calV_i, \calE)$ be an $\layers$-layered hierarchical graph.
    Let
    $T$ be the \treeDecomposition{} of the \subgraph spanned by $\calV_\layers$ (top layer) and
    $T_v$ be the \treeDecomposition{} of the \subgraph spanned by the children $C_i(v)$ of $v$, for $v \in \calV_i$ and $i=\layers,\ldots,2$.
    Then a \treeDecomposition{} for graph $\calG$ can be constructed from $\{ T_v~|~v \in \calV_i$ and $i=\layers,\ldots,2 \}$ and $T$, by the simple concatenation procedure described in \cref{algo:td-hierarchical}.
    }
\end{theorem}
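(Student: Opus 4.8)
The plan is to make \cref{algo:td-hierarchical} explicit as a top-down sweep and then verify that its output satisfies the three defining axioms of a \treeDecomposition{}: vertex coverage, edge coverage, and coherence (for every vertex, the bags containing it induce a connected subtree). I would initialize the running decomposition as $T$ (the decomposition of the top layer $\calV_\layers$) and then process layers $i=\layers,\dots,2$, iterating over each $v\in\calV_i$. For such a $v$ I augment its child-decomposition $T_v$ into a tree $T_v^{+}$ by inserting $v$ into every bag of $T_v$, and I graft $T_v^{+}$ onto the running tree with a single new tree-edge joining an arbitrary bag of $T_v^{+}$ to a bag that already contains $v$; such a host bag is guaranteed to exist because $v$ appears as a genuine vertex either in $\calV_\layers$ (when $i=\layers$) or in the child-decomposition of its parent, which was grafted on at the previous level.

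Checking coverage and edge coverage is the routine part. Every $w\in\calV_{i-1}$ is a child of some $v\in\calV_i$, so it appears in $T_v$ and hence in the grafted $T_v^{+}$, while the top layer is covered by $T$; this gives vertex coverage. For edges I would use the \emph{locality} condition to split $\calE$ into intra-layer and adjacent-layer edges. An intra-layer edge inside $\calV_{i-1}$ has, by the \emph{disjoint children} condition, both endpoints among the children $C(v)$ of a single node $v$, so it lies in the \subgraph decomposed by $T_v$ and is covered there (intra-layer edges of the top layer are covered by $T$). An inter-layer edge $(v,w)$ with $w\in C(v)$ is covered because $w$ occupies some bag of $T_v$ into which the augmentation has also placed $v$. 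This augmentation is exactly what enlarges each bag by one node and therefore accounts for the ``$+1$'' in the treewidth bound of \cref{prop:tw-hierarchy}.

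The hard part, and the step where the hierarchical axioms are indispensable, is the coherence property. In the construction a vertex $v\in\calV_i$ occurs in only two roles: as a genuine vertex of the child-decomposition $T_p$ of its parent $p$, and as an inserted node filling all bags of $T_v^{+}$. The \emph{single parent} condition is precisely what forbids $v$ from appearing as a genuine vertex in two separately grafted subtrees, which would otherwise break its bag set into disconnected components. Given this, the genuine occurrences of $v$ form a connected subtree inside $T_p^{+}$ (by validity of $T_p$), the inserted occurrences fill the connected tree $T_v^{+}$, and the grafting edge attaches $T_v^{+}$ at a bag containing $v$, so the union stays connected; I would formalize this by induction over the sweep, arguing that each graft is performed inside the current connected bag set of the relevant vertex and hence never disturbs the coherence of vertices placed earlier. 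The only case needing extra care is a lower-layer node that has no parent; I would work under the natural assumption---valid for the scene graphs of interest, where every object belongs to a room---that the sets $\{C(v):v\in\calV_i\}$ partition $\calV_{i-1}$, so that no such node arises and every intra-layer edge is internal to a single child set.
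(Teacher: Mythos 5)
Your proposal is correct and follows essentially the same route as the paper's proof: you make Algorithm~\ref{algo:td-hierarchical} explicit as a top-down sweep, graft each child decomposition $T_v$ (augmented by inserting $v$ into every bag) onto the running tree at a bag already containing $v$, and verify the tree-decomposition axioms by induction over the grafting steps, exactly as the paper does with its conditions C1--C5 on the ``explored'' subgraph. If anything, you are more explicit on two points the paper treats tersely: that the \emph{single parent} condition is what keeps each vertex's bag set connected across grafts, and that parentless lower-layer nodes must be ruled out (your partition assumption) for the final explored subgraph to be all of $\calG$ --- a gap the paper's induction silently absorbs by only ever asserting a decomposition of the explored portion.
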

\begin{proof}
\smallheading{(i)~Notation} We denote the \treeDecomposition{} of a graph $\calG$ by $T = (B, E) = \TD{\calG}$, where $B$ denotes the collection of bags and $E$ denotes the set of edges in the \treeDecomposition{} graph $T$.
    For a \treeDecomposition{} graph $T = (B, E)$ and any node $w$ (not already included in any of the bags $b \in B$), we denote $T + \{ w\}$ to be the \treeDecomposition{} $T$, after adding the element $w$ to every bag in $T$.
    Finally, given two disjoint trees $T, T'$ and an edge $(b, b')$ with $b \in T$ and $b' \in T'$, we use $T \leftarrow T \oplus T' \oplus {(b, b')}$ to indicate updating graph $T$ by adding graph $T'$, along with the edge $(b, b')$ to it.

\begin{algorithm}[t]
    \caption{\mbox{Tree-Decomposition of Hierarchical Graphs}}\label{algo:td-hierarchical}
\KwData{$\layers$-layered hierarchical graph $\calG = (\calV\!=\!\cup_{i=1}^{\layers} \calV_i, \calE)$}
    \KwResult{\Treedecomposition $T = (B, E)$}
    \tcc{extract \subgraph with nodes $\calV_\layers$}
    $\calG_{\layers} \gets \calG[\calV_\layers]$\;
    \tcc{\treeDecomposition of $\calG_\layers$}
    $T = (B, E) \gets \TD{G_\layers}$\; \label{line:initializeT}
    \For{each layer $i = \layers,\ldots,2$}{
        \For{each node $v$ in $\calV_i$}{
            $T_v = (B_v, L_v) \gets \TD{\calG[C(v)]}$\; \label{line:doChildrenTD}
            ${T^{'}_v \gets T_v + \{ v \}}$\; \label{line:addNodeToBags}
            Pick any bag $b$ in $T$ such that $v \in b$\;
            Pick any bag $b'$ in $T^{'}_v$\;
            $T \gets T \oplus T' \oplus {(b, b')}$\; \label{line:addEdgeToT}
        }
    }
\end{algorithm}

\smallheading{(ii)~Intuitive Explanation of \cref{algo:td-hierarchical}} We form the \treeDecomposition $T$ of the hierarchical graph sequentially.
    We initialize $T$ with the \treeDecomposition of the top layer graph $\calG_\layers = \calG[\calV_\layers]$.
    We then augment $T$ with the \treeDecomposition graphs $T_v$ of $C(v)$, for each $v \in \calV_\layers$.
    We augment bags of $T_v$ with element $\{v\}$ to mark the fact that $v$ connects each node in $C(v)$ (in graph $\calG$).
    This procedure is carried out for the remaining layers $i = \layers-1,\ldots,2$ and all nodes $v \in \calV_i$.

  \Cref{fig:tree_decomposition} shows an example of execution of~\cref{algo:td-hierarchical} for the graph in~\cref{fig:tree_decomposition/new_full_graph}.
  \Cref{fig:tree_decomposition/new_panel_b} shows the (disconnected) \treeDecompositions produced by line~\ref{line:initializeT} (which produces the single green bag B1) and the first execution of line~\ref{line:doChildrenTD} (\ie for $i=\layers$, which produces the two red bags).
  \Cref{fig:tree_decomposition/new_panel_c} shows the result produced by the first execution of line~\ref{line:addNodeToBags} and line~\ref{line:addEdgeToT}, which adds B1 to all the red bags, and then connects the two \treeDecompositions with an edge, respectively. \Cref{fig:tree_decomposition/new_panel_d,fig:tree_decomposition/new_panel_e} show the result produced by the next iteration of the algorithm.

\smallheading{(iii)~Proof} Recall that a \treeDecomposition $T = (B, E)$ of a graph $\calG = (\calV, \calE)$ must satisfy the following conditions:
\begin{description}
\item[C1] $T$ must be a tree;
\item[C2] Every bag $b \in B$ must be a subset of nodes of the given graph $\calG$, \ie $b \subseteq \calV$;
\item[C3] For all edges $(u, v) \in \calE$ in the given graph $\calG$, there must exist a bag $b \in B$ such that $u, v \in b$;
\item[C4] For every node $v \in \calV$ in the given graph $\calG$, the set of bags $\{ b \in B~|~v \in b\}$ must form a connected component in $T$;
\item[C5] For every node $v \in \calV$ in the given graph $\calG$, there must exist a bag $b \in B$ that contains it, \ie $v \in b$. To state this in another way: $\cup_{b \in B} \{v \in b\} = \calV$.
\end{description}

\Cref{algo:td-hierarchical} constructs a \treeDecomposition $T$ of $\calG$ sequentially by exploring the graph, by layers (going from layer-$\layers$ to layer $1$) and by nodes $v$ in each layer. Let $\calG^{e}_t$ denote the \subgraph of $\calG$ that is explored by \cref{algo:td-hierarchical}, until iteration $t$. Here, we count an iteration to be the execution of \cref{line:addEdgeToT}, and
count the initialization at \cref{line:initializeT} as the first iteration (\ie the first iteration initializes the \treeDecomposition, while the subsequent iterations update the \treeDecomposition).

\begin{figure}[t]
    \centering
    \subfloat[\label{fig:tree_decomposition/new_full_graph}]{\includegraphics[trim=8mm 9mm 8mm 8mm,clip,width=0.9\columnwidth]{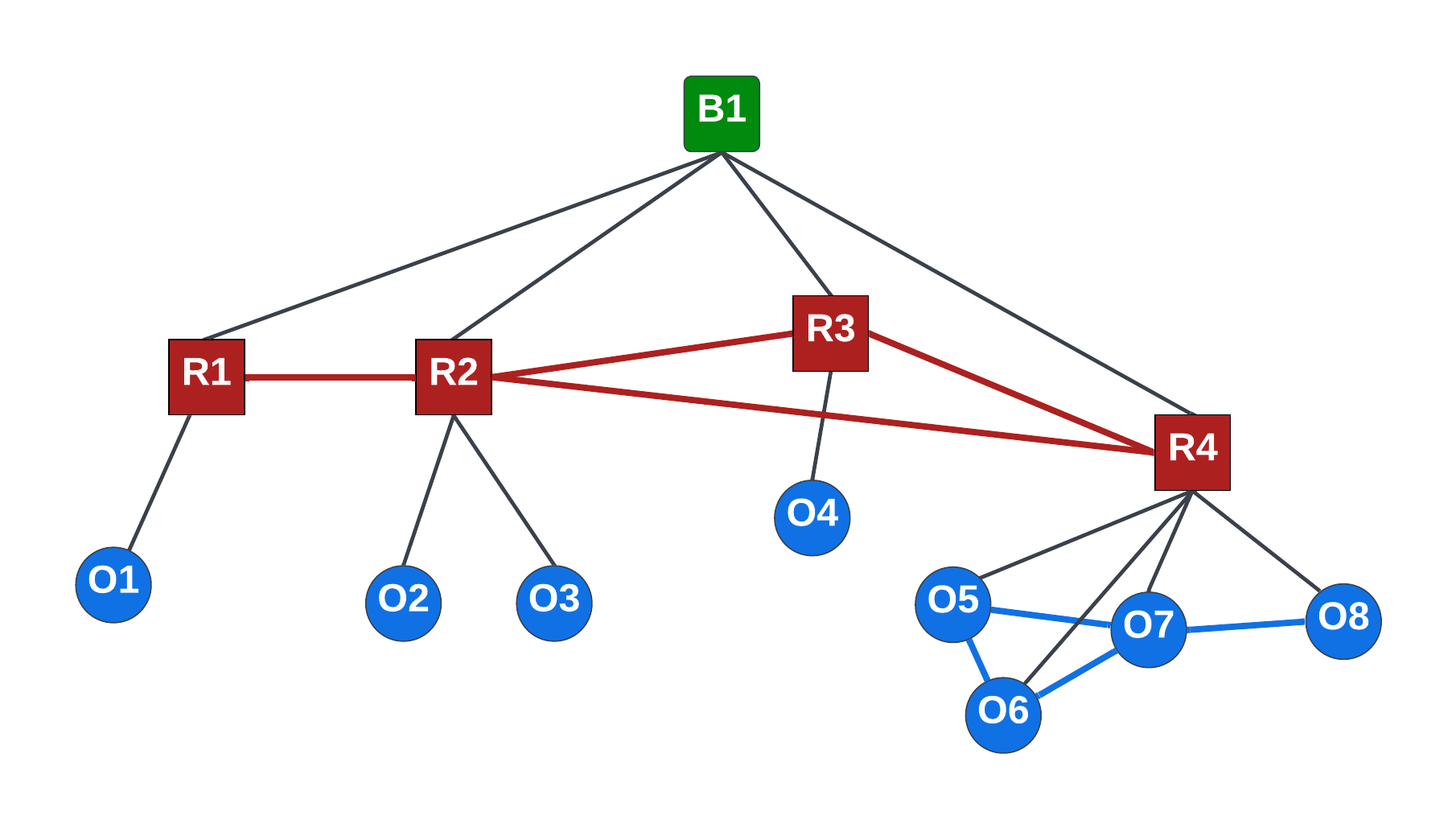}} \\
    \subfloat[\label{fig:tree_decomposition/new_panel_b}]{\includegraphics[trim=8mm 85mm 8mm 9mm,clip,width=0.5\columnwidth]{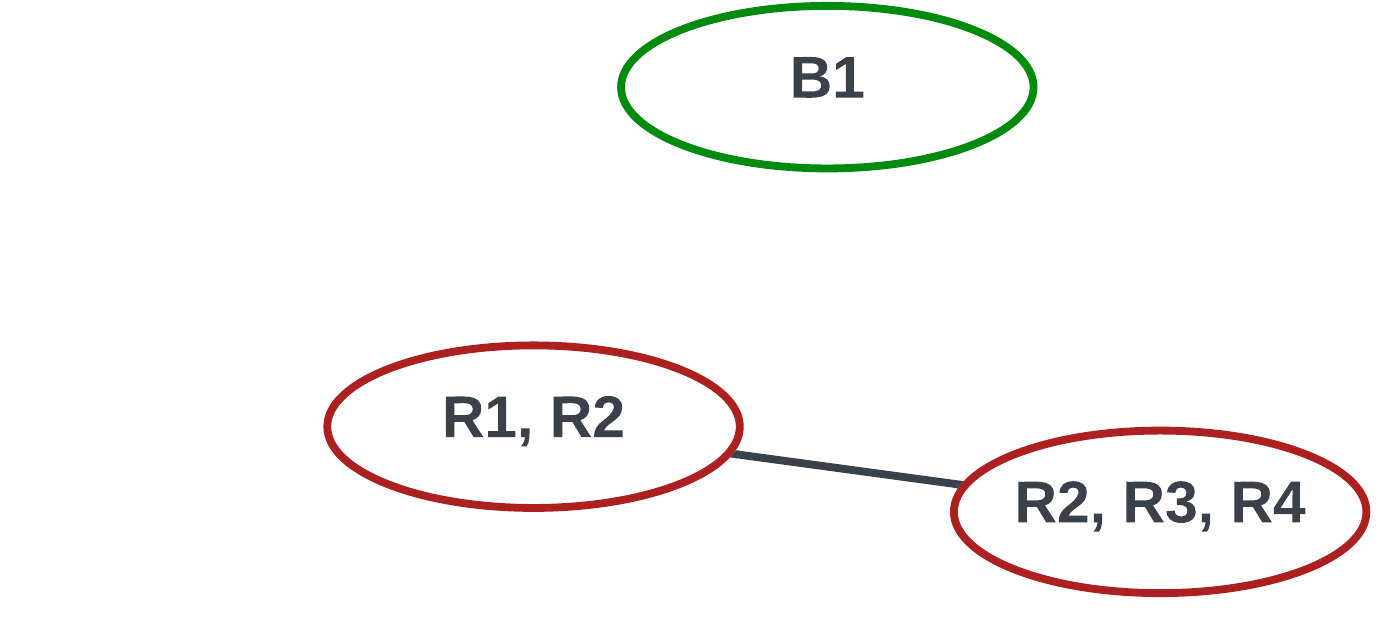}}
    \subfloat[\label{fig:tree_decomposition/new_panel_c}]{\includegraphics[trim=8mm 85mm 8mm 9mm,clip,width=0.5\columnwidth]{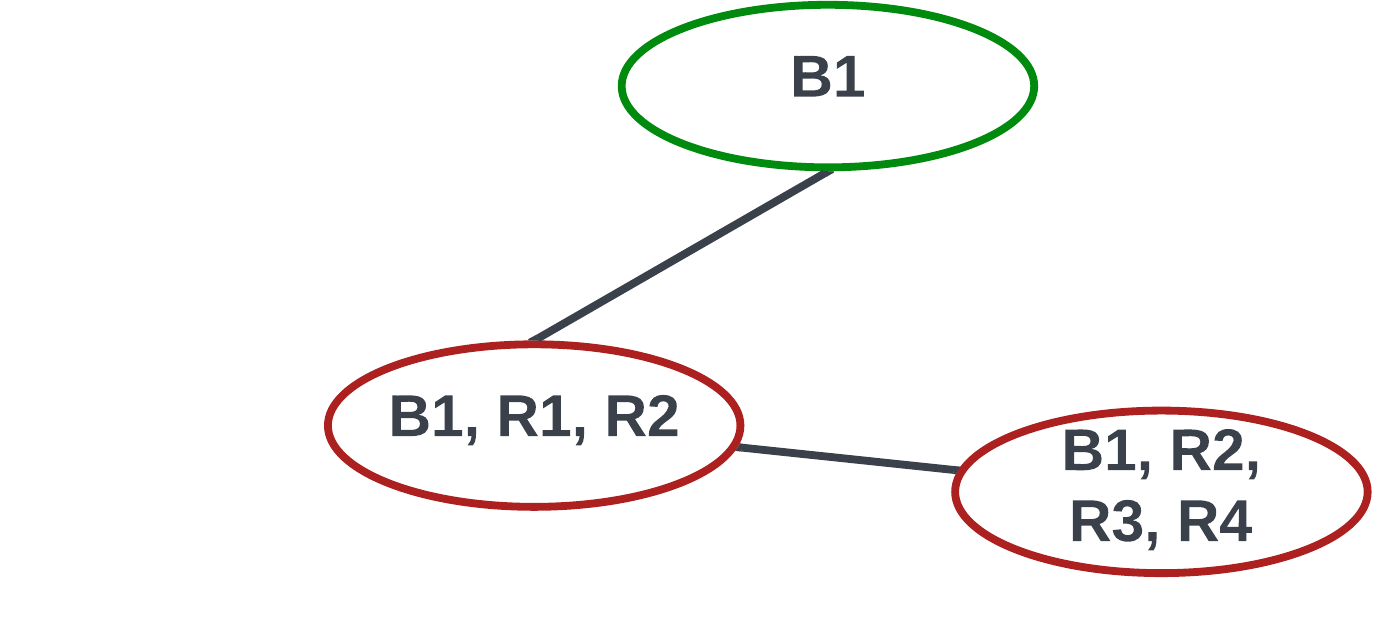}} \\
    \subfloat[\label{fig:tree_decomposition/new_panel_d}]{\includegraphics[trim=8mm 5mm 8mm 9mm,clip,width=0.5\columnwidth]{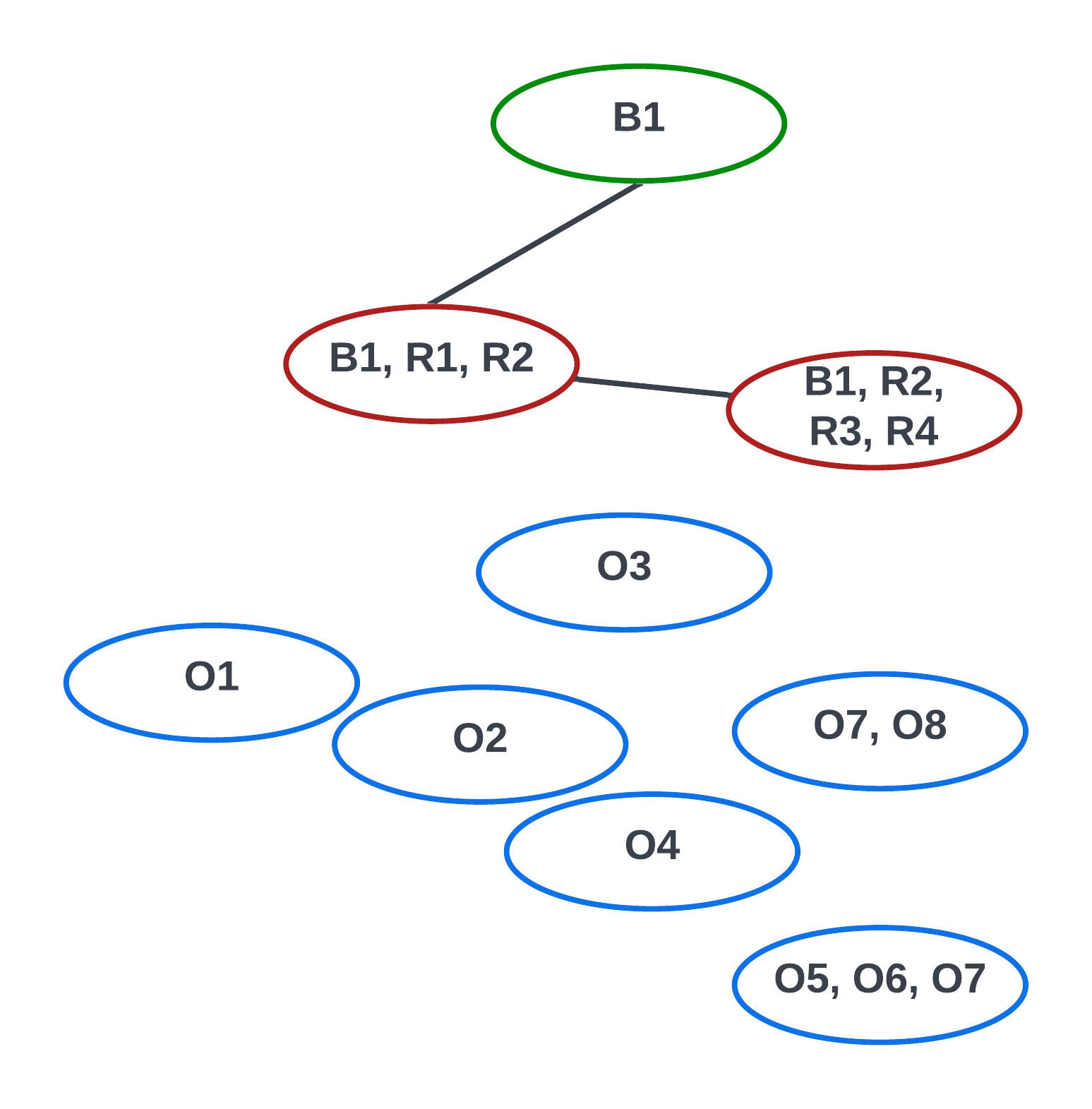}}
    \subfloat[\label{fig:tree_decomposition/new_panel_e}]{\includegraphics[trim=8mm 5mm 8mm 9mm,clip,width=0.5\columnwidth]{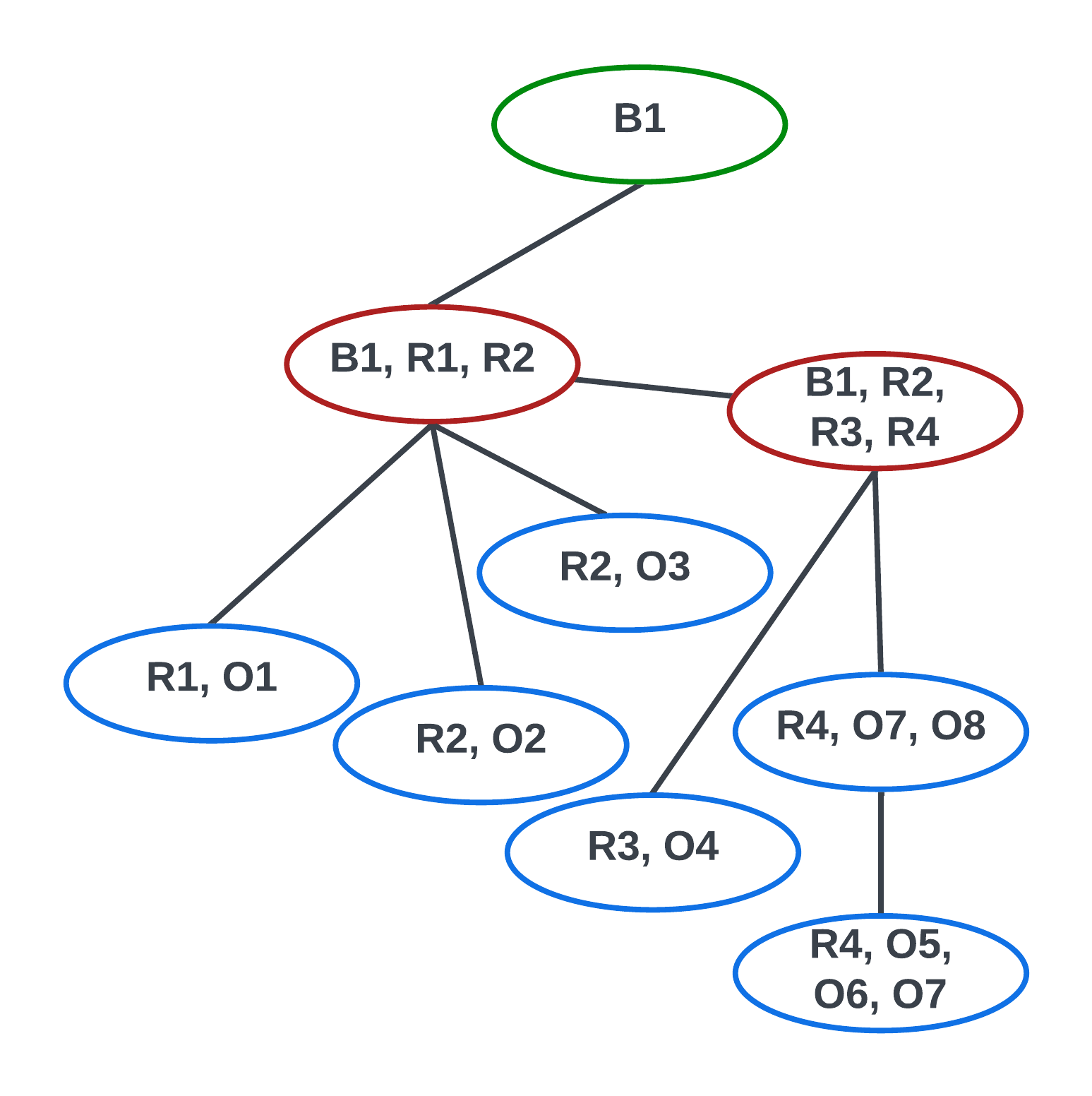}}
    \caption{(a) Hierarchical object-room-building graph with objects O1, O2, $\ldots$ O8,
    rooms R1, R2, R3, R4, and a single building B1.
             (b) Tree decomposition $T$ of the building B1 and associated $T_v$ of the rooms (\ie{} children of B1) as computed in line~\ref{line:initializeT} and~\ref{line:doChildrenTD} of \cref{algo:td-hierarchical}.
             (c) $T$ after adding B1 to every bag of the tree decomposition of the rooms and joining $T_v$ to $T$ (line~\ref{line:addNodeToBags} and line~\ref{line:addEdgeToT} in \cref{algo:td-hierarchical}).
             (d) $T$ and associated $T_v$ for the object nodes.
             (e) Final tree-decomposition of the object-room-building graph formed by the concatenation procedure described in \cref{algo:td-hierarchical}.\label{fig:tree_decomposition} }
     \togglevspace{-5mm}{0mm}
\end{figure}

We prove that, at any iteration $t$, the \treeDecomposition graph $T$ constructed in \cref{algo:td-hierarchical} is a valid \treeDecomposition of the graph $\calG^{e}_t$. We prove this by induction. At $t=1$, the explored graph $\calG^{e}_1$ is nothing but $\calG_\layers$. Note that $T$, after iteration 1 (\ie after the initialization at \cref{line:initializeT}) by definition equals the \treeDecomposition $\calG^{e}_1$. Now we only need to prove that subsequent iterations produce a \treeDecomposition of $\calG^{e}_t$.
Each new iteration (\ie execution of \cref{line:addEdgeToT}) updates $T$ to include $T^{'}_{v}$ and the edge $(b, b')$. The new explored graph is $\calG^{e}_{t+1} = \calG^{e}_{t} \oplus \calG[{C(v) + \{v\}}]$. It suffices to argue that $T \oplus T^{'}_{v} \oplus (b, b')$ (\crefrange{line:doChildrenTD}{line:addEdgeToT}) is a valid \treeDecomposition of $\calG^{e}_{t+1} = \calG^{e}_{t} \oplus \calG[C(v) + \{v\}]$. We argue this by proving that the requirements C1-C5 are satisfied.

For C1, we note that $T \oplus T^{'}_{v} \oplus (b, b')$ is a tree as $T$ and $T^{'}_{v}$ are disjoint trees and $(b, b')$ is an edge connecting the two trees, $T$ and $T^{'}_{v}$. C2 is satisfied because $T$ and $T^{'}_{v}$ are \treeDecompositions of \subgraphs of $\calG$. C3 is satisfied for all edges in $\calG^{e}_{t}$ and in $\calG[C(v)]$. The only edges that remain are those that connect node $v$ to nodes in $C(v)$, namely $\{(v, u)~|~u \in C(v)\}$.
We note that by adding $\{v\}$ to every bag of $T_v$ to obtain $T^{'}_v$ we get that all such edges are also included in the bags of $T^{'}_v$. C4 is satisfied again because $T$ and $T_{v}$ are valid \treeDecompositions and $T^{'}_{v}$ is built from $T_{v}$ by adding $v$ to all bags of $T_{v}$.
As $T_{v}$ is a tree graph disjoint from $T$, we retain property C4.
$\calG^{e}_{t+1} = \calG^{e}_{t} \oplus \calG[C(v) + \{v\}]$ (\ie{} nodes in $\calG^{e}_{t}$ and $C(v)$), have a bag in $T \oplus T^{'}_{v} \oplus (b, b')$.
\end{proof}

\Cref{thm:td-hierarchy} above showed that we can easily build a \treeDecomposition of a hierarchical graph by cleverly ``gluing together'' \treeDecompositions of \subgraphs in the hierarchy. Now we want to use that result to bound the treewidth of a hierarchical graph $\calG$.
Each node in the \treeDecomposition is typically referred to as a \emph{bag}, and intuitively represents a subset of nodes in the original graph.
The treewidth of a \treeDecomposition $T$ is defined as the size of the largest bag minus one (see \cref{sec:treeDecomposition}).
The treewidth of a graph $\calG$ is then defined as the minimum treewidth that can be achieved among all \treeDecompositions of $\Graph$.
Since \cref{algo:td-hierarchical} describes one such \treeDecompositions, \cref{thm:td-hierarchy}
 implies the following upper bound on the treewidth of a hierarchical graph.
\begin{proposition}[Treewidth of Hierarchical Graph]\label{prop:tw-hierarchy}
The treewidth of an $\layers$-layered hierarchical graph $\calG$ is bounded by the treewidths of the \subgraphs $\calG[C(v)]$ spanned by the children $C(v)$ of every node $v$ in $\calG$ and the treewidth of the graph $\calG[\calV_\layers]$ spanned by all layer-$\layers$ nodes:
\begin{equation}
\label{eq:twbound}
\text{tw}\left[\calG\right] \leq \max\left\{\max_{v \in \calV }\{ \text{tw}\left[ \calG[C(v)] \right] + 1\}, \text{tw}\left[\calG[\calV_\layers]\right]\right\}.
\end{equation}
\end{proposition}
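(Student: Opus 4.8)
The plan is to lean entirely on \cref{thm:td-hierarchy}, which already certifies that \cref{algo:td-hierarchical} outputs a valid \treeDecomposition $T$ of $\calG$. Since $\text{tw}[\calG]$ is defined as the minimum, over all \treeDecompositions of $\calG$, of (largest bag size $-1$), it suffices to upper-bound the size of the largest bag produced by \cref{algo:td-hierarchical}; that quantity minus one is then automatically an upper bound on $\text{tw}[\calG]$, since $T$ is \emph{one} admissible \treeDecomposition.

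First I would trace the algorithm to catalog where each bag of $T$ originates, arguing there are exactly two kinds. The initialization at \cref{line:initializeT} contributes the bags of $\TD{\calG[\calV_\layers]}$, whose largest bag has size $\text{tw}[\calG[\calV_\layers]] + 1$ by the definition of treewidth. Every other bag is created at \cref{line:addNodeToBags} as a bag of $T'_v = T_v + \{v\}$ for some $v \in \calV_i$, $i = \layers,\ldots,2$. Here $T_v = \TD{\calG[C(v)]}$ has largest bag of size $\text{tw}[\calG[C(v)]] + 1$; because $v \in \calV_i$ is not itself a child (so $v$ appears in no bag of $T_v$), augmenting every bag with the single element $v$ grows each bag by exactly one, giving a largest bag of size $\text{tw}[\calG[C(v)]] + 2$.

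The key observation that makes the count exhaustive is that the gluing step at \cref{line:addEdgeToT} only inserts an edge between two pre-existing bags and never alters any bag's contents (in particular, the bag $b$ selected at \cref{line:addEdgeToT} is used purely as an edge endpoint, not enlarged). Hence every bag's size is fixed at its moment of creation, and the two cases above cover all bags. Taking the maximum, the largest bag of $T$ has size
\begin{equation*}
\max\left\{\, \text{tw}[\calG[\calV_\layers]] + 1,\; \max_{v \in \calV}\{\text{tw}[\calG[C(v)]] + 2\} \,\right\},
\end{equation*}
where the inner maximum may be taken over all $v \in \calV$: for bottom-layer nodes $C(v) = \emptyset$, and under the usual convention $\text{tw}[\emptyset] = -1$ these terms contribute $0$ and are dominated by the top-layer term. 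Subtracting one from this largest bag size yields exactly the claimed bound \eqref{eq:twbound}.

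I expect the obstacles to be bookkeeping rather than conceptual. The two points deserving care are: confirming that adding the single node $v$ to each bag of $T_v$ accounts for precisely the ``$+1$'' appearing inside the maximum of \eqref{eq:twbound}, and verifying that no bag is ever enlarged after creation, so that the per-bag sizes computed above are truly the final sizes. Reconciling the algorithm's range $i = \layers,\ldots,2$ with the proposition's uniform $\max_{v \in \calV}$ via the empty-children convention at the bottom layer is a minor detail that I would state explicitly to avoid ambiguity.
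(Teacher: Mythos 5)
Your proof is correct and takes essentially the same route as the paper's: both use the output of \cref{algo:td-hierarchical} (validated by \cref{thm:td-hierarchy}) as a witness \treeDecomposition{} whose bags are either bags of $\TD{\calG[\calV_\layers]}$ or bags of $\TD{\calG[C(v)]}$ enlarged by the single element $v$, then subtract one from the maximum bag size. Your extra bookkeeping---that the gluing step never enlarges existing bags, and the empty-children convention for bottom-layer nodes---merely makes explicit what the paper leaves implicit.
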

Intuitively, the proposition states that the treewidth of a hierarchical graph does not grow with the number of nodes in the graph, but rather with the treewidth (roughly speaking, the complexity) of each layer in the graph.

\subsection{3D Scene Graphs and Indoor Environments}\label{sec:hieRep_sceneGraphs}

While the considerations in the previous sections apply to generic hierarchical representations,
in this section we tailor the discussion to a specific hierarchical representation, namely a \emph{3D scene graph}, and discuss its structure and properties when such representation is used to model indoor environments.

\myParagraph{Choice of Symbols for Indoor Navigation}
We focus on indoor environments and assume the robot is tasked with navigating to certain objects and rooms in a building. Therefore, we include the following groups of symbols that form the \emph{layers} in our hierarchical representation: free-space, obstacles, objects and agents, rooms, and building.
The details of the label spaces that we use for the object and room symbols can be found in \cref{app:labels}.
The only ``agent'' in our setup is the robot mapping the environment.
In terms of relations between symbols, we mostly consider two types of relations: inclusion (\ie the chair \emph{is in} the kitchen) and adjacency or traversability (\ie the chair \emph{is near} the table, the kitchen is adjacent to/reachable from the dining room).

As we mentioned in \cref{sec:hieRep_scalability}, this choice of symbols and relations is dictated by the tasks we envision the robot to perform and therefore it is not universal. For instance, other tasks might require breaking down the free-space symbol into multiple symbols (\eg to distinguish the front from the back of a room), or might require considering other agents (\eg humans moving in the environment as in~\cite{Rosinol21ijrr-Kimera,Rosinol20rss-dynamicSceneGraphs}).
This dependence on the task also justifies the different definitions of 3D scene graph found in the recent literature:
for instance, the original proposal in~\cite{Armeni19iccv-3DsceneGraphs} focuses on visualization and human-machine interaction tasks, rather than robot navigation, hence the corresponding scene graphs do not include the free-space as a symbol; the proposals~\cite{Wu21cvpr-SceneGraphFusion,Kim19tc-3DsceneGraphs} consider smaller-scale tasks and disregard the room and building symbols.
Similarly, the choice of relations is task dependent and can include a much broader set of relations beyond inclusion and adjacency. For instance, relations can describe attributes of an object (\eg ``has color''), material properties (\eg ``is made of''), can be used to compare entities (\eg ``is taller than''), or may encode actions (\eg a person ``carries'' an object, a car ``drives on'' the road). While these other relations are beyond the scope of our paper, we refer the reader to~\cite{Zhu22arxiv-SceneGraphSurvey} for further discussion.

\myParagraph{Choice of Sub-symbolic Representations}
We ground each symbol in compact sub-symbolic representations; intuitively, this reduces to attaching geometric attributes to each symbol observed by the robot.
We ground the ``obstacle'' symbol into a 3D mesh describing the observed surfaces in the environment.
We ground the ``free-space'' symbol using a places \subgraph, which can be understood as a topological map of the environment. Specifically, each place node in the graph of places is associated with an obstacle-free 3D location (more precisely, a sphere of free space described by a centroid and radius), while edges represent traversability between places.\footnote{While we postpone the technical details to \cref{sec:places}, the graph of places can be understood as a sparse approximation of a Generalized Voronoi Diagram (GVD) of the environment, a data structure commonly used in computer graphics and computational geometry to compress shape information~\citep{Tagliasacchi16cgf-3dSkeletonsSurvey}.}
We ground the ``agent'' symbol using the pose-graph describing the robot trajectory~\citep{Cadena16tro-SLAMsurvey}.
We ground each  ``object'', ``room'', and ``building'' symbol using a centroid and a bounding box.
Somewhat redundantly, we also store the mesh vertices corresponding to each object, and the set of places included in each room, which can be also understood as additional groundings for these symbols.
As discussed in the next section, this is mostly a byproduct of our algorithms, rather than a deliberate choice of sub-symbolic representation.

{\bf 3D Scene Graphs for Indoor Environments}
The choices of symbolic and sub-symbolic representations outlined above lead to the
3D scene graph structure visualized in \cref{fig:intro}.
In particular,
\emph{Layer~1} is a metric-semantic 3D mesh.
\emph{Layer~2} is a \subgraph of objects and agents; each object has a semantic label (which identifies the symbol being grounded), a centroid, and a bounding box (providing the grounding);
 each agent is modeled by a pose graph describing its trajectory (in our case the robot itself is the only agent).
\emph{Layer~3} is a \subgraph of \emph{places} (\ie{} a topological map) where each place is an obstacle-free location and an edge between places denotes straight-line traversability.
\emph{Layer~4} is a \subgraph of rooms where each room has a centroid, and edges connect adjacent rooms.
\emph{Layer~5} is a building node connected to all rooms (we assume the robot maps a single building). Edges connect nodes within each layer (\eg{} to model traversability between places or rooms, or connecting nearby objects) or across layers (\eg{} to model that mesh vertices belong to an object, that an object is in a certain room, or that a room belongs to a building).
Note that the 3D scene graph structure forms a hierarchical graph meeting the requirements of \cref{def:hieGraph}.

\myParagraph{Treewidth of Indoor 3D Scene Graphs}
In \cref{sec:hieRep_inference} we concluded that the treewidth of a hierarchical graph is bounded by the treewidth of its layers. In this section, we particularize the treewidth bound  to 3D scene graphs modeling indoor environments.
We first prove bounds on the treewidth of key layers in the 3D scene graph (\cref{lem:tw-room,lem:tw-object}). Then we translate these bounds into a bound
on the treewidth of the object-room-building \subgraph of a 3D scene graph (\cref{cor:tw-orb}); the bound is important in practice since we will need to perform inference over such a \subgraph to infer the room (and potentially building) labels, see \cref{sec:rooms_classification}.

We start by bounding the treewidth of the room layer.
\begin{lemma}[Treewidth of Room Layer]\label{lem:tw-room}
Consider the \subgraph of rooms, including the nodes in the room layer of a 3D scene graph and the corresponding edges.
If each room connects to at most two other rooms that have doors (or more generally passage ways) leading to other rooms, then the room graph has treewidth bounded by two.
\end{lemma}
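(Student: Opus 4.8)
The plan is to model the room layer as a simple graph $G_R = (\mathcal{V}_R, \mathcal{E}_R)$ whose vertices are rooms and whose edges encode the door/passage adjacencies, and to turn the informal hypothesis into a precise degree condition. I read ``a room that has doors leading to other rooms'' as a room whose degree in $G_R$ is at least two (a \emph{through-room}, or hub), as opposed to a \emph{dead-end} room of degree at most one; the hypothesis then states that every room is adjacent to at most two through-rooms. The first step is therefore to isolate the ``core'' of the graph: let $H$ be the subgraph of $G_R$ induced by the through-rooms. Because the hypothesis applies to every room, through-rooms included, each vertex of $H$ has at most two neighbors inside $H$, so $H$ has maximum degree at most two.

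A graph of maximum degree at most two is a disjoint union of simple paths and cycles, and I would exhibit a \treeDecomposition{} of width at most two directly, in the spirit of \cref{algo:td-hierarchical}. Paths admit the obvious width-$1$ decomposition with bags $\{v_i, v_{i+1}\}$; for a cycle $v_1 - v_2 - \cdots - v_n - v_1$ I would use the ``fan'' decomposition consisting of the bags $\{v_1, v_i, v_{i+1}\}$ for $i = 2, \ldots, n-1$ arranged along a path, each of size three, which also covers the wrap-around edge $(v_n, v_1)$ via the last bag $\{v_1, v_{n-1}, v_n\}$. Taking the disjoint union of these decompositions, joined by arbitrary edges into a single tree, shows $\text{tw}\left[H\right] \le 2$.

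It remains to reattach the dead-end rooms without increasing the width. Each dead-end room has degree at most one, so in $G_R$ it is either an isolated vertex, a member of an isolated two-room component, or a pendant hanging off a through-room. A pendant $u$ attached to a through-room $w$ can be absorbed by adding a fresh bag $\{u, w\}$ adjacent to any bag of the decomposition of $H$ that contains $w$; this introduces only bags of size two and preserves conditions C1--C5, so it does not raise the width. Isolated vertices and isolated edges contribute separate components of treewidth at most one. Hence $\text{tw}\left[G_R\right] \le \max\{2, 1\} = 2$, as claimed. The main obstacle is not any single calculation but getting the reduction right: correctly formalizing ``rooms that have doors leading to other rooms'' as the degree-at-least-two condition, and then verifying that passing to the through-room subgraph genuinely caps its maximum degree at two while the (possibly numerous) dead-end rooms reattach as harmless pendants. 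Equivalently, one could argue that a union of cycles with pendant trees contains no $K_4$ minor and invoke the classical characterization that $\text{tw}\left[\calG\right] \le 2$ if and only if $\calG$ is $K_4$-minor-free, but the explicit decomposition above meshes better with the rest of the paper.
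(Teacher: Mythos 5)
Your proof is correct and follows essentially the same route as the paper's own proof: peel off the degree-$\le 1$ rooms, observe that the hypothesis caps the degree of every vertex of the remaining core at two, conclude that this core has treewidth at most two, and reattach the removed rooms via bags of size two. If anything, your write-up is slightly more careful than the paper's, which argues via ``no $K_4$ subgraph'' where the correct invariant is $K_4$-minor-freeness and glosses over isolated two-room components, whereas you give explicit path/cycle decompositions and handle those corner cases.
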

The insight behind \cref{lem:tw-room} is that the room \subgraph is not very far from a tree (\ie it has low treewidth) as long as there are not many rooms with multiple doors (or passage ways). In particular, the room \subgraph has treewidth equal to 2 if each room has at most two passage ways to other rooms with multiple entries.
Note that the theorem statement allows a room to be connected to an arbitrary number of rooms with a single entry (\ie a corridor leading to multiple rooms that are only accessible via that corridor). \Cref{fig:room-graph-tw-rooms} reports empirical upper-bounds of the treewidth of the room \subgraphs in the 90 scenes of the Matterport3D dataset~\citep{Chang173dv-Matterport3D} (see \cref{sec:experiments} for more details about the experimental setup).
The observed treewidth is at most 2, confirming that the conclusions of \cref{lem:tw-room} hold in several apartment-like environments.

\begin{figure}[t]
    \centering
    \subfloat[\label{fig:room-graph-tw-rooms}]{\includegraphics[width=4.4cm]{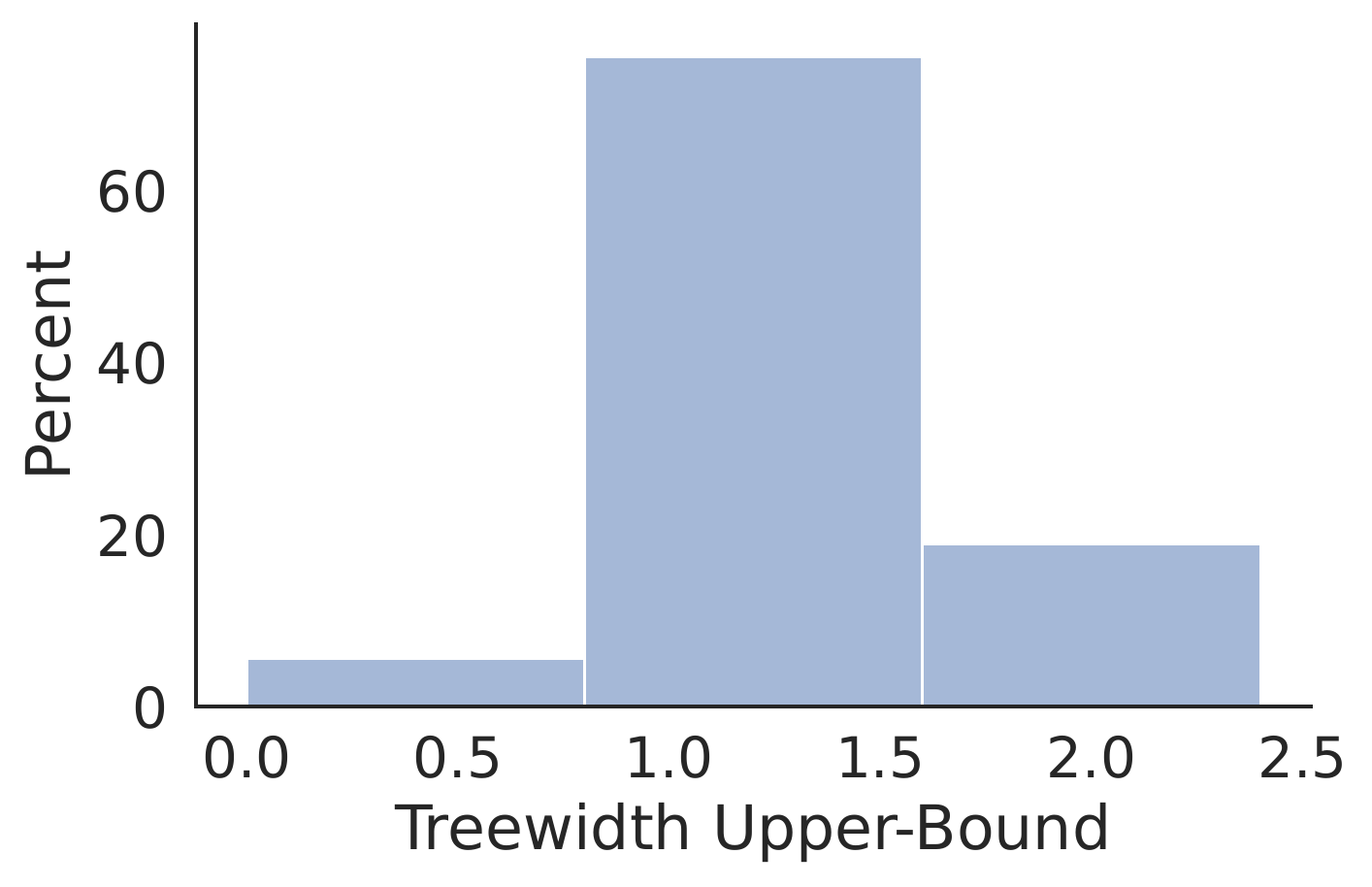}}
    \subfloat[\label{fig:room-graph-tw-object}]{\includegraphics[width=4.4cm]{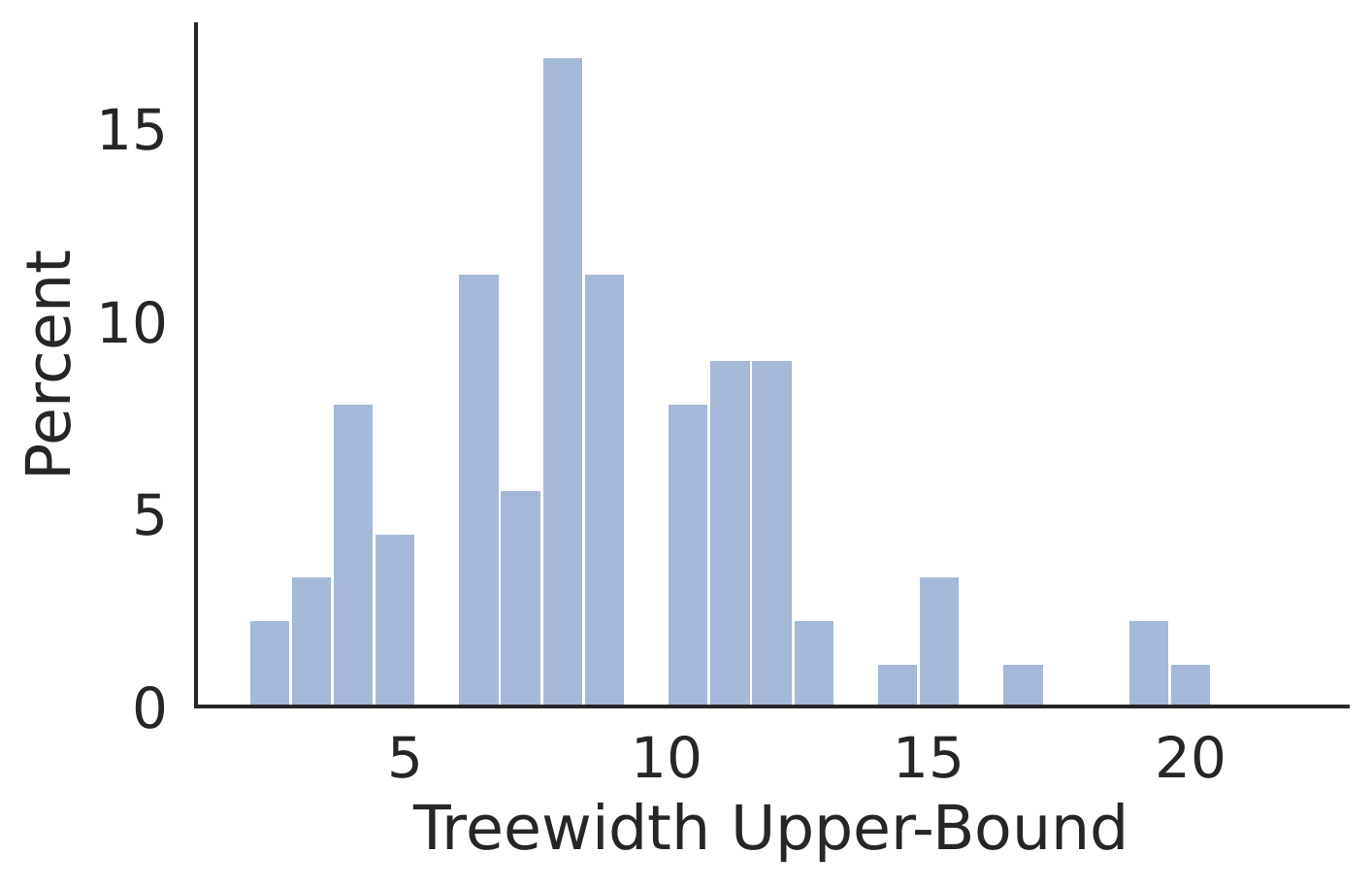}}
    \caption{Histogram of the {treewidth upper-bounds} for (a) the room \subgraph and (b) the object \subgraph of the 3D scene graphs obtained using the 90 scenes of the Matterport3D dataset~\cite{Chang173dv-Matterport3D}. We use the minimum-degree and the minimum-fill-in heuristic to obtain treewidth upper-bounds~\citep{Bodlaender2010j-IC-TreewidthComputationI}. We then compute an upper-bound to be the lowest of the two.
}
\end{figure}

\begin{lemma}[Treewidth of Object Layer]
    \label{lem:tw-object}
Consider the \subgraph of objects, which includes the nodes in the object layer of a 3D scene graph and the corresponding edges.
The treewidth of the object \subgraph is bounded by the maximum number of objects in a room.
\end{lemma}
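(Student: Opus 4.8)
The plan is to reduce the lemma to two elementary facts about treewidth by first showing that the object \subgraph breaks up cleanly along room boundaries. The two facts I will use are: (a) any graph on $n$ vertices has treewidth at most $n-1$, witnessed by the trivial \treeDecomposition consisting of a single bag containing all vertices; and (b) the treewidth of a disjoint union of graphs equals the maximum treewidth of its components. Neither requires any computation, so the real content lies entirely in the decomposition step.

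First I would establish the key structural claim: no edge of the object \subgraph joins objects lying in two different rooms. This follows from the \emph{disjoint children} property of \cref{def:hieGraph} applied to the room layer, which (as noted in the discussion following that definition) forces the children $C(r)$ and $C(r')$ of two distinct rooms $r\neq r'$ to share neither nodes nor edges; hence every intra-layer object--object edge has both endpoints among the objects of a single room. Consequently the object \subgraph is the disjoint union $\bigcup_{r}\calG[C(r)]$, where $\calG[C(r)]$ is the \subgraph spanned by the objects contained in room $r$, and each such component has at most $k$ vertices, with $k$ the maximum number of objects in any room.

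Next I would bound a single room's piece and then reassemble. Since each $\calG[C(r)]$ has at most $k$ vertices, its single-bag \treeDecomposition (whose validity against conditions C1--C5 from the proof of \cref{thm:td-hierarchy} is immediate) gives $\treewidth{\calG[C(r)]}\le k-1$. Connecting these per-room single-bag decompositions by arbitrary edges yields a valid \treeDecomposition of the entire object \subgraph: adding edges between otherwise disjoint trees leaves every bag size unchanged, and the connectedness condition C4 is preserved because each object appears in exactly one component. Therefore the treewidth of the object \subgraph is at most $k-1$, and in particular bounded by the maximum number of objects in a room, as claimed.

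The main obstacle is really just the first step --- correctly invoking the disjoint-children property to rule out cross-room object edges, which is what makes the graph split into per-room pieces with at most $k$ vertices each. Once that decomposition is secured, the treewidth bound drops out immediately from the trivial single-bag decomposition on each piece. I note that one could alternatively specialize \cref{prop:tw-hierarchy}, but the direct disjoint-union argument is more transparent when reasoning about a single layer in isolation.
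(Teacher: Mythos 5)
Your proposal is correct and follows essentially the same route as the paper: the paper's proof likewise observes that no edge connects objects in different rooms, so the object \subgraph decomposes into per-room connected components whose treewidth is bounded by their size, \ie the number of objects in that room. Your additional details (the trivial single-bag \treeDecomposition{} per component and the reassembly into one tree) merely make explicit what the paper leaves implicit, and in fact yield the marginally tighter bound $k-1$.
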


The result is a simple consequence of the fact that in our 3D scene graph, there is no edge connecting objects in different rooms, therefore, the graph of objects includes disconnected components corresponding to each room, whose treewidth is bounded by the size of that connected component, \ie the number of objects in that room.
\Cref{fig:room-graph-tw-object} reports the treewidth upper-bounds for the object \subgraphs in the Matterport3D dataset.
We observe that the treewidth of the object \subgraphs tends to be larger compared to the room \subgraphs, but
 still remains relatively small (below 20) in all tests.

We can now conclude with a bound on the treewidth of the object-room-building graph in a 3D scene graph.
\begin{proposition}[Treewidth of the Object-Room-Building Graph]\label{cor:tw-orb}
Consider the object-room-building graph of a building, including the nodes in the object, room, and building layers of a 3D scene graph and the corresponding edges.
Assume the treewidth of the room graph is less than the treewidth of the object graph. Then, the treewidth $\text{tw}[\calG]$ of the object-room-building graph  $\calG$ is bounded by
\begin{equation}
\text{tw}[\calG] \leq 1 + N_o,
\end{equation}
where $N_o$ denotes the largest number of objects in a room.
\end{proposition}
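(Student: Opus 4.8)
The plan is to recognize the object-room-building graph as a three-layered hierarchical graph and invoke \cref{prop:tw-hierarchy}. Concretely, I would set $\layers = 3$, taking the object nodes as the bottom layer $\calV_1$, the room nodes as $\calV_2$, and the single building node as the top layer $\calV_3$ (recall that in this \subgraph each object connects directly to the room containing it). The first step is to verify that this three-layer graph meets \cref{def:hieGraph}: each object has a single parent room and each room a single parent building (single parent), edges run only within a layer or between adjacent layers (locality), and objects in distinct rooms are mutually non-adjacent (disjoint children). This is inherited from the fact, already noted in the text, that the full 3D scene graph is a hierarchical graph.

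Next I would evaluate the maximum on the right-hand side of \eqref{eq:twbound} by grouping the nodes $v$ according to their layer, since the induced subgraphs $\calG[C(v)]$ take only three qualitatively different forms. The top-layer term $\text{tw}[\calG[\calV_3]]$ is the treewidth of the isolated building node, hence $0$. For $v$ an object in $\calV_1$ the child set $C(v)$ is empty and contributes nothing. For $v$ a room in $\calV_2$, the children $C(v)$ are exactly the objects contained in that room, so $\calG[C(v)]$ is the per-room object subgraph; by \cref{lem:tw-object} its treewidth is at most $N_o$, so the corresponding term satisfies $\text{tw}[\calG[C(v)]] + 1 \leq 1 + N_o$. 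Finally, for $v$ the building node, $C(v)$ is the set of all rooms and $\calG[C(v)]$ is precisely the room \subgraph, whose treewidth is at most $2$ by \cref{lem:tw-room}, contributing a term bounded by $3$.

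It then remains to argue which of these terms dominates. Because the object \subgraph has no edges between objects of different rooms, it is a disjoint union of the per-room object subgraphs, and its treewidth equals the maximum of theirs; thus the room-children terms collectively realize $\text{tw}[\text{object graph}] + 1$, while the single building-children term realizes $\text{tw}[\text{room graph}] + 1$. The hypothesis that the room-graph treewidth is strictly smaller than the object-graph treewidth is exactly what lets me discard the building-children term (and a fortiori the top-layer term) as subdominant, leaving $\text{tw}[\calG] \leq 1 + N_o$.

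The main obstacle is not any hard calculation but the bookkeeping in the middle step: I must make sure the two auxiliary lemmas are applied to the correct induced subgraphs---\cref{lem:tw-room} to the children of the building and \cref{lem:tw-object} to the children of each room---and that the empty-child case for object nodes and the isolated top node are handled by the appropriate treewidth conventions so that they do not spuriously inflate the bound. Once those identifications are pinned down, the assumption on the relative treewidths does the rest.
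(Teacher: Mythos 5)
Your proof is correct and follows essentially the same route the paper intends: instantiate \cref{prop:tw-hierarchy} on the three-layer object-room-building hierarchy, bound the children-of-each-room terms via \cref{lem:tw-object}, and discard the building-children (room-graph) term and the trivial top-layer term using the stated assumption that the room graph's treewidth is smaller. The only minor remark is that your appeal to \cref{lem:tw-room} is dispensable (its door-count hypothesis is not assumed in \cref{cor:tw-orb}); the proposition's own assumption already makes that term subdominant, which is how you in fact conclude.
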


\Cref{cor:tw-orb} indicates that the treewidth of the object-room-building graph does not grow with the size of the scene graph, but rather depends on how cluttered each room is. This is in stark contrast with the treewidth of social network graphs~\citep{Maniu19icdt-ExperimentalStudy,Talak21neurips-neuralTree}, and further motivates the proposed hierarchical organization. The treewidth bounds in this section  open the door to tractable inference techniques; in particular, in \cref{sec:rooms_classification}, we show they allow applying novel graph-learning techniques, namely, the neural tree~\citep{Talak21neurips-neuralTree}.

\section{Real-time Incremental \\ 3D Scene Graph Layers Construction}\label{sec:incrementalLayers}

This section describes how to estimate an \emph{odometric} 3D scene graph directly from visual-inertial data as the robot explores an unknown environment.
\cref{sec:LCD-and-SGO} then shows how to correct the scene graph in response to loop closures.

We start by reconstructing a metric-semantic 3D mesh to populate \emph{Layer~1} of the 3D scene graph (\cref{sec:mesh}), and use the mesh to extract the objects in \emph{Layer~2} (\cref{sec:objects}).
We extract the places in~\emph{Layer~3} as a byproduct of the 3D mesh computation by first computing a Generalized Voronoi Diagram (GVD) of the environment, and then approximating the GVD as a sparse graph of places (\cref{sec:places}).
We then populate the rooms  in~\emph{Layer~4} by segmenting their geometry using persistent homology (\cref{sec:rooms_clustering}) and assigning them a semantic label using the neural tree (\cref{sec:rooms_classification}).
In this paper, we assume that the scene we reconstruct consists of a single building, and for~\emph{Layer~5} we instantiate a single building node that we connect to all estimated room nodes.

\subsection{Layers 1: Mesh}\label{sec:mesh}

We build a metric-semantic 3D mesh (\emph{Layer~1} of the 3D scene graph) using Kimera~\citep{Rosinol20icra-Kimera} and Voxblox~\citep{Oleynikova17iros-voxblox}.
In particular, we maintain a metric-semantic voxel-based map within an active window around the robot.
Each voxel of the map contains free-space information and a distribution over possible semantic labels.
We gradually convert this voxel-based map into a 3D mesh using marching cubes, attaching a semantic label to each mesh vertex.
This is now standard practice (\eg{} the same idea was used in~\cite{Whelan12rgbd} without inferring the semantic labels) and the expert reader can safely skip the rest of this subsection.

In more detail, for each keyframe\footnote{Keyframes are selected camera frames chosen by the visual-inertial odometry pipeline, see \cref{sec:objects} and~\cite{Rosinol21ijrr-Kimera} for further details.} we use a 2D semantic segmentation network to obtain a pixel-wise semantic segmentation of the RGB image, and  reconstruct a depth-map using stereo matching (when using a stereo camera) or from the depth channel of the sensor (when using an RGB-D camera).
We then convert the semantic segmentation and depth into a semantically-labeled 3D point cloud and transform it according to the odometric estimate of the robot pose (\cref{sec:objects}).
We use Voxblox~\citep{Oleynikova17iros-voxblox} to integrate the semantically-labeled point cloud into a Truncated Signed Distance Field (\TSDF{}) using ray-casting, and Kimera~\citep{Rosinol21ijrr-Kimera} to perform Bayesian updates over the semantic label of each voxel during ray-casting.
Both Voxblox~\citep{Oleynikova17iros-voxblox} and Kimera~\citep{Rosinol21ijrr-Kimera} operate over an \emph{active window}, \ie{} only reconstruct a voxel-based map
within a user-specified radius $r_a$ around the robot ($r_a = \SI{8}{\meter}$ in our tests)\footnote{The radius of the active window has to be larger than the maximum ray-casting distance to construct the \TSDF{} (\SI{\approx4}{\meter}) and the block resolution of the spatial hashing (\SI{\approx1.5}{\meter}).}  to bound the memory requirements.
Within the active window, we extract the 3D metric-semantic mesh using Voxblox' marching cubes implementation, where each mesh vertex is assigned the most likely semantic label from the corresponding voxel.
We then use spatial hashing~\citep{Niessner2013acm-real} to integrate $\calM_a$, the mesh inside the active window, into the full mesh $\calM_f$, optionally also compressing the mesh to a given resolution.
The use of the active window circumvents the need to maintain a monolithic and memory-hungry voxel-based representation of the entire environment (as done in our original proposal in~\cite{Rosinol20icra-Kimera}), and instead allows to gradually transform the voxel-based map in the active window (moving with the robot) into a lighter-weight 3D mesh.
The full mesh, $\calM_f$, is uncorrected for odometric drift, and we address loop closure detection and optimization in \cref{sec:LCD-and-SGO}.

\subsection{Layers 2: Objects and Agents}
\label{sec:objects}

\myParagraph{Agents} The agent layer consists of the pose graph describing the robot trajectory (we refer the reader to~\cite{Rosinol21ijrr-Kimera} for extensions to multiple agents, including humans).
During exploration, the odometric pose graph is obtained using stereo or RGB-D visual-inertial odometry, which is also available in Kimera~\citep{Rosinol20icra-Kimera,Rosinol21ijrr-Kimera}.
 The poses in the pose graph correspond to the keyframes selected by Kimera, and for each pose we also store visual features and descriptors, which are used for loop closure detection in \cref{sec:LCD-and-SGO}. As usual, edges in the pose graph correspond to odometry measurements between consecutive poses, while we will also add loop closures as described in \cref{sec:LCD-and-SGO}.

\myParagraph{Objects}
 The object layer consists of a graph where each node corresponds to an object (with a semantic label, a centroid, and a bounding box) and edges connect nearby objects.
After extracting the 3D mesh within the active window, we segment objects by performing Euclidean clustering~\citep{Rusu09thesis-SemanticMaps} on $\calV_a$, the vertices of $\mathcal{M}_a$.
In particular, we independently cluster the sets of vertices $\calV_a^i$ with the same semantic label $i$ (where $\calV_a^i = \setdef{v \in \calV_a}{\text{label}(v) = i}$) for all semantic labels of interest.
As in Kimera~\citep{Rosinol21ijrr-Kimera}, each resulting cluster is then used to estimate a centroid and bounding box for each putative object.
After each update, if a newly-detected object overlaps with an existing object node of the same semantic class in the scene graph, we merge them together by adding new mesh vertices to the previous object node; otherwise we add the new object as a new node.
In practice, we consider two objects overlapping if the centroid of one object is contained in the other object's bounding box, a proxy for spatial overlap measures such as Intersection over Union (IoU).\footnote{In previous work~\citep{Rosinol21ijrr-Kimera}, we also detected objects with known shape using 3D registration~\citep{Yang20tro-teaser}; in practice, we see that such approach only works well for large objects since small objects are not well-described by the 3D mesh, whose resolution is implicitly limited by the resolution of the voxel-based map in the active window (\SI{10}{\cm} in our tests). A more promising approach is to directly ground small 3D objects (\eg a pen) from sensor data as in~\cite{Talak23rss-ensemble}.}

\begin{figure}
    \centering
    \subfloat[GVD and 3D Mesh]{\centering
\includegraphics[width=0.48\columnwidth,trim=10cm 0cm 6cm 0cm,clip]{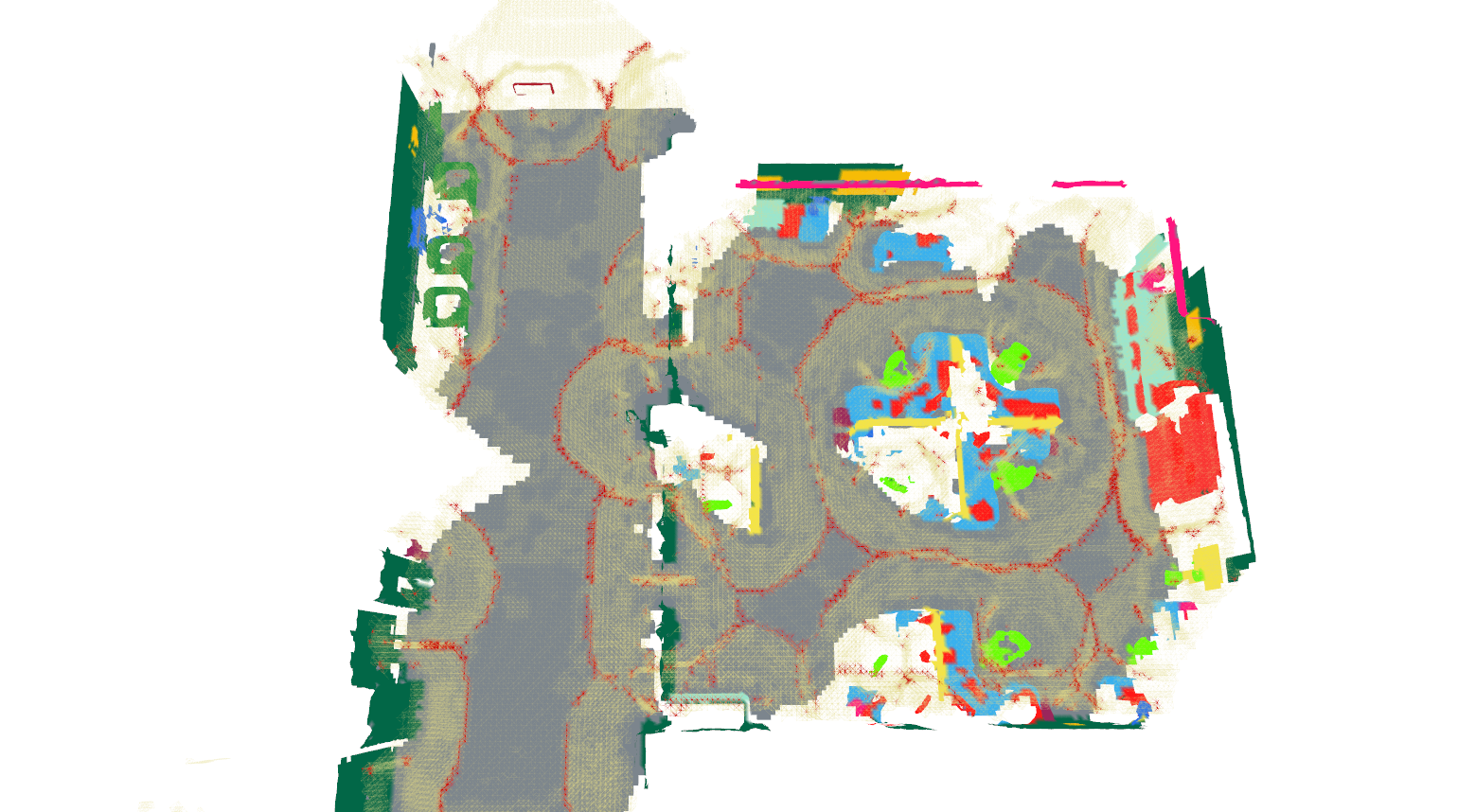}\label{fig:gvd}
    }
    \subfloat[Free-space Representation]{\centering
\includegraphics[width=0.48\columnwidth,trim=10cm 0cm 6cm 0cm,clip]{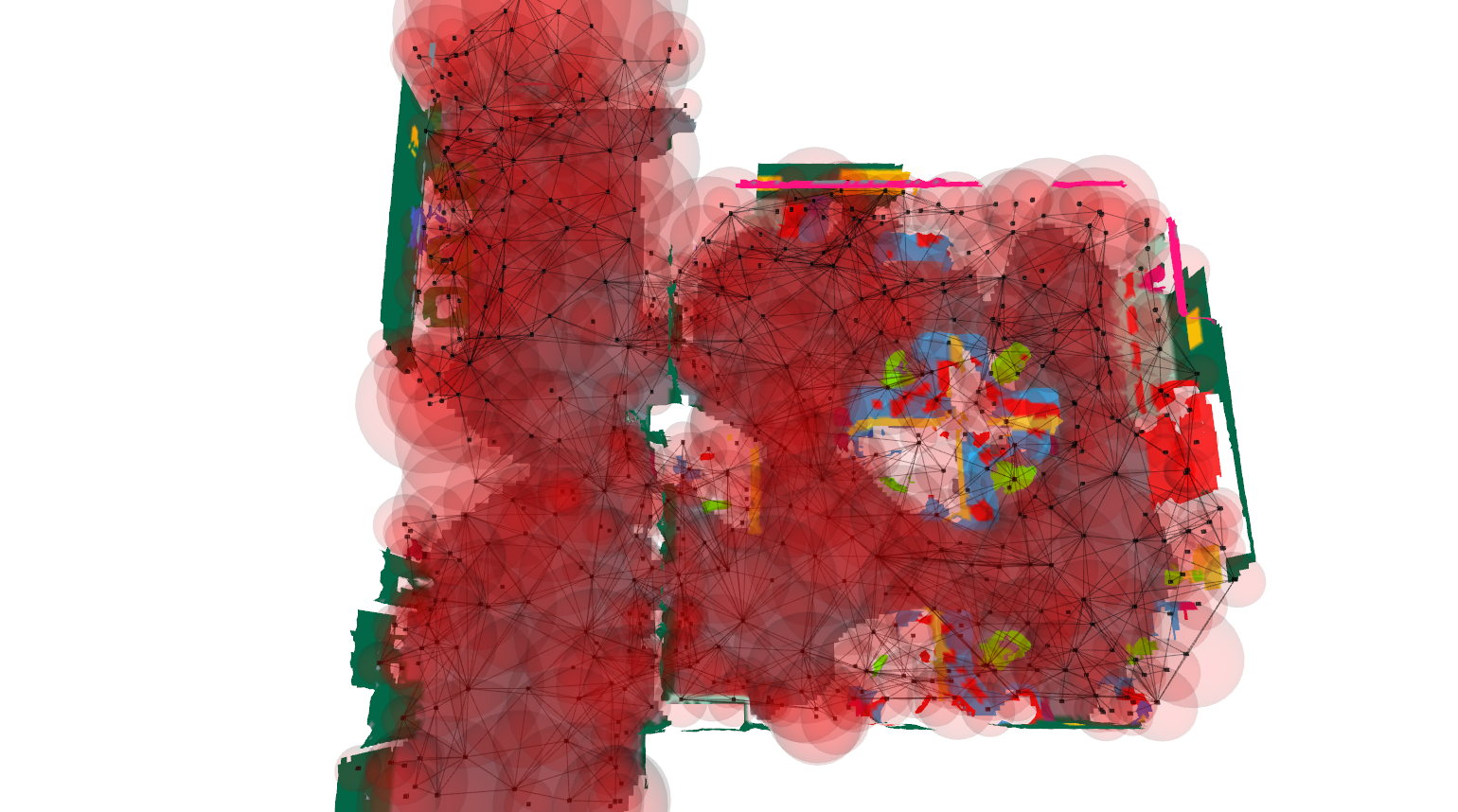}\label{fig:gvd_2}
    }
    \caption{(a) GVD (wireframe) of an environment with GVD voxels having three or more basis points highlighted.
             (b) Sparse places graph (in black) and associated spheres of free-space corresponding shown in red. Note that the union of the spheres roughly approximates the geometry of the free-space.}
     \togglevspace{-5mm}{0mm}
\end{figure}

\subsection{Layers 3: Places}\label{sec:places}

We build the places layer by computing a Generalized Voronoi Diagram (GVD) of the environment, and then sparsifying it into a graph of places.
While the idea of sparsifying the GVD into a graph has appeared in related work~\citep{Rosinol21ijrr-Kimera,Oleynikova18iros-topoMap}, these works extract such representations from a monolithic \ESDF{} of the environment, a process that is computationally expensive and confined to off-line use (\cite{Rosinol21ijrr-Kimera} reports computation times in the order of tens of minutes).
Instead, we combine and adapt the approaches of Voxblox~\citep{Oleynikova17iros-voxblox}, which presents a incremental version of the brushfire algorithm that converts a \TSDF{} to an \ESDF{}, and of \citet{Lau13ras-efficientGridRepresentations}, who present an incremental version of the brushfire algorithm that is capable of constructing a \GVD{} during the construction of the \ESDF{}, but takes as input a 2D occupancy map.
As a result, we show how to obtain a local GVD and a graph of places as a byproduct of the 3D mesh construction.

\myParagraph{From Voxels to the \GVD}
The \GVD{} is a data structure commonly used in computer graphics and computational geometry to compress shape information~\citep{Tagliasacchi16cgf-3dSkeletonsSurvey}.
The GVD is the set of voxels that are equidistant to at least two closest obstacles (``basis points'' or ``parents''), and intuitively forms a skeleton of the environment~\citep{Tagliasacchi16cgf-3dSkeletonsSurvey} (see \cref{fig:gvd}).
Following the approach in~\cite{Lau13ras-efficientGridRepresentations}, we use the fact that
 voxels belonging to the GVD can be easily detected from the wavefronts of the brushfire algorithm used to create and update the \ESDF{} in the active window.
Intuitively, \GVD{} voxels have the property that multiple wavefronts ``meet'' at a given voxel (\ie{} fail to update the signed distance of the voxel), as these voxels are equidistant from multiple obstacles.
The brushfire algorithm (and its incremental variant) are traditionally seeded at voxels containing obstacles (\eg{}~\cite{Lau13ras-efficientGridRepresentations}), but as mentioned previously, we follow the approach of~\cite{Oleynikova17iros-voxblox} and use the \TSDF{} to seed the brushfire wavefronts instead.
In particular, we track all \TSDF{} voxels that correspond to zero-crossings (\ie~voxels containing a surface) when creating $\mathcal{M}_a$ using marching cubes and then use these voxels as a surrogate occupancy grid (\ie{} wavefronts of the brushfire algorithm are treated as if they started from one of these \TSDF{} voxels).
Like~\cite{Oleynikova17iros-voxblox}, we also use the \TSDF{} voxels within the truncation distance to seed the wavefronts of the brushfire algorithm, \ie{} instead of starting the wavefronts directly from occupied voxels, we start them from the outermost voxels still within the truncation distance.
In our implementation, we compute a variant of the \GVD{} called the \emph{$\theta$-simplified medial axis} ($\theta$-SMA) from~\cite{Foskey03jcise-SMA} that filters out less stable and noisier portions of the \GVD{} using a threshold on the minimum angle between basis points.
Each voxel in the GVD is assigned a distance to its basis points, defining a sphere of free-space surrounding the voxel; collectively the \GVD{} provides a compact description of the free-space and its connectivity (\cref{fig:gvd_2}).
Indeed, up to discretization, the \GVD{} provides an \emph{exact} description of the free-space~\citep{Tagliasacchi16cgf-3dSkeletonsSurvey}, meaning that one can reconstruct the full shape of the free-space from its \GVD.

\begin{algorithm}[t]
    \SetArgSty{textrm}
    \DontPrintSemicolon
    \caption{Places Sparsification}\label{algo:places}
    \KwData{GVD graph $\mathcal{G}_a$, previous map of spatial hashes to voxel clusters $C$, spatial resolution $\delta_p$}
    \KwResult{New or updated nodes $\mathcal{N}$ and edges $\mathcal{E}$}
    $\mathcal{N} \gets \emptyset$\;
    $\mathcal{E} \gets \emptyset$\;
    $\mathcal{V}_u \gets \setdef{\text{\emph{voxel}}\ v \in \mathcal{G}_a}{\text{updated}(v) = \text{true}}$\;
    \For{\emph{voxel} $v \in \mathcal{V}_u$}{$h_v \gets \text{hash}(v, \delta_p)$\;\label{line:node_start}
        \textit{found} $\gets$ false\;
        \For{\emph{cluster} $c \in C[h_v]$}{\If{$v$ \emph{neighbors any voxel in cluster} $c$}{$c \gets c \cup \{v\}$\;
                \textit{found} $\gets$ true\;
                \textbf{break}\;
            }
        }

        \If{$\lnot$\textit{found}}{$C[h_v] \gets C[h_v] \cup \{v\}$\;\label{line:node_end}
        }

        \BlankLine{}

        \tcc{add node corresponding to cluster of $v$ to set of returned nodes}
        $\mathcal{N} \gets \mathcal{N} \cup \{\text{node}(v)\}$\;

        \BlankLine{}

        \For{\emph{voxel} $n \in \text{neighbors}(\mathcal{G}_a, v)$}{\label{line:edge_start}$h_n = \text{hash}(n, \delta_p)$\;
            \uIf{$\text{node}(v) \ne \text{node}(n) \land h_v = h_n$}{$\mathcal{N} \gets \mathcal{N} \setminus \{\text{node(n)}\}$\;
                $\mathcal{E} \gets \mathcal{E} \setminus \setdef{e \in \mathcal{E}}{\text{node(n)} \in e}$\;
                Merge clusters for nodes $n$ and $v$\;
            }
            \ElseIf{$\text{node}(v) \ne \text{node}(n)$}{$\mathcal{E} \gets \mathcal{E} \cup \{(\text{node}(v), \text{node}(n))\}$\;
            }\label{line:edge_end}
        }
    }
\end{algorithm}

\myParagraph{From the \GVD{} to the Places Graph}
While the \GVD{} already provides a compressed representation of the free-space, it still typically contains
 a large number of voxels (\eg{} more than $10,000$ in the Office dataset considered in \cref{sec:experiments}).
We could instantiate a graph of places with one node for each \GVD{} voxel and edges between nearby voxels, but such representation would be too large to manipulate efficiently (\eg{} our room segmentation approach in \cref{sec:rooms_clustering} would not scale to operating on the entire \GVD{}).
Previous attempts at sparsifying the \GVD{} (notably~\cite{Oleynikova18iros-topoMap} and our earlier paper~\cite{Hughes22rss-hydra}) used a subset of topological features (edges and vertices in the \GVD{}, which correspond to \GVD{} voxels having more than three and more than four basis points respectively) to form a sparse graph of places.
However, the resulting graph does not capture the same connectivity of free-space as the full \GVD{}, and may lead to graph of places with multiple connected components.
It would also be desirable for the user to balance the level of compression with the quality of the free-space approximation instead of always picking certain \GVD{} voxels.

To resolve these issues, we first form a graph $\mathcal{G}_a$ of the \GVD{} where each \GVD{} voxel corresponds to a node in $\mathcal{G}_a$ and edges connect any two neighboring \GVD{} voxels.
We then use voxel-hashing~\citep{Niessner2013acm-real} to spatially cluster ---{at a user-specified resolution}--- all updated \GVD{} voxels
{with at least $n_b$ basis points} ($n_b=2$ in our tests) inside the active window, ensuring that each cluster forms a connected component in $\mathcal{G}_a$.
This clustering is shown in \cref{algo:places}.
This algorithm takes as input the graph of \GVD{} voxels, $\mathcal{G}_a$, and for every voxel that was updated, computes a spatial hash value~\citep{Niessner2013acm-real}, denoted $\text{hash}(v, \delta_p)$ in the algorithm for a given voxel $v$ and spatial resolution $\delta_p$.
This hash value is the same for voxels within a cube of 3D space with side-lengths of $\delta_p$ and naturally clusters the voxels of the \GVD{} into clusters of the provided spatial resolution.
\crefrange{line:node_start}{line:node_end} incrementally grow clusters of voxels with the same hash value.
However, this results in multiple clusters on the same connected component of $\mathcal{G}_a$.
\crefrange{line:edge_start}{line:edge_end} then search over the neighbors of a given voxel $v$ in $\mathcal{G}_a$ and either combines clusters that share an edge in $\mathcal{G}_a$ and the same spatial hash value, or propose an edge between the clusters if the hash value differs.
Once \cref{algo:places} terminates, we obtain a set of proposed nodes (corresponding to unique clusters of voxels in $\mathcal{G}_a$ that form connected components) and proposed edges based on the connectivity of $\mathcal{G}_a$.

These proposed nodes and edges are assigned information from the voxels that generated them:
each new or updated node is assigned a position and distance to the nearest obstacle from the \GVD{} voxel with the most basis points in the cluster;
each new or updated edge is assigned a distance that is the minimum distance to the nearest obstacle of all the \GVD{} voxels that the edge passes through.
We then associate place nodes with the corresponding mesh-vertices of each basis point using the zero-crossing identified in the \TSDF{} by marching cubes.
We run \cref{algo:places} after every update to the \GVD{} and merge the proposed nodes $\mathcal{N}$ and edges $\mathcal{E}$ into the sparse graph of places.\footnote{Separate book-keeping is done to remove nodes and edges that no longer correspond to any \GVD{} voxels during the brushfire update of the \ESDF{} and \GVD{}, as \GVD{} voxels are removed during this update when a wavefront is able to modify the signed distance of the voxel}

\myParagraph{Inter-layer Edges}
After building the places graph, we add inter-layer edges from each object or agent node to the nearest place node in the active window using nanoflann~\citep{Blanco14-nanoflann}.

\subsection{Layer 4: Rooms}\label{sec:rooms}

We segment the rooms by first geometrically clustering the graph of places into separate rooms (using persistent homology, \cref{sec:rooms_clustering}), and then assigning a semantic label to each room (using the neural tree, \cref{sec:rooms_classification}).
We remark that the construction of the room layer is fundamentally different from the construction of the object layer.
While we can directly observe the objects (\eg{} via a network trained to detect certain objects), we do not have a network trained to detect certain rooms.
Instead, we have to rely on prior knowledge to infer both their geometry and semantics.
For instance, we exploit the fact that rooms are typically separated by small passageways (\eg{} doors), and that their semantics can be inferred from the objects they contain.

\begin{figure}
    \centering
    \includegraphics[width=0.69\columnwidth]{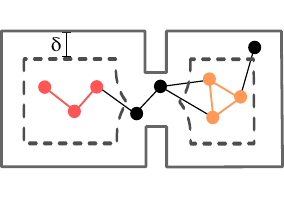}
     \togglevspace{-5mm}{0mm}
    \caption{Connected components (in orange and red) in the \subgraph of places induced by a dilation distance $\delta$ (walls are in gray, dilated walls are dashed, places that disappear after dilation are in black). \label{fig:rooms}}
    \togglevspace{-5mm}{0mm}
\end{figure}

\subsubsection{Layer 4: Room Clustering via Persistent Homology}\label{sec:rooms_clustering}

This section discusses how to geometrically cluster the environment into different rooms.
Many approaches for room segmentation or detection (including~\cite{Rosinol21ijrr-Kimera}) require volumetric or voxel-based representations of the full environments and make assumptions on the environment geometry (\ie{} a single floor or ceiling height) that do not easily extend to arbitrary buildings.
To resolve these issues, we present a novel approach for constructing \emph{Layer 4} of the 3D scene graph
by clustering the nodes in the graph of places $\placeGraph$  (\emph{Layer 3}); this approach does not assume planarity of the environment and circumvents the need to process large voxel-based maps.
Our room segmentation approach consists of two stages: the first stage identifies the number of rooms in the environment (as well as a subset of places associated with each room), and the second stage assigns the remaining places in $\placeGraph$ to rooms via flood-fill.

\myParagraph{Identifying Rooms via Persistent Homology}
We use the key insight that dilating the voxel-based map helps expose rooms in the environment:
if we inflate obstacles, apertures in the environment (\eg{} doors) will gradually close, naturally partitioning the voxel-based map into disconnected components (\ie{} rooms); however, we apply the same insight to the graph of places $\placeGraph$ (rather than the voxel-based map) to enable faster computation and better scalability.
As each node and edge in our place graph $\placeGraph$ stores a distance to its closest obstacle $\dplace{}$ (\cref{sec:places}), dilation operations in the voxel-based map can be directly mapped into topological changes in $\placeGraph$.
More precisely, if we dilate the map by a distance $\delta$, every place node or edge with $\dplace{}$ smaller than $\delta$ will disappear from the graph since it will no longer be in the free space.
We call the dilated graph $\placeGraph^{\delta}$; see \cref{fig:rooms} for an illustration.

Directly specifying a single best dilation distance $\delta^*$ to extract putative rooms is unwieldy: door and other aperture widths vary from scene to scene (think about the small doors in an apartment versus large passage ways in a mansion).
In order to avoid hard-coding a single dilation distance, we use a tool from topology, known as
\emph{persistent homology}~\citep{Ali22arxiv-tdaSurvey,Huber21idsc-persistent}, to automatically compute the best dilation distance for a given graph.\footnote{Persistent homology is the study of the birth and death of homologies across a \emph{filtration} of a simplicial complex; see~\cite{Ali22arxiv-tdaSurvey,Huber21idsc-persistent} for an introduction on the topic.
In this work, we restrict ourselves to graphs (a subset of simplicial complexes) and focus only on 0-homologies (connected components).}
The basic insight is that the most natural choice of rooms is one that is more stable (or \emph{persistent}) across a wide range of dilation distances. To formalize this intuition, the persistent homology literature relies on the notion of \emph{filtration}. A \emph{filtration} of a graph $\placeGraph$ is a set of graphs $\placeGraph^{\delta_i}$ ordered by real-valued parameters $\delta_i$ such that
\begin{equation}
    \emptyset \subseteq \ldots \subseteq \placeGraph^{\delta_{i + 1}} \subseteq \placeGraph^{\delta_i} \subseteq \ldots \subseteq \placeGraph\,.
\end{equation}
In our case, each graph $\placeGraph^{\delta_i}$ is the \subgraph of $\placeGraph$ where nodes and edges with radius smaller than $\delta_i$ are removed;\footnote{Such a filtration is known as the Vietoris-Rips filtration~\citep{Aktas19ans-persistence}.} hence the filtration is a set of graphs corresponding to increasing distances $\delta_i$; see \cref{fig:filtration1,fig:filtration3} for examples of graphs in the filtration. For each graph in the filtration, we compute the 0-homology (the number of connected components), which in our setup corresponds to the number of rooms the graph gets partitioned into;
computationally, the 0-homology can be computed in one shot for all graphs in the filtration by iterating across the edges in the original graph $\placeGraph$ using a union set~\citep{Aktas19ans-persistence}, leading to an extremely efficient algorithm.

The resulting mapping between the dilation distance and the number of connected components is an example of \emph{Betti curve}, or $\beta_0(\delta)$; see \cref{fig:betti} for an example.
Intuitively, for increasing distances $\delta$, the graph first splits into more and more connected components (this is the initial increasing portion of \cref{fig:betti}), and then for large $\delta$ entire components tend to disappear (leading to a decreasing trend in the second part of \cref{fig:betti}), eventually leading to all the nodes disappearing (\ie zero connected components as shown in the right-most part of \cref{fig:betti}).
In practice, we restrict the set of distances to a range $\left[d^{-}, d^{+}\right]$, containing reasonable sizes for openings between rooms ([0.5, 1.2]\SI{}{\meter} in our tests). We also do not count connected components containing
less than a minimum number of vertices ($15$ in our tests), as we are not interested in segmenting overly small regions as rooms.

\begin{figure*}
    \subfloat[Betti Curve\label{fig:betti}]{\centering
        \includegraphics[width=0.49\columnwidth]{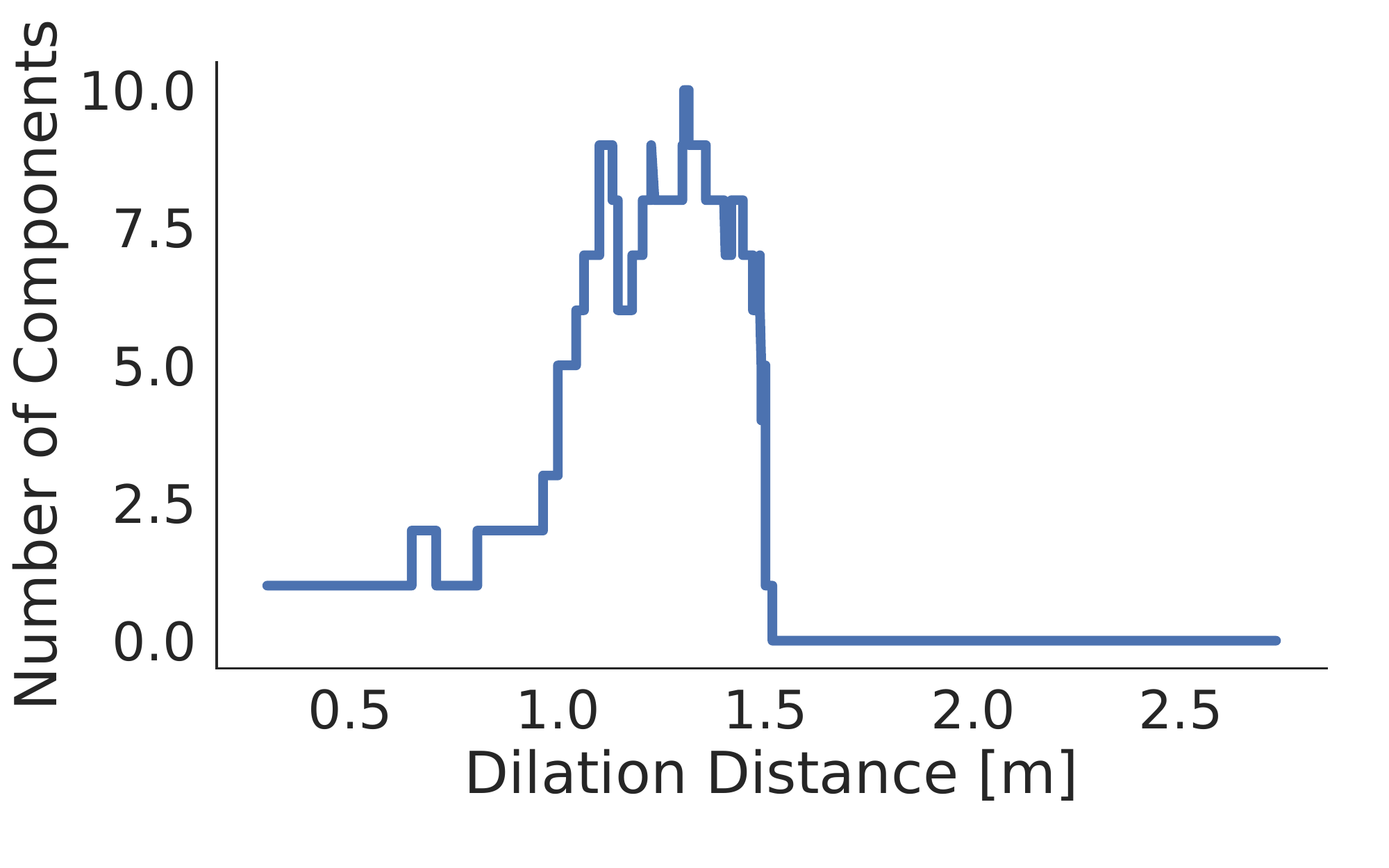}
    }
    \subfloat[$\delta = 0.7$\label{fig:filtration1}]{\centering
        \includegraphics[width=0.50\columnwidth,trim=4cm 1.5cm 4cm 1.5cm,clip]{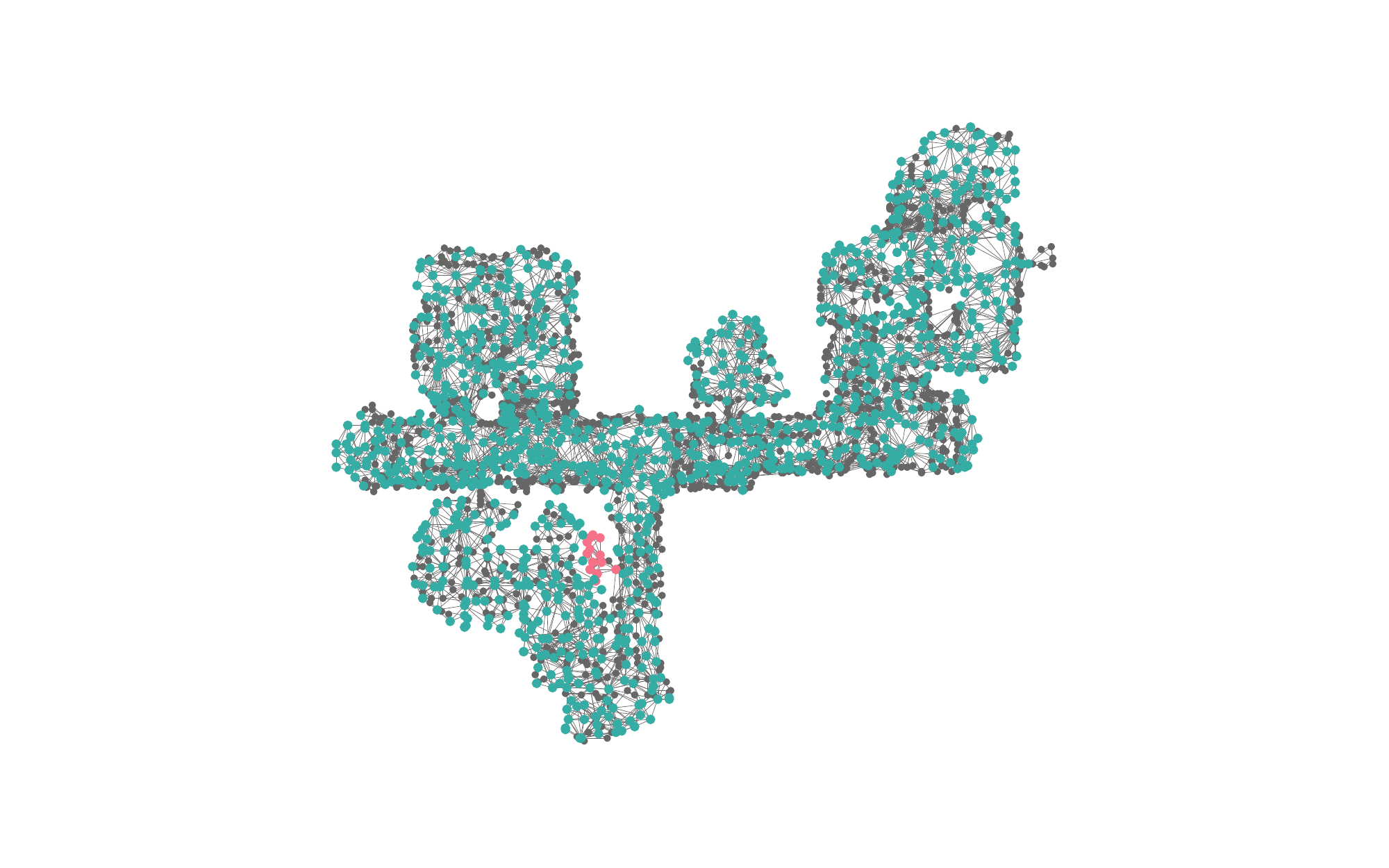}
    }
    \subfloat[$\delta = 1.0$\label{fig:filtration2}]{\centering
        \includegraphics[width=0.50\columnwidth,trim=4cm 1.5cm 4cm 1.5cm,clip]{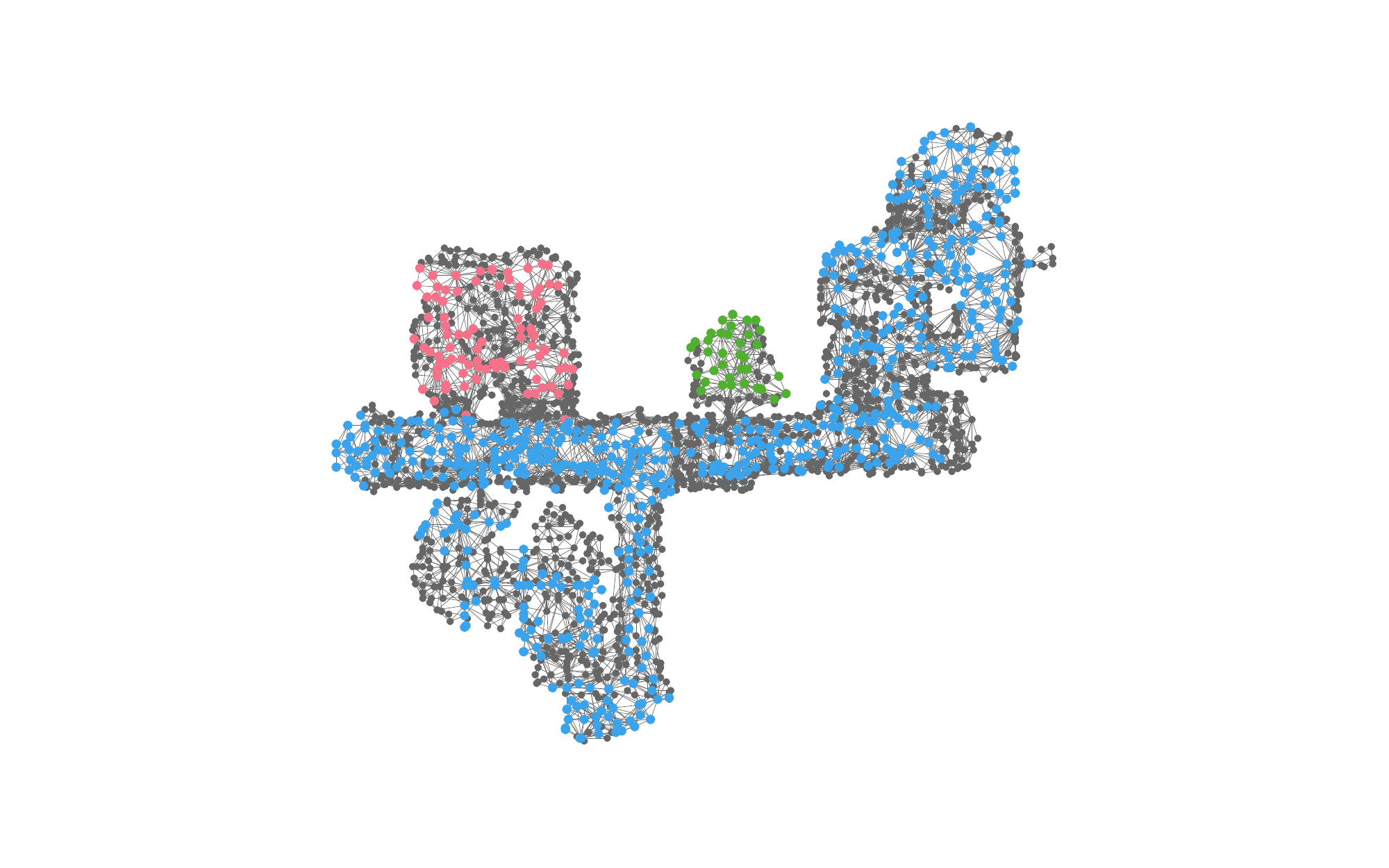}
    }
    \subfloat[$\delta = 1.3$\label{fig:filtration3}]{\centering
        \includegraphics[width=0.50\columnwidth,trim=4cm 1.5cm 4cm 1.5cm,clip]{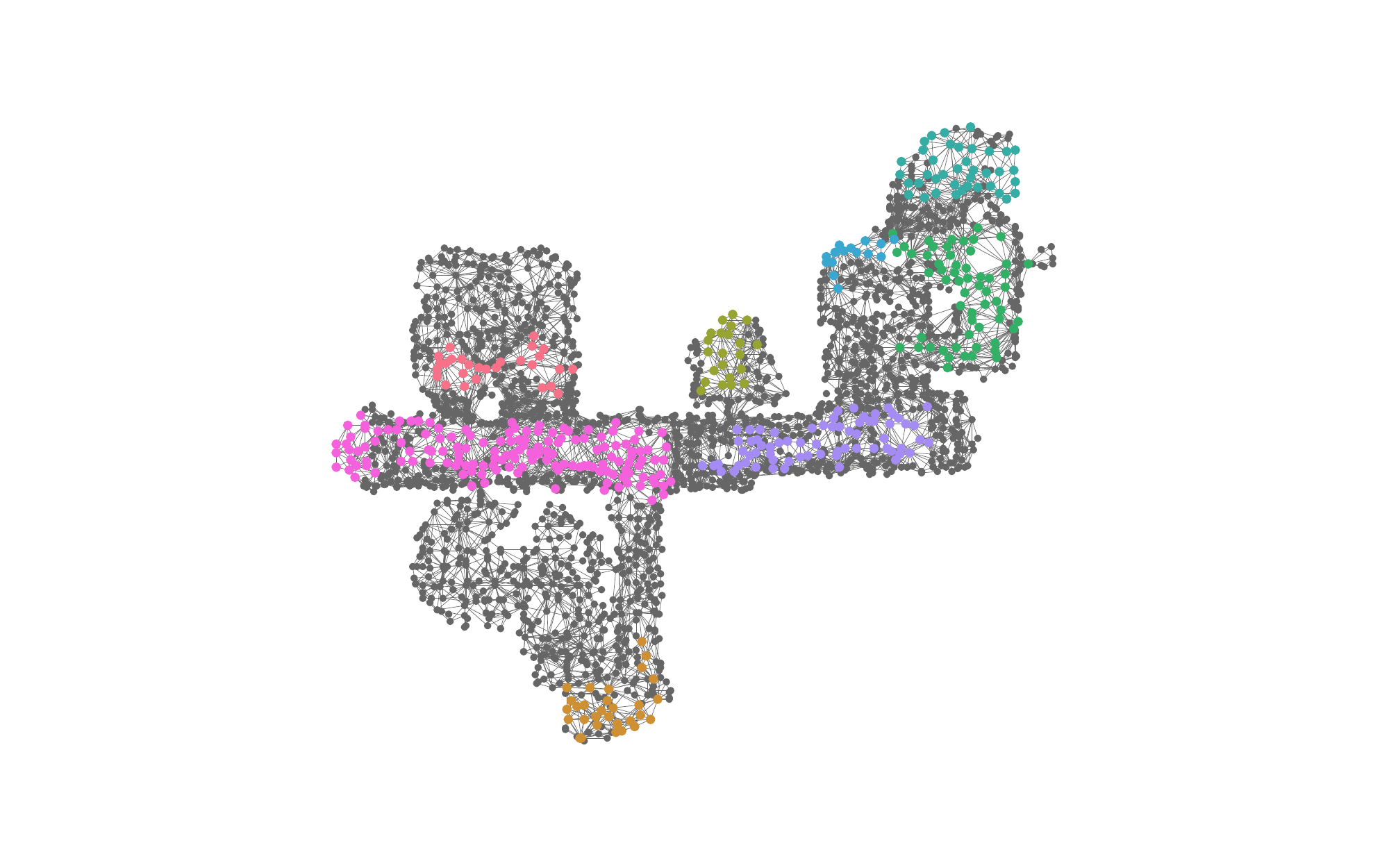}
    }
    \caption{(a) Example of Betti curve; we only consider components with at least 15 nodes.
(b-d) Example of filtration of the graph for various thresholds $\delta$.
Nodes with distances $\dplace{}$ smaller than the threshold $\delta$ are shown in gray, while nodes with distances above $\delta$ are colored by component membership.
}\label{fig:filtration}
    \togglevspace{-5mm}{0mm}
\end{figure*}

Choices of rooms that are more persistent correspond to large ``flat'' horizontal regions of the Betti curve,
where the number of connected components stays the same across a large sub-set of $\delta$ values.
More formally, let us denote with $H$ the set of unique values assumed by the Betti curve,
\ie:
\beq
H = \setdef{i \in \Natural{} }{ i = \beta_0(\delta), \forall \delta \in \left[d^{-}, d^{+}\right]}
\eeq
and ---for some small positive constant $\epsilon$--- denote with $I$ the set of intervals $I_j$ (for each $j \in H$) defined as:
\bea
I_j =
\big\{
(d_\min, d_\max):
\forall \delta \in [d_\min, d_\max], \beta_0(\delta) = j, \nonumber\\
\text{ and }\beta_0(d_\min-\epsilon) \neq j, \beta_0(d_\max+\epsilon) \neq j
\big\}
\eea
where, for each number of connected components $H$, the set $I$ contains the extremes $(d_\min, d_\max)$ of the corresponding flat interval in the Betti curve. Finally, we denote with $L$ the lengths of the intervals in $I$. Then, the most persistent choice for the number of rooms is the one corresponding to the longest interval in $L$.
In our approach, we choose the number of rooms (and associate an initial set of places to each room) by looking at
``sufficiently persistent'' intervals (\ie{} flat regions with length larger than a threshold).
In more detail, we select a subset of $L$, denoted as $\bar{L}$, that only contains the intervals of size
{greater than $\alpha \cdot \max_{l \in L} l$, where $\alpha$ is a user-specified parameter in $\left[0, 1\right]$ that balances between over-segmentation ($\alpha$ closer to 0) and under-segmentation ($\alpha$ closer to 1).}
From these candidates, we assign $\delta^*$ to be $\delta^* = d_\min^*$, where
$(d_\min^*,d_\max^*)$ are the extremes of the interval in $\bar{L}$ attaining the largest number of connected components.
In other words, we choose the number of rooms to be the largest among all flat intervals that are sufficiently long (\ie{} longer than $\alpha \cdot \max_{l \in L} l$). As before, for this choice of $\delta^*$, we use the connected components corresponding to that dilation distance as an initial guess for the room clustering.\footnote{Note that
for $\alpha=1$, the proposed approach simply picks the longest (most persistent) interval. While intuitive, that choice typically ends up picking the very first (left-most interval in \cref{fig:filtration}, with a single connected component)
that persists for small $\delta^*$, which is undesirable in practice. }

\myParagraph{Assigning Remaining Nodes via Flood-Fill}
The connected components produced by the persistent homology do not account for all places in $\placeGraph$, as nodes and edges in the graph disappear depending on the dilation distance.
We assign the remaining place nodes to the putative room via the flood-fill algorithm, where the expansion queue is ordered by the distance of each edge to the nearest obstacle, resulting in node connections with the greatest distance to an obstacle being expanded first.
This ensures that every node in the places graph $\placeGraph$ is assigned to a room, provided that every original connected component of $\placeGraph$ contains at least one room.

\begin{remark}[Novelty, Advantages, and Limitations]
Many classical 2D room segmentation techniques, such as watershed or morphology approaches, are also connected to persistent homology~\citep{Bormann16icra-roomSegmentationSurvey}.
Our room clustering method provides four key advantages.
First, our approach is able to reason over arbitrary 3D environments, instead of a 2D occupancy grid (such as~\citep{Kleiner17iros-roombaRoomSegmentation}). Second, the approach reasons over a sparse set of places and is extremely computationally and memory efficient in practice; this is in stark contrast with approaches processing monolithic voxel-based maps (\eg~\cite{Rosinol21ijrr-Kimera} reported runtimes of tens of minutes). Third, the approach automatically computes the dilation distance used to cluster the graph;
this allows it to work across a variety of environments (in \cref{sec:experiments}, we report experiments in both small apartments and student residences). Fourth, it provides a more solid theoretical framework, compared to the heuristics proposed  in related work~\citep{Hughes22rss-hydra,Bavle22arxiv-SGraphPlus}. As a downside, our room clustering approach, similarly to related work relying on geometric reasoning for room segmentation, may fail to segment rooms without a clear geometric footprint, \eg{} open floor-plans.
\end{remark}

\myParagraph{Intra-Layer and Inter-Layer Edges}
We connect pairs of rooms, say $(i,j)$, whenever a place in room $i$ shares an edge with a place in room $j$.
For each room, we add edges to all places in the room.
We connect all rooms to a single building node, whose position is the mean of all room centroids.

\subsubsection{Layer 4: Room Classification via Neural Tree}\label{sec:rooms_classification}
\newcommand{\Htree}{H-tree\xspace}
\newcommand{\Htrees}{H-trees\xspace}
While in the previous section, we (geometrically) clustered places nodes into different rooms,
the approach described in this section assigns a semantic label (\eg{} kitchen, bedroom) to each room.
The key insight that we are going to use is that objects in a room are typically correlated with the room type
 (\eg{} a room that contains refrigerator, oven, and toaster is likely to be a kitchen, a room with a bathtub and a toilet is likely to be a bathroom).
 Edges connecting rooms may also carry information about the room types (\eg{} a master bedroom is more likely to be connected to a bathroom).
We construct a \subgraph of the 3D scene graph that includes the object and the room layers and the corresponding intra- and inter-layer edges.
Given this graph, and the object semantic labels, centroids, and bounding boxes  (see \cref{sec:objects}), we infer the semantic labels of each room.

While room inference could be attacked via standard techniques for inference in probabilistic graphical models, those techniques typically require handcrafting or estimating expressions for the factors connecting the nodes, inducing a probability distribution over the labels in the graph.
We instead rely on more modern techniques that use \emph{graph neural networks} (GNNs) to learn a suitable neural message passing function between the nodes to infer the missing labels.
In particular, room classification can be thought of as a semi-supervised node classification problem~\citep{Kipf17iclr-gcn, Hamilton17nips-GraphSage}, which has been extensively studied  in machine learning.
We also observe that  our problem has two key features that make it unique.
First, the object-room graph is a \emph{heterogeneous} graph and contains two kinds of nodes, namely objects and rooms, as opposed to large, homogeneous social network graphs (one of the key benchmarks applications in the semi-supervised node classification literature).
Second, the object-room graph is a hierarchical graph (\cref{def:hieGraph}), which gives more structure to the problem (\eg{} \cref{cor:tw-orb}).\footnote{Note that the result in \cref{cor:tw-orb} is general enough to also include the building node and perform building classification (\ie{} classify an indoor environment into an office building, hospital, apartment, etc.).
    Here we tailor the discussion to the object-room graph for a practical reason: we lack a large enough dataset for training and testing a building classification network.
    The dataset in our experiments includes 90 buildings, which are mostly residential.
}
We review a recently proposed GNN architecture, the \emph{neural tree}~\citep{Talak21neurips-neuralTree}, that takes advantage of the hierarchical structure of the graph and leads to (provably and practically) efficient and accurate inference.

\myParagraph{Neural Tree Overview}
While traditional GNNs perform neural message passing on the edges of the given graph $\Graph$ (the object-room graph in our case),
the key idea behind the neural trees architecture is to construct a tree-structured graph from the input graph and perform message passing on the resulting tree instead of the input graph~\citep{Talak21neurips-neuralTree}.
This tree-structured graph, the \emph{\Htree}, is similar to a \treeDecomposition, and is such that every node in it represents either a node or a subset of nodes in the input graph.
Trees are known to be more amenable for message passing (\eg{} the junction tree algorithm enables exact inference for graphs with small treewidth)~\citep{Jordan02book, Koller09book}.
Analogously, the neural tree has been shown to enable strong approximation results~\citep{Talak21neurips-neuralTree} and lead to better classification accuracy in practice (see~\cite{Talak21neurips-neuralTree} and \cref{sec:experiments}).
We briefly review the construction of the \Htree, the choice of message passing, and the resulting performance guarantees, and we refer the reader to~\cite{Talak21neurips-neuralTree} for an in-depth discussion.

\myParagraph{Constructing the H-Tree}
The
neural tree performs message passing on the \Htree, a tree-structured graph constructed from the input graph.
Each node in the \Htree{} corresponds to a \subgraph of the input graph. These \subgraphs are arranged hierarchically in the \Htree{} such that the parent of a node in the \Htree{} always corresponds to a larger \subgraph in the input graph.
The leaf nodes in the \Htree{} correspond to singleton subsets (\ie{} individual nodes) of the input graph.

The first step to construct an \Htree{} is to compute a \treeDecomposition{} $T$ of the object-room graph.
Since the object-room graph is a hierarchical graph, we use \cref{algo:td-hierarchical} to efficiently compute a \treeDecomposition. The bags in such a \treeDecomposition  contain either (C1) only room nodes, (C2) only object nodes, or (C3) object nodes with one room node.
{
To form the \Htree{}, we need to further decompose the leaves of the \treeDecomposition into singleton nodes.
For bags falling in the cases (C1)-(C2), we further decompose the bags using a
tree decomposition of the \subgraphs formed by nodes in the bag, as described in~\cite{Talak21neurips-neuralTree}. For case (C3), we note that the \subgraph is again a hierarchical graph with one room node, hence we again use \cref{algo:td-hierarchical} to compute a \treeDecomposition. We form the \Htree by concatenating these tree-decompositions hierarchically as described in~\cite{Talak21neurips-neuralTree}.
}

\myParagraph{Message Passing and Node Classification}
Message passing on the \Htree{} generates embeddings for all the nodes and important \subgraphs of the input graph.
Any of the existing message passing protocols (\eg{} the ones used in Graph Convolutional Networks (GCN)~\citep{Kipf17iclr-gcn, Henaff15-deep, Defferrard16nips-ChebyNets, Kipf17iclr-gcn, Bronstein17spm-geometricDL}, GraphSAGE~\citep{Hamilton17nips-GraphSage}, or Graph Attention Networks (GAT)~\citep{Velickovic18iclr-GAT, Lee19tkdd-gat-survey, Busbridge2019a-arXiv-Rel-GraphAttentionNetworks}) can be re-purposed to operate on the neural tree.
We provide an ablation of different choices of message passing protocols and node features in \cref{sec:experiments}.
After message passing is complete, the final label for each node is extracted by pooling embeddings from all leaf nodes in the \Htree{} corresponding to the same node in the input graph, as in~\cite{Talak21neurips-neuralTree}.

One important difference between the \Htree{} in~\cite{Talak21neurips-neuralTree}, and the \Htree{} constructed for the object-room graph is the heterogeneity of the latter.
The \Htree{} of a heterogeneous graph will also be heterogeneous, \ie{} the \Htree{} will now contain nodes that correspond to various kinds of \subgraphs in the input object-room graph.
Specifically, the \Htree{} has nodes that correspond to \subgraphs: (i) containing only room nodes, (ii) containing one room node and multiple object nodes, (iii) containing only object nodes, and (iv) leaf nodes which correspond to either an object or a room node.
Accordingly, we treat the neural tree as a heterogeneous graph when performing message passing.
Message passing over heterogeneous graphs can be implemented using off-the-shelf functionalities in the PyTorch geometric library~\citep{Fey19iclrwk-pytorchGeometric}.

\myParagraph{Expressiveness of the Neural Tree and Graph Treewidth}
The following result, borrowed from our previous work~\citep{Talak21neurips-neuralTree}, establishes a connection between the expressive power of the neural tree and the treewidth of the corresponding graph.

\begin{theorem}[Neural Tree Expressiveness, Theorem 7 and Corollary 8 in~\cite{Talak21neurips-neuralTree}]\label{thm:approx}
Call $\calF(\Graph, N)$ the space of functions that can be produced by applying the neural tree architecture with $N $ parameters to the graph $\Graph$.
Let $f:[0,1]^{n} \rightarrow [0, 1]$ be a function compatible with a graph $\Graph$ with $n$ nodes, \ie
a function that can be written as $f(\MX) = \textstyle\sum_{C \in \CliqueSetOf{\Graph}}\theta_{C}(\vxx_C)$,
 where $\CliqueSetOf{\Graph}$ denotes the collection of all maximal cliques in $\Graph$ and $\theta_{C}$ is some function that maps features associated to nodes in a clique $C$ to a real number.
Let each clique function $\theta_{c}$ in $f$ be $1$-Lipschitz and be bounded to $[0, 1]$.
Then, for any $\epsilon > 0$, there exists a function $g \!\in\! \calF(\Graph, N)$ such that $|| f \!-\! g||_{\infty} \!<\! \epsilon$, while the number of parameters $N$ is bounded by
\begin{equation}
\label{eq:param_approx}
N = \textstyle\calO\left( n \times (\treewidth{\JTH{\Graph}} + 1)^{2\treewidth{\JTH{\Graph}} + 3}\times \epsilon^{- (\treewidth{\JTH{\Graph}} + 1)}\right),
\end{equation}
where $\treewidth{\JTH{\Graph}}$ denotes the treewidth of the tree-decomposition of $\Graph$,
computed according to \cref{algo:td-hierarchical}.
\end{theorem}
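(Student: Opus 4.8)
The plan is to reduce the approximation of $f$ to the independent approximation of its clique functions, and then to show that the neural tree's message passing on the \Htree{} can reassemble these pieces without structural loss. The starting point is the standard fact that in \emph{any} \treeDecomposition{} every clique of $\Graph$ is contained in a single bag; in particular, for the \treeDecomposition{} $\JTH{\Graph}$ produced by \cref{algo:td-hierarchical}, each maximal clique $C \in \CliqueSetOf{\Graph}$ lives inside some bag, and hence the joint argument $\vxx_C$ is simultaneously available at a single node of the \Htree. I would therefore assign each maximal clique to one bag containing it, so that at each bag $b$ (of size $|b| \le \treewidth{\JTH{\Graph}} + 1$) the relevant target is the sum of the clique functions assigned to $b$ --- itself a $1$-Lipschitz, $[0,1]$-bounded function of at most $\treewidth{\JTH{\Graph}}+1$ coordinates. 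Since $f = \sum_{C \in \CliqueSetOf{\Graph}} \theta_C(\vxx_C)$, a triangle-inequality argument shows that approximating each per-bag function to accuracy $\epsilon'$ yields a global sup-error of order $(\text{number of bags}) \cdot \epsilon'$.

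The next step is a quantitative universal approximation statement for each per-bag function. Because the function is $1$-Lipschitz on the cube $[0,1]^{d}$ with $d \le \treewidth{\JTH{\Graph}}+1$, a uniform grid of resolution $\sim \epsilon'$ partitions the cube into $\bigo{\epsilon'^{-d}}$ cells, and Lipschitz continuity guarantees that a piecewise-constant (or piecewise-linear) interpolant on this grid has sup-error at most $\epsilon'$. Realizing this interpolant as a feedforward ReLU network is where the dimension-dependent cost enters: the grid size contributes the curse-of-dimensionality factor $\epsilon^{-(\treewidth{\JTH{\Graph}}+1)}$, and tracking the width and depth needed to select the correct cell and emit its value, coordinate by coordinate, produces the polynomial-in-treewidth prefactor $(\treewidth{\JTH{\Graph}}+1)^{2\treewidth{\JTH{\Graph}}+3}$.

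With the per-bag networks in hand, I would argue that message passing on the \Htree{} reproduces $\sum_b \hat\theta_b$ exactly at the structural level, exploiting that the \Htree{} is a tree: just as belief propagation / junction-tree inference is exact on trees, aggregation along the \Htree{} incurs no topology-induced error, so each per-bag network can be embedded at its node and the partial sums propagated up to the root. Here the running-intersection property of the \treeDecomposition{} (guaranteed by \cref{thm:td-hierarchy}) is essential: shared nodes between adjacent bags must not be double-counted, which is handled cleanly by the canonical assignment of each clique to exactly one bag. Summing the per-bag parameter counts over the $\bigo{n}$ bags of $\JTH{\Graph}$, and choosing $\epsilon'$ proportional to $\epsilon$ divided by the number of bags, yields the stated bound $N = \bigo{ n \times (\treewidth{\JTH{\Graph}}+1)^{2\treewidth{\JTH{\Graph}}+3} \times \epsilon^{-(\treewidth{\JTH{\Graph}}+1)} }$, with the leading factor $n$ accounting for the $\bigo{n}$ bags.

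The main obstacle is the quantitative bookkeeping rather than the conceptual structure: pinning down the exact exponent $2\treewidth{\JTH{\Graph}}+3$ and base $\treewidth{\JTH{\Graph}}+1$ requires a careful count of the width and depth of the ReLU networks that realize a grid interpolant in dimension $\treewidth{\JTH{\Graph}}+1$, together with a verification that the message-passing architecture can implement both the per-bag approximators and the tree-wise summation within the same parameter budget. A secondary subtlety is ensuring that the heterogeneity of the object-room \Htree{} (room-only, object-only, and mixed bags) does not change the per-bag dimension bound, which follows because every bag still has at most $\treewidth{\JTH{\Graph}}+1$ elements. Since the statement is lifted verbatim from~\cite{Talak21neurips-neuralTree}, the formal proof ultimately reduces to citing and instantiating Theorem~7 and Corollary~8 there with the specific \treeDecomposition{} returned by \cref{algo:td-hierarchical}; the sketch above is the argument I would reconstruct to justify that instantiation.
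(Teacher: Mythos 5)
Your bottom line is the right one, and it coincides with what the paper actually does: this paper contains \emph{no} proof of \cref{thm:approx}. The result is imported verbatim from Theorem~7 and Corollary~8 of the cited work, and the only obligation discharged in the present paper is the instantiation of that result with the specific \treeDecomposition{} returned by \cref{algo:td-hierarchical} (whose validity as a \treeDecomposition{} is exactly what \cref{thm:td-hierarchy} guarantees, and whose width is what \cref{prop:tw-hierarchy} and \cref{cor:tw-orb} bound). Your closing paragraph says precisely this, so at the level of ``what this paper proves,'' your proposal matches.

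That said, your reconstruction sketch, if taken as a self-contained proof, has a genuine quantitative gap: the error-splitting step does not deliver the stated bound. You approximate each per-bag function to accuracy $\epsilon'$ and choose $\epsilon' \propto \epsilon/m$, where $m = \bigo{n}$ is the number of bags. But the per-bag network size scales as $(1/\epsilon')^{\treewidth{\JTH{\Graph}}+1}$, so this choice injects $m^{\treewidth{\JTH{\Graph}}+1}$ into every bag's parameter count, and the total becomes of order $n^{\treewidth{\JTH{\Graph}}+2}\,\epsilon^{-(\treewidth{\JTH{\Graph}}+1)}$ --- polynomial in $n$ of degree growing with the treewidth, not linear in $n$ as in \eqref{eq:param_approx}. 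A secondary issue: the per-bag target (the sum of clique functions assigned to a bag) is $[0,1]$-bounded (this follows from nonnegativity of the $\theta_C$ together with $f \le 1$), but it is generally \emph{not} $1$-Lipschitz; its Lipschitz constant grows with the number of cliques assigned to the bag, which further inflates the grid resolution needed. So the naive triangle-inequality accounting is too lossy, and whatever bookkeeping the cited proof uses to keep the $n$-dependence linear, it is not the one you describe. None of this affects your conclusion that the formal proof reduces to citing and instantiating the external theorem, but the sketch cannot be promoted to a standalone proof as written.
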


While we refer the reader to~\cite{Talak21neurips-neuralTree} for a more extensive discussion, the intuition is that graph-compatible functions can model (the logarithm of) any probability distribution over the given graph.
Hence, \cref{thm:approx} essentially states that the neural tree can learn any (sufficiently well-behaved) graphical model over $\calG$, with a number of parameters that scales exponentially in the graph treewidth, and only linearly in the number of nodes in the graph.
Therefore, for graphs with small treewidth (as the ones of \cref{cor:tw-orb}),
we can approximate arbitrary relations between the nodes without requiring too many parameters.
Furthermore, \cref{cor:tw-orb} and \cref{prop:tw-hierarchy} ensure that we can compute the \treeDecomposition{} (and hence the \Htree) efficiently in practice.
Beyond these theoretical results, in~\cref{sec:experiments} we show that the use of the neural tree leads to improved accuracy in practice.

\section{Persistent Representations: Detecting and Enforcing Loop Closures in 3D Scene Graphs}\label{sec:LCD-and-SGO}

\begin{figure}[t]
    \centering
    \includegraphics[width=0.95\columnwidth,trim=0 5mm 0 0,clip]{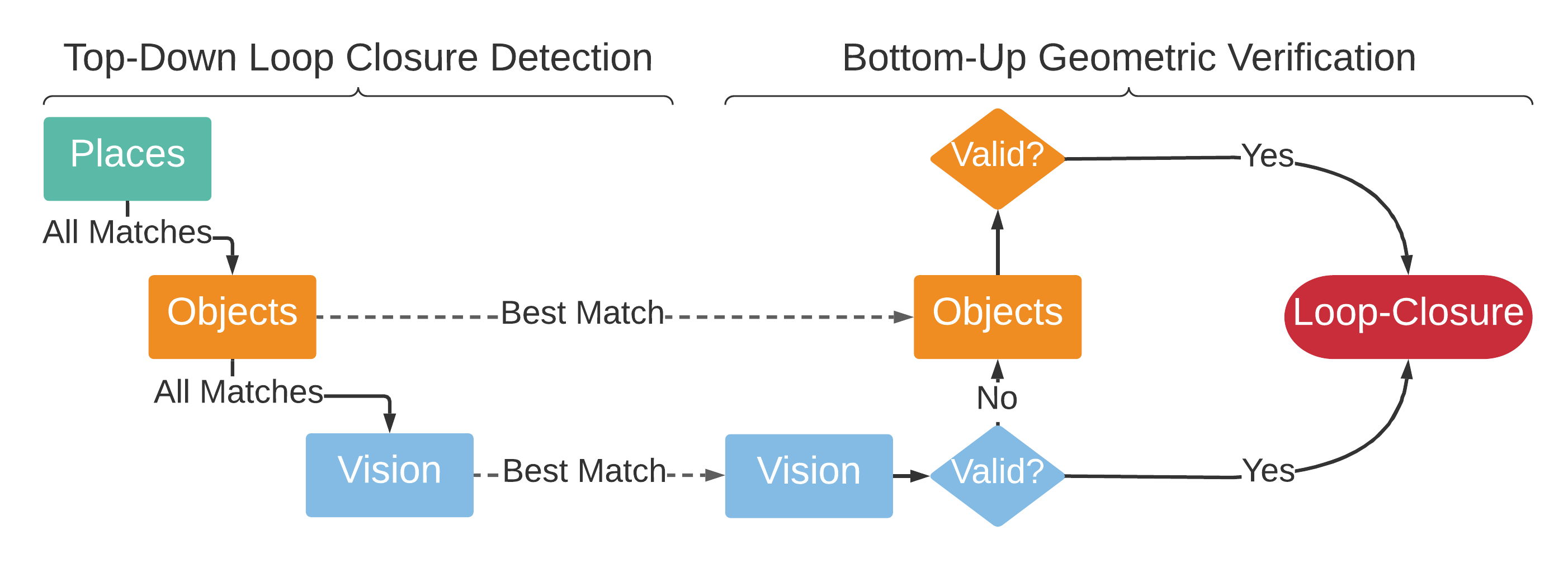}
    \caption{Loop closure detection (left) and geometric verification (right).
             To find a match, we ``descend'' the 3D scene graph layers, comparing descriptors.
             We then ``ascend'' the 3D scene graph layers, attempting registration.}\label{fig:lcd}
    \togglevspace{-6mm}{0mm}
\end{figure}

The previous section discussed how to estimate the layers of an ``odometric'' 3D scene graph as the robot explores an unknown environment.
In this section, we discuss how to use the 3D scene graph to \emph{detect} loop closures (\cref{sec:LCD}),
and how to \emph{correct} the entire 3D scene graph in response to putative loop closures (\cref{sec:SGO}).

\subsection{Loop Closure Detection and Geometric Verification}\label{sec:LCD}

We augment visual loop closure detection and geometric verification by using information across multiple layers in the 3D scene graph.
Standard approaches for visual place recognition rely on visual features (\eg{} SIFT, SURF, ORB) and fast retrieval methods (\eg{} bag of words~\citep{Galvez12tro-dbow}) to detect loop closures.
Advantageously, the 3D scene graph not only contains visual features (included in each node of the agent layer), but also additional information about the semantics of the environment (described by the object layer) and the geometry and topology of the environment (described by the places layer).
In the following we discuss how to use this additional 3D information to develop better descriptors for loop closure detection and geometric verification.

\subsubsection{Top-Down Loop Closure Detection}
As mentioned in \cref{sec:objects}, the agent layer stores visual features for each keyframe pose along the robot trajectory.
We refer to each such poses as \emph{agent nodes}.
Loop closure detection then aims at finding a past agent node that matches (\ie{} observes the same portion of the scene seen by) the latest agent node, which corresponds to the current robot pose.

\myParagraph{Top-Down Loop Closure Detection Overview}
For each agent node, we construct a hierarchy of descriptors describing statistics of the node's surroundings, from low-level appearance to semantics and geometry.
At the lowest level, our hierarchical descriptors include standard DBoW2 appearance descriptors~\citep{Galvez12tro-dbow}.
We augment the appearance descriptor with an object-based descriptor and a place-based descriptor computed from the objects and places in a \subgraph surrounding the agent node.
We provide details about two choices of descriptors (hand-crafted and learning-based) below.
To detect loop closures, we compare the hierarchical descriptor of the current (query) node with all the past agent node hierarchical descriptors, searching for a match.
When comparing descriptors, we walk down the hierarchy of descriptors (from places, to objects, to appearance descriptors).
In particular, we first compare the places descriptor and ---if the descriptor distance is below a threshold--- we move on to comparing object descriptors and then appearance descriptors.
If any of the appearance or object descriptor comparisons return a strong enough match (\ie{} if two distance between two descriptors is below a threshold), we perform geometric verification; see \cref{fig:lcd} for a summary.

\myParagraph{Hand-crafted Scene Graph Descriptors}
Our top-down loop closure detection relies on having descriptors (\ie{} vector embeddings) of the \subgraphs of objects and places around each agent node.
In the conference version~\citep{Hughes22rss-hydra} of this paper, we proposed  hand-crafted descriptors.
In particular, for the objects, we use the histogram of the semantic labels of the object nodes in the \subgraph as an object-level descriptor.
For the places, we use the histogram of the distances associated to each place node in the \subgraph as a place-level descriptor.
As shown in~\cite{Hughes22rss-hydra} and confirmed in \cref{sec:experiments}, the resulting hierarchical descriptors already lead to improved loop closure detection performance over traditional appearance-based loop closures.
However, these descriptors fail to capture relevant information about objects and places, \eg{} their spatial layout and connectivity.
In the following, we describe learning-based descriptors that use graph neural networks to automatically find a suitable embedding for the object and place \subgraphs; these are observed to further improve loop closure detection performance in some cases; see \cref{sec:experiments}.

\myParagraph{Learning-based Scene Graph Descriptors}
Given a \subgraph of objects and places around the agent node,
we learn fixed-size embeddings using a Graph Neural Network (GNN).
 At a high level, we learn such embeddings from scene graph datasets, such that the Euclidean distance between descriptors is smaller if the corresponding agent nodes are spatially close.

In more detail, we learn separate embeddings for the \subgraph of objects and the \subgraph of places.
For every object layer \subgraph, we encode the bounding-box size and semantic label of each object as node features in a GNN\@.
For every places layer \subgraph, we encode the distance of the place node to the nearest obstacle and the number of basis points of the node as node features.
Rather than including absolute node positions in the respective node features, we assign a weight to each edge $(i, j)$ between nodes $i$ and $j$ as $w_{ij} = e^{-\norm{x_i - x_j}}$,
where $x_i$ and $x_j$ are the positions of nodes $i$ and $j$.
This results in a weight in the range $[0, 1]$, where the closer two nodes are, the higher the edge weight.
Associating intra-node distances to edges (rather than using absolute positions as node features) makes the resulting embedding pose-invariant; this is  due to the fact that the node positions only enter the network in terms of their distance, which is invariant to rigid transformations.
Our GNN model architecture follows the graph embedding architecture presented in~\cite{Li19icml-GraphMatching}, which consists of multi-layer perceptrons as encoders for node and edge features, message passing layers, and a graph-level multi-level perception to aggregate node embeddings into the final graph embedding.
We use triplet loss~\citep{Li19icml-GraphMatching} to train the models and defer the details of constructing triplets and other model and training parameters to the experiments in \cref{sec:experiments}.

\subsubsection{Bottom-up Geometric Verification}
After we have a putative loop closure between our query and match agent nodes (say $i$ and $j$), we attempt to compute a relative pose between the two by performing bottom-up geometric verification.
Whenever we have a match at a given layer (\eg{} between appearance descriptors at the agent layer, or between object descriptors at the object layer), we attempt to register frames $i$ and $j$.
For registering visual features we use standard RANSAC-based geometric verification as in~\cite{Rosinol20icra-Kimera}.
If that fails, we attempt registering objects using TEASER++~\citep{Yang20tro-teaser}, discarding loop closures that also fail object registration.
This bottom-up approach has the advantage that putative matches that fail appearance-based geometric verification (\eg{} due to viewpoint or illumination changes) can successfully lead to valid loop closures during the object-based geometric verification.
\Cref{sec:experiments} shows the proposes hierarchical descriptors improve the quality and quantity of detected loop closures.

\subsection{3D Scene Graph Optimization}\label{sec:SGO}

This section describes a framework to correct the entire 3D scene graph in response to putative loop closures.
Assume we use the algorithms in \cref{sec:incrementalLayers} to build an ``odometric'' 3D scene graph, which drifts over time as it is built from the odometric trajectory of the robot --- we refer to this as the \emph{frontend (or odometric) 3D scene graph}.
Then, our goals here are (i) to optimize all layers in the 3D scene graph in a consistent manner
while enforcing the detected loop closures (\cref{sec:LCD}),
and (ii) to post-processes the results to remove redundant \subgraphs corresponding to the robot visiting the same location multiple times. The resulting 3D scene graph is what we call a \emph{backend (or optimized) 3D scene graph}, and we
refer to the module producing such a graph as the \emph{scene graph backend}.
Below, we describe the two main processes implemented by the scene graph backend: a 3D scene graph optimization (which simultaneously corrects all layers of the scene graph by optimizing a sparse subset of variables), and an interpolation and reconciliation step (which recovers the dense geometry and removes redundant variables); see \cref{fig:deformation}.

\begin{figure}
    \centering
    \subfloat[Loop closure detection]{\centering
        \includegraphics[trim={80, 50, 80, 40}, clip, width=0.47\columnwidth]{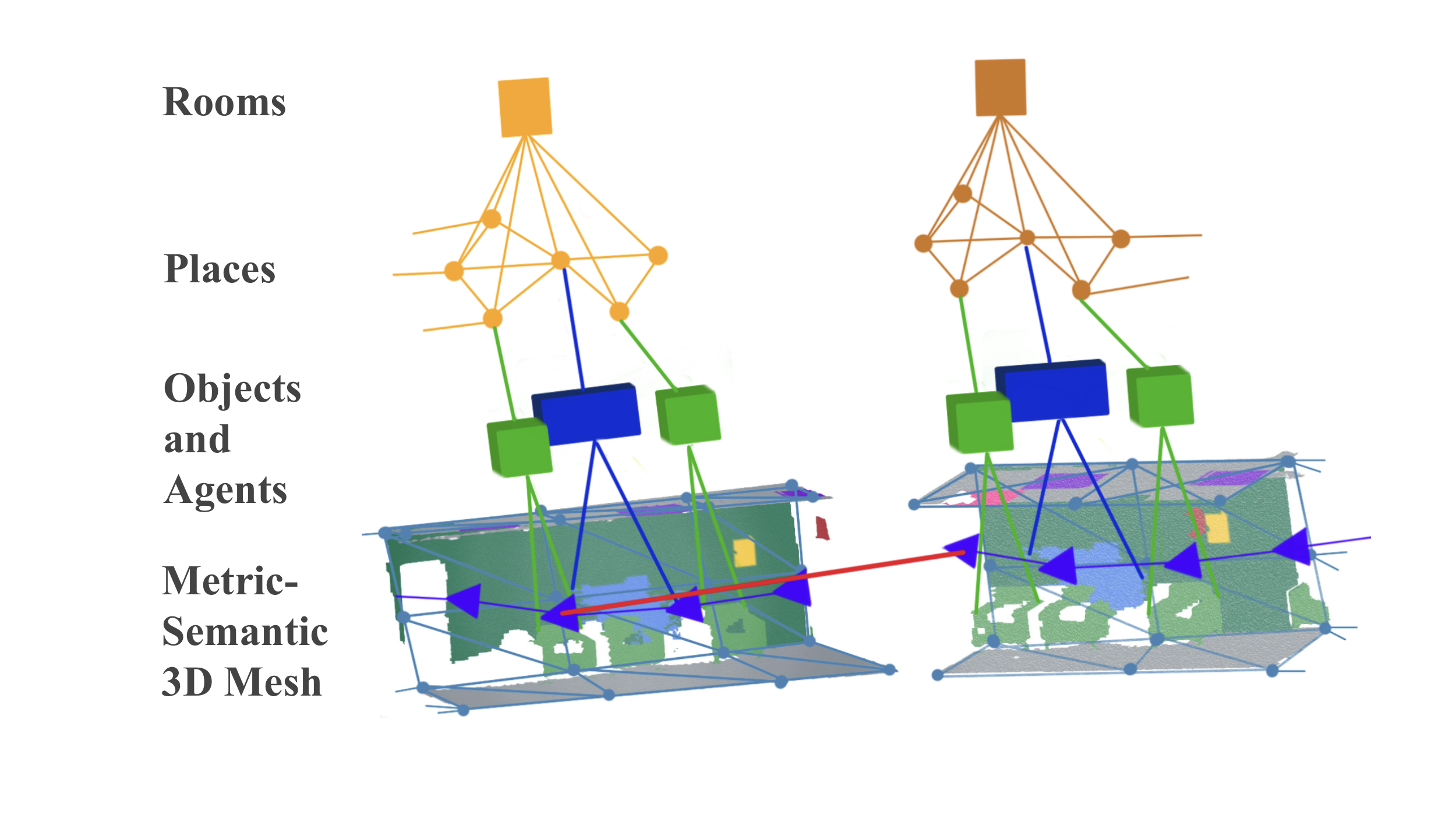}
    }
    \subfloat[Optimization]{\centering
        \includegraphics[trim={10, 60, 60, 50}, clip, width=0.47\columnwidth]{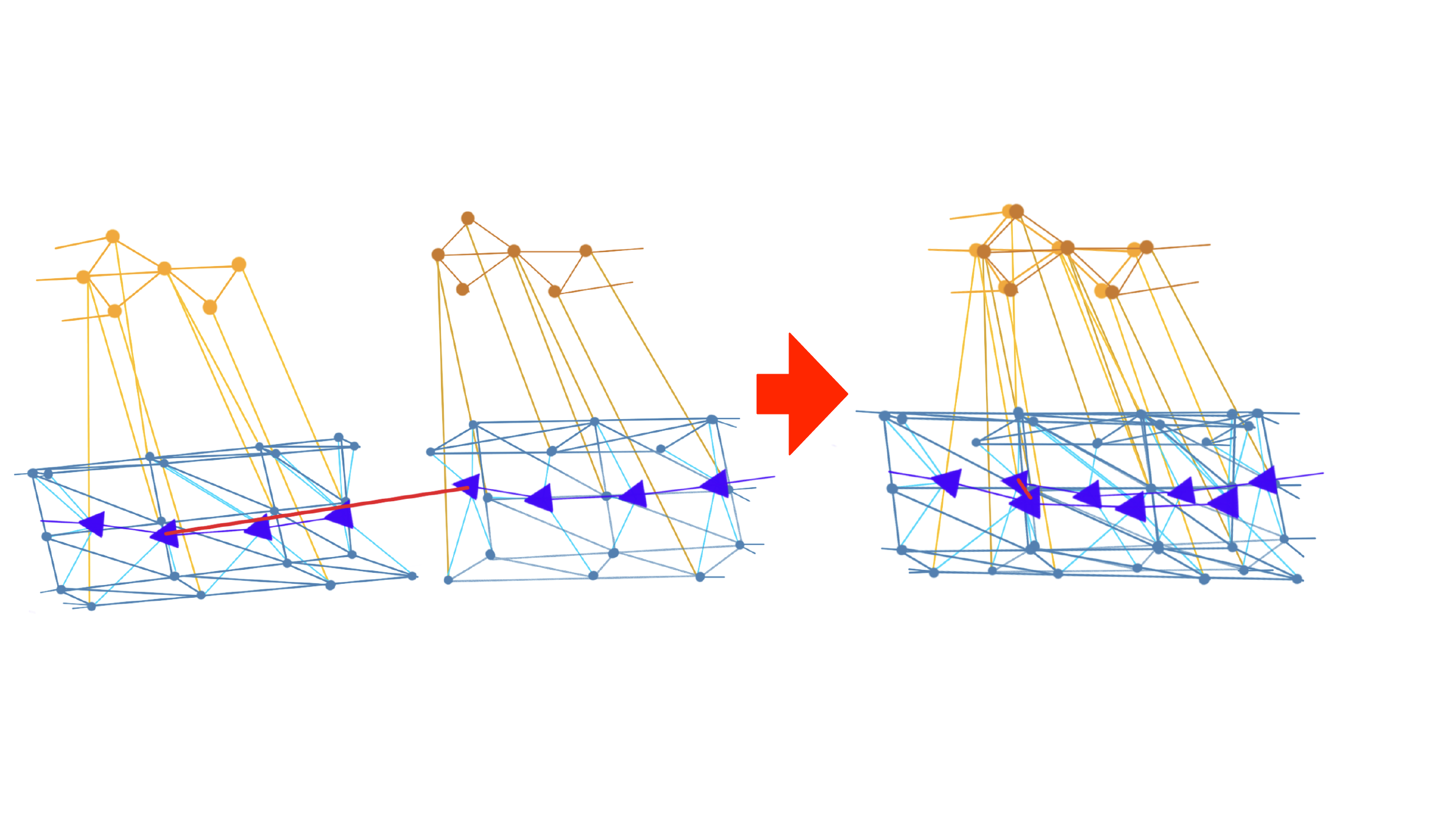}
    }\\
    \subfloat[Interpolation]{\centering
        \includegraphics[trim={210, 50, 150, 80}, clip, width=0.47\columnwidth]{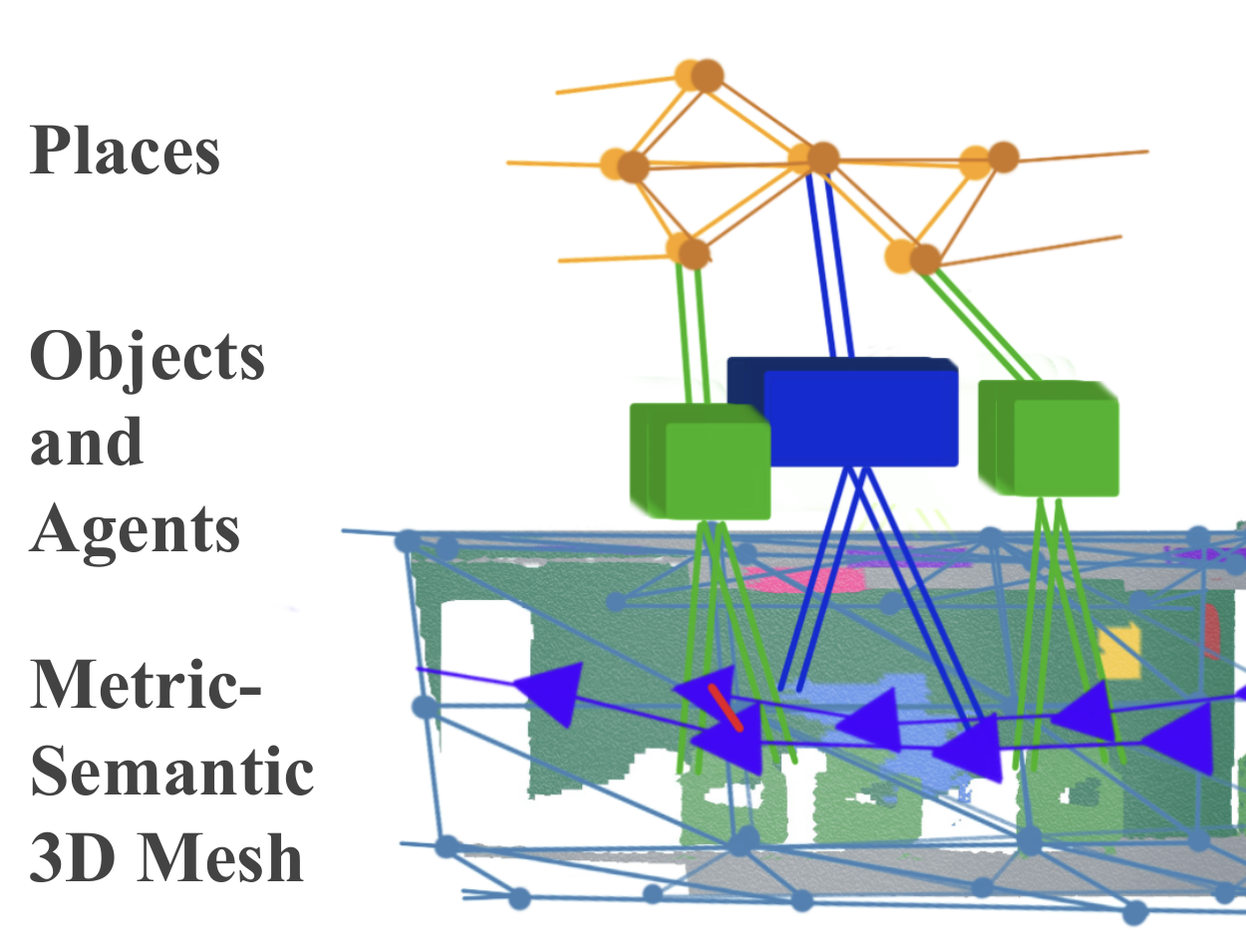}
    }
    \subfloat[Reconciliation]{\centering
        \includegraphics[trim={160, 50, 150, 35}, clip, width=0.47\columnwidth]{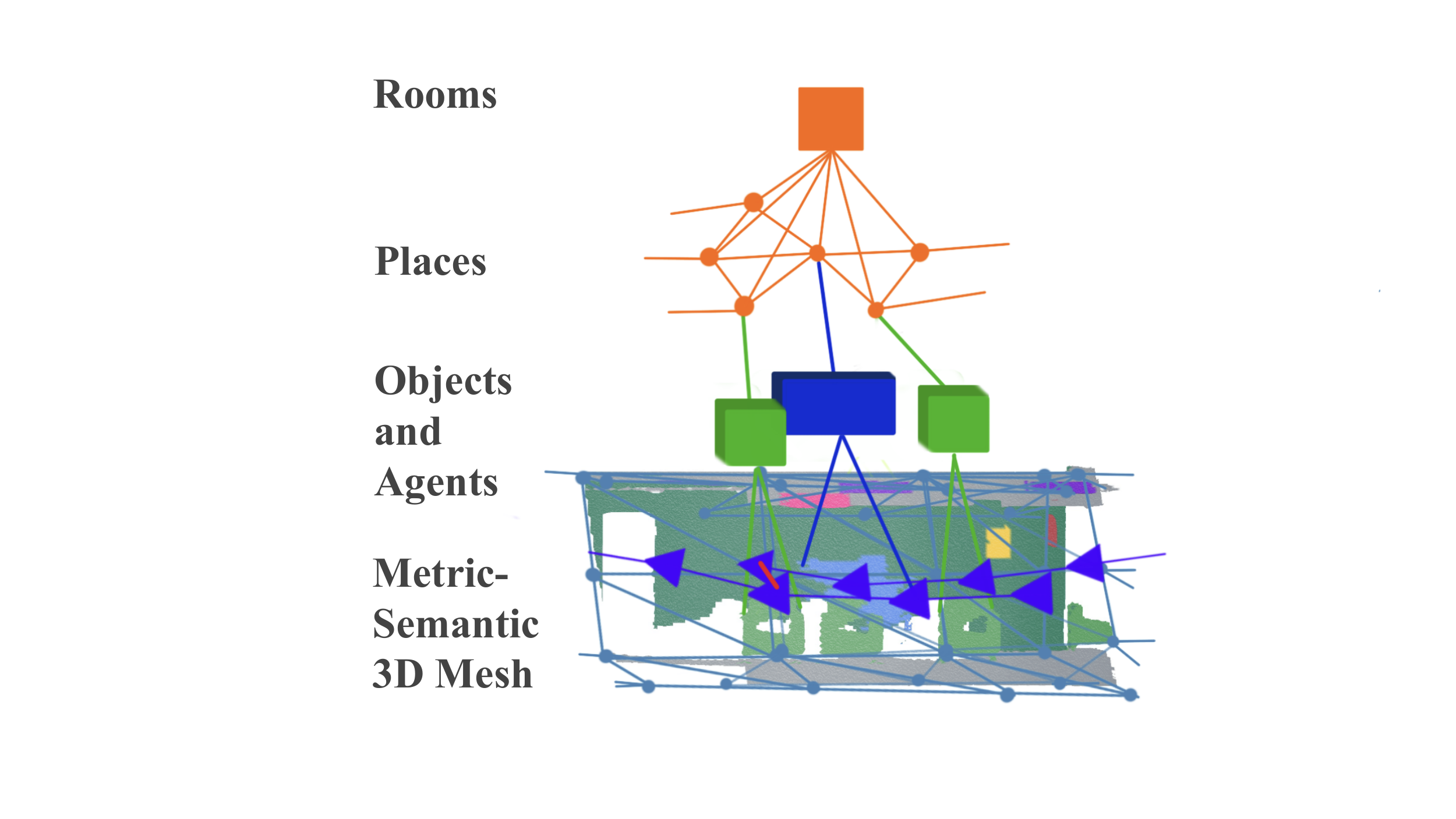}
    }
    \caption{Loop closure detection and optimization:
    (a) after a loop closure is detected,
    (b) we extract and optimize a \subgraph of the 3D scene graph ---the \emph{deformation graph}--- that includes the agent poses, the places, and a subset of the mesh vertices.
    (c) We then reconstruct the rest of the graph via interpolation as in~\cite{Sumner07siggraph-embeddedDeformation},
    and (d) reconcile overlapping nodes.
    }\label{fig:deformation}
    \togglevspace{-7mm}{0mm}
\end{figure}

\myParagraph{3D Scene Graph Optimization}
We propose an approach to simultaneously deform the 3D scene graph layers using an \emph{embedded deformation graph}~\citep{Sumner07siggraph-embeddedDeformation}. This approach generalizes the \emph{pose graph and mesh optimization approach} in~\cite{Rosinol21ijrr-Kimera} as it also includes the graph of places in the optimization.
At a high-level, the backend optimizes a sparse graph (the {embedded deformation graph}) built by downsampling the nodes in the 3D scene graph, and then reconstructs
 the other nodes in the scene graph via interpolation as in~\cite{Sumner07siggraph-embeddedDeformation}.

Specifically, we form the deformation graph as the \subgraph of the 3D scene graph that includes
(i) the agent layer, consisting of a pose graph that includes both odometry and loop closures edges,
(ii) the \emph{3D mesh control points}, \ie{} uniformly subsampled vertices of the 3D mesh (obtained using the same spatial hashing process described in \cref{sec:mesh}), with edges connecting control points closer together than a distance ($\SI{2.5}{\meter}$ in our implementation);
(iii) a minimum spanning tree of the places layer.\footnote{The choice of using the minimum spanning tree of places is motivated by computational reasons: the use of the spanning tree increases sparsity of the resulting deformation graph, enabling faster optimization.}
By construction, these three layers form a connected \subgraph (recall the presence of the inter-layer edges discussed in \cref{sec:incrementalLayers}).

\begin{figure*}[!t]
    \centering
    \includegraphics[width=0.99\textwidth]{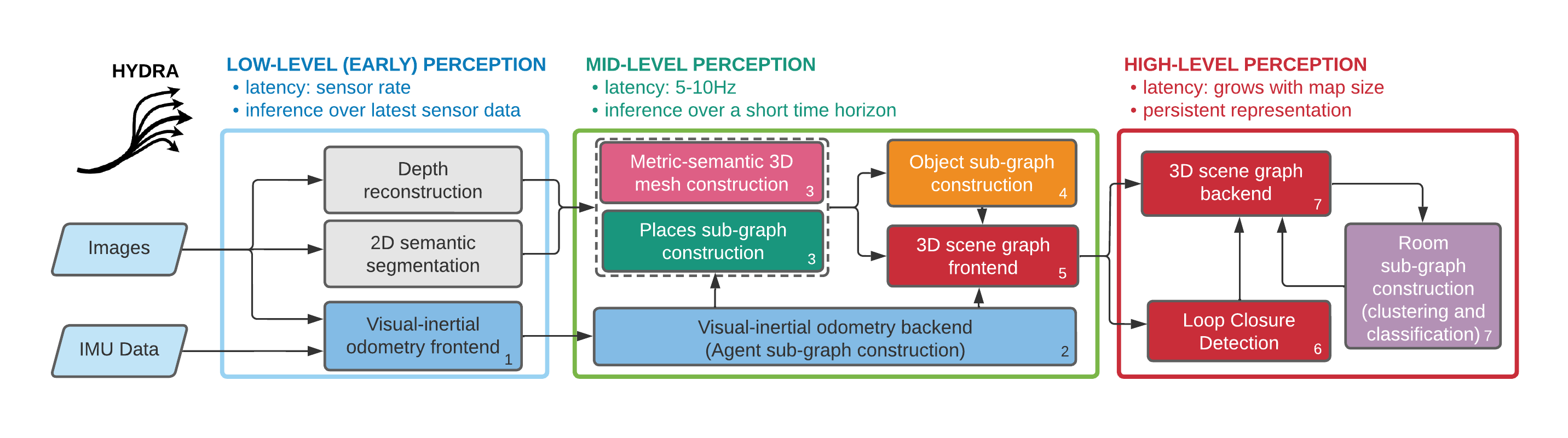}
    \togglevspace{-5mm}{0mm}
    \caption{\name's functional blocks.
             We conceptualize three different functional block groupings: low-level perception, mid-level perception, and high-level perception in order of increasing latency.
             Each functional block is labeled with a number that identifies the ``logical'' thread that the module belongs to.}\label{fig:architecture}
    \togglevspace{-5mm}{0mm}
\end{figure*}

The embedded deformation graph approach associates a local frame (\ie{} a pose) to each node in the deformation graph and then solves an optimization problem to adjust the local frames in a way that minimizes deformations associated to each edge (including loop closures). Let us call $\calT_a$, $\calT_m$, $\calT_p$ the set of poses associated with the agent layer, the mesh control points, and the places. The poses in $\calT_a$ are initially set to be the odometric poses of the robot;
the poses in $\calT_m$ are initially set to have identity rotation
and translation equal to the mesh control points' positions;
similarly, the poses in $\calT_p$ are initially set to have identity rotation, and translation equal to the position of the corresponding places.
Each edge represents a relative pose or a relative position measurement between pair of poses.
In particular, the set of edges is $\calE = \calE_{aa} \cup \calE_{mm} \cup \calE_{pp} \cup \calE_{am} \cup \calE_{ap} \cup \calE_{mp}$, where each subset denotes intra-layer edges (\eg $\calE_{aa}$ contains the edges within the agent layer),
or inter-layer edges (\eg $\calE_{am}$ contains the edges between a robot pose and a mesh control point).\footnote{The set $\calE_{aa}$ contains all the odometric and loop closure measurements relating poses in the robot trajectory.
$\calE_{mm}$ contains all the relative positions between pairs of nearby mesh control points (expressed in the local frame attached to one of the control points).
Similarly, $\calE_{pp}$, $\calE_{am}$, $\calE_{ap}$ and $\calE_{mp}$ contain the relative positions between pairs of places,  between robot poses and mesh control points visible from that pose, between robot poses and places near that pose,
and between a place and the corresponding basis points in the mesh, respectively.
  }
Intuitively, the proposed 3D scene graph optimization finds the set of poses that minimizes the mismatch with respect to these relative measurements: a small mismatch corresponds to small deformations of the local geometry of the scene graph and encourages small errors for the loop closure edges.

We can find an optimal configuration for the poses  $\calT = \calT_a \cup \calT_m \cup \calT_p$ in the deformation graph by solving the following optimization problem:
\beq
\label{eq:pgo}
\calT^\star = \argmin_{\MT_1, \MT_2, \ldots\in\calT} \sum_{(i,j) \in \calE} \normsq{  \MT_i\inv \MT_j - \ME_{ij} }{\MOmega_{ij}}
\eeq
where $\MT_i \in \SEthree$ and $\MT_j \in \SEthree$ are pairs of 3D poses in $\calT$, $\ME_{ij}$ is the relative measurement associated to each edge $(i,j) \in \calE$ (written as a 3D pose), and for a matrix $\MM$ we use the notation $\normsq{ \MM }{\MOmega} \triangleq \trace{\MM \Omega\MM\tran}$.\footnote{When the relative measurement involves only a translation, the rotation component of $\ME_{ij}$ is conventionally set to the identity --- a suitable choice of the information matrix $\MOmega_{ij}$ for those measurements ensures that such rotation component is disregarded by the optimization.}
The $4\times 4$ positive semidefinite matrix $\MOmega_{ij}$ is chosen as the inverse of the odometry (or loop closure) covariances for the edges in $\calE_{aa}$, while it is set to $\diag{[ 0 \; 0 \; 0 \; \omega_t]}$ for the other relative position measurements, where the zeros cancel out the rotation component of the residual error $\MT_i\inv \MT_j - \ME_{ij}$, and $\omega_t$ is a user-specified parameter that controls how much deformation we want to allow for each edge during the optimization.\footnote{The interested reader might find a step-by-step derivation of~\eqref{eq:pgo} (but restricted to the agent layer and the mesh control points) in~\cite{Rosinol21ijrr-Kimera}. }

In hindsight, 3D scene graph optimization transforms a subset of the 3D scene graph into a \emph{factor graph}~\citep{Cadena16tro-SLAMsurvey}, where edge potentials need to be minimized.
The expert reader might also realize that~\eqref{eq:pgo} is mathematically equivalent to standard pose graph optimization in SLAM~\citep{Cadena16tro-SLAMsurvey}, which enables the use of established off-the-shelf solvers.
In particular, we solve~\eqref{eq:pgo} using the Graduated Non-Convexity (GNC) solver in GTSAM~\citep{Antonante21tro-outlierRobustEstimation}, which is also able to reject incorrect loop closures as outliers.

\myParagraph{Interpolation and Reconciliation}
Once the optimization terminates, the agent and place nodes are updated with their new (optimized) positions and the full mesh is interpolated back from its control points according to the deformation graph approach in~\cite{Sumner07siggraph-embeddedDeformation,Rosinol21ijrr-Kimera}.
After the 3D scene graph optimization and the interpolation step, certain portions of the scene graph ---corresponding to
areas revisited multiple times by the robot--- contain redundant information.
To avoid this redundancy, we merge overlapping nodes.
For places nodes, we merge nodes within a distance threshold (\SI{0.4}{\meter} in our implementation).
For object nodes we merge nodes if the corresponding objects have the same semantic label and if one of nodes is contained inside the bounding box of the other node.
After this process is complete, we recompute the object centroids and bounding boxes from the position of the corresponding vertices in the optimized mesh.
Finally, we recompute the rooms from the graph of places using the approach in \cref{sec:rooms}.

\section{Thinking Fast and Slow: \\ the \name{} Architecture}\label{sec:hydra}

We integrate the algorithms described in this paper into a highly parallelized \emph{spatial perception system}, named \emph{\name}.
\name{} involves a combination of processes that run at sensor rate (\eg{} feature tracking for visual-inertial odometry), at sub-second rate (\eg{} mesh and place reconstruction, object bounding box computation), and at slower rates (\eg{} the scene graph optimization, whose complexity depends on the map size).
Therefore these processes have to be organized such that slow-but-infrequent computation (\eg{} scene graph optimization) does not get in the way of faster processes.

We visualize \name{} in \cref{fig:architecture}.
Each block in the figure denotes an algorithmic module, matching the discussion in the previous sections.
\name{} starts with fast \emph{early} perception processes (\cref{fig:architecture}, left), which perform low-level perception tasks such as feature detection and tracking (required for visual-inertial odometry, and executed at frame-rate), 2D semantic segmentation, and stereo-depth reconstruction (at keyframe rate).
The result of early perception processes are passed to mid-level perception processes (\cref{fig:architecture}, center).
These include algorithms that incrementally construct (an odometric version of) the agent layer (\eg{} the visual-inertial odometry backend), the mesh and places layers, and the object layer.
Mid-level perception also includes the \emph{scene graph frontend}, which is a module that collects the result of the other modules into an ``unoptimized'' scene graph.
Finally, the high-level perception processes perform loop closure detection, execute scene graph backend optimization, and extract rooms (including both room clustering and classification).\footnote{While room detection is fast enough to be executed at keyframe rate, it still operates on the entire graph, hence it is more suitable as a slow high-level perception process.
} This results in a globally consistent, persistent 3D scene graph.

\name{} runs in real-time on a multi-core CPU\@.
The only module that relies on GPU computing is the
2D semantic segmentation, which uses a standard off-the-shelf deep network.
The neural tree (\cref{sec:rooms_classification}) and the GNN-based loop closure detection (\cref{sec:LCD})
 can be optionally executed on a GPU, but the forward pass is relatively fast even on a CPU (see \cref{sec:runtime}).
The fact that most modules run on CPU has the advantage of (i) leaving the GPU to learning-oriented components, and (ii) being compatible with the power limitations imposed by current mobile robots.
In the next section, we will report real-time results with \name{} running on a mobile robot, a Unitree A1 quadruped, with onboard sensing and computation (an NVIDIA Xavier embedded computer).

\section{Experiments}\label{sec:experiments}

The experiments in this section
(i) qualitatively and quantitatively compare the 3D scene graph produced by \name{} to another state-of-the-art 3D scene graph construction method, SceneGraphFusion~\citep{Wu21cvpr-SceneGraphFusion},
(ii) examine the performance of \name{} in comparison to batch offline methods, \ie{} Kimera~\citep{Rosinol21ijrr-Kimera},
(iii) validate design choices for learned components in our method via ablation studies,
and (iv) present a runtime analysis of \name{}.
We also document our experimental setup, including training details for both the GNN-based loop closure descriptors and the neural-tree room classification, and datasets used.
Our implementation of \name{} is available at \hydraURL.

\subsection{Datasets}

We use four primary datasets for training and evaluation: two simulated datasets (Matterport3d~\citep{Chang173dv-Matterport3D} and uHumans2~\citep{Rosinol21ijrr-Kimera}) and two real-world datasets (SidPac and Simmons).
In addition, we use the Stanford3D dataset~\citep{Armeni19iccv-3DsceneGraphs}
to motivate neural tree design choices with respect to our initial proposal in~\cite{Talak21neurips-neuralTree}.

\myParagraph{Matterport3d}
We utilize the Matterport3D (MP3D) dataset~\citep{Chang173dv-Matterport3D}, an RGB-D dataset consisting of 90 reconstructions of indoor building-scale scenes.
We use the Habitat Simulator~\citep{Savva19iccv-habitat} to traverse the scenes from the MP3D dataset and render color imagery, depth, and ground-truth 2D semantic segmentation.
We generate two different 3D scene graphs datasets  for the 90 MP3D scenes by running \name{} on pre-generated trajectories; one for training descriptors and one for room classification.
For training the GNN-based descriptors for loop closure detection (GNN-LCD), we generate a single trajectory for each scene through the navigable scene area such that we get coverage of the entire scene, resulting in 90 scene graphs.
For training the room classification approaches, we generate 5 trajectories for each scene by randomly sampling navigable positions until a total path length of at least \SI{100}{\meter} is reached, resulting in 450 trajectories.
When running \name{} on these 450 trajectories, we save intermediate scene graphs every 100 timesteps (resulting in roughly 15 scene graphs per trajectory), giving us 6810 total scene graphs.

\myParagraph{uHumans2}
The uH2 dataset is a Unity-based simulated dataset~\citep{Rosinol21ijrr-Kimera} that includes four scenes: a small apartment, an office, a subway station, and an outdoor neighborhood.
For the purposes of this paper, we only use the apartment and office scenes.
The dataset provides visual-inertial data, ground-truth depth, and 2D semantic segmentation.
The dataset also provides ground truth trajectories that we use for benchmarking.

\myParagraph{SidPac}
The SidPac dataset is a real dataset collected in a graduate student residence using a visual-inertial hand-held device.
We used a Kinect Azure camera as the primary collection device, providing color and depth imagery, with an Intel RealSense T265 rigidly attached to the Kinect to provide an external odometry source.
The dataset consists of two separate recordings, both of which are used in our previous paper~\cite{Hughes22rss-hydra}.
We only use the first recording for the purposes of this paper.
This first recording covers two floors of the building (Floors 1 \& 3), where we walked through a common room, a music room, and a recreation room on the first floor of the graduate residence, went up a stairwell, through a long corridor as well as a student apartment on the third floor, then finally down another stairwell to revisit the music room and the common room, ending where we started.
These scenes are particularly challenging given the scale of the scenes (average traversal of around \SI{400}{\meter}), the prevalence of glass and strong sunlight in regions of the scenes (causing partial depth estimates from the Kinect), and feature-poor regions in hallways.
We obtain a proxy for the ground-truth trajectory for the Floor 1 \& 3 scene via a hand-tuned pose graph optimization with additional height priors, to reduce drift and qualitatively match the building floor plans.

\begin{figure}
    \subfloat[Clearpath Jackal]{\centering
        \includegraphics[width=0.48\columnwidth]{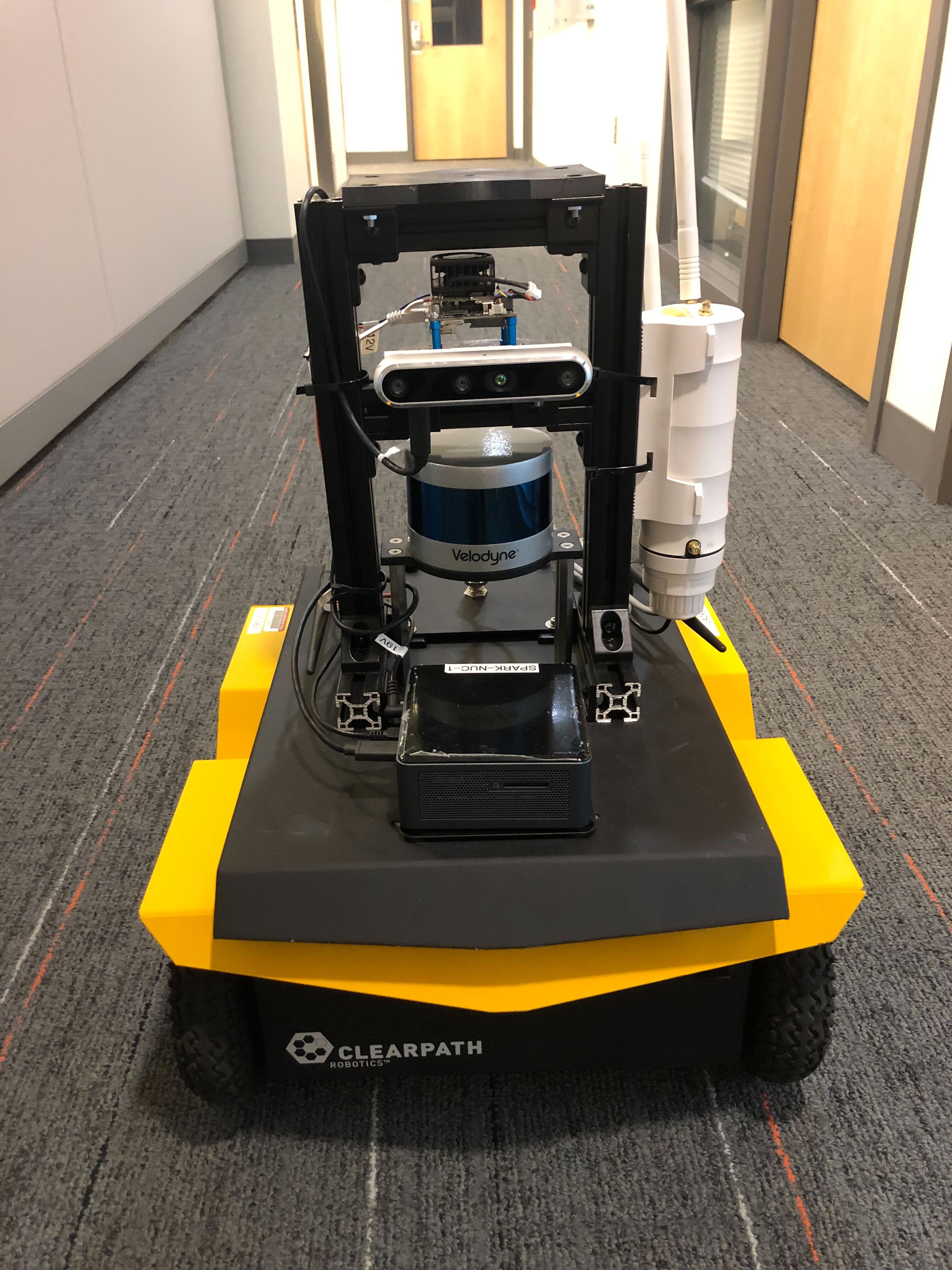}
    }
    \subfloat[Unitree A1]{\centering
        \includegraphics[width=0.48\columnwidth]{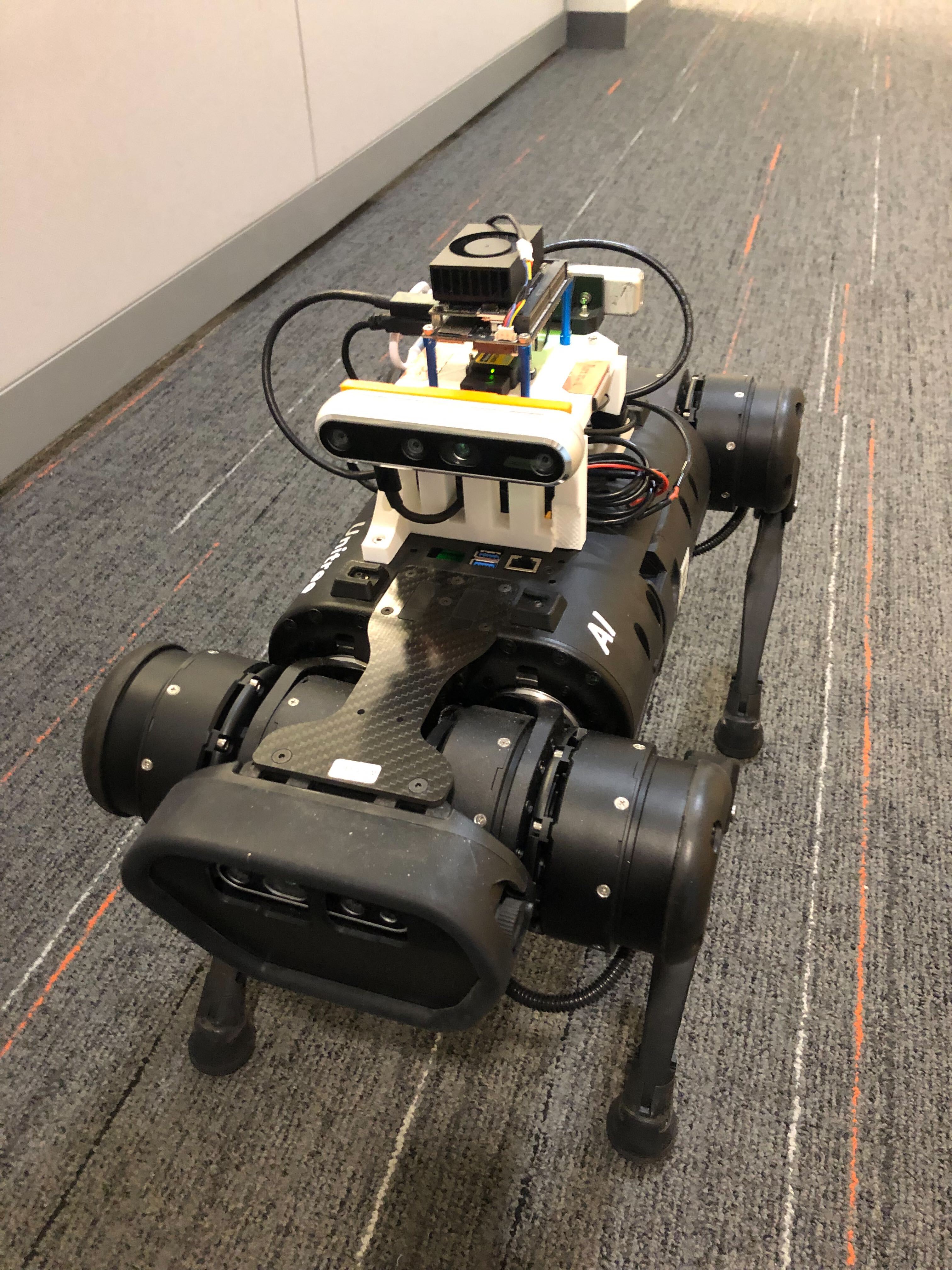}
    }
    \caption{(a) The Clearpath Jackal platform used to record one of the Simmons sequences.
             The Jackal is equipped with both a RealSense D455 camera and a Velodyne LIDAR\@.
             (b) The Unitree A1 platform used to record the second Simmons sequence.
             The A1 is equipped with a RealSense D455 camera and a Microstrain IMU\@.
             Both platforms have an Nvidia Xavier NX embedded computer onboard.}\label{fig:rboots}
    \togglevspace{-5mm}{0mm}
\end{figure}

\myParagraph{Simmons}
The Simmons dataset is a real dataset collected on a single floor of an undergraduate student residence
with a Clearpath Jackal rover and a Unitree A1 quadruped robot (\cref{fig:rboots}).
The Clearpath Jackal rover uses the RealSense D455 camera as the primary collection device to provide color and depth imagery, but the rover is also equipped with
a Velodyne to perform LIDAR-based odometry~\citep{Reinke22ral-LOCUS2} as an external odometry source.
The A1 also uses the RealSense D455 camera as the primary collection device, but is not equipped with a Velodyne;
instead, it is equipped with an industrial grade Microstrain IMU to improve the performance of the visual-inertial odometry~\citep{Rosinol20icra-Kimera}.
The dataset consists of two recordings, one recorded on the Jackal, and one recorded using the A1.
The Jackal recording covers the rooms scattered throughout half of a single floor of the building,
where the Jackal traverses a distance of around \SI{500}{\meter} through mostly student bedrooms,
but also a lounge area, a kitchen, and a laundry room;
the rooms are all joined by a long hallway that spans the full floor.
The A1 sequence takes place on one end of the floor and maps 4 bedrooms, a small lounge, and a section of the hallway that connects all the rooms.
The dataset is challenging due to the visual and structural similarity across student rooms that have similar layouts and furnishing.
We obtain a proxy for the ground-truth trajectory for the Jackal using LIDAR-based SLAM, by running LOCUS~\citep{Reinke22ral-LOCUS2} with flat ground assumption
and LAMP~\citep{Chang22ral-LAMP2} for loop closure corrections.
The ground-truth trajectory of the A1 is obtained by registering individual visual keyframes with the visual keyframes in the Jackal sequence,
and then corrected using the proxy ground-truth of the Jackal sequence.

\myParagraph{Stanford3D}
We use the 35 human-verified scene graphs from the Stanford 3D Scene Graph (Stanford3d) dataset~\citep{Armeni19iccv-3DsceneGraphs} to compare the neural tree against standard graph neural networks for node classification
and to assess new design choices against our initial proposal in~\cite{Talak21neurips-neuralTree}.
These scene graphs represent individual residential units, and each consists of building, room, and object nodes with inter-layer connectivity.
We use the same pre-processed graphs as in~\cite{Talak21neurips-neuralTree} where the single-type building nodes (residential) are removed and additional 4920 intra-layer object edges are added to connect nearby objects in the same room.
This results in 482 room-object graphs, each containing one room and at least one object per room.
The full dataset has 482 room nodes with 15 semantic labels, and 2338 objects with 35 labels.

\subsection{Experimental Setup}

In this section, we first discuss the implementation of \name{}, including our choice of networks for the 2D semantic segmentation.
Then we provide details regarding the training of our learning-based approaches: the GNN-LCD descriptors and the neural tree for room classification.

\begin{figure}
    \subfloat[Original Image]{\centering
        \includegraphics[width=0.48\columnwidth]{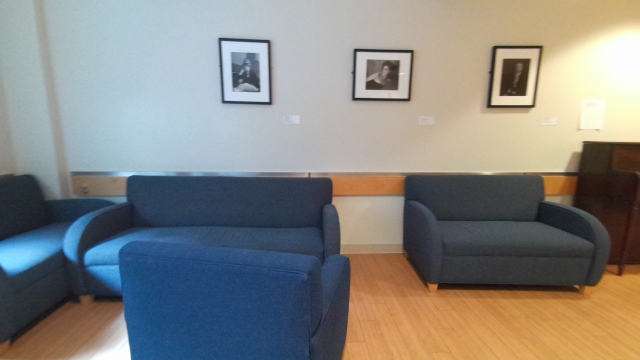}
    }
    \subfloat[OneFormer]{\centering
        \includegraphics[width=0.48\columnwidth]{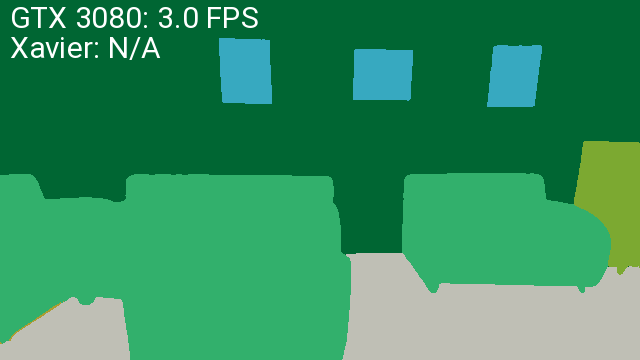}
    } \\
    \subfloat[HRNet]{\centering
        \includegraphics[width=0.48\columnwidth]{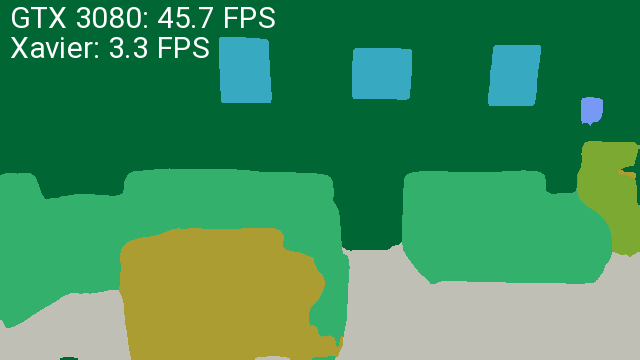}
    }
    \subfloat[Mobilenet]{\centering
        \includegraphics[width=0.48\columnwidth]{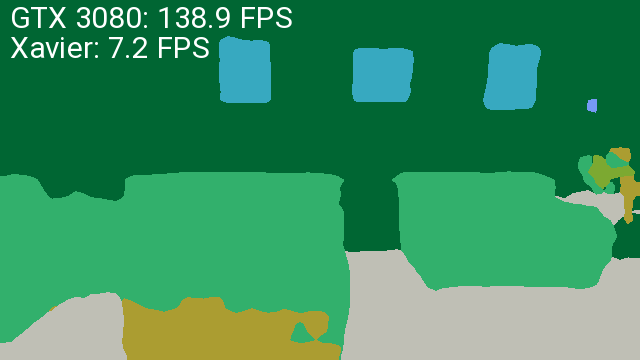}
    }
    \caption{(a) A sample RGB image from SidPac Floor 1-3.
(b-d)~2D semantic segmentation from OneFormer~\citep{Jain23cvpr-Oneformer}, HRNet~\citep{Wang21pami-hrnet}, and MobileNetV2~\citep{Sandler18cvpr-MobileNetV2}.
    The framerate of each approach is overlaid on the images for both the GPU in the workstation used for evaluation (an Nvidia GTX 3080), as well as for a less powerful embedded computer (the Nvidia Xavier NX\@). OneFormer was not tested on the Xavier NX.}\label{fig:semantics}
    \togglevspace{0mm}{0mm}
\end{figure}

\myParagraph{\name}
To provide 2D semantic segmentation for the real-world datasets, we compare three different models, all of which use the ADE20k~\citep{Zhou17cvpr-ade20k} label space.
We use HRNet~\citep{Wang21pami-hrnet} as a ``nominal'' semantic segmentation source and MobileNetV2~\citep{Sandler18cvpr-MobileNetV2} as a light-weight source of semantics for use on a robot.
For both HRNet and MobileNetV2, we use the pre-trained model from the MIT Scene Parsing challenge~\citep{Zhou17cvpr-ade20k} to export a deployable model for our inference toolchain (ONNX and TensorRT).
Additionally, we use a state-of-the-art model, OneFormer~\citep{Jain23cvpr-Oneformer}, to provide
more-accurate-but-slower 2D semantic segmentation.
A comparison of the accuracy and frame-rate of these models is shown in \cref{fig:semantics}.

For both simulated and real datasets we use Kimera-VIO~\citep{Rosinol20icra-Kimera} for visual-inertial odometry.
For SidPac, we fuse the Kimera-VIO estimates with the output of the RealSense T265 to improve the quality of the odometric trajectory.
For Simmons, we fuse  the Kimera-VIO estimates with the odometry output of~\cite{Reinke22ral-LOCUS2} to improve the quality of the odometric trajectory for the sequence recorded by the Jackal platform.

All the remaining blocks in \cref{fig:architecture} are implemented in C++, following the approach described in this paper.
In the experiments we primarily use a workstation with an AMD Ryzen9 3960X with 24 cores and two Nvidia GTX3080s.
We also report timing results on an embedded computer (Nvidia Xavier NX) at the end of this section, and demonstrate the full \name{} system running on the same embedded computer onboard the A1 robot; \toggleformat{video of the experiment is available.\footnote{\videoURL}}{see the video attachment.}

\myParagraph{GNN-LCD Training}
We use the MP3D scene graphs to train loop closure descriptors for the object and place \subgraphs.
We generate a \subgraph around every agent pose which consists of all nodes and edges within a provided radius of the parent place node that the agent pose is nearest to.
We adaptively determine the radius for each \subgraph; each \subgraph has a minimum radius of \SI{3}{\meter} and is grown until either a minimum number of nodes (10) or a maximum radius of \SI{5}{\meter} is reached.
Each object and place \subgraph contains the node and edges features as described in \cref{sec:LCD}.
The semantic label of each object node is encoded either using word2vec~\citep{Mikolov13-wordRepresentation} or a one-hot encoding.
We explore the impact of this label encoding in \cref{sec:exp_results}.

We use triplet loss to train our model.
To construct the triplets, we have an anchor (a candidate \subgraph), and need to find positive and negative examples to compute the loss.
A good proxy for how similar two \subgraphs are is looking at the spatial overlap of the two sets of nodes of the \subgraphs.
We compute this overlap between the \subgraphs by using IoU over the bounding boxes that encompass the positions of the nodes of each \subgraph.
If the overlap between two \subgraphs is at least 40 percent, we consider them to be similar (and candidates for positive examples), otherwise they are considered negative examples for that particular anchor.
Sub-graphs from different scenes are always considered negative examples.
To train our GNN-LCD descriptors, we use online triplet mining with batch-all triplet loss~\citep{Schroff15cvpr-Facenet}, where we construct valid triplets for a batch of \subgraph input embeddings and average loss on the triplets that produce a positive loss.

The message-passing architecture that we selected is GCNConv~\citep{Kipf17iclr-gcn}.
While more expressive or performant architectures exist, few are compatible with the computational framework used for inference in Hydra (\ie{} ONNX).
We split the dataset by scene; we use 70\% of the original 90 scene graphs to train on, 20\% of the scene graphs for validation, and the last 10\% for testing.
Our learning rate is \num{5e-4}, and we train the object models for 50 epochs and place models for 80 epochs, saving the model when the average validation error is at a minimum.
Each model produces a descriptor of dimension 64.
Other key model parameters are reported in \cref{app:lcd}.

\myParagraph{Neural Tree Training}
We train the neural tree and related baselines on two datasets: Stanford3d and MP3D.

For the 482 object-room scene graphs in the Stanford3D dataset, we train the neural tree and GNN baselines for the same semi-supervised node classification task examined in~\citep{Talak21neurips-neuralTree}, where the architecture has to label a subset of room and object nodes.
The goal of this comparison is to understand the impact of some design choices (\eg heterogeneous graphs, edge features) with respect to our original proposal in~\citep{Talak21neurips-neuralTree} and related work.
For this comparison, we implement the neural tree and baseline approaches with four different message passing functions: GCN~\citep{Kipf17iclr-gcn}, GraphSAGE~\citep{Hamilton17nips-GraphSage}, GAT~\citep{Velickovic18iclr-GAT}, and GIN~\citep{Xu19iclr-gin}.
We consider both homogeneous and heterogeneous graphs; for all nodes, we use their centroid and bounding box size as node features.
In some of our comparisons, we also examine the use of relative node positions as edge features, and discard the centroid from the node features.
For the GNN baselines, we construct heterogeneous graphs consisting of two node types: rooms and objects.
For the neural tree, we obtain graphs with four node types: room cliques, room-object cliques, room leaves, and object leaves; see \cref{sec:rooms_classification}.
There are few message passing functions that are compatible with both heterogeneous graphs and edge features; therefore, we compare all heterogeneous approaches using only GAT~\citep{Velickovic18iclr-GAT}.
For all tests on Stanford3d, we report average test accuracy over 100 runs.
For each run, we randomly generate a 70\%, 10\%, 20\% split across all nodes (\ie{} objects and rooms) for training, validation, and testing, respectively.

For the 6180 scene graphs in the MP3D dataset, we test the neural tree and baselines for room classification on object-room graphs.
The goal of this experiment is to understand the impact of the node features and the connectivity between rooms on the accuracy of room classification.
We only consider heterogeneous graphs for this dataset, and as such only use GAT~\citep{Velickovic18iclr-GAT} for message passing.
The heterogeneous node types are the same as for Stanford3D.
We use the bounding box size as the base feature for each node, and use relative positions between centroids as edge features.
We also evaluate the impact of using semantic labels as additional features for the object nodes using word2vec~\citep{Mikolov13-wordRepresentation}.
The MP3D dataset contains scene graphs with partially explored rooms.
We discard any room nodes where the IoU between the 2D footprint of the places within the room and the 2D footprint of the ground truth room is less than a specified ratio (60\% in our tests).
Further details on the construction and pre-processing of the dataset are provided in \cref{sec:gnn_training}.
We predict a set of 25 room labels which are provided in \cref{app:labels}.
For training, we use the scene graphs from the official 61 training scenes of the MP3D dataset. For the remaining 29 scenes, we use graphs from two trajectories of the five total trajectories for validation and the other three trajectories for testing.
For use with \name{}, we select the best-performing heterogeneous neural tree architecture;
this architecture uses bounding box size and word2vec embeddings of the semantic labels of the object nodes as node features, as well as relative positions between nodes as edge features.

All training and testing is done using a single Nvidia A10G Tensor Core GPU\@.
For both datasets, we use cross entropy loss between predicted and ground-truth labels during training, and save the models with the highest validation accuracy for testing.
All models are implemented using PyTorch 1.12.1 and PyTorch Geometric 2.2.0~\citep{Fey19iclrwk-pytorchGeometric}.
We base our implementation on our previous open-source version of the neural tree~\citep{Talak21neurips-neuralTree}.
We provide additional training details, including model implementation, timing, and hyper-parameter tuning in \cref{sec:gnn_training}.
     \subsection{Results and Ablation Study}\label{sec:exp_results}

We begin this section with a comparison between \name{} and \sgf{}~\citep{Wu21cvpr-SceneGraphFusion} (\cref{sec:sgf}).
We then analyze the accuracy and provide an ablation of the modules in \name{} (\cref{sec:ablation}), and show an example of the quality of scene graph that \name{} is able to produce while running onboard the Unitree A1 (\cref{sec:robot_operation}).
Finally,  we report a breakdown of the runtime  of our system (\cref{sec:runtime}).

\begin{figure*}
    \centering
    \subfloat[\HydraBest]{\centering
        \includegraphics[width=0.49\textwidth,trim=7cm 2cm 10cm 1cm,clip]{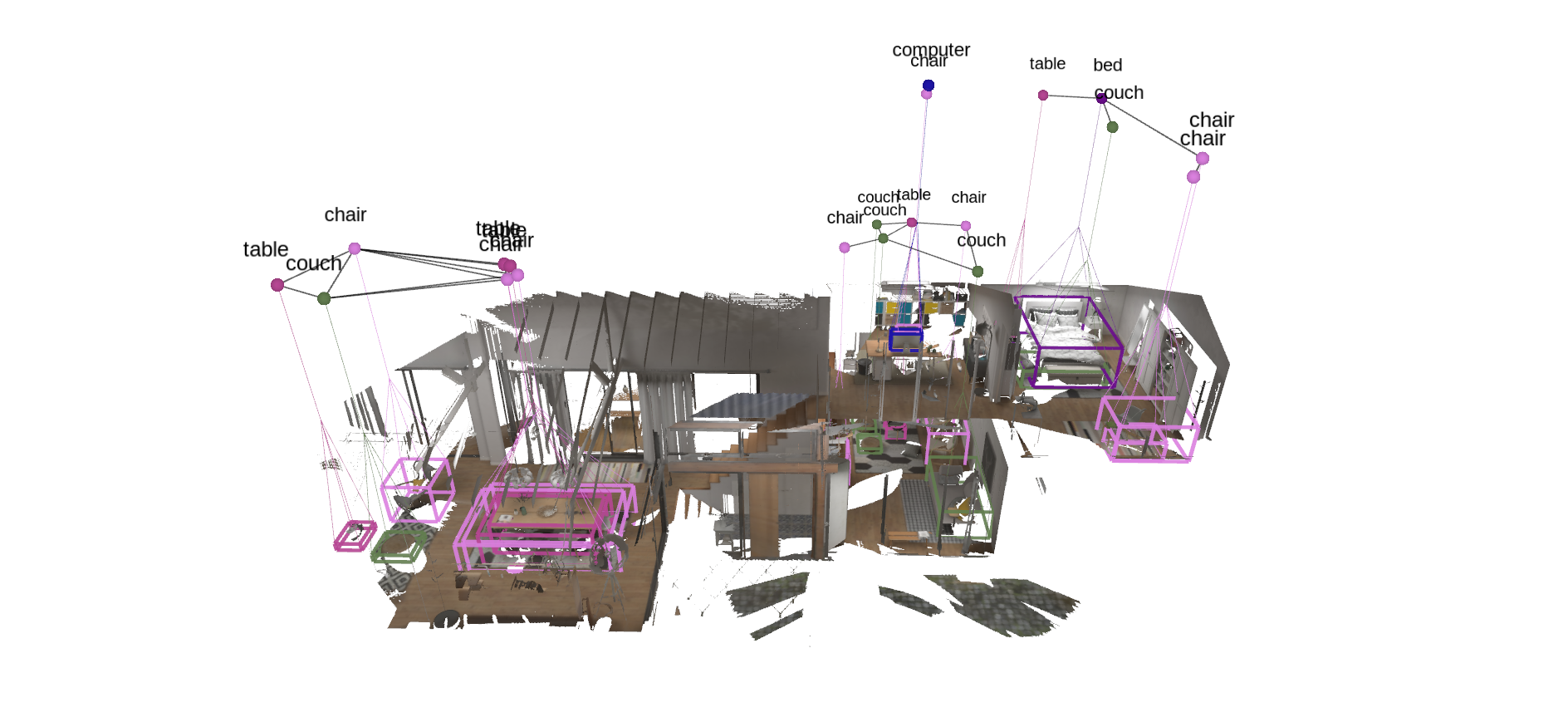}
    }
    \subfloat[\sgf]{\centering
        \includegraphics[width=0.49\textwidth,trim=7cm 2cm 10cm 1cm,clip]{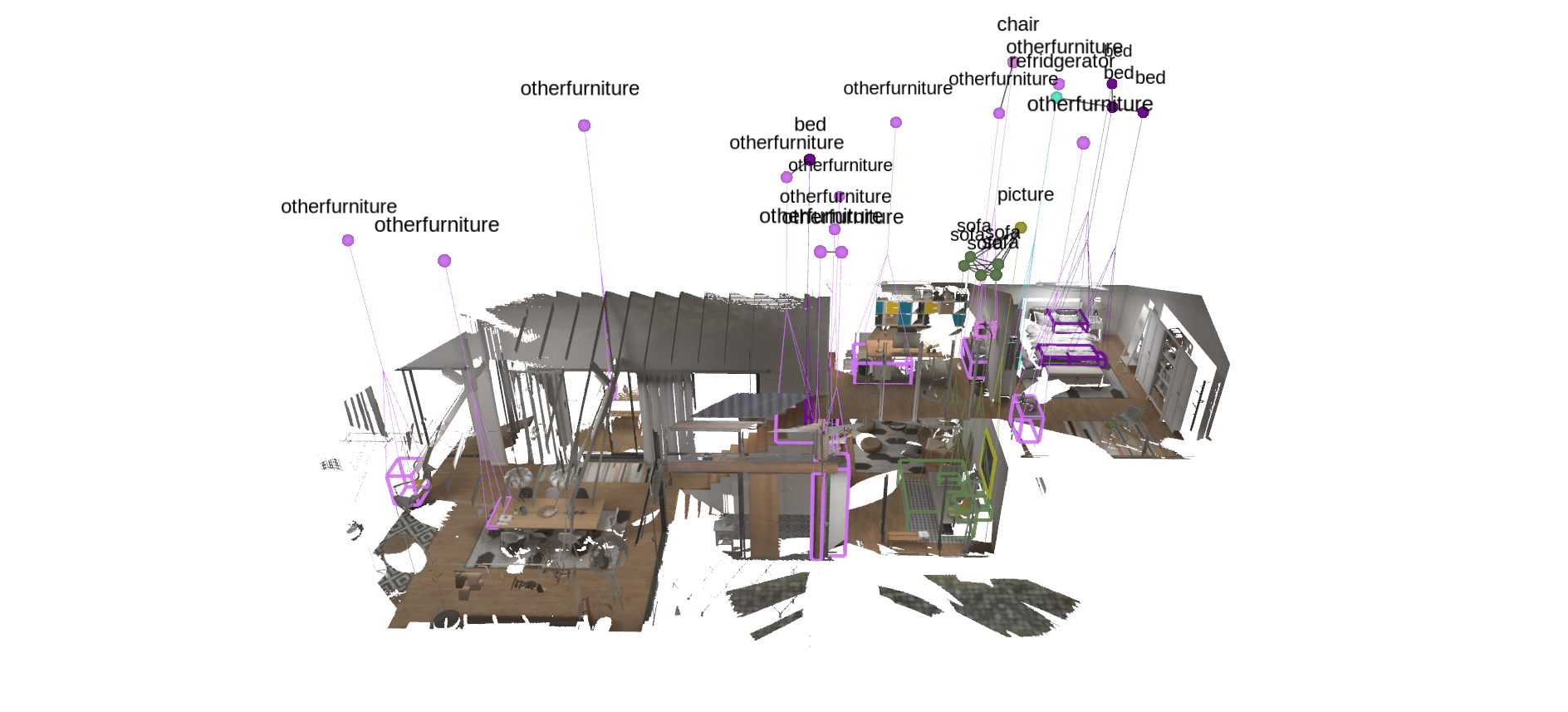}
    }
    \caption{(a) Objects produced by \name{} for the uHumans2 apartment scene.
             (b) 3D scene graph produced by \sgf{} for the uHumans2 apartment scene.
             Note that \sgf{} directly infers geometric object relationships, and edges are drawn between objects for each detected relationship.}\label{fig:sota}
    \togglevspace{-5mm}{0mm}
\end{figure*}

\setlength{\tabcolsep}{\toggleformat{4pt}{3pt}}
\begin{table}[t!]
    \toggleformat{}{\footnotesize}
    \centering
    \begin{tabular}{c cc cc}
        \toprule
        & \multicolumn{2}{c}{uH2 Apartment} & \multicolumn{2}{c}{uH2 Office} \\
        \cmidrule(l{2pt}r{2pt}){2-3} \cmidrule(l{2pt}r{2pt}){4-5}
        & \% Correct & \% Found & \% Correct & \% Found \\
        \midrule
        \sgf{}          & $25.0$ & $17.9$ & $35.4$ & $36.0$ \\
\HydraBest{}    & $68.4$ & $53.6$ & $71.6$ & $60.4$ \\
        \HydraGT{}      & $\mathbf{95.5}$ & $\mathbf{82.1}$ & $\mathbf{86.5}$ & $\mathbf{83.3}$ \\
        \bottomrule
    \end{tabular}
    \caption{Object accuracy for \name and \sgf. Best results in {\bf bold}.}\label{tab:sota}
    \togglevspace{-5mm}{0mm}
\end{table}

\subsubsection{Comparison against \sgf{}}\label{sec:sgf}

This section shows that \name{} produces better object maps compared to \sgf{}~\citep{Wu21cvpr-SceneGraphFusion}, while also creating additional layers in the scene graph (\sgf{} does not estimate rooms or places).
We compare our system with \sgf{}~\citep{Wu21cvpr-SceneGraphFusion} for both the uHumans2 apartment and office scene.
For this comparison,
the only learned component used by our system is the off-the-shelf 2D semantic segmentation network, hence for a fair comparison, we do not retrain the object label prediction GNN used by \sgf{}~\citep{Wu21cvpr-SceneGraphFusion}.
Examples of the produced objects by \name{} and \sgf{} are shown in \cref{fig:sota}.
Note that \sgf{} tends to over-segment larger objects (\eg{} the sofa in the lower right corner of \cref{fig:sota} or the bed in the top right corner of the same figure), while \name{} tends to under-segment nearby objects (such as the dining table and chairs in the bottom left of \cref{fig:sota}).
For the purposes of a fair comparison, we use OneFormer~\citep{Jain23cvpr-Oneformer} to provide semantics for \name{}, and use ground-truth robot poses as input to both systems.
A quantitative comparison is reported in \cref{tab:sota}, which also includes results for Hydra using ground-truth semantics as a upper bound on performance.
We report two metrics.
The first, \emph{Percent Correct}, is the percent of estimated objects that are within some distance threshold (\SI{0.5}{\meter}) of a ground-truth object (as provided by the simulator for uHumans2).
The second, \emph{Percent Found}, is the percent of ground-truth objects that are within some distance threshold (also \SI{0.5}{\meter}) of a ground-truth object (as provided by the simulator for uHumans2).
As the label space of \sgf{} for objects does not line up well with the ground-truth object label space, and as \sgf{} does a poor job predicting object labels for the two scenes used for comparison (see \cref{fig:sota}), we do not take the semantic labels into account when computing the metrics (as opposed to the stricter metrics used in \cref{sec:ablation} for examining object accuracy, which do take the semantic labels of the object into account).

\Cref{tab:sota} shows that \name{} largely outperforms \sgf{} in terms of both \emph{Percent Correct} and \emph{Percent Found}, even after disregarding the incorrect semantic labels for the objects produced by \sgf{}.
In hindsight, \name{} can benefit from more powerful 2D segmentation networks and estimate better objects, while \sgf{} directly attempts to extract semantics from the 3D reconstruction, which is arguably a harder task and does not benefit from the large datasets available for 2D image segmentation.
The last row in the table  ---\name{} (GT)--- shows that \name{}'s accuracy can further improve with a better 2D semantic segmentation.
While \sgf{} is less competitive in terms of object understanding, we remark that
it predicts a richer set of object relationships for each edge between objects~\citep{Wu21cvpr-SceneGraphFusion} (\eg{} \semantic{standing on}, \semantic{attached to}) compared to \name{}, which might be also useful for certain applications.

\subsubsection{Accuracy Evaluation and Ablation}\label{sec:ablation}

Here, we examine the accuracy of \name{}, running in real-time, as benchmarked against the accuracy attained by constructing a 3D scene graph in an offline manner (\ie{} by running Kimera~\citep{Rosinol21ijrr-Kimera}).
We also present a detailed quantitative analysis to justify key design choices in \name{}. To do this, we break down this quantitative analysis across the objects, places, and rooms layers.
We examine the impact of (i) the choice of loop closure detection and 2D semantic segmentation on the accuracy of each layer of the 3D scene graph, and (ii) the impact of model architectures and training parameters on the accuracy of the room classification network.
As such, we first present the accuracy of the predicted object and place layers by \name{}.
We then move to looking at room classification accuracy, including two ablation studies on the neural tree and an evaluation of our room clustering approach based on persistent homology.
Finally, we examine the quality of predicted loop closures when using the proposed hierarchical descriptors.

When benchmarking the accuracy of the layers of \name{}, we consider five different configurations for odometry estimation and loop closure detection.
The first configuration (``\igx'') uses ground-truth poses to incrementally construct the scene graph and disregards loop closures.
The second, third, and fourth configurations (``\ivl'', ``\ivd'', and ``\ivn'', respectively) use visual-inertial odometry (VIO) for odometry estimation and use vision-based loop closures (\ivl), the proposed hierarchical handcrafted descriptors for scene graph loop closures (\ivd), or the proposed hierarchical learned descriptors for scene graph loop closures (\ivn).
The last configuration, \ivx, uses visual-inertial odometry without loop closures.

\begin{figure*}
    \centering
    \subfloat[uH2 Apartment]{\centering
        \includegraphics[width=0.49\textwidth]{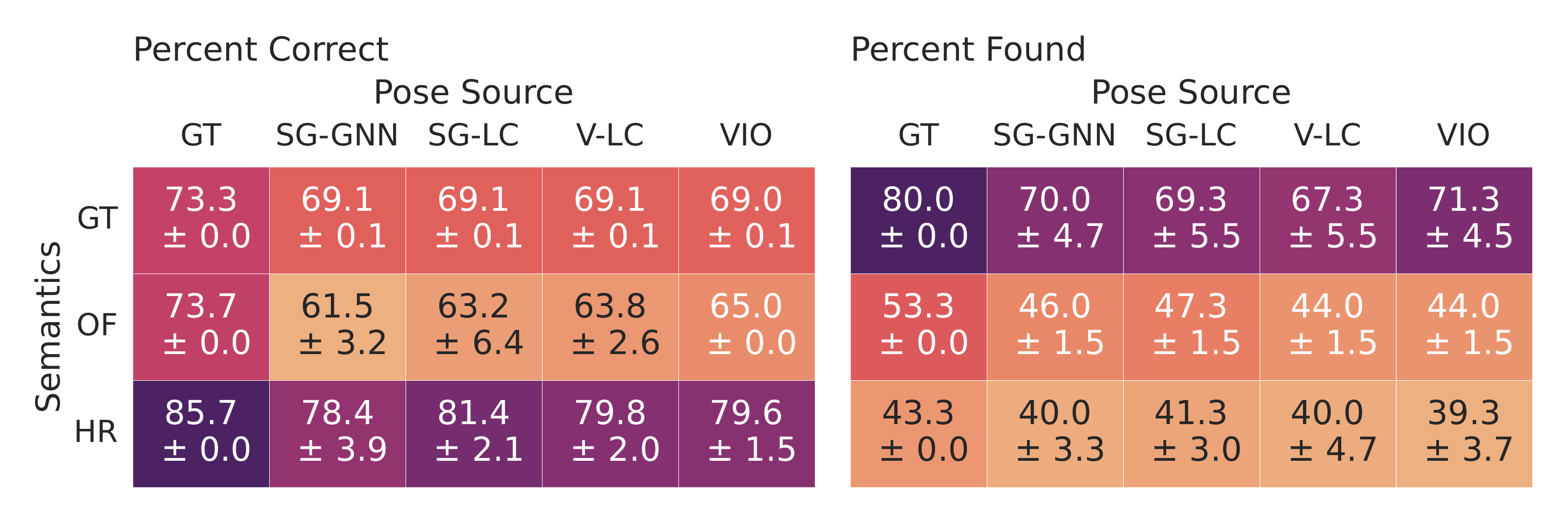}
    }
    \subfloat[uH2 Office]{\centering
        \includegraphics[width=0.49\textwidth]{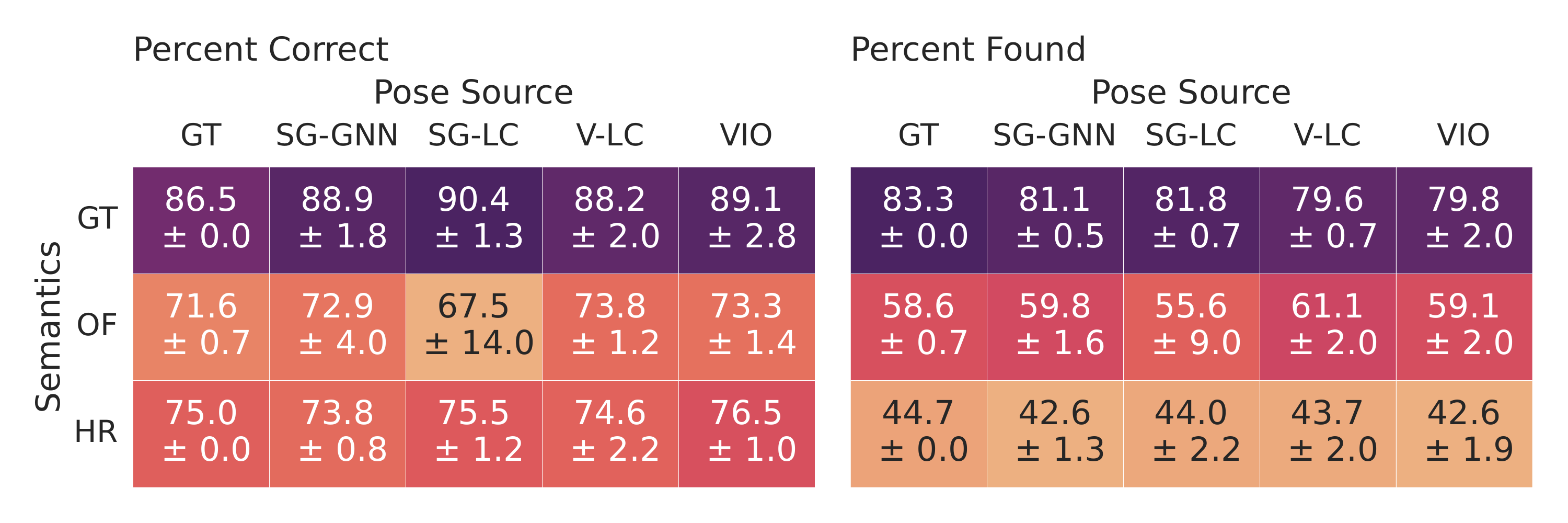}
    } \\
    \subfloat[SidPac]{\centering
        \includegraphics[width=0.49\textwidth]{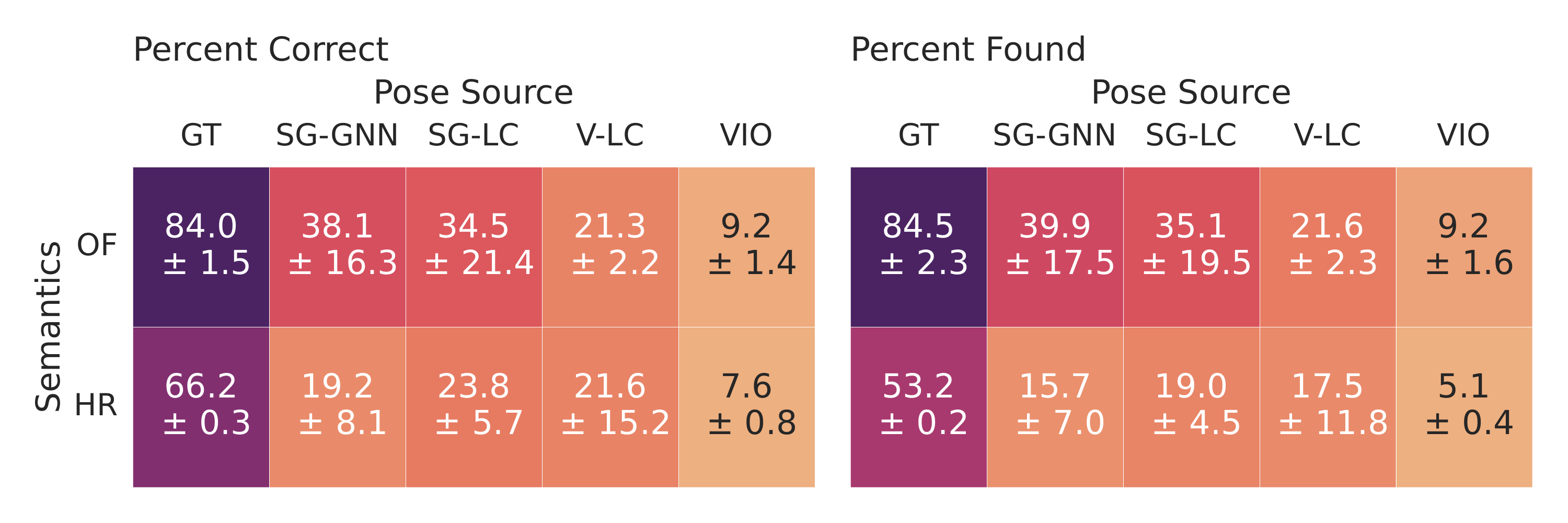}
    }
    \subfloat[Simmons Jackal]{\centering
        \includegraphics[width=0.49\textwidth]{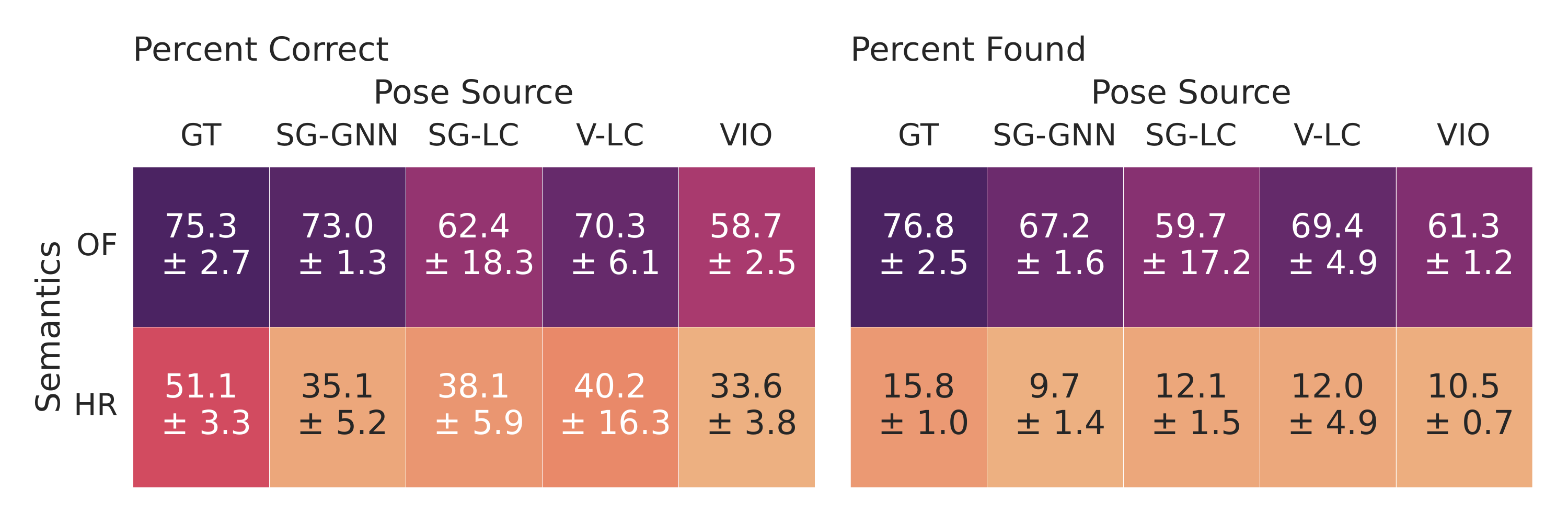}
    } \\
    \subfloat[Simmons A1]{\centering
        \raisebox{0.55cm}{\includegraphics[width=0.49\textwidth]{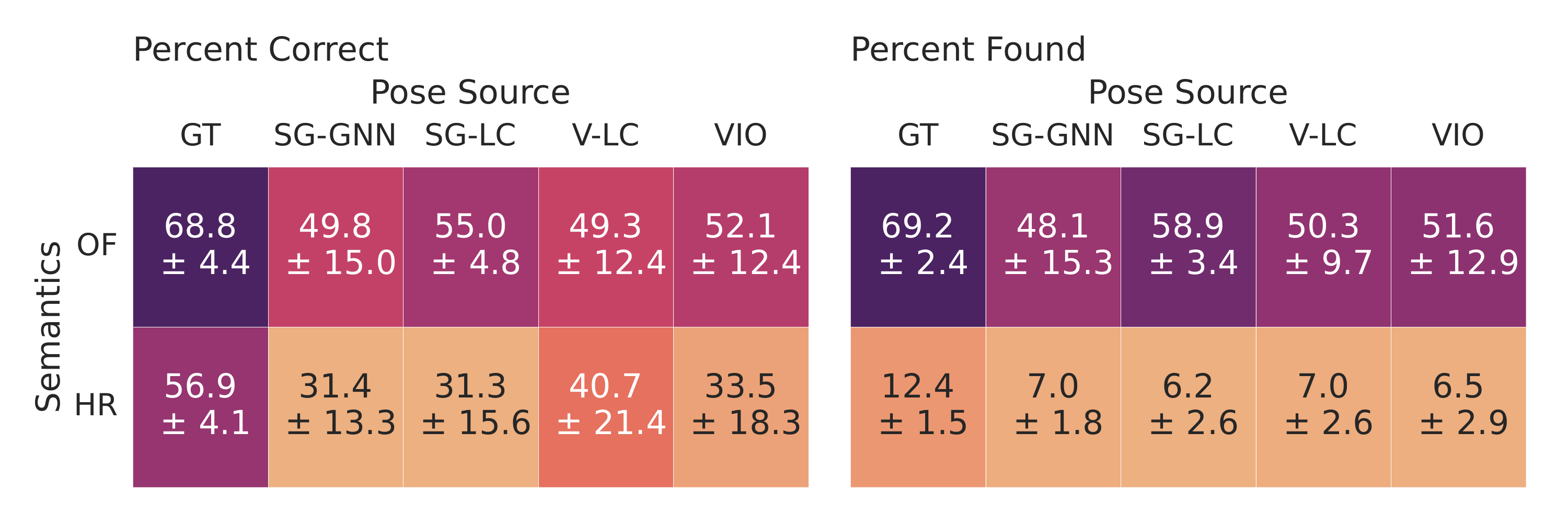}}
    }
    \subfloat[Place Metrics]{\label{fig:place_metrics}\centering
        \includegraphics[width=0.49\textwidth,trim=0cm 0cm 0cm 2.5cm,clip]{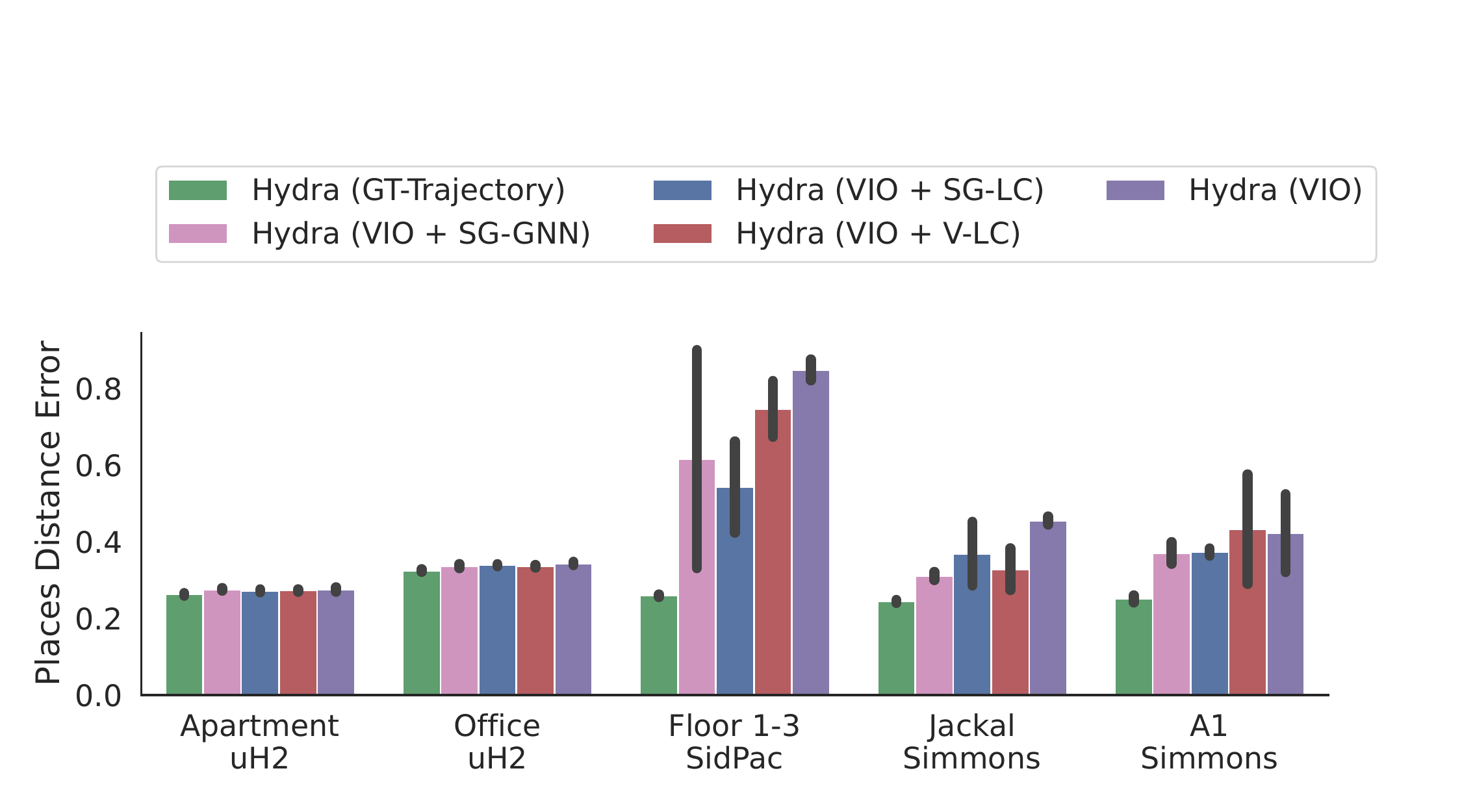}
    }
    \caption{(a-e) Object accuracy metrics for all scenes across different 2D semantic segmentation sources and different LCD and pose source configurations.
The metrics are averaged over 5 trials for each pair of configurations (\ie{} there were 5 trials for every combination of 2D semantics and a pose source).
For semantic segmentation sources, ``OF'' is Oneformer, ``HR'' is HRNet and ``GT'' is the ground truth semantic segmentation provided by the simulator. Each cell reports the mean and standard deviation. Values are colored from highest (dark purple, best performance) to lowest (light orange, worst performance) by mean.
(f) Place layer accuracy for all scenes across different configurations for \name.
    Each bar shows the average distance error for a given scene and configuration over 5 trials; standard deviation across the 5 trials is shown as a black error bar.}\label{fig:object_metrics}
    \togglevspace{-5mm}{0mm}
\end{figure*}

\myParagraph{Accuracy Evaluation: Objects}
\Cref{fig:object_metrics} evaluates the object layer of \name{} across the five  configurations of \name{}, as well as across various 2D semantic segmentation sources:  ground-truth semantics (when available), OneFormer, and HRNet.
For this evaluation, we benchmark against the ground-truth objects from the underlying Unity simulation for uHumans2.
As the recording for the uHumans2 scenes only explores a portion of the scene, we only compare against ground-truth objects that have a high enough number of mesh vertices inside their bounding boxes (40 vertices).
For the real datasets, where we do not have ground-truth objects, we consider the object layer of the batch scene graph constructed from ground-truth poses and OneFormer-based 2D semantic segmentation using Kimera~\citep{Rosinol21ijrr-Kimera} as the ground-truth objects for the purposes of evaluation.
For the object layer, we report two metrics:
the percentage of objects in the ground-truth scene graph that have an estimated object with the correct semantic label within a specified radius (``\percFound'') and the percentage of objects in the estimated scene graph that have a ground-truth object with the correct semantic label  within a specified radius (``\percCorrect'').\footnote{As distance thresholds, we use \SI{0.5}{\meter} for the uHumans2 Apartment and Office scene, and \SI{1}{\meter} for SidPac, the Simmons Jackal scene, and the Simmons A1 scene.
    These thresholds were chosen to roughly correspond to the mean Absolute Trajectory Error (ATE) for each scene, in order to normalize the metrics between simulated and real scenes.
}

We note some important trends in \cref{fig:object_metrics}.
First, when using \igx, \name{} produces objects that are reasonably close to the ground-truth (80--100\% found and correct objects for the Office scene), or close to the objects produced by offline approaches (70--80\% found and correct objects for SidPac and Simmons).
This demonstrates that ---given the trajectory--- the real-time scene graph from \name{} is comparable to the batch and offline approaches at the state of the art.
Second, \ivl{}, \ivd{}, and \ivn{} maintain reasonable levels of accuracy for the objects and attain comparable performance in small to medium-sized scenes (\ie{} the uHumans2 Apartment, Office, and Simmons A1) for the same choice of
semantic segmentation method.
In these scenes, the drift is small and the loop closure strategy does not radically impact performance (differences are within standard deviation).
However, in larger scenes (\eg{} SidPac) scene graph loop closures are more important and both \ivn{} and \ivd{} largely outperform \ivl{} and \ivx{} in terms of object accuracy.

Finally, as expected, the quality of the semantic segmentation impacts the accuracy; this is much more pronounced for ``\percFound{}'' than ``\percCorrect{}''.
Note that the two metrics are somewhat coupled; a lower ``\percFound{}'' can lead to a higher ``\percCorrect{}''; this is one reason for the inversion in performance between OneFormer and HRNet for the uHumans2 scenes.
Additionally, note that OneFormer identifies objects that exist in the ground-truth scene that are not exported by the simulator (this is another factor responsible for the lower ``\percCorrect{}'' of OneFormer).
We also note a slight correlation between reduced semantic segmentation performance and reduced accuracy for \ivn{} and \ivd{}, such as for the Simmons Jackal scene.
The low number of objects identified by HRNet in this scene contributes to fewer scene-graph based loop closures, and as such, worse performance.

\myParagraph{Accuracy Evaluation: Places}
\Cref{fig:place_metrics} evaluates the place layer by comparing the five configurations for \name.
We use either the ground-truth semantics (when available) or OneFormer (when ground-truth semantics is not available), as the places are not influenced by the semantic segmentation quality.
For the places evaluation, we construct a ``ground-truth'' \GVD{} from a 3D reconstruction of the entire scene using the ground-truth trajectory of the scene.
Using this, we measure the mean distance of an estimated place node to the nearest voxel in the ground-truth GVD, which we call ``\positionError''.
\Cref{fig:place_metrics} reports this metric, along with standard deviation across 5 trials shown as black error bars.

We note some important trends in \cref{fig:place_metrics}.
Similar to the case of objects, the place positions errors are almost identical for all configurations of \name{} for the uHumans2 scenes.
These scenes have very low drift, and are relatively small, so it is expected that a higher quality trajectory makes less of an impact.
For the larger scenes, we see a range of results.
However, we see for SidPac, \ivn{} performs worse than \ivd{}, though both outperform \ivl{} and \ivx{}.
Interestingly, we note that \ivd{} performs slightly worse than \ivl{} for the Simmons Jackal scene, while \ivn{} does the same as \ivl{} (but with lower standard deviation, as indicated by the black confidence bars in the plot).
Note that the Simmons Jackal scene has multiple uniform rooms with very similar objects and layouts.
In general, \ivx{} is outperformed by methods that incorporate loop closures, and the ability to correct for loop closures is important to maintaining an accurate 3D scene graph.

\myParagraph{Neural Tree Ablation 1: Node Classification}
We first replicate the semi-supervised node classification experiment described in~\cite{Talak21neurips-neuralTree}
and compare  different message passing architectures on the original graphs (\ie{} standard GNNs) and on the \Htree{} (\ie{} the proposed neural tree).
To construct the \Htree{} graphs, we apply the proposed hierarchical tree decomposition algorithm (\cref{algo:td-hierarchical}), which concatenates the tree decomposition of each layer.
The results are shown in \cref{tab:mp_acc_ablation}.
With the proposed tree decomposition approach, the neural tree achieves an advantage between 1.63\% to 11.72\% over standard GNN models, depending on the type of message passing architecture.
In comparison to the results in~\cite{Talak21neurips-neuralTree}, the results in \cref{tab:mp_acc_ablation} are generated using a later version of the PyTorch Geometric library, which supports heterogeneous GNN operators.
Also note that the tree decomposition used in \cref{tab:mp_acc_ablation} differs slightly from the tree decomposition algorithm used in~\cite{Talak21neurips-neuralTree}.
We provide additional results in \cref{sec:stanford3d_extra} that examine the impact of these changes.

\setlength{\tabcolsep}{5pt}
\begin{table}[t!]
    \toggleformat{}{\footnotesize}
    \centering
    \begin{tabular}{l cc}
        \toprule
        Message Passing & Original & H-tree \\
        \midrule
        GCN       & $42.91 \pm  2.01\%$ & $\mathbf{54.63 \pm 2.19\%}$ \\
        GraphSAGE & $56.97 \pm  2.02\%$ & $\mathbf{58.60 \pm 2.13\%}$ \\
        GAT       & $45.06 \pm  2.32\%$ & $\mathbf{53.71 \pm 2.10\%}$ \\
        GIN       & $48.03 \pm  2.21\%$ & $\mathbf{55.00 \pm 2.68\%}$ \\
        \bottomrule
    \end{tabular}
    \caption{Stanford3d: Node classification accuracy for different message passing architectures. Best results in {\bf bold}.}\label{tab:mp_acc_ablation}
    \togglevspace{0mm}{0mm}
\end{table}

\setlength{\tabcolsep}{\toggleformat{5pt}{4pt}}
\begin{table}[t!]
    \toggleformat{}{\footnotesize}
    \centering
    \begin{tabular}{ll cc}
        \toprule
        \multicolumn{2}{c}{Graph Types} & Original & H-tree \\
        \midrule
        \multirow{2}{*}{Homogeneous} & absolute pos. & $45.06 \pm	2.32\%$ & $\mathbf{53.71 \pm 2.10 \%}$ \\
                                     & relative pos. & $46.05 \pm 1.98 \%$ & $\mathbf{61.16 \pm 2.03 \%}$ \\
        \midrule
        \multirow{2}{*}{Heterogeneous} & absolute pos. & $\mathbf{46.56 \pm 2.42 \%}$ & $45.30 \pm 2.57 \%$ \\
                                       & relative pos. & $45.79 \pm 2.04 \%$ & $\mathbf{48.16 \pm 2.21 \%}$\\
        \bottomrule
    \end{tabular}
    \caption{Stanford3d: Node classification accuracy for different graph types and position encodings. Best results in {\bf bold}.}\label{tab:graph_acc_ablation}
    \togglevspace{0mm}{0mm}
\end{table}

\setlength{\tabcolsep}{\toggleformat{5pt}{3.5pt}}
\begin{table}[t!]
    \toggleformat{}{\footnotesize}
    \centering
    \begin{tabular}{ll cc}
        \toprule
        \multicolumn{2}{c}{Graph Types} & Original & H-Tree \\
        \midrule
        \multirow{2}{*}{w/o word2vec} & w/o room edges    & $\mathbf{41.28 \pm 0.48 \%}$ & $38.63 \pm 0.25 \%$ \\
                                      & with room edges   & $41.20 \pm 1.08 \%$ & $\mathbf{43.27 \pm 0.59 \%}$ \\
        \midrule
        \multirow{2}{*}{with word2vec} & w/o room edges  & $\mathbf{56.74 \pm 0.85 \%}$ & $55.84 \pm 0.37 \%$ \\
                                       & with room edges & $56.02 \pm 0.84 \%$ & $\mathbf{57.67 \pm 0.57 \%}$ \\
        \bottomrule
    \end{tabular}
    \caption{MP3D: Room classification accuracy for different graph types and node features. Best results in {\bf bold}.}\label{tab:room_acc_ablation}
    \togglevspace{0mm}{0mm}
\end{table}

We note that the absolute position of a node centroid is not invariant to translation, and that invariance is important  to generalize to new scenes regardless of the choice of coordinate frames.
We therefore examine using the relative positions of node centroids as edge features, instead of including them directly as node features.
For the \Htree, which contains clique nodes that are comprised of multiple objects or rooms, we use the mean room centroid as the centroid of the clique when computing relative positions.
In addition, we also investigate the impact of using heterogeneous graphs (which can accommodate different node and edge types, as the ones arising in the 3D scene graph) against standard homogeneous graphs.
For this comparison, we only use the GAT message passing function since it is the only one in \cref{tab:mp_acc_ablation} that can both handle heterogeneous message passing and incorporate edge features.

We present the results of this ablation study
in \cref{tab:graph_acc_ablation}.
Comparing the two position encodings, the neural tree models achieve significantly higher accuracy when using relative positions as edge features: 7.45\% on homogeneous graphs and 2.86\% on heterogeneous graphs. Choosing features that are translation invariant (\ie{} relative positions) has a clear advantage when using neural tree models.
The standard GNNs show near-identical performance between the two position encodings.
The heterogeneous graph structure has no significant impact on standard GNNs,
but it degrades performance of the neural trees.
The exact mechanism behind this decrease in performance in unclear; we posit that the heterogeneous neural tree variants have more parameters than the other architectures, and that the Standford3D dataset may not have enough training data for these architectures.
We do not observe a significant performance decrease between the homogeneous and heterogeneous neural tree variants for the MP3D dataset.

\myParagraph{Neural Tree Ablation 2: Room Classification}
We compare the neural tree against standard GNNs on a room classification task using the MP3D dataset,  predicting room semantic labels for the object-room graphs extracted from \name{}.
Compared to Stanford3d, this dataset contains both semantic labels of the object nodes and room layer connectivity from \name{}, in addition to the geometric features (\ie{} centroid position and bounding box size) of each node.
Therefore, we study the effect of these two additional pieces of information.
We use pre-trained word2vec vectors to represent object semantic labels and concatenate them with the geometric feature vectors.
As described previously, we filter out partially explored rooms, using a threshold of 60\% IoU.
As before, we use GAT as the message passing function when training, as all graphs are heterogeneous and have edge features.
The results are shown in \cref{tab:room_acc_ablation}.
Both approaches show a significant performance improvement using the semantic labels of the objects, ranging from 14\% to 17\%.
The neural tree also shows substantial improvement when incorporating room layer edges, while the standard GNNs do not.
The best-performing neural tree model is obtained when using semantic labels for the objects and accounting for room connectivity; this is the model we use in the rest of this paper.

\myParagraph{Accuracy Evaluation: Rooms}
We evaluate the accuracy of the room segmentation in Hydra, by first evaluating the quality of the geometric room clustering described in~\cref{sec:rooms_clustering} and then testing the room classification from~\cref{sec:rooms_classification} for different choices of 2D segmentation network.

\Cref{fig:room_metrics} evaluates the room clustering performance, using the precision and recall metrics defined in~\cite{Bormann16icra-roomSegmentationSurvey} (here we compute precision and recall over 3D voxels instead of 2D pixels).
More formally, these metrics are:
\begin{equation}\label{eq:room_pr}
\begin{split}
\text{\emph{Precision}} &= \frac{1}{|R_e|}\sum_{r_e \in R_e} \max_{r_g \in R_g} \frac{|r_g \cap r_e|}{|r_e|} \\
\text{\emph{Recall}} &= \frac{1}{|R_g|}\sum_{r_g \in R_g} \max_{r_e \in R_e} \frac{|r_e \cap r_g|}{|r_g|}
\end{split}
\end{equation}
where $R_e$ is the set of estimated rooms, $R_g$ is the set of ground-truth rooms, and $|\cdot|$ returns the cardinality of a set;
here, each room $r_e$ (or $r_g$) is defined as a set of free-space voxels.
We hand-label the {ground-truth} rooms $R_g$ from the ground-truth reconstruction of the environment.
In particular, we manually define sets of bounding boxes for each room and then identify the set of free-space voxels for $r_g$ as all free-space voxels that fall within any of the defined bounding boxes for $r_g$.
For the estimated rooms $R_e$, we derive the set of free-space voxels $r_e$ from the places comprising each estimated room.
In~eq.~\eqref{eq:room_pr}, \emph{Precision} then measures the maximum overlap in voxels with a ground-truth room for every estimated room, and \emph{Recall} measures the maximum overlap in voxels with an estimated room for every ground-truth room.
Intuitively, low precision corresponds to under-segmentation, \ie{} fewer and larger room estimates, and low recall corresponds to over-segmentation, \ie{} more and smaller room estimates.
For benchmarking purposes, we also include the approach in~\cite{Rosinol21ijrr-Kimera} (\emph{Kimera}) as a baseline for evaluation.

\Cref{fig:room_metrics} shows that \name{} generally outperforms Kimera~\citep{Rosinol21ijrr-Kimera} when given the ground-truth trajectory, and that \name{} is slightly more robust to multi-floor environments.
This is expected, as Kimera performs room segmentation by taking a 2D slice of the voxel-based map of the environment, which does not generalize to multi-floor scenes.
For both the split-level Apartment scene and the multi-floor SidPac scene, we achieve higher precision as compared to Kimera.
These differences stem from the difficulty of setting an appropriate height to attempt to segment rooms at for Kimera (this is the height at which Kimera takes a 2D slice of the \ESDF).
Finally, it is worth noting that our room segmentation approach is able to mostly maintain the same levels of precision and recall for \ivn{}, \ivd{} and \ivl{}. In some cases, our approach outperforms Kimera, despite the drift inherent in \ivn{}, \ivd{} and \ivl{} (Kimera uses ground-truth poses).

Finally, we report room classification accuracy for uHumans2, SidPac, and Simmons across semantic segmentation sources in \cref{tab:room_acc_hydra} using the proposed neural tree. We manually assign a ground-truth room category to each hand labeled ground-truth room, and compute the percentage of estimated room labels that match their corresponding ground-truth category.
Label correspondences are computed by picking the ground-truth room that contains the most place nodes that comprise each estimated room; an estimated room is assigned a correspondence to \semantic{unknown} if too few place nodes (less than 10) fall inside the best ground-truth room.

We note two interesting trends.
First, the richness of the object label space impacts the predicted room label accuracy, as shown by the relatively poor performance of the GT column as compared to either HRNet or OneFormer for the uHumans2 scenes ($27$\% versus $40$\% or $45\%$): the GT semantics available in the simulator has a smaller number of semantic classes for the objects, hindering performance. This is consistent with the observations in~\cite{Chen22arxiv-LLM2}.
Second, scenes such as the uHumans2 office or Simmons that are out of distribution ---compared to the MP3D dataset we use for training--- perform poorly.
An example of a resulting 3D scene graph for SidPac with room category labels is shown in \cref{fig:intro}.

\begin{figure}[t]
    \centering
    \includegraphics[width=0.95\columnwidth,trim=0 7mm 0 0,clip]{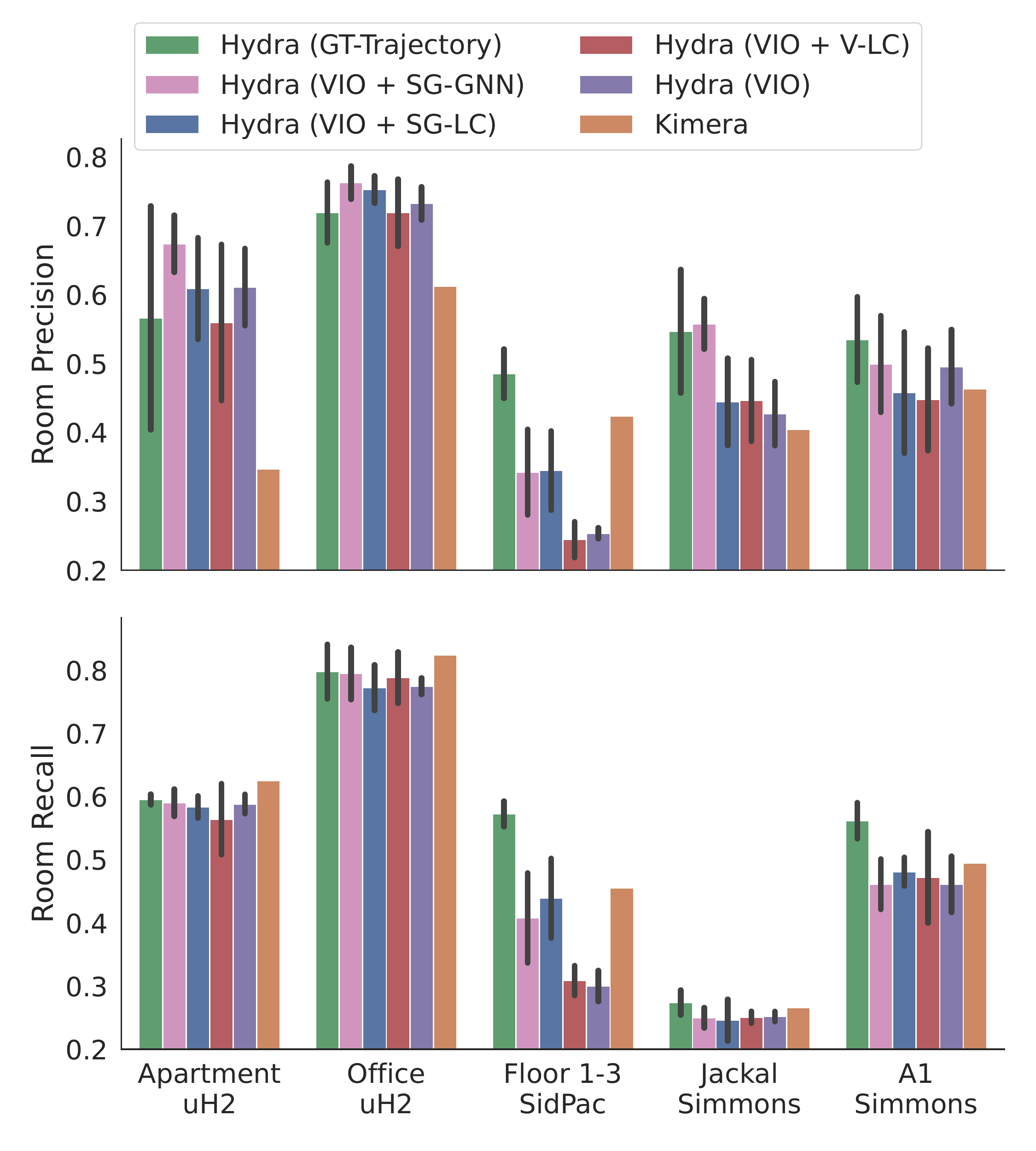}
    \caption{Room metrics for all scenes across different LCD pose source configurations. Each bar shows the average precision or recall over 5 trials; standard deviation across the 5 trials is shown as a black error bar}\label{fig:room_metrics}
    \togglevspace{-1mm}{0mm}
\end{figure}

\setlength{\tabcolsep}{\toggleformat{5pt}{4pt}}
\begin{table}[t!]
    \toggleformat{}{\footnotesize}
    \centering
    \begin{tabular}{l ccc}
        \toprule
         & GT & HRNet & OneFormer \\
        \midrule
       uHumans2 Apartment & $ 26.7 \pm   3.7$\% & $ 38.0 \pm  21.7$\% & $ 45.0 \pm  11.2$\% \\
         uHumans2 Office & $ 27.6 \pm   7.5$\% & $ 28.4 \pm   6.9$\% & $ 27.0 \pm  10.1$\% \\
       SidPac Floor 1--3 &                 N/A & $ 46.2 \pm  11.9$\% & $ 47.7 \pm  12.5$\% \\
          Simmons Jackal &                 N/A & $ 32.3 \pm  16.6$\% & $ 15.3 \pm   6.5$\% \\
              Simmons A1 &                 N/A & $ 29.0 \pm  24.4$\% & $ 38.0 \pm  28.0$\% \\
        \bottomrule
    \end{tabular}
    \caption{Room classification accuracy for \name{}.}\label{tab:room_acc_hydra}
    \togglevspace{0mm}{0mm}
\end{table}

\begin{figure}
    \centering
    \includegraphics[trim=0mm 0 0mm 0,clip,width=\columnwidth]{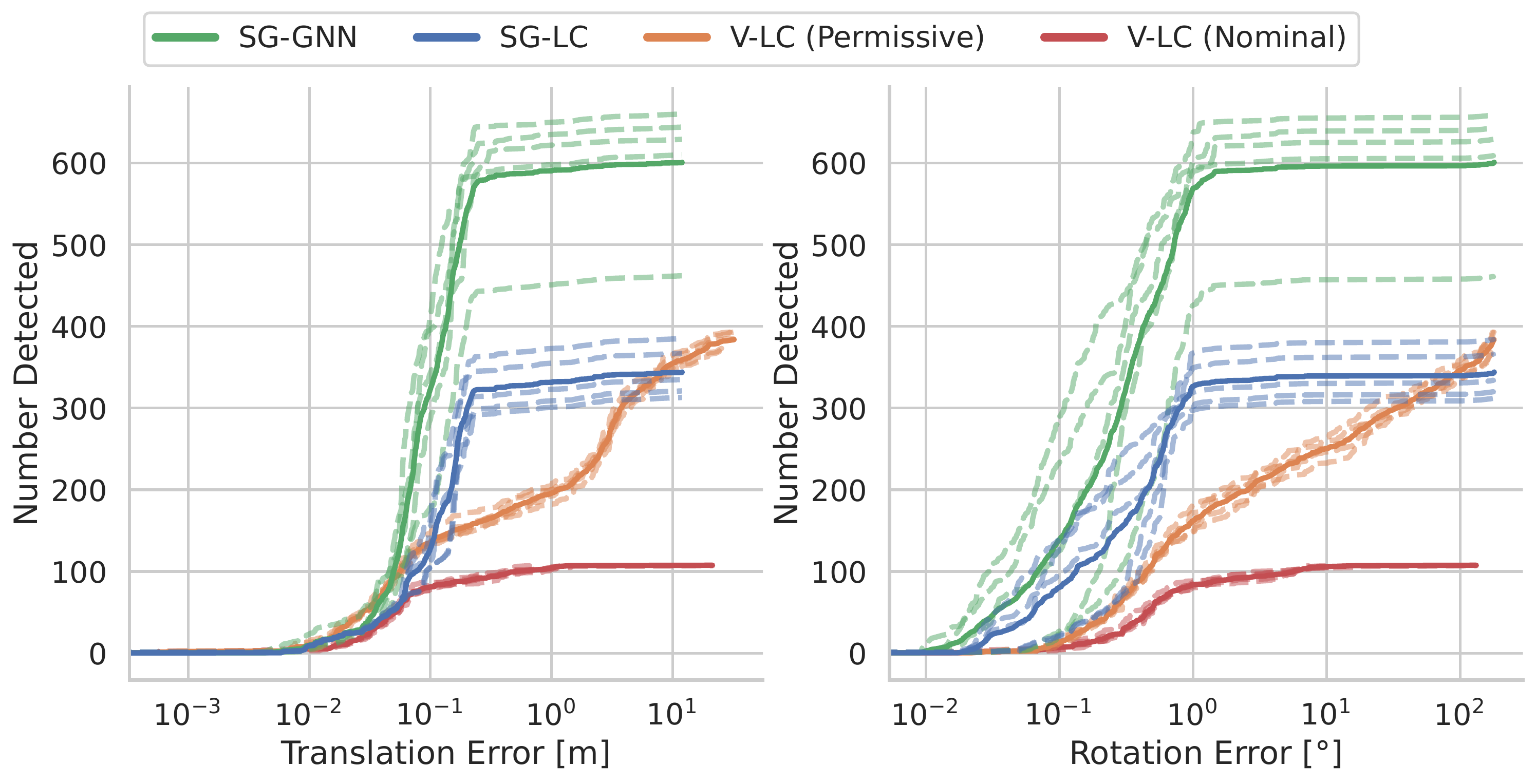}
    \caption{Number of detected loop closures versus error of the estimated loop closure pose for four different loop closure detection configurations.
             Five individual trials and a trend-line are shown for each configuration.}\label{fig:lcd_experiment}
     \togglevspace{0mm}{0mm}
\end{figure}

\myParagraph{Loop Closure Ablation}
Finally, we take a closer look at the quality of the loop closures candidates proposed by our hierarchical loop closure detection approach, and compare it against traditional vision-based approaches on the Office scene.
In particular, we compare our approach against a vision-based loop closure detection that uses DBoW2 for place recognition and ORB feature matching, as described in~\cite{Rosinol21ijrr-Kimera}.

\Cref{fig:lcd_experiment} shows the number of detected loop closures against the error of the registered solution (\ie{} the relative pose between query and match computed by the geometric verification) for four different loop closure configurations:
(i) ``SG-LC'': the proposed handcrafted scene graph loop closure detection,
(ii) ``SG-GNN'': the proposed learned scene graph loop closure detection,
(iii) ``V-LC (Nominal)'': a traditional vision-based loop closure detection with nominal parameters,
and (iv) ``V-LC (Permissive)'': a vision-based loop closure detection with more permissive parameters (\ie{} a decreased score threshold and less restrictive geometric verification settings).
We report key parameters used in this evaluation in \cref{app:lcd}.
As expected, making the vision-based detection parameters more permissive leads to more but lower-quality loop closures.
On the other hand, the scene graph loop closure approach produces approximately twice as many loop closures within \SI{10}{\centi\meter} of translation error and \SI{1}{\degree} of rotation error as the permissive vision-based approach.
The proposed approach produces quantitatively and quantitatively better loop closures compared to both baselines.
Notably, SG-GNN also outperforms both vision baselines and the SG-LC\@.
We present further discussion and a breakdown of additional statistics of the proposed loop closures of each method in \cref{app:lcd-stats}.

\setlength{\tabcolsep}{\toggleformat{5pt}{4pt}}
\begin{table}[t!]
    \toggleformat{}{\footnotesize}
    \centering
    \begin{tabular}{ccl cc}
        \toprule
        & & & \multicolumn{2}{c}{p@10} \\
        \cmidrule(l{2pt}r{2pt}){4-5}
        & & & IoU $ = 0.4$ & IoU = $0.6$ \\
        \midrule
        \multirow{6}{*}{Objects}
            & \multirow{3}{*}{MP3D}& Handcrafted & $\mathbf{70.9}$ & $\mathbf{58.6}$ \\
& & Learned+OneHot & $65.5$ & $56.0$ \\
                                    & & Learned+Word2Vec & $60.3$ & $50.9$ \\
            \cmidrule(l{2pt}r{2pt}){2-5}
            & \multirow{3}{*}{uH2 Office}& Handcrafted & $46.5$ & $\mathbf{31.5}$ \\
& & Learned+OneHot & $\mathbf{47.4}$ & $31.4$ \\
                                    & & Learned+Word2Vec & $37.6$ & $26.5$ \\
        \midrule
        \multirow{4}{*}{Places}
            & \multirow{2}{*}{MP3D}& Handcrafted & $\mathbf{76.4}$ & $\mathbf{58.0}$ \\
& & Learned & $59.5$ & $47.4$ \\
            \cmidrule(l{2pt}r{2pt}){2-5}
            & \multirow{2}{*}{uH2 Office}& Handcrafted & $\mathbf{68.6}$ & $\mathbf{41.4}$ \\
& & Learned & $54.1$ & $34.9$ \\
        \bottomrule
    \end{tabular}
    \caption{p@k results for loop closure detection. Best results in {\bf bold}.}\label{tab:lcd_precision}
    \togglevspace{-5mm}{0mm}
\end{table}

To further examine the relative performance of SG-LC and SG-GNN, we evaluate the top-k precision of both approaches on both the test split of the MP3D dataset used to train the descriptors, and the uHumans2 office scene.
For this metric, we compute the percent of the $k$-highest-scored\footnote{We map distances between descriptors to a $[0, 1]$ range, where $1$ corresponds to a match, and $0$ corresponds to a non-match
} descriptors for each query descriptor that are valid matches; two descriptors are determined to match if the bounding box of their corresponding \subgraphs have a IoU above a specified threshold.
We include two configurations of SG-GNN for object \subgraphs in this analysis; one that uses a one-hot encoding and one that uses word2vec embeddings to represent the semantic labels of the object nodes.
We report this metric for both SG-LC (handcrafted) and SG-GNN (learned) in \cref{tab:lcd_precision} for two IoU thresholds: 0.4 and 0.6.

We note some interesting trends in \cref{tab:lcd_precision}.
First, the one-hot encoding outperforms the word2vec encoding for both datasets, and appears to transfer better between datasets (\ie{} showing 5\% better performance for the MP3D dataset, but 10\% better performance for uHumans2).
Additionally, the original handcrafted descriptors maintain good performance compared to the learned descriptors, and only the learned object descriptors appear competitive to the handcrafted descriptors in terms of precision.
We believe that the high performance of the handcrafted descriptors is due to the semantic and geometric diversity among the scenes of the MP3D dataset.
Previous experiments (using the \ivn{} configuration of \name{}) imply that the learned descriptors offer improved performance in environments with a more uniform object distribution (\ie{} the Simmons Jackal scene).

\subsubsection{Onboard Operation on a Robot}\label{sec:robot_operation}

\begin{figure}
    \centering
    \includegraphics[trim={0cm, 1cm, 0cm, 2cm}, clip, width=\columnwidth]{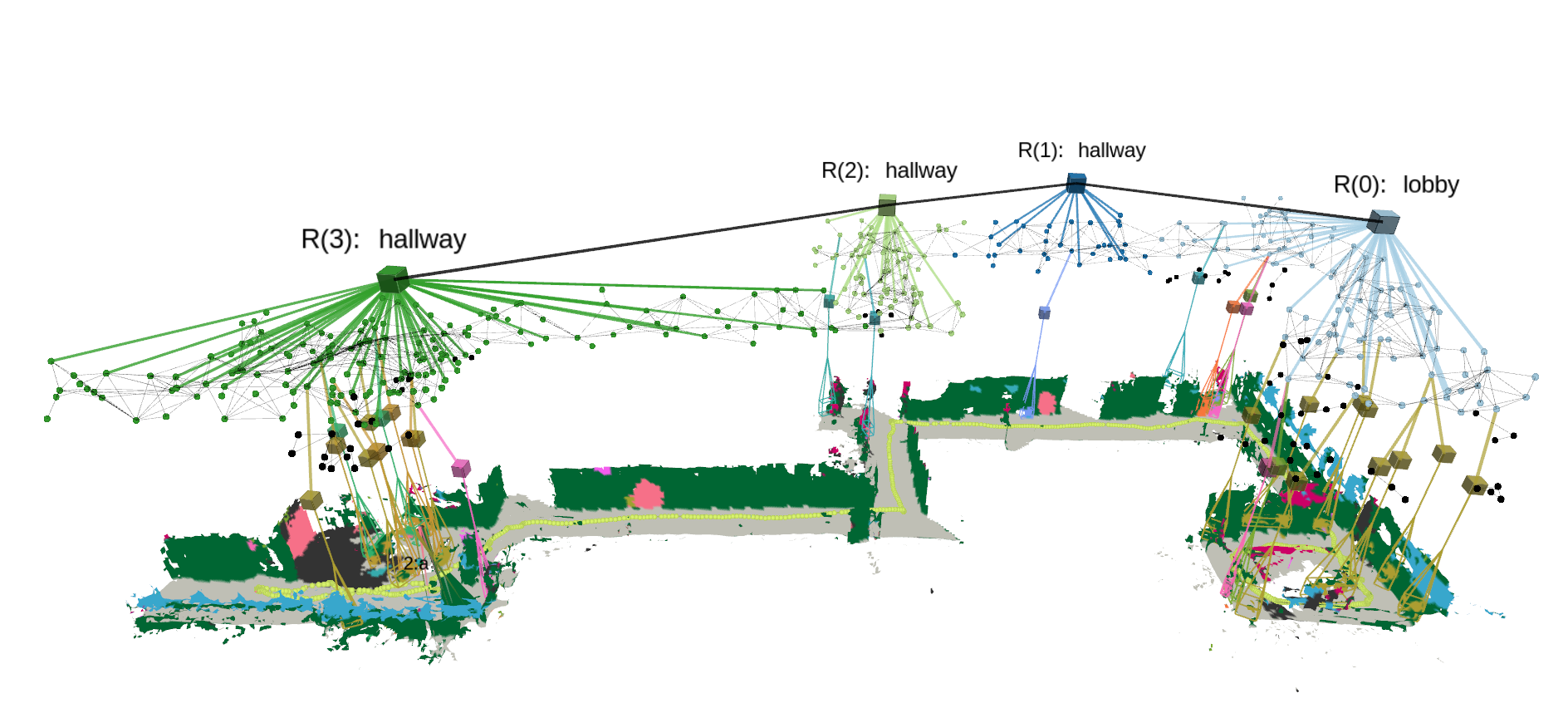}
    \caption{Intermediate 3D scene graph created by \name{} onboard the A1 quadruped while exploring building 31 on the MIT campus.
    Estimated room labels are shown.
    \toggleformat{}{The entire mapping sequence is shown in the video supplement.}}\label{fig:a1_qualitative}
    \togglevspace{-3mm}{0mm}
\end{figure}

This section shows that \name{} is capable of running in real-time and is deployable to a robot.
We show this by performing a qualitative experiment, running \name{} online on the Unitree A1.
We run \name{} and the chosen 2D semantic segmentation network (MobilenetV2~\citep{Sandler18cvpr-MobileNetV2}) on the Nvidia Xavier NX mounted on the back of the A1.
Additionally, we run our room classification approach in the loop with \name{} on the CPU of the same Nvidia Xavier.
In this test, to circumvent the computational cost of running Kimera-VIO, we use an Intel RealSense T265 to provide odometry estimates to \name{}.
As a result, we use our proposed scene-graph loop closure detection method without the appearance-based descriptors (which would rely on Kimera-VIO for computation of the visual descriptors and vision-based geometric verification).
To maintain real-time performance, we configure the frontend of \name{} to run every fifth keyframe (instead of every keyframe), and limit the reconstruction range to \SI{3.5}{\meter} (instead of the nominal \SI{4.5}{\meter}).
Note that this results in a nominal update rate of \SI{1}{\hertz} for \name{}, though we still perform dense 3D metric-semantic reconstruction at keyframe rate (\SI{5}{\hertz}). A breakdown of the runtime of \name{}'s modules during this experiment is available in \cref{sec:runtime}.

For this experiment, we partially explore a floor of building 31 on the MIT campus consisting of a group of cubicles, a conference room, two impromptu lounge areas, all of which are connected by a hallway.
An intermediate scene graph produced by \name{} while running the experiment is shown in \cref{fig:a1_qualitative}, and \toggleformat{video is available of the experiment.\footnote{\videoURL}}{the experiment itself is shown in the video supplement.}
\name{} estimates four rooms for the scene graph; of these four rooms, room 3 is centered over one of the two lounges, and room 0 covers both the conference room (located in the lower right corner of \cref{fig:a1_qualitative}) and a portion of the hallway.
Qualitatively, \name{} over-segments the scene, but the produced rooms and labels are still somewhat consistent with the underlying room structure and labels.
In this instance, only room 0 has an incorrectly estimated label; however the room categories that \name{} estimates over the course of the experiment are not as consistent.
This is likely due to the poor quality of the 2D semantics from MobilenetV2, and general lack of useful object categories for inferring room labels.

\begin{figure}
    \centering
    \includegraphics[trim={0mm 9mm 0mm 0}, clip, width=0.95\columnwidth]{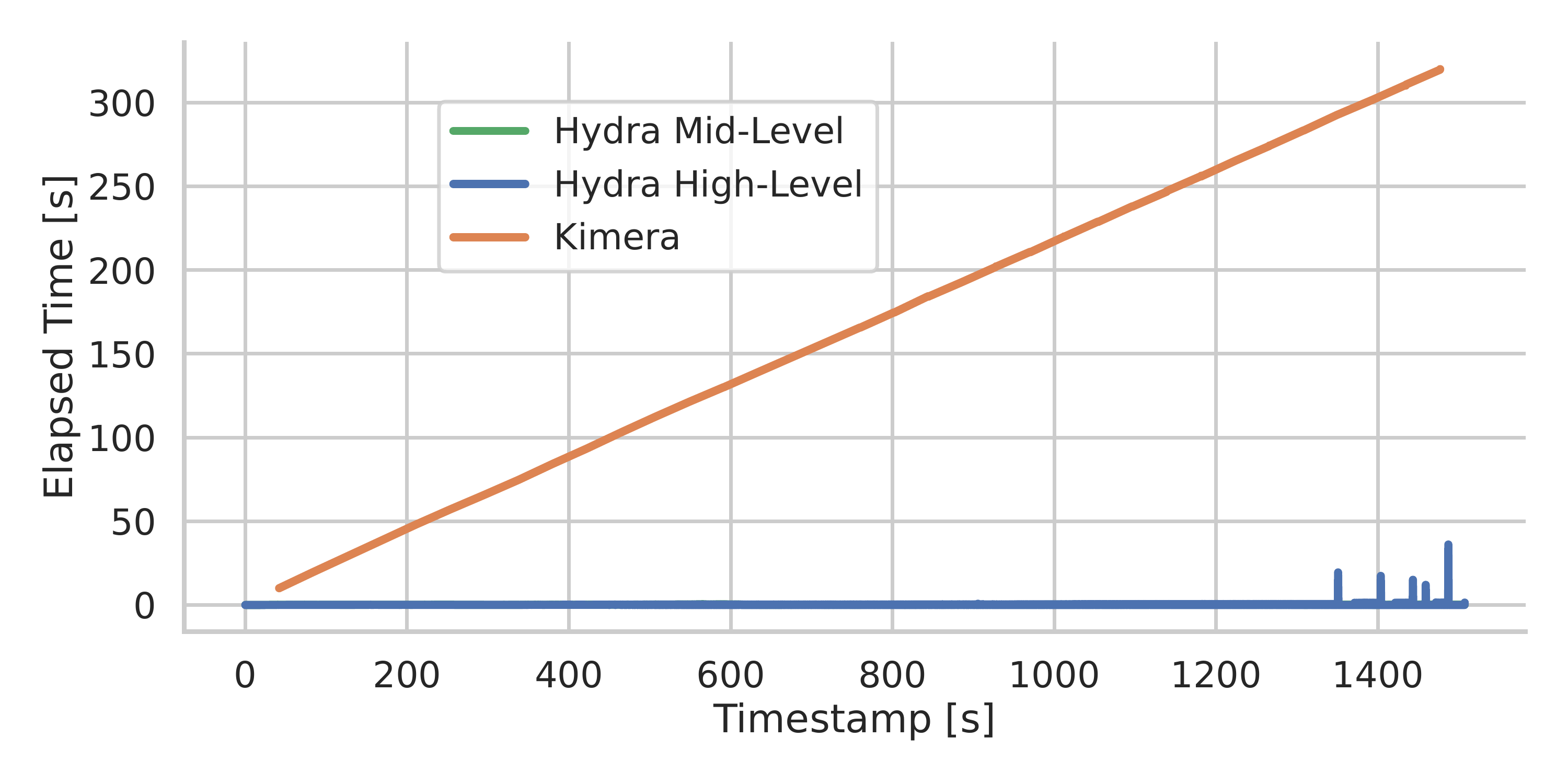}
    \caption{Runtime required for scene graph construction vs.\ timestamp for the SidPac Floor 1--3 dataset for a batch approach (Kimera) and for the proposed incremental approach (\name).
             For timing of the low-level processes in \name, we refer the reader to the analysis in~\cite{Rosinol21ijrr-Kimera}, as we also rely on Kimera-VIO\@.}\label{fig:timing}
     \togglevspace{0mm}{0mm}
\end{figure}

\setlength{\tabcolsep}{5pt}
\begin{table}[t!]
    \toggleformat{}{\footnotesize}
    \centering
    \begin{tabular}{l c c c}
        \toprule
        & Objects [ms] & Places [ms] & Rooms [ms] \\
        \midrule
        uH2 Apartment     & $52.8 \pm 14.9$ & $11.3 \pm 6.1$ & $1.8 \pm 0.9$ \\
        uH2 Office        & $34.0 \pm 26.1$ & $12.5 \pm 6.9$ & $14.6 \pm 11.9$ \\
        SidPac Floor 1--3 & $57.4 \pm 55.7$ & $15.7 \pm 9.2$ & $5.9 \pm 9.0$ \\
        Simmons Jackal    & $63.9 \pm 45.8$ & $19.6 \pm 13.1$ & $6.6 \pm 9.0$ \\
        Simmons A1        & $71.1 \pm 50.5$ & $18.6 \pm 11.7$ & $1.4 \pm 1.0$ \\
        \bottomrule
    \end{tabular}
    \caption{\name: timing breakdown.}\label{tab:component_timing}
    \togglevspace{-4mm}{0mm}
\end{table}

\subsubsection{Runtime Evaluation}\label{sec:runtime}

\Cref{fig:timing} reports the runtime of \name{} versus the batch approach in~\cite{Rosinol21ijrr-Kimera}.
This plot shows that the runtime of the batch approach increases over time and takes more than five minutes to generate the entire scene graph for moderate scene sizes;
as we mentioned, most processes in the batch approach~\citep{Rosinol21ijrr-Kimera} entail processing the entire \ESDF{} (\eg{} place extraction and room detection), inducing a linear increase in the runtime as the \ESDF{} grows.
On the other hand, our scene graph frontend (\emph{\name{} Mid-Level} in \cref{fig:timing}) has a fixed computation cost.
In \cref{fig:timing}, a slight upward trend is observable for \emph{\name{} High-Level}, driven by room detection and scene graph optimization computation costs, though remaining much lower than batch processing.
Noticeable spikes in the runtime for \emph{\name{} High-Level} (\eg{} at 1400 seconds) correspond to the execution of the 3D scene graph optimization when new loop closures are added.

When reconstructing the scene shown in \cref{fig:timing} (SidPac Floor1--3), \name{} uses a maximum of \mb{7.2} for the \TSDF{}, \mb{19.1} for the storage of the semantic labels, and \mb{47.8} for the storage of the dense \GVD{} inside the active window, for a total of \mb{74.1}.
Kimera  instead uses \mb{79.2} for the \TSDF{}, \mb{211} for the storage of semantic labels, and \mb{132} for the storage of the \ESDF{} when reconstructing the entire environment, for a total of \mb{422} of memory.
Note that the memory storage requirement of \name{} for the active window is a fifth of the memory required for Kimera, and that the memory usage of Kimera grows with the size of the scene.

\Cref{tab:component_timing} reports timing breakdown for the incremental creation of each layer across scenes for a single trial.
The object layer runtime is determined by the number of mesh vertices in the active window and the number of possible object semantic classes.
The room layer runtime is determined by the number of places (a combination of how complicated and large the scene is);
this is why the Office has the largest computation cost for the rooms despite being smaller than the SidPac scenes.
Note that our target keyframe rate implies a limit of \SI{200}{\milli\second} for any of these processes. While the mean and standard deviation of all layers are well below this rate, we do see that the real-time rate is exceeded in some cases for extracting the objects.
This specifically occurs for larger scenes (\ie{} SidPac and Simmons Jackal), due to the presence of many objects.
At the same time, we remark that \name{} is architected in such a way that temporarily exceeding the real-time threshold does not preclude online performance and real-time operation is restored shortly after these delays occur.

While the timing results in \cref{tab:component_timing} are obtained with a relatively powerful workstation, here we restate that \name{} can run in real-time on embedded computers commonly used in robotics applications.
Towards this goal, we report the timing statistics from the online experiment running \name{} onboard the Unitree A1 as shown in \cref{fig:a1_qualitative}.
\name{} processes the objects in \SI[parse-numbers=false]{83.9 \pm{} 65}{\milli\second}, places in \SI[parse-numbers=false]{114.8 \pm{} 103}{\milli\second}, and rooms in \SI[parse-numbers=false]{34.7 \pm{} 37.6}{\milli\second}.
While these numbers imply that \name{} can run faster than the \SI{1}{\hertz} target rate on the Xavier, note that the \SI{1}{\hertz} limit is chosen to not fully max out the computational resources of the Xavier.
Additionally, there are other modules of \name{} and external processes that limit (sometimes significantly) the computational resources of the Xavier (\eg{} the reconstruction of the \TSDF{} and \GVD{}).
While there is still margin to optimize computation (see conclusions), these initial results stress the practicality and real-time capability of \name{} in building 3D scene graphs.

\section{Related Work}\label{sec:relatedWork}

We provide a broad literature review touching on abstractions and symbolic representations (\cref{sec:rw-abstractions}),
 metric-semantic and hierarchical map representations and algorithms to build them from sensor data (\cref{sec:rw-mapRepresentations}), and
 loop closure detection and optimization~(\cref{sec:rw-lcd}).

\subsection{Need for Abstractions and Symbolic Representations}\label{sec:rw-abstractions}

\myParagraph{State and Action Abstractions}
The need to abstract sensor data into higher-level representations has been studied in the context of planning and decision-making.
\citet{Konidaris19cobs-Necessityabstraction} points out the necessity of state and action abstraction for efficient task and motion planning problems.
\citet{Konidaris18jair-SkillsSymbols} extract task-specific state abstractions.
\citet{James20icml-PortableRepresentations} show how task-independent state abstractions can also be learned.
\citet{James22iclr-AutonomousLearning} show how to autonomously learn object-centric representations of a continuous and high-dimensional environment, and argue that such a representation enables efficient planning for long-horizon tasks.
\citet{Berg22icra-UsingLanguage} propose a hierarchical representation for planning in large outdoor environments. The hierarchy contains three levels: landmarks (\eg forests, buildings, streets), neighborhoods, and cities.
Several related works also discuss action abstractions, \eg how to group a sequence of actions into macro-actions  (usually referred to as \emph{options}).
\citet{Jinnai19icml-OptionPlanning}, for instance, provide a polynomial-time approximation algorithm for learning options for a Markov decision process.

\myParagraph{Ontologies and Knowledge Graphs}
A fundamental requirement for an embodied agent to build and communicate a meaningful representation of an environment is to use a common vocabulary describing concepts of interest\footnote{The term ``concept'' may be ambiguous without context as concepts may include a description of a task, a thought process, or simply the labels for a classifier. This ambiguity is discussed in more detail in~\cite{Smith04book-BeyondConcepts}.} and their relations. Such a vocabulary may be represented as an \emph{ontology} or \emph{the Kimera-VIO estimatesknowledge graph}.
The precise definition ofthe Kimera-VIO estimates an ontology varies between communities~\citep{Guarino09hoo-WhatIsAnOntology, Jepsen09itpm-WhatIsOntology}.
A well accepted definition was proposed by \citet{Gruber95ijhcs-DesignOfOntologies} as {\it an explicit specification of conceptualization} and later extended to require the conceptualization must be a shared view~\citep{Borst97diss-constructOntologies} and written as a formal language~\citep{Studer98dke-KnowledgeEngineering}.
In practice, the definition used is often not as strict.
For example, the definition provided by W3C as part of the Web Ontology Language (OWL)~\citep{Mcguinness04w3c-OWL} describes an ontology as {\it a set of terms in a vocabulary and their inter-relationships}.
Generally, an embodied agent is committed to a conceptualization regardless if the commitment is explicit (e.g., represented as knowledge graph) or implicit (e.g., represented as a set of labels for a classifier)~\citep{Genesereth12book-LogicalFoundations}.

The need to create a standard ontology was identified by~\cite{Schlenoff12iros-CORA}, which resulted in the emergence of several knowledge processing frameworks focused on robotics applications~\citep{Lemaignan10iros-ORO, Tenorth13ijrr-KnowRob, Beetz18icra-KnowRob2, Diab19sensors-PMK}.
A significant effort has been made in the creation of common-sense ontologies and knowledge graphs~\citep{Miller95acm-WordNet, Lenat95cacm-OpenCyc, Niles01fois-SUMO, Auer07sw-DBpedia, Suchanek08sw-YAGO, Bollacker08icmd-Freebase, Carlson10aaai-NELL, Vrandecic14-Wikidata, Speer17aaai-ConceptNet}.
In recent years, there has been a surge in applying these ontologies and knowledge graphs to problems such as 2D scene graph generation~\citep{Zareian20eccv-BridgingKnowledgeSGG, Guo21iccv-GeneralToSpecificSGG, Amodeo22access-OGSGG, Chen23wacv-UnbiasingSGGOntology}, image classification~\citep{Movshovitz15cvpr-OntoStoreClass, Porello15nci-IntegratingOntologies}, visual question answering~\citep{Aditya18cviu-SceneDescriptionGraphReasoning, Garderes20emnlp-ConceptBert, Marino21cvpr-KRISP, Zheng21pr-EmbeddingDesignVQA, Ding22cvpr-MuKEA}, task planning~\cite{Chen20icra-IncompleteLangInstr, Daruna21icra-OneShotTask, Tuli22jair-ToolTango}, and representation learning~\citep{Hao19kdd-RepLearnOnto, Kwak22ral-GraspKnowGraph}, to name a few.

\myParagraph{Scene Grammars and Compositional Models}
Compositionality, which is the ability of humans to perceive reality as a hierarchy of parts, has been considered fundamental to human cognition~\citep{Geman02qam-CompositionSystems, Lake17bbs-Buildingmachines}.
\citet{Geman02qam-CompositionSystems} propose a mathematical formulation of \emph{compositional models}, by recursively grouping or composing constituents and assigning probability factors to composition.
\citet{Zhu11jmiv-RecursiveCompositional} represent and infer visual patters as a recursive compositional model, which is nothing but a tree-structured graphical model where the leaf nodes represent the visual patters, and the higher-layers represent complex compositions.
\citet{Zhu06book-StochasticGrammar, Zhu21book-ComputerVision} discuss how to model objects, images, and scenes using such a hierarchical tree structure, called \emph{stochastic grammar}, and advocate it to be a general framework for visual representation.
Inspired by these insights, recent works such as~\cite{Izatt20icra-sceneGrammarSim, Qi18cvpr-HumanCentricIndoor, Chua18th-ProbabilisticScene} propose probabilistic generative models to capture hierarchical relationship between entities in a scene.

Recent works have tended to use such a hierarchical structure along with deep neural networks to provide better learning models.
\citet{Wang22pami-HierarchicalHuman} show that using a hierarchical, compositional model of the human body results in better human pose estimation. The model consists of body parts (\eg shoulder, elbow, arm) organized as a hierarchy (\eg shoulder, elbow are children of arm). A bottom-up/top-down inference strategy is proposed that is able to correct ambiguities in perceiving the lowest-level parts.
\citet{Niemeyer21cvpr-GIRAFFERepresenting} show that modeling a 3D scene as one composed of objects and background leads to more controllable and accurate image synthesis.
\citet{Mo20cvpr-StructEditLearning} model object shape as a hierarchical assembly of individual parts, and the object shape is then generated by transforming the shape parameter of each object part.
\citet{Ichien21css-VisualAnalogy} show that compositional model significantly outperform deep learning models on the task of analogical reasoning.
\citet{Yuan23arxiv-CompositionalScene} survey work on compositional scene representation.
While early work by \citet{Fodor88cognition-Connectionismcognitive} argued  that neural-network-based models are not compositional in nature, recent works have suggested otherwise.
The works~\citep{Mhaskar1996j-NeurComput-NNApproxAnalyticFun, Mhaskar2016j-AA-DeepVsShallowNN, Poggio2017j-IJAC-WhyWhenDNN} show that deep convolutional networks avoid the curse of dimensionality in approximating a class of compositional functions.
\citet{Webb22arxiv-EmergentAnalogical} show that large language models such as GPT-3 show compositional generalizability as an emergent property. Such compositionality, however, is not yet evident in generative models trained on multi-object scenes~\citep{Xie22icml-COATMeasuring}.

\subsection{Metric-semantic and Hierarchical Representations \\ for Scene Understanding and Mapping}\label{sec:rw-mapRepresentations}

\myParagraph{2D Scene Graphs in Computer Vision}
2D scene graphs are a popular model for image understanding that describes the content of an image in terms of objects (typically grounded by bounding boxes in the image) and their attributes and relations~\citep{Krishna16arxiv-VisualGenome, Johnson15cvpr}.
The estimation of 2D scene graphs (either from a single image or a sequence of images) has been well studied in the computer vision community and is surveyed in~\cite{Zhu22arxiv-SceneGraphSurvey}.
The seminal works~\citep{Krishna16arxiv-VisualGenome, Johnson15cvpr} advocated the use of 2D scene graphs to perform cognitive tasks such as image search, image captioning, and answering questions. These tasks, unlike object detection, require the model to reason about  object attributes and object-to-object relations, and therefore, enable better image understanding.
2D scene graphs have been successfully used in
image retrieval~\citep{Johnson15cvpr}, caption generation~\citep{Karpathy15cvpr-caption, Anderson16eccv-sceneGraphDescription}, visual question answering~\citep{Ren15corr-QA, Krishna16arxiv-VisualGenome}, and relationship detection~\citep{Lu16eccv-visualRelations}.
GNNs are a popular tool for joint object labels and/or relationship inference on scene graphs~\citep{Xu17cvpr-sceneGraph, Li17iccv-sceneGraphGeneration, Yang18eccv-sceneGraph, Zellers18cvpr-sceneGraph}.
\citet{Chang23pami-sceneGraphSurvey} provide a comprehensive survey on the various methods that have been proposed to infer a 2D scene graph from an image.

Despite their popularity, 2D scene graphs have several limitations. First of all, they are designed to ground concepts in the image space, hence they are not suitable to model large-scale scenes (see discussion in \cref{sec:symbolGroundingAndHieRep}). Second, many of the annotated object-to-object relationships in~\cite{Krishna16arxiv-VisualGenome} like ``behind'', ``next to'', ``near'', ``above'', ``under'' are harder to assess from a 2D image due to the lack of depth information, and are more easily inferred in 3D.
Finally, 2D representations such as 2D scene graphs are not invariant to viewpoint changes (\ie viewing the same 3D scene from a different viewing angle may result in a different 2D scene graph), as observed in~\cite{Armeni19iccv-3DsceneGraphs}.

\myParagraph{Flat Metric-Semantic Representations}
The last few years have seen a surge of interest towards \emph{metric-semantic mapping}, simultaneously triggered by the maturity of traditional 3D reconstruction and SLAM techniques, and by the novel opportunities for semantic understanding afforded by deep learning.
The literature has focused on both object-based maps~\citep{Salas-Moreno13cvpr,Dong17cvpr-XVIO,Mo19iros-orcVIO,Nicholson18ral-quadricSLAM,Bowman17icra,Ok21icra-home} and dense maps, including volumetric models~\citep{McCormac17icra-semanticFusion,Grinvald19ral-voxbloxpp,Narita19iros-metricSemantic}, point clouds~\citep{Behley19iccv-semanticKitti,Tateno15iros-metricSemantic,Lianos18eccv-VSO}, and 3D meshes~\citep{Rosinol20icra-Kimera,Rosu19ijcv-semanticMesh}.
Some approaches combine objects and dense map models~\citep{Li16iros-metricSemantic,McCormac183dv-fusion++,Xu19icra-midFusion,Schmid21arxiv-panoptic}.
These approaches are not concerned with estimating higher-level semantics (\eg{} rooms) and typically return dense models that might not be directly amenable for navigation~\citep{Oleynikova18iros-topoMap}.

\myParagraph{Building Parsing} A somewhat parallel research line investigates how to \emph{parse the layout of a building} from 2D or 3D data.
A large body of work focuses on parsing 2D maps~\citep{Bormann16icra-roomSegmentationSurvey}, including rule-based~\citep{Kleiner17iros-roombaRoomSegmentation} and learning-based methods~\citep{Liu18eccv-floorNet}.
\citet{Friedman07ijcai-voronoiRF} compute a Voronoi graph from a 2D occupancy grid, which is then labeled using a conditional random field.
Recent work focuses on 3D data. \citet{Liu18eccv-floorNet} and \citet{Stekovic21arxiv-monteFloor} project 3D point clouds to 2D maps, which however is not directly applicable to multi-story buildings.
\citet{Furukawa09iccv} reconstruct floor plans from images using multi-view stereo combined with a Manhattan World assumption.
\citet{Lukierski17icra-floorPlan} use dense stereo from an omni-directional camera to fit cuboids to objects and rooms.
\citet{Zheng20pami-buildingFusion} detect rooms by performing region growing on a 3D metric-semantic model.

\myParagraph{Hierarchical Representations and 3D Scene Graphs}
Hierarchical maps have been pervasive in robotics since its inception~\citep{Kuipers00ai,Kuipers78cs,Chatila85,Thrun02a}.
Early work focuses on 2D maps and investigates the use of hierarchical maps to resolve the divide between metric and topological representations~\citep{Ruiz-Sarmiento17kbs-multiversalMaps,Galindo05iros-multiHierarchicalMaps,Zender08ras-spatialRepresentations,Choset01tra,Beeson10ijrr-factoringSSH}.
These works preceded the ``deep learning revolution'' and could not leverage the rich semantics currently accessible via deep neural networks.

More recently, \emph{3D scene graphs} have been proposed as expressive hierarchical models for 3D environments.
\citet{Armeni19iccv-3DsceneGraphs} model the environment as a graph including low-level geometry (\ie{} a metric-semantic mesh), objects, rooms, and camera locations.
\citet{Rosinol21ijrr-Kimera,Rosinol20rss-dynamicSceneGraphs} augment the model with a topological map of places (modeling traversability), as well as a layer describing dynamic entities in the environment.
The approaches in~\cite{Armeni19iccv-3DsceneGraphs,Rosinol21ijrr-Kimera,Rosinol20rss-dynamicSceneGraphs} are designed for offline use.
Other papers focus on reconstructing a graph of objects and their relations~\citep{Kim19tc-3DsceneGraphs,Wald20cvpr-semanticSceneGraphs,Wu21cvpr-SceneGraphFusion}.
\citet{Wu21cvpr-SceneGraphFusion} predict objects and relations in real-time using a graph neural network.
\citet{Izatt21tr-sceneGraphs} parse objects and relations into a scene grammar model using mixed-integer programming.
\citet{Gothoskar21arxiv-3dp3} use an MCMC approach.
\citet{Gay18accv-VisualGraphsFromMotion} use a quadric representation to estimate 3D ellipsoids for objects in the scene given input 2D bounding boxes, and use an RNN to infer relationships between detected objects.

\subsection{Maintaining Persistent Representations}\label{sec:rw-lcd}

\myParagraph{Loop Closures Detection}
Established approaches for visual loop closure detection in robotics trace back to  place recognition and image retrieval techniques in computer vision;
these approaches are broadly adopted in SLAM pipelines but are known to suffer from appearance and viewpoint changes~\citep{Lowry16tro-surveyPlaceRecognition}.
Alternative approaches investigate place recognition using image sequences~\citep{Schubert21rss-fastICM,Milford12icra,Garg21ral-seqnet} or deep learning~\citep{Arandjelovic16cvpr-netvlad}.
More related to our proposal is the set of papers leveraging semantic information for loop closure detection.
\citet{Gawel18ral-xview} perform object-graph-based loop closure detection using random-walk descriptors built from 2D images.
\citet{Liu19icra-globalLocalizationObjects} use similar object-based descriptors but built from a 3D reconstruction.
\citet{Lin21ral-topologyAwareObjectLocalization} adopt random-walk object-based descriptors and then compute loop closure poses via object registration.
\citet{Qin21jvcir-semantic} propose an object-based approach based on \subgraph similarity matching.
\citet{Zheng20pami-buildingFusion} propose a room-level loop closure detector.
None of these approaches are hierarchical in nature.

\myParagraph{Loop Closures Correction}
After a loop closure is detected, the map needs to be corrected accordingly.
While this process is easy in sparse (\eg{} landmark-based) representations~\citep{Cadena16tro-SLAMsurvey}, it is non-trivial to perform in real-time when using dense representations.
\citet{Stuckler14jvcir} and \citet{Whelan16ijrr-elasticFusion} optimize a map of \emph{surfels}, to circumvent the need to correct structured representations (\eg{} meshes or voxels).
\citet{Dai17tog-bundlefusion} propose reintegrating a volumetric map after each loop closure.
\citet{Reijgwart20ral-voxgraph} correct drift in volumetric representations by breaking the map into submaps that can be rigidly re-aligned after loop closures.
\citet{Whelan15ijrr} propose a 2-step optimization that first corrects the robot trajectory and then deforms the map (represented as a point cloud or a mesh) using a deformation graph approach~\citep{Sumner07siggraph-embeddedDeformation}.
\citet{Rosinol21ijrr-Kimera} unify the two steps into a single pose graph and mesh optimization.
None of these works are concerned with simultaneously correcting multiple layers in a hierarchical representation.

\section{Conclusions}\label{sec:conclusions}

This paper argues that large-scale spatial perception for robotics requires hierarchical representations.
In particular, we show that hierarchical representations scale better in terms of memory and are more suitable for efficient inference.
Our second contribution is to introduce algorithms to build a hierarchical representation of an indoor environment, namely a \emph{3D scene graph},
 in real-time as the robot explores the environment. Our algorithms combine 3D geometry (\eg to cluster the free space into a graph of places), topology (to cluster the places into rooms), and
geometric deep learning (\eg to classify the type of rooms the robot is moving across).
Our third contribution is to discuss loop closure detection and correction in 3D scene graphs. We introduce
 (handcrafted and learning-based) hierarchical descriptors for loop closure detection, and develop a unified optimization framework to  correct drift in the 3D scene graph in response to loop closures.
 We integrate our algorithmic contributions into a heavily parallelized system, named \emph{\name}, and show it can build accurate
 3D scene graphs in real-time across a variety of photo-realistic simulations and real datasets.

\myParagraph{Limitations}
While we believe the proposed contributions constitute a substantial step towards high-level scene understanding and spatial perception for robotics, our current proposal has several limitations.
First, the \subgraph of places captures the free-space in 3D, which is directly usable for a drone to navigate. However,
traversability for ground robots is also influenced by other aspects (\eg terrain type, steepness). These aspects are currently disregarded in the construction of the GVD and the places \subgraph, but we believe they are particularly important for outdoor extensions of 3D scene graphs.
Second, our approach for room segmentation, which first clusters the rooms geometrically, and then labels each room, mainly applies to rooms with clear geometric boundaries (\eg it would not work in an open floor-plan). We believe this limitation is surmountable (at the expense of using extra training data) by replacing the 2-stage approach with a single learning-based method (\eg a neural tree or a standard graph neural network) that can directly classify places into rooms.
Third, for computational reasons, we restricted the inference in the neural tree to operate at the level of objects, rooms, and buildings. However, it would be desirable for high-level semantic concepts (\eg room and object labels) to propagate information downward towards the mesh geometry. While a more compact representation of the low-level geometry~\citep{Czarnowski20ral-deepFactors,Sucar203dv-nodeSLAM} might facilitate this process, it remains unclear how to fully integrate top-down reasoning in the construction of the 3D scene graph, which is currently a bottom-up process.

\myParagraph{Future Work}
This work opens many avenues for current and future investigation.
First of all, while this paper mostly focused on inclusion and adjacency relations, it would be interesting to label nodes and edges of the 3D scene graph with a richer set of relations and affordances, for instance building on~\cite{Wu21cvpr-SceneGraphFusion}.
Second, the connections between our scene graph optimization approach and pose graph optimization offer opportunities to improve the efficiency of the optimization by leveraging recent advances in pose graph sparsification as well as novel solvers.
Third, it would be interesting to replace the sub-symbolic representation (currently, a 3D mesh and a graph of places) with neural models, including neural implicit representations~\citep{Park19cvpr-deepSDF} or neural radiance fields~\citep{Rosinol22arxiv-nerfSLAM}, which can more easily incorporate priors and better support shape completion~\citep{Kong23arxiv-vmap}.
 Fourth, the current set of detectable objects is fairly limited and restricted by the use of pre-trained 2D segmentation networks. However, we have noticed in \cref{sec:experiments} and in~\cite{Chen22arxiv-LLM2} that a larger object vocabulary leads to better room classification; therefore, it would be interesting to investigate novel techniques that leverage language models for open-set segmentation, \eg~\cite{Ha22icrl-openWorld,Jatavallabhula23arxiv-conceptFusion}.
Fifth, it would be desirable the extend the framework in this paper to arbitrary (\ie mixed indoor-outdoor environments).
Finally, the implications of using 3D scene graphs for prediction, planning, and decision-making are still mostly unexplored (see~\cite{Ravichandran22icra-RLwithSceneGraphs,Agia22corl-taskography,Chang22arxiv-DLite} for early examples).

\section*{Disclaimer}

Research was sponsored by the United States Air Force Research Laboratory and the United States Air Force Artificial Intelligence Accelerator and was accomplished under Cooperative Agreement Number FA8750-19-2-1000.
The views and conclusions contained in this document are those of the authors and should not be interpreted as representing the official policies, either expressed or implied, of the United States Air Force or the U.S. Government.
The U.S. Government is authorized to reproduce and distribute reprints for Government purposes notwithstanding any copyright notation herein.

\toggleformat{}{

\begin{funding}
    This work was partially funded by the AIA CRA FA8750-19-2-1000,
    ARL DCIST CRA W911NF-17-2-0181,
    ONR RAIDER N00014-18-1-2828,
    MIT Lincoln Laboratory's Autonomy al Fresco program,
    Lockheed Martin Corporation’s Neural Prediction in 3D Dynamic Scene
    Graphs program, and by Carlone's Amazon Research Award.
\end{funding}

\begin{dci}
    The authors declare that there is no conflict of interest.
\end{dci}
}

{\smaller
\toggleformat{\bibliographystyle{IEEEtranN}}{\bibliographystyle{SageH}}

\begin{thebibliography}{206}
\providecommand{\natexlab}[1]{#1}
\providecommand{\url}[1]{#1}
\csname url@samestyle\endcsname
\providecommand{\newblock}{\relax}
\providecommand{\bibinfo}[2]{#2}
\providecommand{\BIBentrySTDinterwordspacing}{\spaceskip=0pt\relax}
\providecommand{\BIBentryALTinterwordstretchfactor}{4}
\providecommand{\BIBentryALTinterwordspacing}{\spaceskip=\fontdimen2\font plus
\BIBentryALTinterwordstretchfactor\fontdimen3\font minus
  \fontdimen4\font\relax}
\providecommand{\BIBforeignlanguage}[2]{{\expandafter\ifx\csname l@#1\endcsname\relax
\typeout{** WARNING: IEEEtranN.bst: No hyphenation pattern has been}\typeout{** loaded for the language `#1'. Using the pattern for}\typeout{** the default language instead.}\else
\language=\csname l@#1\endcsname
\fi
#2}}
\providecommand{\BIBdecl}{\relax}
\BIBdecl

\bibitem[Davison(2018)]{Davison18-futuremapping}
A.~J. Davison, ``{FutureMapping}: The computational structure of spatial {AI}
  systems,'' \emph{arXiv preprint arXiv:1803.11288}, 2018.

\bibitem[Rosinol et~al.(2021)Rosinol, Violette, Abate, Hughes, Chang, Shi,
  Gupta, and Carlone]{Rosinol21ijrr-Kimera}
A.~Rosinol, A.~Violette, M.~Abate, N.~Hughes, Y.~Chang, J.~Shi, A.~Gupta, and
  L.~Carlone, ``Kimera: from {SLAM} to spatial perception with {3D} dynamic
  scene graphs,'' \emph{Intl. J. of Robotics Research}, vol.~40, no. 12--14,
  pp. 1510--1546, 2021, arXiv preprint: 2101.06894,
  \linkToPdf{https://arxiv.org/pdf/2101.06894.pdf}.

\bibitem[Armeni et~al.(2019)Armeni, He, Gwak, Zamir, Fischer, Malik, and
  Savarese]{Armeni19iccv-3DsceneGraphs}
I.~Armeni, Z.~He, J.~Gwak, A.~Zamir, M.~Fischer, J.~Malik, and S.~Savarese,
  ``{3D} scene graph: A structure for unified semantics, {3D} space, and
  camera,'' in \emph{Intl. Conf. on Computer Vision (ICCV)}, 2019, pp.
  5664--5673.

\bibitem[Rosinol et~al.(2020{\natexlab{a}})Rosinol, Gupta, Abate, Shi, and
  Carlone]{Rosinol20rss-dynamicSceneGraphs}
\BIBentryALTinterwordspacing
A.~Rosinol, A.~Gupta, M.~Abate, J.~Shi, and L.~Carlone, ``{3D} dynamic scene
  graphs: Actionable spatial perception with places, objects, and humans,'' in
  \emph{Robotics: Science and Systems (RSS)}, 2020,
  \linkToPdf{https://arxiv.org/pdf/2002.06289.pdf},
  \linkToMedia{http://news.mit.edu/2020/robots-spatial-perception-0715},
  \linkToVideo{https://www.youtube.com/watch?v=SWbofjhyPzI&feature=youtu.be}.
  [Online]. Available:
  \url{http://news.mit.edu/2020/robots-spatial-perception-0715}
\BIBentrySTDinterwordspacing

\bibitem[Kim et~al.(2019)Kim, Park, Song, and Kim]{Kim19tc-3DsceneGraphs}
U.~Kim, J.~Park, T.~Song, and J.~Kim, ``{3-D} scene graph: A sparse and
  semantic representation of physical environments for intelligent agents,''
  \emph{IEEE Trans. Cybern.}, vol.~PP, pp. 1--13, Aug. 2019.

\bibitem[Wald et~al.(2020)Wald, Dhamo, Navab, and
  Tombari]{Wald20cvpr-semanticSceneGraphs}
J.~Wald, H.~Dhamo, N.~Navab, and F.~Tombari, ``Learning {3D} semantic scene
  graphs from {3D} indoor reconstructions,'' in \emph{Proceedings of the
  IEEE/CVF Conference on Computer Vision and Pattern Recognition}, 2020, pp.
  3961--3970.

\bibitem[Wu et~al.(2021)Wu, Wald, Tateno, Navab, and
  Tombari]{Wu21cvpr-SceneGraphFusion}
S.~Wu, J.~Wald, K.~Tateno, N.~Navab, and F.~Tombari, ``{SceneGraphFusion}:
  Incremental {3D} scene graph prediction from {RGB-D} sequences,'' in
  \emph{IEEE Conf. on Computer Vision and Pattern Recognition (CVPR)}, 2021.

\bibitem[Izatt and Tedrake(2021)]{Izatt21tr-sceneGraphs}
G.~Izatt and R.~Tedrake, ``Scene understanding and distribution modeling with
  mixed-integer scene parsing,'' in \emph{technical report (under review)},
  2021.

\bibitem[Gothoskar et~al.(2021)Gothoskar, Cusumano-Towner, Zinberg,
  Ghavamizadeh, Pollok, Garrett, Tenenbaum, Gutfreund, and
  Mansinghka]{Gothoskar21arxiv-3dp3}
N.~Gothoskar, M.~Cusumano-Towner, B.~Zinberg, M.~Ghavamizadeh, F.~Pollok,
  A.~Garrett, J.~Tenenbaum, D.~Gutfreund, and V.~Mansinghka, ``{3DP3}: {3D}
  scene perception via probabilistic programming,'' in \emph{ArXiv preprint:
  2111.00312}, 2021.

\bibitem[Oleynikova et~al.(2017)Oleynikova, Taylor, Fehr, Siegwart, and
  Nieto]{Oleynikova17iros-voxblox}
H.~Oleynikova, Z.~Taylor, M.~Fehr, R.~Siegwart, and J.~Nieto, ``Voxblox:
  Incremental 3d euclidean signed distance fields for on-board mav planning,''
  in \emph{IEEE/RSJ Intl. Conf. on Intelligent Robots and Systems
  (IROS)}.\hskip 1em plus 0.5em minus 0.4em\relax IEEE, 2017, pp. 1366--1373.

\bibitem[Hughes et~al.(2022)Hughes, Chang, and Carlone]{Hughes22rss-hydra}
N.~Hughes, Y.~Chang, and L.~Carlone, ``{Hydra:} a real-time spatial perception
  engine for {3D} scene graph construction and optimization,'' in
  \emph{Robotics: Science and Systems (RSS)}, 2022,
  \linkToPdf{https://arxiv.org/pdf/2201.13360.pdf}.

\bibitem[Bavle et~al.(2022{\natexlab{a}})Bavle, Sanchez-Lopez, Shaheer, Civera,
  and Voos]{Bavle22ral-SGraph}
H.~Bavle, J.~L. Sanchez-Lopez, M.~Shaheer, J.~Civera, and H.~Voos,
  ``Situational graphs for robot navigation in structured indoor
  environments,'' \emph{{IEEE} Robotics and Automation Letters}, vol.~7, no.~4,
  pp. 9107--9114, 2022.

\bibitem[Bavle et~al.(2022{\natexlab{b}})Bavle, Sanchez-Lopez, Shaheer, Civera,
  and Voos]{Bavle22arxiv-SGraphPlus}
------, ``S-graphs+: Real-time localization and mapping leveraging hierarchical
  representations,'' \emph{arXiv preprint arXiv:2212.11770}, 2022.

\bibitem[{Grinvald} et~al.(2019){Grinvald}, {Furrer}, {Novkovic}, {Chung},
  {Cadena}, {Siegwart}, and {Nieto}]{Grinvald19ral-voxbloxpp}
M.~{Grinvald}, F.~{Furrer}, T.~{Novkovic}, J.~J. {Chung}, C.~{Cadena},
  R.~{Siegwart}, and J.~{Nieto}, ``{Volumetric Instance-Aware Semantic Mapping
  and 3D Object Discovery},'' \emph{{IEEE} Robotics and Automation Letters},
  vol.~4, no.~3, pp. 3037--3044, 2019.

\bibitem[Huber(2021)]{Huber21idsc-persistent}
S.~Huber, ``Persistent homology in data science,'' in \emph{Data
  Science--Analytics and Applications: Proceedings of the 3rd International
  Data Science Conference--iDSC2020}.\hskip 1em plus 0.5em minus 0.4em\relax
  Springer, 2021, pp. 81--88.

\bibitem[Talak et~al.(2021)Talak, Hu, Peng, and
  Carlone]{Talak21neurips-neuralTree}
R.~Talak, S.~Hu, L.~Peng, and L.~Carlone, ``Neural trees for learning on
  graphs,'' in \emph{Conf. on Neural Information Processing Systems (NeurIPS)},
  2021, \linkToPdf{https://arxiv.org/pdf/2105.07264.pdf}.

\bibitem[Sumner et~al.(2007)Sumner, Schmid, and
  Pauly]{Sumner07siggraph-embeddedDeformation}
R.~Sumner, J.~Schmid, and M.~Pauly, ``Embedded deformation for shape
  manipulation,'' \emph{ACM SIGGRAPH 2007 papers on - SIGGRAPH '07}, 2007.

\bibitem[Harnad(1990)]{Harnard90physica-symbolGrounding}
S.~Harnad, ``The symbol grounding problem,'' \emph{Physica D: Nonlinear
  Phenomena}, vol.~42, no.~1, pp. 335--346, 1990.

\bibitem[Garcia-Garcia et~al.(2017)Garcia-Garcia, Orts-Escolano, Oprea,
  Villena-Martinez, and Garc{\'i}a-Rodr{\'i}guez]{GarciaGarcia17arxiv}
A.~Garcia-Garcia, S.~Orts-Escolano, S.~Oprea, V.~Villena-Martinez, and
  J.~Garc{\'i}a-Rodr{\'i}guez, ``A review on deep learning techniques applied
  to semantic segmentation,'' \emph{ArXiv Preprint: 1704.06857}, 2017.

\bibitem[Zeng et~al.(2013)Zeng, Zhao, Zheng, and Liu]{Zeng2013graphical-octree}
M.~Zeng, F.~Zhao, J.~Zheng, and X.~Liu, ``Octree-based fusion for realtime 3d
  reconstruction,'' \emph{Graphical Models}, vol.~75, no.~3, pp. 126--136,
  2013.

\bibitem[Park et~al.(2019)Park, Florence, Straub, Newcombe, and
  Lovegrove]{Park19cvpr-deepSDF}
J.~Park, P.~Florence, J.~Straub, R.~Newcombe, and S.~Lovegrove, ``{DeepSDF}:
  Learning continuous signed distance functions for shape representation,'' in
  \emph{IEEE Conf. on Computer Vision and Pattern Recognition (CVPR)}.\hskip
  1em plus 0.5em minus 0.4em\relax IEEE, 2019.

\bibitem[Bodlaender(2006)]{Bodlaender2006c-Chapter-TreewidthComputation}
H.~L. Bodlaender, ``Treewidth: Characterizations, applications, and
  computations,'' in \emph{Graph-Theoretic Concepts in Computer Science}.\hskip
  1em plus 0.5em minus 0.4em\relax Springer Berlin Heidelberg, 2006, pp. 1--14.

\bibitem[Dechter and Mateescu(2007)]{Dechter07ai}
R.~Dechter and R.~Mateescu, ``{AND/OR} search spaces for graphical models,''
  \emph{Artificial Intelligence}, vol. 171, no. 2-3, pp. 73--106, 2007.

\bibitem[Feder and Vardi(1993)]{Feder93stoc}
T.~Feder and M.~Vardi, ``Monotone monadic snp and constraint satisfaction,'' in
  \emph{ACM Symp. on Theory of Computing (STOC)}.\hskip 1em plus 0.5em minus
  0.4em\relax New York, NY, USA: ACM Press, 1993, pp. 612--622.

\bibitem[Grohe et~al.(2020)Grohe, Neuen, Schweitzer, and
  Wiebking]{Grohe18talgo-GraphIsomorphism-BndTreeWidth}
M.~Grohe, D.~Neuen, P.~Schweitzer, and D.~Wiebking, ``An improved isomorphism
  test for bounded-tree-width graphs,'' \emph{ACM Trans. Algorithms}, vol.~16,
  no.~3, Jun. 2020.

\bibitem[Chandrasekaran et~al.(2008)Chandrasekaran, Srebro, and
  Harsha]{Venkat12uai}
V.~Chandrasekaran, N.~Srebro, and P.~Harsha, ``Complexity of inference in
  graphical models,'' in \emph{Conf. on Uncertainty in Artificial Intelligence
  (UAI)}, 2008, p. 70–78.

\bibitem[Cooper(1990)]{Cooper90ai}
G.~Cooper, ``{The computational complexity of probabilistic inference using
  Bayesian belief networks},'' \emph{Artificial Intelligence}, vol.~42, no.
  2-3, pp. 393--405, 1990.

\bibitem[Zhu et~al.(2022)Zhu, Zhang, Jiang, Dang, Hou, Shen, Feng, Zhao, Miao,
  Shah, et~al.]{Zhu22arxiv-SceneGraphSurvey}
G.~Zhu, L.~Zhang, Y.~Jiang, Y.~Dang, H.~Hou, P.~Shen, M.~Feng, X.~Zhao,
  Q.~Miao, S.~A.~A. Shah \emph{et~al.}, ``Scene graph generation: A
  comprehensive survey,'' \emph{arXiv preprint arXiv:2201.00443}, 2022.

\bibitem[Tagliasacchi et~al.(2016)Tagliasacchi, Delame, Spagnuolo, Amenta, and
  Telea]{Tagliasacchi16cgf-3dSkeletonsSurvey}
A.~Tagliasacchi, T.~Delame, M.~Spagnuolo, N.~Amenta, and A.~Telea, ``3d
  skeletons: A state-of-the-art report,'' in \emph{Computer Graphics Forum},
  vol.~35, no.~2.\hskip 1em plus 0.5em minus 0.4em\relax Wiley Online Library,
  2016, pp. 573--597.

\bibitem[Cadena et~al.(2016)Cadena, Carlone, Carrillo, Latif, Scaramuzza,
  Neira, Reid, and Leonard]{Cadena16tro-SLAMsurvey}
C.~Cadena, L.~Carlone, H.~Carrillo, Y.~Latif, D.~Scaramuzza, J.~Neira, I.~Reid,
  and J.~Leonard, ``Past, present, and future of simultaneous localization and
  mapping: Toward the robust-perception age,'' \emph{{IEEE} Trans. Robotics},
  vol.~32, no.~6, pp. 1309--1332, 2016, arxiv preprint: 1606.05830,
  \linkToPdf{https://arxiv.org/abs/1606.05830}.

\bibitem[Chang et~al.(2017)Chang, Dai, Funkhouser, Halber, Niessner, Savva,
  Song, Zeng, and Zhang]{Chang173dv-Matterport3D}
A.~Chang, A.~Dai, T.~Funkhouser, M.~Halber, M.~Niessner, M.~Savva, S.~Song,
  A.~Zeng, and Y.~Zhang, ``Matterport3d: Learning from rgb-d data in indoor
  environments,'' \emph{International Conference on 3D Vision (3DV)}, 2017.

\bibitem[Bodlaender and Koster(2010)]{Bodlaender2010j-IC-TreewidthComputationI}
H.~L. Bodlaender and A.~M. Koster, ``Treewidth computations {I}: Upper
  bounds,'' \emph{Information and Computation}, vol. 208, no.~3, pp. 259 --
  275, 2010.

\bibitem[Maniu et~al.(2019)Maniu, Senellart, and
  Jog]{Maniu19icdt-ExperimentalStudy}
S.~Maniu, P.~Senellart, and S.~Jog, ``{An Experimental Study of the Treewidth
  of Real-World Graph Data},'' in \emph{Intl. Conf. Database Theory}, 2019, pp.
  12:1--12:18.

\bibitem[Rosinol et~al.(2020{\natexlab{b}})Rosinol, Abate, Chang, and
  Carlone]{Rosinol20icra-Kimera}
A.~Rosinol, M.~Abate, Y.~Chang, and L.~Carlone, ``Kimera: an open-source
  library for real-time metric-semantic localization and mapping,'' in
  \emph{IEEE Intl. Conf. on Robotics and Automation (ICRA)}, 2020, arXiv
  preprint: 1910.02490,
  \linkToVideo{https://www.youtube.com/watch?v=-5XxXRABXJs},
  \linkToCode{https://github.com/MIT-SPARK/Kimera},
  \linkToPdf{https://arxiv.org/pdf/1910.02490.pdf}.

\bibitem[Whelan et~al.(2012)Whelan, McDonald, Kaess, Fallon, Johannsson, and
  Leonard]{Whelan12rgbd}
T.~Whelan, J.~B. McDonald, M.~Kaess, M.~F. Fallon, H.~Johannsson, and J.~J.
  Leonard, ``Kintinuous: Spatially extended {Kinect-Fusion},'' in \emph{RSS
  Workshop on RGB-D: Advanced Reasoning with Depth Cameras}, Sydney, Australia,
  July 2012.

\bibitem[Nie{\ss}ner et~al.(2013)Nie{\ss}ner, Zollh{\"o}fer, Izadi, and
  Stamminger]{Niessner2013acm-real}
M.~Nie{\ss}ner, M.~Zollh{\"o}fer, S.~Izadi, and M.~Stamminger, ``Real-time 3d
  reconstruction at scale using voxel hashing,'' \emph{ACM Transactions on
  Graphics (ToG)}, vol.~32, no.~6, p. 169, 2013.

\bibitem[Rusu(2009)]{Rusu09thesis-SemanticMaps}
R.~B. Rusu, ``Semantic {3D} object maps for everyday manipulation in human
  living environments,'' Ph.D. dissertation, Computer Science department,
  Technische Universitaet Muenchen, Germany, October 2009.

\bibitem[Yang et~al.(2020)Yang, Shi, and Carlone]{Yang20tro-teaser}
H.~Yang, J.~Shi, and L.~Carlone, ``{TEASER: Fast and Certifiable Point Cloud
  Registration},'' \emph{{IEEE} Trans. Robotics}, vol.~37, no.~2, pp. 314--333,
  2020, extended arXiv version 2001.07715
  \linkToPdf{https://arxiv.org/pdf/2001.07715.pdf}.

\bibitem[Talak et~al.(2023)Talak, Shi, Maggio, and
  Carlone]{Talak23rss-ensemble}
R.~Talak, J.~Shi, D.~Maggio, and L.~Carlone, ``A correct-and-certify approach
  to self-supervise object pose estimators via ensemble self-training,''
  \emph{Robotics: Science and Systems (RSS)}, 2023,
  \linkToPdf{https://arxiv.org/pdf/2302.06019.pdf}.

\bibitem[Oleynikova et~al.(2018)Oleynikova, Taylor, Siegwart, and
  Nieto]{Oleynikova18iros-topoMap}
H.~Oleynikova, Z.~Taylor, R.~Siegwart, and J.~Nieto, ``Sparse {3D} topological
  graphs for micro-aerial vehicle planning,'' in \emph{IEEE/RSJ Intl. Conf. on
  Intelligent Robots and Systems (IROS)}, 2018.

\bibitem[Lau et~al.(2013)Lau, Sprunk, and
  Burgard]{Lau13ras-efficientGridRepresentations}
\BIBentryALTinterwordspacing
B.~Lau, C.~Sprunk, and W.~Burgard, ``Efficient grid-based spatial
  representations for robot navigation in dynamic environments,'' \emph{Robot.
  Auton. Syst.}, vol.~61, no.~10, p. 1116–1130, Oct. 2013. [Online].
  Available: \url{https://doi.org/10.1016/j.robot.2012.08.010}
\BIBentrySTDinterwordspacing

\bibitem[Foskey et~al.(2003)Foskey, Lin, and Manocha]{Foskey03jcise-SMA}
M.~Foskey, M.~C. Lin, and D.~Manocha, ``Efficient computation of a simplified
  medial axis,'' \emph{J. Comput. Inf. Sci. Eng.}, vol.~3, no.~4, pp. 274--284,
  2003.

\bibitem[Blanco and Rai(2014)]{Blanco14-nanoflann}
J.~L. Blanco and P.~K. Rai, ``nanoflann: a {C}++ header-only fork of {FLANN}, a
  library for nearest neighbor ({NN}) with kd-trees,''
  \url{https://github.com/jlblancoc/nanoflann}, 2014.

\bibitem[Ali et~al.(2022)Ali, Asaad, Jimenez, Nanda, Paluzo-Hidalgo, and
  Soriano-Trigueros]{Ali22arxiv-tdaSurvey}
D.~Ali, A.~Asaad, M.-J. Jimenez, V.~Nanda, E.~Paluzo-Hidalgo, and
  M.~Soriano-Trigueros, ``A survey of vectorization methods in topological data
  analysis,'' \emph{arXiv preprint arXiv:2212.09703}, 2022.

\bibitem[Aktas et~al.(2019)Aktas, Akbas, and Fatmaoui]{Aktas19ans-persistence}
M.~E. Aktas, E.~Akbas, and A.~E. Fatmaoui, ``Persistence homology of networks:
  methods and applications,'' \emph{Applied Network Science}, vol.~4, no.~1,
  pp. 1--28, 2019.

\bibitem[Bormann et~al.(2016)Bormann, Jordan, Li, Hampp, and
  H\:{a}gele]{Bormann16icra-roomSegmentationSurvey}
R.~Bormann, F.~Jordan, W.~Li, J.~Hampp, and M.~H\:{a}gele, ``Room segmentation:
  Survey, implementation, and analysis,'' in \emph{2016 IEEE International
  Conference on Robotics and Automation (ICRA)}, 2016, pp. 1019--1026.

\bibitem[Kleiner et~al.(2017)Kleiner, Baravalle, Kolling, Pilotti, and
  Munich]{Kleiner17iros-roombaRoomSegmentation}
A.~Kleiner, R.~Baravalle, A.~Kolling, P.~Pilotti, and M.~Munich, ``A solution
  to room-by-room coverage for autonomous cleaning robots,'' in \emph{2017
  IEEE/RSJ International Conference on Intelligent Robots and Systems (IROS)},
  2017, pp. 5346--5352.

\bibitem[Kipf and Welling(2017)]{Kipf17iclr-gcn}
T.~Kipf and M.~Welling, ``Semi-supervised classification with graph
  convolutional networks,'' in \emph{Intl. Conf. on Learning Representations
  (ICLR)}, Apr. 2017.

\bibitem[Hamilton. et~al.(2017)Hamilton., Ying, and
  Leskovec]{Hamilton17nips-GraphSage}
W.~L. Hamilton., R.~Ying, and J.~Leskovec, ``Inductive representation learning
  on large graphs,'' in \emph{Advances in Neural Information Processing Systems
  (NIPS)}, Dec. 2017, p. 1025–1035.

\bibitem[Jordan(2002)]{Jordan02book}
M.~Jordan, ``An introduction to probabilistic graphical models,'' November
  2002, unpublished Lecture Notes.

\bibitem[Koller and Friedman(2009)]{Koller09book}
D.~Koller and N.~Friedman, \emph{Probabilistic Graphical Models: Principles and
  Techniques}.\hskip 1em plus 0.5em minus 0.4em\relax The MIT Press, 2009.

\bibitem[Henaff et~al.(2015)Henaff, Bruna, and LeCun]{Henaff15-deep}
M.~Henaff, J.~Bruna, and Y.~LeCun, ``Deep convolutional networks on
  graph-structured data,'' \emph{arXiv preprint arXiv:1506.05163}, Jun. 2015.

\bibitem[Defferrard et~al.(2016)Defferrard, Bresson, and
  Vandergheynst]{Defferrard16nips-ChebyNets}
M.~Defferrard, X.~Bresson, and P.~Vandergheynst, ``Convolutional neural
  networks on graphs with fast localized spectral filtering,'' in
  \emph{Advances in Neural Information Processing Systems (NIPS)}, vol.~29,
  Dec. 2016, pp. 3844--3852.

\bibitem[Bronstein et~al.(2017)Bronstein, Bruna, LeCun, Szlam, and
  Vandergheynst]{Bronstein17spm-geometricDL}
M.~M. Bronstein, J.~Bruna, Y.~LeCun, A.~Szlam, and P.~Vandergheynst,
  ``Geometric deep learning: going beyond euclidean data,'' \emph{{IEEE} Signal
  Process. Mag.}, vol.~34, no.~4, pp. 18--42, 2017.

\bibitem[Veli\v{c}kovi\'{c} et~al.(2018)Veli\v{c}kovi\'{c}, Cucurull, Casanova,
  Romero, Li\'{o}, and Bengio]{Velickovic18iclr-GAT}
P.~Veli\v{c}kovi\'{c}, G.~Cucurull, A.~Casanova, A.~Romero, P.~Li\'{o}, and
  Y.~Bengio, ``Graph attention networks,'' in \emph{Intl. Conf. on Learning
  Representations (ICLR)}, May 2018.

\bibitem[Lee et~al.(2019)Lee, Rossi, Kim, Ahmed, and Koh]{Lee19tkdd-gat-survey}
J.~Lee, R.~Rossi, S.~Kim, N.~Ahmed, and E.~Koh, ``Attention models in graphs: A
  survey,'' \emph{{ACM} Trans. Knowl. Discov. Data}, vol.~13, no.~6, Nov. 2019.

\bibitem[Busbridge et~al.(2019)Busbridge, Sherburn, Cavallo, and
  Hammerla]{Busbridge2019a-arXiv-Rel-GraphAttentionNetworks}
D.~Busbridge, D.~Sherburn, P.~Cavallo, and N.~Y. Hammerla, ``Relational graph
  attention networks,'' \emph{arXiv preprint arXiv:1904.05811}, Apr. 2019.

\bibitem[Fey and Lenssen(2019)]{Fey19iclrwk-pytorchGeometric}
M.~Fey and J.~E. Lenssen, ``Fast graph representation learning with {PyTorch
  Geometric},'' in \emph{{Intl. Conf. on Learning Representations (ICLR)}
  Workshop on Representation Learning on Graphs and Manifolds}, 2019.

\bibitem[G\'alvez-L\'opez and Tard\'os(2012)]{Galvez12tro-dbow}
D.~G\'alvez-L\'opez and J.~D. Tard\'os, ``Bags of binary words for fast place
  recognition in image sequences,'' \emph{IEEE Transactions on Robotics},
  vol.~28, no.~5, pp. 1188--1197, October 2012.

\bibitem[Li et~al.(2019)Li, Gu, Dullien, Vinyals, and
  Kohli]{Li19icml-GraphMatching}
Y.~Li, C.~Gu, T.~Dullien, O.~Vinyals, and P.~Kohli, ``Graph matching networks
  for learning the similarity of graph structured objects,'' in \emph{Intl.
  Conf. on Machine Learning (ICML)}.\hskip 1em plus 0.5em minus 0.4em\relax
  PMLR, 2019, pp. 3835--3845.

\bibitem[Antonante et~al.(2021)Antonante, Tzoumas, Yang, and
  Carlone]{Antonante21tro-outlierRobustEstimation}
P.~Antonante, V.~Tzoumas, H.~Yang, and L.~Carlone, ``Outlier-robust estimation:
  Hardness, minimally tuned algorithms, and applications,'' \emph{{IEEE} Trans.
  Robotics}, vol.~38, no.~1, pp. 281--301, 2021,
  \linkToPdf{https://arxiv.org/pdf/2007.15109.pdf}.

\bibitem[Savva et~al.(2019)Savva, Kadian, Maksymets, Zhao, Wijmans, Jain,
  Straub, Liu, Koltun, Malik, Parikh, and Batra]{Savva19iccv-habitat}
M.~Savva, A.~Kadian, O.~Maksymets, Y.~Zhao, E.~Wijmans, B.~Jain, J.~Straub,
  J.~Liu, V.~Koltun, J.~Malik, D.~Parikh, and D.~Batra, ``Habitat: {A}
  {P}latform for {E}mbodied {AI} {R}esearch,'' in \emph{Proceedings of the
  IEEE/CVF International Conference on Computer Vision (ICCV)}, 2019.

\bibitem[Reinke et~al.(2022)Reinke, Palieri, Morrell, Chang, Ebadi, Carlone,
  and Agha-mohammadi]{Reinke22ral-LOCUS2}
A.~Reinke, M.~Palieri, B.~Morrell, Y.~Chang, K.~Ebadi, L.~Carlone, and
  A.~Agha-mohammadi, ``{LOCUS 2.0}: Robust and computationally efficient lidar
  odometry for real-time underground {3D} mapping,'' vol.~7, no.~4, pp.
  9043--9050, 2022, \linkToPdf{https://arxiv.org/pdf/2205.11784.pdf}.

\bibitem[Chang et~al.(2022{\natexlab{a}})Chang, Ebadi, Denniston, Ginting,
  Rosinol, Reinke, Palieri, Shi, A, Morrell, Agha-mohammadi, and
  Carlone]{Chang22ral-LAMP2}
Y.~Chang, K.~Ebadi, C.~Denniston, M.~F. Ginting, A.~Rosinol, A.~Reinke,
  M.~Palieri, J.~Shi, C.~A, B.~Morrell, A.~Agha-mohammadi, and L.~Carlone,
  ``{LAMP 2.0}: A robust multi-robot {SLAM} system for operation in challenging
  large-scale underground environments,'' \emph{{IEEE} Robotics and Automation
  Letters ({RA-L})}, vol.~7, no.~4, pp. 9175--9182, 2022,
  \linkToPdf{https://arxiv.org/pdf/2205.13135.pdf}.

\bibitem[Jain et~al.(2023)Jain, Li, Chiu, Hassani, Orlov, and
  Shi]{Jain23cvpr-Oneformer}
J.~Jain, J.~Li, M.~Chiu, A.~Hassani, N.~Orlov, and H.~Shi, ``{OneFormer: One
  Transformer to Rule Universal Image Segmentation},'' 2023.

\bibitem[Wang et~al.(2021)Wang, Sun, Cheng, Jiang, Deng, Zhao, Liu, Mu, Tan,
  Wang, Liu, and Xiao]{Wang21pami-hrnet}
J.~Wang, K.~Sun, T.~Cheng, B.~Jiang, C.~Deng, Y.~Zhao, D.~Liu, Y.~Mu, M.~Tan,
  X.~Wang, W.~Liu, and B.~Xiao, ``Deep high-resolution representation learning
  for visual recognition,'' \emph{IEEE Transactions on Pattern Analysis and
  Machine Intelligence}, vol.~43, no.~10, pp. 3349--3364, 2021.

\bibitem[Sandler et~al.(2018)Sandler, Howard, Zhu, Zhmoginov, and
  Chen]{Sandler18cvpr-MobileNetV2}
M.~Sandler, A.~Howard, M.~Zhu, A.~Zhmoginov, and L.-C. Chen, ``Mobilenetv2:
  Inverted residuals and linear bottlenecks,'' in \emph{IEEE Conf. on Computer
  Vision and Pattern Recognition (CVPR)}, 2018, pp. 4510--4520.

\bibitem[Zhou et~al.(2017)Zhou, Zhao, Puig, Fidler, Barriuso, and
  Torralba]{Zhou17cvpr-ade20k}
B.~Zhou, H.~Zhao, X.~Puig, S.~Fidler, A.~Barriuso, and A.~Torralba, ``Scene
  parsing through ade20k dataset,'' in \emph{Proceedings of the IEEE Conference
  on Computer Vision and Pattern Recognition}, 2017.

\bibitem[Mikolov et~al.(2013)Mikolov, Chen, Corrado, and
  Dean]{Mikolov13-wordRepresentation}
T.~Mikolov, K.~Chen, G.~Corrado, and J.~Dean, ``Efficient estimation of word
  representations in vector space,'' \emph{arXiv preprints arXiv:1301.3781},
  2013.

\bibitem[Schroff et~al.(2015)Schroff, Kalenichenko, and
  Philbin]{Schroff15cvpr-Facenet}
F.~Schroff, D.~Kalenichenko, and J.~Philbin, ``Facenet: A unified embedding for
  face recognition and clustering,'' in \emph{IEEE Conf. on Computer Vision and
  Pattern Recognition (CVPR)}, 2015, pp. 815--823.

\bibitem[Xu et~al.(2019{\natexlab{a}})Xu, Hu, Leskovec, and
  Jegelka]{Xu19iclr-gin}
K.~Xu, W.~Hu, J.~Leskovec, and S.~Jegelka, ``How powerful are graph neural
  networks?'' in \emph{Intl. Conf. on Learning Representations (ICLR)}, May
  2019.

\bibitem[Chen et~al.(2022)Chen, Hu, Talak, and Carlone]{Chen22arxiv-LLM2}
W.~Chen, S.~Hu, R.~Talak, and L.~Carlone, ``Leveraging large language models
  for robot {3D} scene understanding,'' \emph{arXiv preprint: 2209.05629},
  2022, \linkToPdf{https://arxiv.org/pdf/2209.05629.pdf}.

\bibitem[Konidaris(2019)]{Konidaris19cobs-Necessityabstraction}
G.~Konidaris, ``On the necessity of abstraction,'' \emph{Current Opinion in
  Behavioral Sciences}, vol.~29, pp. 1--7, Oct. 2019.

\bibitem[Konidaris et~al.(2018)Konidaris, Kaelbling, and
  {Lozano-Perez}]{Konidaris18jair-SkillsSymbols}
G.~Konidaris, L.~P. Kaelbling, and T.~{Lozano-Perez}, ``From {{Skills}} to
  {{Symbols}}: {{Learning Symbolic Representations}} for {{Abstract High-Level
  Planning}},'' \emph{J. of Artificial Intelligence Research}, vol.~61, pp.
  215--289, Jan. 2018.

\bibitem[James et~al.(2020)James, Rosman, and
  Konidaris]{James20icml-PortableRepresentations}
S.~James, B.~Rosman, and G.~Konidaris, ``Learning portable representations for
  high-level planning,'' in \emph{Intl. Conf. on Machine Learning
  (ICML)}.\hskip 1em plus 0.5em minus 0.4em\relax PMLR, 2020, pp. 4682--4691.

\bibitem[James et~al.(2022)James, Rosman, and
  Konidaris]{James22iclr-AutonomousLearning}
------, ``Autonomous {{Learning}} of {{Object-Centric Abstractions}} for
  {{High-Level Planning}},'' in \emph{Intl. Conf. on Learning Representations
  (ICLR)}, May 2022.

\bibitem[Berg et~al.(2022)Berg, Konidaris, and
  Tellex]{Berg22icra-UsingLanguage}
M.~Berg, G.~Konidaris, and S.~Tellex, ``Using {{Language}} to {{Generate State
  Abstractions}} for {{Long-Range Planning}} in {{Outdoor Environments}},'' in
  \emph{IEEE Intl. Conf. on Robotics and Automation (ICRA)}, May 2022, pp.
  1888--1895.

\bibitem[Jinnai et~al.(2019)Jinnai, Abel, Hershkowitz, Littman, and
  Konidaris]{Jinnai19icml-OptionPlanning}
Y.~Jinnai, D.~Abel, D.~Hershkowitz, M.~Littman, and G.~Konidaris, ``Finding
  options that minimize planning time,'' in \emph{Intl. Conf. on Machine
  Learning (ICML)}.\hskip 1em plus 0.5em minus 0.4em\relax PMLR, 2019, pp.
  3120--3129.

\bibitem[Smith(2004)]{Smith04book-BeyondConcepts}
B.~Smith, \emph{Beyond Concepts: Ontology as Reality Representation}, 2004.

\bibitem[Guarino et~al.(2009)Guarino, Oberle, and
  Staab]{Guarino09hoo-WhatIsAnOntology}
N.~Guarino, D.~Oberle, and S.~Staab, ``What is an ontology?'' \emph{Handbook on
  ontologies}, pp. 1--17, 2009.

\bibitem[Jepsen(2009)]{Jepsen09itpm-WhatIsOntology}
T.~Jepsen, ``Just what is an ontology, anyway?'' \emph{IT Professional
  Magazine}, vol.~11, no.~5, p.~22, 2009.

\bibitem[Gruber(1995)]{Gruber95ijhcs-DesignOfOntologies}
T.~R. Gruber, ``Toward principles for the design of ontologies used for
  knowledge sharing?'' \emph{International journal of human-computer studies},
  vol.~43, no. 5-6, pp. 907--928, 1995.

\bibitem[Borst(1997)]{Borst97diss-constructOntologies}
W.~N. Borst, ``Construction of engineering ontologies for knowledge sharing and
  reuse,'' 1997, phD Dissertation.

\bibitem[Studer et~al.(1998)Studer, Benjamins, and
  Fensel]{Studer98dke-KnowledgeEngineering}
R.~Studer, R.~Benjamins, and D.~Fensel, ``Knowledge engineering: Principles and
  methods,'' \emph{Data \& Knowledge Engineering}, vol.~25, no. 1-2, pp.
  161--197, 1998.

\bibitem[McGuinness and Van~Harmelen(2004)]{Mcguinness04w3c-OWL}
D.~McGuinness and F.~Van~Harmelen, ``{OWL} web ontology language overview,''
  \emph{W3C recommendation}, 2004.

\bibitem[Genesereth and Nilsson(2012)]{Genesereth12book-LogicalFoundations}
M.~R. Genesereth and N.~J. Nilsson, \emph{Logical Foundations of Artificial
  Intelligence}.\hskip 1em plus 0.5em minus 0.4em\relax Morgan Kaufmann, 2012.

\bibitem[Schlenoff et~al.(2012)Schlenoff, Prestes, Madhavan, Goncalves, Li,
  Balakirsky, Kramer, and Miguelanez]{Schlenoff12iros-CORA}
C.~Schlenoff, E.~Prestes, R.~Madhavan, P.~Goncalves, H.~Li, S.~Balakirsky,
  T.~Kramer, and E.~Miguelanez, ``An {IEEE} standard ontology for robotics and
  automation,'' in \emph{IEEE/RSJ Intl. Conf. on Intelligent Robots and Systems
  (IROS)}.\hskip 1em plus 0.5em minus 0.4em\relax IEEE, 2012, pp. 1337--1342.

\bibitem[Lemaignan et~al.(2010)Lemaignan, Ros, M{\"o}senlechner, Alami, and
  Beetz]{Lemaignan10iros-ORO}
S.~Lemaignan, R.~Ros, L.~M{\"o}senlechner, R.~Alami, and M.~Beetz, ``{ORO}, a
  knowledge management platform for cognitive architectures in robotics,'' in
  \emph{IEEE/RSJ Intl. Conf. on Intelligent Robots and Systems (IROS)}.\hskip
  1em plus 0.5em minus 0.4em\relax IEEE, 2010, pp. 3548--3553.

\bibitem[Tenorth and Beetz(2013)]{Tenorth13ijrr-KnowRob}
M.~Tenorth and M.~Beetz, ``Knowrob: A knowledge processing infrastructure for
  cognition-enabled robots,'' \emph{Intl. J. of Robotics Research}, vol.~32,
  no.~5, pp. 566--590, 2013.

\bibitem[Beetz et~al.(2018)Beetz, Be{\ss}ler, Haidu, Pomarlan, Bozcuo{\u{g}}lu,
  and Bartels]{Beetz18icra-KnowRob2}
M.~Beetz, D.~Be{\ss}ler, A.~Haidu, M.~Pomarlan, A.~K. Bozcuo{\u{g}}lu, and
  G.~Bartels, ``{KnowRob 2.0}—a 2nd generation knowledge processing framework
  for cognition-enabled robotic agents,'' in \emph{IEEE Intl. Conf. on Robotics
  and Automation (ICRA)}.\hskip 1em plus 0.5em minus 0.4em\relax IEEE, 2018,
  pp. 512--519.

\bibitem[Diab et~al.(2019)Diab, Akbari, Ud~Din, and Rosell]{Diab19sensors-PMK}
M.~Diab, A.~Akbari, M.~Ud~Din, and J.~Rosell, ``{PMK}—a knowledge processing
  framework for autonomous robotics perception and manipulation,''
  \emph{Sensors}, vol.~19, no.~5, p. 1166, 2019.

\bibitem[Miller(1995)]{Miller95acm-WordNet}
\BIBentryALTinterwordspacing
G.~A. Miller, ``Wordnet: A lexical database for english,'' \emph{Commun. ACM},
  vol.~38, no.~11, p. 39–41, Nov. 1995. [Online]. Available:
  \url{https://doi.org/10.1145/219717.219748}
\BIBentrySTDinterwordspacing

\bibitem[Lenat(1995)]{Lenat95cacm-OpenCyc}
D.~B. Lenat, ``Cyc: A large-scale investment in knowledge infrastructure,''
  \emph{Communications of the ACM}, vol.~38, no.~11, pp. 33--38, 1995.

\bibitem[Niles and Pease(2001)]{Niles01fois-SUMO}
I.~Niles and A.~Pease, ``Towards a standard upper ontology,'' in
  \emph{Proceedings of the International Conference on Formal Ontology in
  Information Systems}, 2001, pp. 2--9.

\bibitem[Auer et~al.(2007)Auer, Bizer, Kobilarov, Lehmann, Cyganiak, and
  Ives]{Auer07sw-DBpedia}
S.~Auer, C.~Bizer, G.~Kobilarov, J.~Lehmann, R.~Cyganiak, and Z.~Ives,
  ``{DBpedia}: A nucleus for a web of open data,'' in \emph{Semantic
  Web}.\hskip 1em plus 0.5em minus 0.4em\relax Springer, 2007, pp. 722--735.

\bibitem[Suchanek et~al.(2008)Suchanek, Kasneci, and Weikum]{Suchanek08sw-YAGO}
F.~M. Suchanek, G.~Kasneci, and G.~Weikum, ``{YAGO}: A large ontology from
  wikipedia and {WordNet},'' \emph{Journal of Web Semantics}, vol.~6, no.~3,
  pp. 203--217, 2008.

\bibitem[Bollacker et~al.(2008)Bollacker, Evans, Paritosh, Sturge, and
  Taylor]{Bollacker08icmd-Freebase}
K.~Bollacker, C.~Evans, P.~Paritosh, T.~Sturge, and J.~Taylor, ``Freebase: a
  collaboratively created graph database for structuring human knowledge,'' in
  \emph{Proceedings of the ACM SIGMOD International Conference on Management of
  Data}, 2008, pp. 1247--1250.

\bibitem[Carlson et~al.(2010)Carlson, Betteridge, Kisiel, Settles, Hruschka,
  and Mitchell]{Carlson10aaai-NELL}
A.~Carlson, J.~Betteridge, B.~Kisiel, B.~Settles, E.~Hruschka, and T.~Mitchell,
  ``Toward an architecture for never-ending language learning,'' in
  \emph{Proceedings of the AAAI Conference on Artificial Intelligence},
  vol.~24, no.~1, 2010, pp. 1306--1313.

\bibitem[Vrande\v{c}i\'{c} and Kr\"{o}tzsch(2014)]{Vrandecic14-Wikidata}
\BIBentryALTinterwordspacing
D.~Vrande\v{c}i\'{c} and M.~Kr\"{o}tzsch, ``Wikidata: A free collaborative
  knowledgebase,'' \emph{Commun. ACM}, vol.~57, no.~10, p. 78–85, sep 2014.
  [Online]. Available: \url{https://doi.org/10.1145/2629489}
\BIBentrySTDinterwordspacing

\bibitem[Speer et~al.(2017)Speer, Chin, and Havasi]{Speer17aaai-ConceptNet}
R.~Speer, J.~Chin, and C.~Havasi, ``{ConceptNet 5.5:} an open multilingual
  graph of general knowledge,'' in \emph{Proceedings of the Thirty-First AAAI
  Conference on Artificial Intelligence}, ser. AAAI’17.\hskip 1em plus 0.5em
  minus 0.4em\relax AAAI Press, 2017, p. 4444–4451.

\bibitem[Zareian et~al.(2020)Zareian, Karaman, and
  Chang]{Zareian20eccv-BridgingKnowledgeSGG}
A.~Zareian, S.~Karaman, and S.-F. Chang, ``Bridging knowledge graphs to
  generate scene graphs,'' in \emph{European Conf. on Computer Vision
  (ECCV)}.\hskip 1em plus 0.5em minus 0.4em\relax Springer, 2020, pp. 606--623.

\bibitem[Guo et~al.(2021)Guo, Gao, Wang, Hu, Xu, Lu, Shen, and
  Song]{Guo21iccv-GeneralToSpecificSGG}
Y.~Guo, L.~Gao, X.~Wang, Y.~Hu, X.~Xu, X.~Lu, H.~T. Shen, and J.~Song, ``From
  general to specific: Informative scene graph generation via balance
  adjustment,'' in \emph{Intl. Conf. on Computer Vision (ICCV)}, 2021, pp.
  16\,383--16\,392.

\bibitem[Amodeo et~al.(2022)Amodeo, Caballero, D{\'\i}az-Rodr{\'\i}guez, and
  Merino]{Amodeo22access-OGSGG}
F.~Amodeo, F.~Caballero, N.~D{\'\i}az-Rodr{\'\i}guez, and L.~Merino, ``Og-sgg:
  Ontology-guided scene graph generation—a case study in transfer learning
  for telepresence robotics,'' \emph{IEEE Access}, vol.~10, pp.
  132\,564--132\,583, 2022.

\bibitem[Chen et~al.(2023)Chen, Rezayi, and
  Li]{Chen23wacv-UnbiasingSGGOntology}
Z.~Chen, S.~Rezayi, and S.~Li, ``More knowledge, less bias: Unbiasing scene
  graph generation with explicit ontological adjustment,'' in \emph{Proceedings
  of the IEEE/CVF Winter Conference on Applications of Computer Vision}, 2023,
  pp. 4023--4032.

\bibitem[Movshovitz-Attias et~al.(2015)Movshovitz-Attias, Yu, Stumpe, Shet,
  Arnoud, and Yatziv]{Movshovitz15cvpr-OntoStoreClass}
Y.~Movshovitz-Attias, Q.~Yu, M.~C. Stumpe, V.~Shet, S.~Arnoud, and L.~Yatziv,
  ``Ontological supervision for fine grained classification of street view
  storefronts,'' in \emph{IEEE Conf. on Computer Vision and Pattern Recognition
  (CVPR)}, 2015, pp. 1693--1702.

\bibitem[Porello et~al.(2015)Porello, Cristani, and
  Ferrario]{Porello15nci-IntegratingOntologies}
D.~Porello, M.~Cristani, and R.~Ferrario, ``Integrating {{Ontologies}} and
  {{Computer Vision}} for {{Classification}} of {{Objects}} in {{Images}},'' in
  \emph{Workshop on Neural Cognitive Integration}, 2015, p.~15.

\bibitem[Aditya et~al.(2018)Aditya, Yang, Baral, Aloimonos, and
  Ferm{\"u}ller]{Aditya18cviu-SceneDescriptionGraphReasoning}
S.~Aditya, Y.~Yang, C.~Baral, Y.~Aloimonos, and C.~Ferm{\"u}ller, ``Image
  understanding using vision and reasoning through scene description graph,''
  \emph{Computer Vision and Image Understanding}, vol. 173, pp. 33--45, 2018.

\bibitem[Gard{\`e}res et~al.(2020)Gard{\`e}res, Ziaeefard, Abeloos, and
  Lecue]{Garderes20emnlp-ConceptBert}
F.~Gard{\`e}res, M.~Ziaeefard, B.~Abeloos, and F.~Lecue, ``{ConceptBert}:
  Concept-aware representation for visual question answering,'' in
  \emph{Conference on Empirical Methods in Natural Language Processing}, 2020,
  pp. 489--498.

\bibitem[Marino et~al.(2021)Marino, Chen, Parikh, Gupta, and
  Rohrbach]{Marino21cvpr-KRISP}
K.~Marino, X.~Chen, D.~Parikh, A.~Gupta, and M.~Rohrbach, ``{KRISP}:
  Integrating implicit and symbolic knowledge for open-domain knowledge-based
  {VQA},'' in \emph{IEEE Conf. on Computer Vision and Pattern Recognition
  (CVPR)}, 2021, pp. 14\,111--14\,121.

\bibitem[Zheng et~al.(2021)Zheng, Yin, Chen, Ma, Liu, and
  Yang]{Zheng21pr-EmbeddingDesignVQA}
W.~Zheng, L.~Yin, X.~Chen, Z.~Ma, S.~Liu, and B.~Yang, ``Knowledge base graph
  embedding module design for visual question answering model,'' \emph{Pattern
  Recognition}, vol. 120, p. 108153, 2021.

\bibitem[Ding et~al.(2022)Ding, Yu, Liu, Hu, Cui, and Wu]{Ding22cvpr-MuKEA}
Y.~Ding, J.~Yu, B.~Liu, Y.~Hu, M.~Cui, and Q.~Wu, ``{MuKEA}: Multimodal
  knowledge extraction and accumulation for knowledge-based visual question
  answering,'' in \emph{IEEE Conf. on Computer Vision and Pattern Recognition
  (CVPR)}, 2022, pp. 5089--5098.

\bibitem[Chen et~al.(2020)Chen, Tan, Kuntz, Bansal, and
  Alterovitz]{Chen20icra-IncompleteLangInstr}
H.~Chen, H.~Tan, A.~Kuntz, M.~Bansal, and R.~Alterovitz, ``Enabling robots to
  understand incomplete natural language instructions using commonsense
  reasoning,'' in \emph{IEEE Intl. Conf. on Robotics and Automation
  (ICRA)}.\hskip 1em plus 0.5em minus 0.4em\relax IEEE, 2020, pp. 1963--1969.

\bibitem[Daruna et~al.(2021)Daruna, Nair, Liu, and
  Chernova]{Daruna21icra-OneShotTask}
A.~Daruna, L.~Nair, W.~Liu, and S.~Chernova, ``Towards robust one-shot task
  execution using knowledge graph embeddings,'' in \emph{IEEE Intl. Conf. on
  Robotics and Automation (ICRA)}.\hskip 1em plus 0.5em minus 0.4em\relax IEEE,
  2021, pp. 11\,118--11\,124.

\bibitem[Tuli et~al.(2022)Tuli, Bansal, Paul, et~al.]{Tuli22jair-ToolTango}
S.~Tuli, R.~Bansal, R.~Paul \emph{et~al.}, ``{ToolTango}: Common sense
  generalization in predicting sequential tool interactions for robot plan
  synthesis,'' \emph{The Journal of Artificial Intelligence Research}, vol.~75,
  pp. 1595--1631, 2022.

\bibitem[Hao et~al.(2019)Hao, Chen, Yu, Sun, and Wang]{Hao19kdd-RepLearnOnto}
J.~Hao, M.~Chen, W.~Yu, Y.~Sun, and W.~Wang, ``Universal representation
  learning of knowledge bases by jointly embedding instances and ontological
  concepts,'' in \emph{ACM SIGKDD International Conference on Knowledge
  Discovery and Data Mining}, 2019, pp. 1709--1719.

\bibitem[Kwak et~al.(2022)Kwak, Lee, Whang, and Jo]{Kwak22ral-GraspKnowGraph}
J.~H. Kwak, J.~Lee, J.~J. Whang, and S.~Jo, ``Semantic grasping via a knowledge
  graph of robotic manipulation: A graph representation learning approach,''
  \emph{{IEEE} Robotics and Automation Letters}, vol.~7, no.~4, pp. 9397--9404,
  2022.

\bibitem[Geman et~al.(2002)Geman, Potter, and
  Chi]{Geman02qam-CompositionSystems}
S.~Geman, D.~F. Potter, and Z.~Chi, ``Composition {{Systems}},''
  \emph{Quarterly of Applied Mathematics}, vol.~60, no.~4, pp. 707--736, 2002.

\bibitem[Lake et~al.(2017)Lake, Ullman, Tenenbaum, and
  Gershman]{Lake17bbs-Buildingmachines}
B.~M. Lake, T.~D. Ullman, J.~B. Tenenbaum, and S.~J. Gershman, ``Building
  machines that learn and think like people,'' \emph{Behavioral and Brain
  Sciences}, vol.~40, 2017.

\bibitem[Zhu et~al.(2011)Zhu, Chen, and
  Yuille]{Zhu11jmiv-RecursiveCompositional}
L.~L. Zhu, Y.~Chen, and A.~Yuille, ``Recursive {{Compositional Models}} for
  {{Vision}}: {{Description}} and {{Review}} of {{Recent Work}},''
  \emph{Journal of Mathematical Imaging and Vision}, vol.~41, no.~1, pp.
  122--146, Sep. 2011.

\bibitem[Zhu and Mumford(2006)]{Zhu06book-StochasticGrammar}
S.-C. Zhu and D.~Mumford, ``A {{Stochastic Grammar}} of {{Images}},''
  \emph{Foundations and Trends in Computer Graphics and Vision}, vol.~2, no.~4,
  pp. 259--362, 2006.

\bibitem[Zhu and Huang(2021)]{Zhu21book-ComputerVision}
S.-C. Zhu and S.~Huang, \emph{Computer {{Vision}}: {{Stochastic Grammars}} for
  {{Parsing Objects}}, {{Scenes}}, and {{Events}}}.\hskip 1em plus 0.5em minus
  0.4em\relax Springer, 2021.

\bibitem[Izatt and Tedrake(2020)]{Izatt20icra-sceneGrammarSim}
G.~Izatt and R.~Tedrake, ``Generative modeling of environments with scene
  grammars and variational inference,'' in \emph{2020 IEEE International
  Conference on Robotics and Automation (ICRA)}, 2020, pp. 6891--6897.

\bibitem[Qi et~al.(2018)Qi, Zhu, Huang, Jiang, and
  Zhu]{Qi18cvpr-HumanCentricIndoor}
S.~Qi, Y.~Zhu, S.~Huang, C.~Jiang, and S.-C. Zhu, ``Human-{{Centric Indoor
  Scene Synthesis Using Stochastic Grammar}},'' in \emph{IEEE Conf. on Computer
  Vision and Pattern Recognition (CVPR)}, Jun. 2018, pp. 5899--5908.

\bibitem[Chua(2018)]{Chua18th-ProbabilisticScene}
J.~Chua, ``Probabilistic {{Scene Grammars}}: {{A General-Purpose Framework For
  Scene Understanding}},'' \emph{Brown University Thesis}, pp. 1--146, 2018.

\bibitem[Wang et~al.(2022)Wang, Zhou, Qi, Shen, and
  Zhu]{Wang22pami-HierarchicalHuman}
W.~Wang, T.~Zhou, S.~Qi, J.~Shen, and S.-C. Zhu, ``Hierarchical {{Human
  Semantic Parsing With Comprehensive Part-Relation Modeling}},'' \emph{{IEEE}
  Trans. Pattern Anal. Machine Intell.}, vol.~44, no.~7, pp. 3508--3522, Jul.
  2022.

\bibitem[Niemeyer and Geiger(2021)]{Niemeyer21cvpr-GIRAFFERepresenting}
M.~Niemeyer and A.~Geiger, ``{{GIRAFFE}}: {{Representing Scenes}} as
  {{Compositional Generative Neural Feature Fields}},'' in \emph{IEEE Conf. on
  Computer Vision and Pattern Recognition (CVPR)}, Jun. 2021, pp.
  11\,448--11\,459.

\bibitem[Mo et~al.(2020)Mo, Guerrero, Yi, Su, Wonka, Mitra, and
  Guibas]{Mo20cvpr-StructEditLearning}
K.~Mo, P.~Guerrero, L.~Yi, H.~Su, P.~Wonka, N.~J. Mitra, and L.~J. Guibas,
  ``{{StructEdit}}: {{Learning Structural Shape Variations}},'' in \emph{IEEE
  Conf. on Computer Vision and Pattern Recognition (CVPR)}, 2020, pp.
  8859--8868.

\bibitem[Ichien et~al.(2021)Ichien, Liu, Fu, Holyoak, Yuille, and
  Lu]{Ichien21css-VisualAnalogy}
N.~Ichien, Q.~Liu, S.~Fu, K.~Holyoak, A.~Yuille, and H.~Lu, ``Visual
  {{Analogy}}: {{Deep Learning Versus Compositional Models}},'' \emph{Annual
  Meeting of the Cognitive Science Society}, vol.~43, 2021.

\bibitem[Yuan et~al.(2023)Yuan, Chen, Li, and
  Xue]{Yuan23arxiv-CompositionalScene}
J.~Yuan, T.~Chen, B.~Li, and X.~Xue, ``Compositional {{Scene Representation
  Learning}} via {{Reconstruction}}: {{A Survey}},'' Feb. 2023.

\bibitem[Fodor and Pylyshyn(1988)]{Fodor88cognition-Connectionismcognitive}
J.~A. Fodor and Z.~W. Pylyshyn, ``Connectionism and cognitive architecture:
  {{A}} critical analysis,'' \emph{Cognition}, vol.~28, no.~1, pp. 3--71, Mar.
  1988.

\bibitem[Mhaskar(1996)]{Mhaskar1996j-NeurComput-NNApproxAnalyticFun}
H.~N. Mhaskar, ``Neural networks for optimal approximation of smooth and
  analytic functions,'' \emph{Neural Computation}, vol.~8, no.~1, pp. 164--177,
  1996.

\bibitem[Mhaskar and Poggio(2016)]{Mhaskar2016j-AA-DeepVsShallowNN}
H.~N. Mhaskar and T.~Poggio, ``Deep vs. shallow networks: An approximation
  theory perspective,'' \emph{Analysis and Applications}, vol.~14, no.~06, pp.
  829--848, 2016.

\bibitem[Poggio et~al.(2017)Poggio, Mhaskar, Rosasco, Miranda, and
  Liao]{Poggio2017j-IJAC-WhyWhenDNN}
T.~Poggio, H.~Mhaskar, L.~Rosasco, B.~Miranda, and Q.~Liao, ``Why and when can
  deep - but not shallow - networks avoid the curse of dimensionality: A
  review,'' \emph{Int. J. Autom. Comput.}, vol.~14, pp. 503--519, Mar. 2017.

\bibitem[Webb et~al.(2022)Webb, Holyoak, and
  Lu]{Webb22arxiv-EmergentAnalogical}
T.~Webb, K.~J. Holyoak, and H.~Lu, ``Emergent {{Analogical Reasoning}} in
  {{Large Language Models}},'' Dec. 2022.

\bibitem[Xie et~al.(2022)Xie, Morcos, Zhu, and
  Vedantam]{Xie22icml-COATMeasuring}
S.~Xie, A.~S. Morcos, S.-C. Zhu, and R.~Vedantam, ``{{COAT}}: {{Measuring
  Object Compositionality}} in {{Emergent Representations}},'' in \emph{Intl.
  Conf. on Machine Learning (ICML)}, Jun. 2022, pp. 24\,388--24\,413.

\bibitem[Krishna et~al.(2016)Krishna, Zhu, Groth, Johnson, Hata, Kravitz, Chen,
  Kalantidis, Li, Shamma, Bernstein, and Fei-Fei]{Krishna16arxiv-VisualGenome}
R.~Krishna, Y.~Zhu, O.~Groth, J.~Johnson, K.~Hata, J.~Kravitz, S.~Chen,
  Y.~Kalantidis, L.~Li, D.~Shamma, M.~Bernstein, and L.~Fei-Fei, ``{Visual
  Genome}: Connecting language and vision using crowdsourced dense image
  annotations,'' \emph{arXiv preprints arXiv:1602.07332}, 2016.

\bibitem[Johnson et~al.(2015)Johnson, Krishna, Stark, Li, Shamma, Bernstein,
  and Fei-Fei]{Johnson15cvpr}
J.~Johnson, R.~Krishna, M.~Stark, L.~Li, D.~Shamma, M.~Bernstein, and
  L.~Fei-Fei, ``Image retrieval using scene graphs,'' in \emph{IEEE Conf. on
  Computer Vision and Pattern Recognition (CVPR)}, 2015, pp. 3668--3678.

\bibitem[Karpathy and Fei-Fei(2015)]{Karpathy15cvpr-caption}
A.~Karpathy and L.~Fei-Fei, ``Deep visual-semantic alignments for generating
  image descriptions,'' in \emph{IEEE Conf. on Computer Vision and Pattern
  Recognition (CVPR)}, 2015.

\bibitem[Anderson et~al.(2016)Anderson, Fernando, Johnson, and
  Gould]{Anderson16eccv-sceneGraphDescription}
P.~Anderson, B.~Fernando, M.~Johnson, and S.~Gould, ``Spice: Semantic
  propositional image caption evaluation,'' in \emph{European Conf. on Computer
  Vision (ECCV)}, 2016, pp. 382--398.

\bibitem[Ren et~al.(2015)Ren, Kiros, and Zemel]{Ren15corr-QA}
M.~Ren, R.~Kiros, and R.~S. Zemel, ``Image question answering: {A} visual
  semantic embedding model and a new dataset,'' \emph{arXiv preprints
  arXiv:1505.02074}, 2015.

\bibitem[Lu et~al.(2016)Lu, Krishna, Bernstein, and
  Li]{Lu16eccv-visualRelations}
C.~Lu, R.~Krishna, M.~Bernstein, and F.-F. Li, ``Visual relationship detection
  with language priors,'' in \emph{European Conference on Computer Vision},
  2016, pp. 852--869.

\bibitem[Xu et~al.(2017)Xu, Zhu, Choy, and Fei-Fei]{Xu17cvpr-sceneGraph}
D.~Xu, Y.~Zhu, C.~B. Choy, and L.~Fei-Fei, ``Scene graph generation by
  iterative message passing,'' in \emph{IEEE Conf. on Computer Vision and
  Pattern Recognition (CVPR)}, 2017, pp. 3097--3106.

\bibitem[Li et~al.(2017)Li, Ouyang, Zhou, Wang, and
  Wang]{Li17iccv-sceneGraphGeneration}
Y.~Li, W.~Ouyang, B.~Zhou, K.~Wang, and X.~Wang, ``Scene graph generation from
  objects, phrases and region captions,'' in \emph{International Conference on
  Computer Vision (ICCV)}, 2017.

\bibitem[Yang et~al.(2018)Yang, Lu, Lee, Batra, and
  Parikh]{Yang18eccv-sceneGraph}
J.~Yang, J.~Lu, S.~Lee, D.~Batra, and D.~Parikh, ``Graph {R-CNN} for scene
  graph generation,'' in \emph{European Conf. on Computer Vision (ECCV)}, 2018.

\bibitem[Zellers et~al.(2017)Zellers, Yatskar, Thomson, and
  Choi]{Zellers18cvpr-sceneGraph}
R.~Zellers, M.~Yatskar, S.~Thomson, and Y.~Choi, ``Neural motifs: Scene graph
  parsing with global context,'' in \emph{IEEE Conf. on Computer Vision and
  Pattern Recognition (CVPR)}, 2017.

\bibitem[Chang et~al.(2023)Chang, Ren, Xu, Li, Chen, and
  Hauptmann]{Chang23pami-sceneGraphSurvey}
X.~Chang, P.~Ren, P.~Xu, Z.~Li, X.~Chen, and A.~Hauptmann, ``A comprehensive
  survey of scene graphs: Generation and application,'' \emph{{IEEE} Trans.
  Pattern Anal. Machine Intell.}, vol.~45, no.~1, pp. 1--26, 2023.

\bibitem[Salas-Moreno et~al.(2013)Salas-Moreno, Newcombe, Strasdat, Kelly, and
  Davison]{Salas-Moreno13cvpr}
R.~F. Salas-Moreno, R.~A. Newcombe, H.~Strasdat, P.~H.~J. Kelly, and A.~J.
  Davison, ``{SLAM++}: Simultaneous localisation and mapping at the level of
  objects,'' in \emph{IEEE Conf. on Computer Vision and Pattern Recognition
  (CVPR)}, 2013.

\bibitem[Dong et~al.(2017)Dong, Fei, and Soatto]{Dong17cvpr-XVIO}
J.~Dong, X.~Fei, and S.~Soatto, ``{Visual-Inertial-Semantic} scene
  representation for {3D} object detection,'' in \emph{IEEE Conf. on Computer
  Vision and Pattern Recognition (CVPR)}, 2017.

\bibitem[Shan et~al.(2020)Shan, Feng, and Atanasov]{Mo19iros-orcVIO}
M.~Shan, Q.~Feng, and N.~Atanasov, ``Object residual constrained
  visual-inertial odometry,'' in \emph{IEEE/RSJ Intl. Conf. on Intelligent
  Robots and Systems (IROS)}, 2020, pp. 5104--5111.

\bibitem[Nicholson et~al.(2018)Nicholson, Milford, and
  S{\"u}nderhauf]{Nicholson18ral-quadricSLAM}
L.~Nicholson, M.~Milford, and N.~S{\"u}nderhauf, ``{QuadricSLAM}: Dual quadrics
  from object detections as landmarks in object-oriented {SLAM},'' \emph{{IEEE}
  Robotics and Automation Letters}, vol.~4, pp. 1--8, 2018.

\bibitem[Bowman et~al.(2017)Bowman, Atanasov, Daniilidis, and
  Pappas]{Bowman17icra}
S.~Bowman, N.~Atanasov, K.~Daniilidis, and G.~Pappas, ``Probabilistic data
  association for semantic {SLAM},'' in \emph{IEEE Intl. Conf. on Robotics and
  Automation (ICRA)}, 2017, pp. 1722--1729.

\bibitem[Ok et~al.(2021)Ok, Liu, and Roy]{Ok21icra-home}
K.~Ok, K.~Liu, and N.~Roy, ``Hierarchical object map estimation for efficient
  and robust navigation,'' in \emph{2021 IEEE International Conference on
  Robotics and Automation (ICRA)}, 2021, pp. 1132--1139.

\bibitem[McCormac et~al.(2017)McCormac, Handa, Davison, and
  Leutenegger]{McCormac17icra-semanticFusion}
J.~McCormac, A.~Handa, A.~J. Davison, and S.~Leutenegger, ``{SemanticFusion:
  Dense 3D Semantic Mapping with Convolutional Neural Networks},'' in
  \emph{IEEE Intl. Conf. on Robotics and Automation (ICRA)}, 2017.

\bibitem[Narita et~al.(2019)Narita, Seno, Ishikawa, and
  Kaji]{Narita19iros-metricSemantic}
G.~Narita, T.~Seno, T.~Ishikawa, and Y.~Kaji, ``Panopticfusion: Online
  volumetric semantic mapping at the level of stuff and things,'' in
  \emph{IEEE/RSJ Intl. Conf. on Intelligent Robots and Systems (IROS)}, 2019.

\bibitem[Behley et~al.(2019)Behley, Garbade, Milioto, Quenzel, Behnke,
  Stachniss, and Gall]{Behley19iccv-semanticKitti}
J.~Behley, M.~Garbade, A.~Milioto, J.~Quenzel, S.~Behnke, C.~Stachniss, and
  J.~Gall, ``{SemanticKITTI: A Dataset for Semantic Scene Understanding of
  LiDAR Sequences},'' in \emph{Intl. Conf. on Computer Vision (ICCV)}, 2019.

\bibitem[{Tateno} et~al.(2015){Tateno}, {Tombari}, and
  {Navab}]{Tateno15iros-metricSemantic}
K.~{Tateno}, F.~{Tombari}, and N.~{Navab}, ``Real-time and scalable incremental
  segmentation on dense {SLAM},'' in \emph{IEEE/RSJ Intl. Conf. on Intelligent
  Robots and Systems (IROS)}, 2015, pp. 4465--4472.

\bibitem[Lianos et~al.(2018)Lianos, Sch{\"o}nberger, Pollefeys, and
  Sattler]{Lianos18eccv-VSO}
K.~Lianos, J.~Sch{\"o}nberger, M.~Pollefeys, and T.~Sattler, ``Vso: Visual
  semantic odometry,'' in \emph{European Conf. on Computer Vision (ECCV)},
  2018, pp. 246--263.

\bibitem[Rosu et~al.(2019)Rosu, Quenzel, and Behnke]{Rosu19ijcv-semanticMesh}
R.~Rosu, J.~Quenzel, and S.~Behnke, ``Semi-supervised semantic mapping through
  label propagation with semantic texture meshes,'' \emph{Intl. J. of Computer
  Vision}, 06 2019.

\bibitem[{Li} et~al.(2016){Li}, {Xiao}, {Tateno}, {Tombari}, {Navab}, and
  {Hager}]{Li16iros-metricSemantic}
C.~{Li}, H.~{Xiao}, K.~{Tateno}, F.~{Tombari}, N.~{Navab}, and G.~D. {Hager},
  ``Incremental scene understanding on dense {SLAM},'' in \emph{IEEE/RSJ Intl.
  Conf. on Intelligent Robots and Systems (IROS)}, 2016, pp. 574--581.

\bibitem[McCormac et~al.(2018)McCormac, Clark, Bloesch, Davison, and
  Leutenegger]{McCormac183dv-fusion++}
J.~McCormac, R.~Clark, M.~Bloesch, A.~Davison, and S.~Leutenegger, ``Fusion++:
  Volumetric object-level {SLAM},'' in \emph{Intl. Conf. on 3D Vision (3DV)},
  2018, pp. 32--41.

\bibitem[Xu et~al.(2019{\natexlab{b}})Xu, Li, Tzoumanikas, Bloesch, Davison,
  and Leutenegger]{Xu19icra-midFusion}
B.~Xu, W.~Li, D.~Tzoumanikas, M.~Bloesch, A.~Davison, and S.~Leutenegger,
  ``{MID-Fusion}: Octree-based object-level multi-instance dynamic {SLAM},''
  2019, pp. 5231--5237.

\bibitem[Schmid et~al.(2021)Schmid, Delmerico, Sch{\"o}nberger, Nieto,
  Pollefeys, Siegwart, and Cadena]{Schmid21arxiv-panoptic}
L.~Schmid, J.~Delmerico, J.~Sch{\"o}nberger, J.~Nieto, M.~Pollefeys,
  R.~Siegwart, and C.~Cadena, ``Panoptic multi-tsdfs: a flexible representation
  for online multi-resolution volumetric mapping and long-term dynamic scene
  consistency,'' \emph{arXiv preprint arXiv:2109.10165}, 2021.

\bibitem[Liu et~al.(2018)Liu, Wu, and Furukawa]{Liu18eccv-floorNet}
C.~Liu, J.~Wu, and Y.~Furukawa, ``{FloorNet}: A unified framework for floorplan
  reconstruction from 3d scans,'' in \emph{Proceedings of the European
  Conference on Computer Vision (ECCV)}, September 2018.

\bibitem[Friedman et~al.(2007)Friedman, Pasula, and
  Fox]{Friedman07ijcai-voronoiRF}
S.~Friedman, H.~Pasula, and D.~Fox, ``Voronoi random fields: Extracting the
  topological structure of indoor environments via place labeling,'' in
  \emph{Intl. Joint Conf. on AI (IJCAI)}.\hskip 1em plus 0.5em minus
  0.4em\relax San Francisco, CA, USA: Morgan Kaufmann Publishers Inc., 2007, p.
  2109–2114.

\bibitem[Stekovic et~al.(2021)Stekovic, Rad, Fraundorfer, and
  Lepetit]{Stekovic21arxiv-monteFloor}
S.~Stekovic, M.~Rad, F.~Fraundorfer, and V.~Lepetit, ``{MonteFloor}: Extending
  {MCTS} for reconstructing accurate large-scale floor plans,'' 2021.

\bibitem[Furukawa et~al.(2009)Furukawa, Curless, Seitz, and
  Szeliski]{Furukawa09iccv}
Y.~Furukawa, B.~Curless, S.~M. Seitz, and R.~Szeliski, ``Reconstructing
  building interiors from images,'' in \emph{Intl. Conf. on Computer Vision
  (ICCV)}, 2009.

\bibitem[{Lukierski} et~al.(2017){Lukierski}, {Leutenegger}, and
  {Davison}]{Lukierski17icra-floorPlan}
R.~{Lukierski}, S.~{Leutenegger}, and A.~J. {Davison}, ``Room layout estimation
  from rapid omnidirectional exploration,'' in \emph{IEEE Intl. Conf. on
  Robotics and Automation (ICRA)}, 2017, pp. 6315--6322.

\bibitem[Zheng et~al.(2020)Zheng, Zhang, Han, Xu, and
  Fang]{Zheng20pami-buildingFusion}
T.~Zheng, G.~Zhang, L.~Han, L.~Xu, and L.~Fang, ``Building fusion:
  Semantic-aware structural building-scale 3d reconstruction,'' \emph{IEEE
  Transactions on Pattern Analysis and Machine Intelligence}, pp. 1--1, 2020.

\bibitem[Kuipers(2000)]{Kuipers00ai}
B.~Kuipers, ``The {S}patial {S}emantic {H}ierarchy,'' \emph{Artificial
  {I}ntelligence}, vol. 119, pp. 191--233, 2000.

\bibitem[Kuipers(1978)]{Kuipers78cs}
------, ``Modeling spatial knowledge,'' \emph{Cognitive Science}, vol.~2, pp.
  129--153, 1978.

\bibitem[Chatila and Laumond(1985)]{Chatila85}
R.~Chatila and J.-P. Laumond, ``Position referencing and consistent world
  modeling for mobile robots,'' in \emph{IEEE Intl. Conf. on Robotics and
  Automation (ICRA)}, 1985, pp. 138--145.

\bibitem[Thrun(2003)]{Thrun02a}
S.~Thrun, ``Robotic mapping: a survey,'' in \emph{Exploring artificial
  intelligence in the new millennium}.\hskip 1em plus 0.5em minus 0.4em\relax
  Morgan Kaufmann, Inc., 2003, pp. 1--35.

\bibitem[Ruiz-Sarmiento et~al.(2017)Ruiz-Sarmiento, Galindo, and
  Gonzalez-Jimenez]{Ruiz-Sarmiento17kbs-multiversalMaps}
J.-R. Ruiz-Sarmiento, C.~Galindo, and J.~Gonzalez-Jimenez, ``Building
  multiversal semantic maps for mobile robot operation,'' \emph{Knowledge-Based
  Systems}, vol. 119, pp. 257--272, 2017.

\bibitem[Galindo et~al.(2005)Galindo, Saffiotti, Coradeschi, Buschka,
  Fern\'andez-Madrigal, and Gonz\'alez]{Galindo05iros-multiHierarchicalMaps}
C.~Galindo, A.~Saffiotti, S.~Coradeschi, P.~Buschka, J.~Fern\'andez-Madrigal,
  and J.~Gonz\'alez, ``Multi-hierarchical semantic maps for mobile robotics,''
  in \emph{IEEE/RSJ Intl. Conf. on Intelligent Robots and Systems (IROS)},
  2005, pp. 3492--3497.

\bibitem[Zender et~al.(2008)Zender, Mozos, Jensfelt, Kruijff, and
  Burgard]{Zender08ras-spatialRepresentations}
H.~Zender, O.~M. Mozos, P.~Jensfelt, G.-J. Kruijff, and W.~Burgard,
  ``Conceptual spatial representations for indoor mobile robots,''
  \emph{Robotics and Autonomous Systems}, vol.~56, no.~6, pp. 493--502, 2008,
  from Sensors to Human Spatial Concepts.

\bibitem[Choset and Nagatani(2001)]{Choset01tra}
H.~Choset and K.~Nagatani, ``Topological simultaneous localization and mapping
  ({SLAM}): toward exact localization without explicit localization,''
  \emph{{IEEE} Trans. Robot. Automat.}, vol.~17, no.~2, pp. 125 -- 137, April
  2001.

\bibitem[Beeson et~al.(2010)Beeson, Modayil, and
  Kuipers]{Beeson10ijrr-factoringSSH}
P.~Beeson, J.~Modayil, and B.~Kuipers, ``Factoring the mapping problem: Mobile
  robot map-building in the hybrid spatial semantic hierarchy,'' \emph{Intl. J.
  of Robotics Research}, vol.~29, no.~4, pp. 428--459, 2010.

\bibitem[Gay et~al.(2018)Gay, Stuart, and
  Del~Bue]{Gay18accv-VisualGraphsFromMotion}
P.~Gay, J.~Stuart, and A.~Del~Bue, ``Visual graphs from motion ({VGfM}): Scene
  understanding with object geometry reasoning,'' in \emph{Asian Conf. on
  Computer Vision (ACCV)}.\hskip 1em plus 0.5em minus 0.4em\relax Springer
  International Publishing, 2018, pp. 330--346.

\bibitem[Lowry et~al.(2016)Lowry, S{\"u}nderhauf, Newman, Leonard, Cox, Corke,
  and Milford]{Lowry16tro-surveyPlaceRecognition}
S.~Lowry, N.~S{\"u}nderhauf, P.~Newman, J.~Leonard, D.~Cox, P.~Corke, and
  M.~Milford, ``Visual place recognition: A survey,'' \emph{{IEEE} Trans.
  Robotics}, vol.~32, no.~1, pp. 1--19, 2016.

\bibitem[Schubert et~al.(2021)Schubert, Neubert, and
  Protzel]{Schubert21rss-fastICM}
S.~Schubert, P.~Neubert, and P.~Protzel, ``Fast and memory efficient graph
  optimization via {ICM} for visual place recognition,'' in \emph{Proc. of
  Robotics: Science and Systems (RSS)}, 2021.

\bibitem[Milford and Wyeth(2012)]{Milford12icra}
M.~Milford and G.~Wyeth, ``Seqslam: Visual route-based navigation for sunny
  summer days and stormy winter nights,'' in \emph{IEEE Intl. Conf. on Robotics
  and Automation (ICRA)}, may 2012, pp. 1643 --1649.

\bibitem[Garg and Milford(2021)]{Garg21ral-seqnet}
S.~Garg and M.~Milford, ``Seqnet: Learning descriptors for sequence-based
  hierarchical place recognition,'' \emph{IEEE Robotics and Automation
  Letters}, vol.~6, no.~3, pp. 4305--4312, 2021.

\bibitem[{Arandjelovic} et~al.(2016){Arandjelovic}, {Gronat}, {Torii},
  {Pajdla}, and {Sivic}]{Arandjelovic16cvpr-netvlad}
R.~{Arandjelovic}, P.~{Gronat}, A.~{Torii}, T.~{Pajdla}, and J.~{Sivic},
  ``{NetVLAD}: {CNN} architecture for weakly supervised place recognition,'' in
  \emph{IEEE Conf. on Computer Vision and Pattern Recognition (CVPR)}, 2016,
  pp. 5297--5307.

\bibitem[Gawel et~al.(2018)Gawel, Don, Siegwart, Nieto, and
  Cadena]{Gawel18ral-xview}
A.~Gawel, C.~D. Don, R.~Siegwart, J.~Nieto, and C.~Cadena, ``X-view:
  Graph-based semantic multi-view localization,'' \emph{IEEE Robotics and
  Automation Letters}, vol.~3, no.~3, pp. 1687--1694, 2018.

\bibitem[Liu et~al.(2019)Liu, Petillot, Lane, and
  Wang]{Liu19icra-globalLocalizationObjects}
Y.~Liu, Y.~Petillot, D.~Lane, and S.~Wang, ``Global {Localization} with
  {Object}-{Level} {Semantics} and {Topology},'' in \emph{2019 {International}
  {Conference} on {Robotics} and {Automation} ({ICRA})}, May 2019, pp.
  4909--4915, iSSN: 2577-087X.

\bibitem[Lin et~al.(2021)Lin, Wang, Xu, Zhao, and
  Chen]{Lin21ral-topologyAwareObjectLocalization}
S.~Lin, J.~Wang, M.~Xu, H.~Zhao, and Z.~Chen, ``Topology aware object-level
  semantic mapping towards more robust loop closure,'' \emph{IEEE Robotics and
  Automation Letters}, vol.~6, no.~4, pp. 7041--7048, 2021.

\bibitem[Qin et~al.(2021)Qin, Zhang, Liu, and Lv]{Qin21jvcir-semantic}
C.~Qin, Y.~Zhang, Y.~Liu, and G.~Lv, ``Semantic loop closure detection based on
  graph matching in multi-objects scenes,'' \emph{Journal of Visual
  Communication and Image Representation}, vol.~76, p. 103072, 2021.

\bibitem[St\"{u}ckler and Behnke(2014)]{Stuckler14jvcir}
J.~St\"{u}ckler and S.~Behnke, ``Multi-resolution surfel maps for efficient
  dense 3d modeling and tracking,'' \emph{J. Vis. Comun. Image Represent.},
  vol.~25, no.~1, pp. 137--147, 2014.

\bibitem[Whelan et~al.(2016)Whelan, Salas-Moreno, Glocker, Davison, and
  Leutenegger]{Whelan16ijrr-elasticFusion}
T.~Whelan, R.~Salas-Moreno, B.~Glocker, A.~J. Davison, and S.~Leutenegger,
  ``{ElasticFusion: Real-Time Dense SLAM and Light Source Estimation},'' 2016.

\bibitem[Dai et~al.(2017)Dai, Nie{\ss}ner, Zollh{\"o}fer, Izadi, and
  Theobalt]{Dai17tog-bundlefusion}
A.~Dai, M.~Nie{\ss}ner, M.~Zollh{\"o}fer, S.~Izadi, and C.~Theobalt,
  ``Bundlefusion: Real-time globally consistent 3d reconstruction using
  on-the-fly surface reintegration,'' \emph{ACM Transactions on Graphics
  (ToG)}, vol.~36, no.~4, p.~1, 2017.

\bibitem[{Reijgwart} et~al.(2020){Reijgwart}, {Millane}, {Oleynikova},
  {Siegwart}, {Cadena}, and {Nieto}]{Reijgwart20ral-voxgraph}
V.~{Reijgwart}, A.~{Millane}, H.~{Oleynikova}, R.~{Siegwart}, C.~{Cadena}, and
  J.~{Nieto}, ``Voxgraph: Globally consistent, volumetric mapping using signed
  distance function submaps,'' \emph{{IEEE} Robotics and Automation Letters},
  2020.

\bibitem[Whelan et~al.(2015)Whelan, Kaess, Johannsson, Fallon, Leonard, and
  McDonald]{Whelan15ijrr}
T.~Whelan, M.~Kaess, H.~Johannsson, M.~Fallon, J.~Leonard, and J.~McDonald,
  ``Real-time large-scale dense {RGB-D SLAM} with volumetric fusion,''
  \emph{Intl. J. of Robotics Research}, vol.~34, no. 4--5, pp. 598--626, 2015.

\bibitem[Czarnowski et~al.(2020)Czarnowski, Laidlow, Clark, and
  Davison]{Czarnowski20ral-deepFactors}
J.~Czarnowski, T.~Laidlow, R.~Clark, and A.~Davison, ``{DeepFactors}: Real-time
  probabilistic dense monocular {SLAM},'' \emph{{IEEE} Robotics and Automation
  Letters}, vol.~5, no.~2, pp. 721--728, 2020.

\bibitem[Sucar et~al.(2020)Sucar, Wada, and Davison]{Sucar203dv-nodeSLAM}
E.~Sucar, K.~Wada, and A.~Davison, ``{NodeSLAM}: Neural object descriptors for
  multi-view shape reconstruction,'' in \emph{2020 Intl. Conference on 3D
  Vision (3DV)}, 2020, pp. 949--958.

\bibitem[Rosinol et~al.(2022)Rosinol, Leonard, and
  Carlone]{Rosinol22arxiv-nerfSLAM}
A.~Rosinol, J.~Leonard, and L.~Carlone, ``{NeRF-SLAM}: Real-time dense
  monocular {SLAM} with neural radiance fields,'' \emph{arXiv preprint:
  2210.13641}, 2022, \linkToPdf{https://arxiv.org/pdf/2210.13641.pdf}.

\bibitem[Kong et~al.(2023)Kong, Liu, Taher, and Davison]{Kong23arxiv-vmap}
X.~Kong, S.~Liu, M.~Taher, and A.~Davison, ``{vMAP}: Vectorised object mapping
  for neural field {SLAM},'' \emph{ArXiv, preprint: 2302.01838}, 2023.

\bibitem[Ha and Song(2022)]{Ha22icrl-openWorld}
H.~Ha and S.~Song, ``Semantic abstraction: Open-world 3d scene understanding
  from 2d vision-language models,'' 2022.

\bibitem[Jatavallabhula et~al.(2023)Jatavallabhula, Kuwajerwala, Gu, Omama,
  Chen, Li, Iyer, Saryazdi, Keetha, Tewari, Tenenbaum, de~Melo, , Krishna,
  Paull, Shkurti, and Torralba]{Jatavallabhula23arxiv-conceptFusion}
K.~M. Jatavallabhula, A.~Kuwajerwala, Q.~Gu, M.~Omama, T.~Chen, S.~Li, G.~Iyer,
  S.~Saryazdi, N.~Keetha, A.~Tewari, J.~Tenenbaum, C.~M. de~Melo, , M.~Krishna,
  L.~Paull, F.~Shkurti, and A.~Torralba, ``{ConceptFusion}: Open-set multimodal
  {3D} mapping,'' \emph{arXiv: 2302.07241}, 2023.

\bibitem[Ravichandran et~al.(2022)Ravichandran, Peng, Hughes, Griffith, and
  Carlone]{Ravichandran22icra-RLwithSceneGraphs}
Z.~Ravichandran, L.~Peng, N.~Hughes, J.~Griffith, and L.~Carlone,
  ``Hierarchical representations and explicit memory: Learning effective
  navigation policies on {3D} scene graphs using graph neural networks,'' in
  \emph{IEEE Intl. Conf. on Robotics and Automation (ICRA)}, 2022,
  \linkToPdf{https://arxiv.org/pdf/2108.01176.pdf}.

\bibitem[Agia et~al.(2022)Agia, Jatavallabhula, Khodeir, Miksik, Vineet,
  Mukadam, Paull, and Shkurti]{Agia22corl-taskography}
C.~Agia, K.~M. Jatavallabhula, M.~Khodeir, O.~Miksik, V.~Vineet, M.~Mukadam,
  L.~Paull, and F.~Shkurti, ``Taskography: {{Evaluating}} robot task planning
  over large {{3D}} scene graphs,'' in \emph{Conference on Robot Learning
  (CoRL)}.\hskip 1em plus 0.5em minus 0.4em\relax {PMLR}, Jan. 2022, pp.
  46--58.

\bibitem[Chang et~al.(2022{\natexlab{b}})Chang, Ballotta, and
  Carlone]{Chang22arxiv-DLite}
Y.~Chang, L.~Ballotta, and L.~Carlone, ``{D-Lite}: Navigation-oriented
  compression of {3D} scene graphs under communication constraints,''
  \emph{arXiv preprint: 2209.06111}, 2022,
  \linkToPdf{https://arxiv.org/pdf/2209.06111.pdf},\linkToVideo{https://youtu.be/nKYXU5VC6A8}.

\bibitem[Becker and Geiger(1996)]{Becker1996c-UAI-FastApproxOptJT}
A.~Becker and D.~Geiger, ``A sufficiently fast algorithm for finding close to
  optimal junction trees,'' in \emph{Conf. on Uncertainty in Artificial
  Intelligence (UAI)}, Feb. 1996, pp. 81--89.

\bibitem[Thomas and Green(2009)]{Thomas2009j-JCGS-EnumerateJT}
A.~Thomas and P.~J. Green, ``Enumerating the junction trees of a decomposable
  graph,'' \emph{J. Comput Graph Stat.}, vol.~18, no.~4, pp. 930--940, Dec.
  2009.

\bibitem[Bodlaender and
  Koster(2011)]{Bodlaender2010j-IC-TreewidthComputationII}
H.~L. Bodlaender and A.~M. Koster, ``Treewidth computations {II}: Lower
  bounds,'' \emph{Information and Computation}, vol. 209, no.~7, pp. 1103 --
  1119, 2011.

\bibitem[Jensen and Jensen(1994)]{Jensen1994c-UAI-OptimalJT}
F.~V. Jensen and F.~Jensen, ``Optimal junction trees,'' in \emph{Conf. on
  Uncertainty in Artificial Intelligence (UAI)}, Jul. 1994, p. 360–366.

\bibitem[Bodlaender(1988)]{Bodlaender88alp-Dynamicprogramming}
H.~L. Bodlaender, ``Dynamic programming on graphs with bounded treewidth,'' in
  \emph{Automata, {{Languages}} and {{Programming}}}, 1988, pp. 105--118.

\end{thebibliography}

}

\appendix
\section{Appendix}

\subsection{Preliminaries on Tree Decomposition and Treewidth}\label{sec:treeDecomposition}

For a graph $\Graph$ with vertices $\calV(\calG)$ and edges $\calE(\calG)$, a \emph{\treeDecomposition} is a tuple $(\Tree, \bag)$ where $\Tree$ is a tree graph and $\bag = \{ B_{\tau} \}_{\tau \in \Nodes(\Tree)}$ is a family of \emph{bags}, where $B_{\tau} \subset \Nodes(\Graph)$ for every tree node $\tau \in \Nodes(\Tree)$, such that the tuple $(\Tree, \bag)$ satisfies the following two properties:

(1) \emph{Connectedness Property:} for every graph node $v \in \Nodes(\Graph)$,
the \subgraph of $\Tree$ induced by tree nodes $\tau$ whose bag contains node $v$, is connected, \ie
\begin{equation}
    \Tree_v \triangleq \Tree\left[\{ \tau \in \Nodes(\Tree)~|~v \in B_{\tau}\}\right]
\end{equation}
is a connected \subgraph of $\Tree$ for every $v \in \Nodes(\Graph)$.

(2) \emph{Covering Property:} for every edge $\{u, v\} \in \Edges(\Graph)$ there exists a node $\tau \in \Nodes(\Tree)$ such that $u, v \in B_{\tau}$.

The simplest \treeDecomposition of any graph $\Graph$ is a tree with a single node, whose bag contains all the nodes in $\Graph$. However, in practical applications,
it is desirable to obtain decompositions where the size of the largest bag is small.
This is captured by the notion of \emph{treewidth}.
The treewidth of a \treeDecomposition $(\Tree, \bag)$ is defined as the  size of the largest bag minus one:
\begin{equation}
\label{eq:def_treewidth}
\treewidth{(\Tree, \bag)} \triangleq \max_{\tau \in \Nodes(\Tree)} |B_{\tau}| - 1.
\end{equation}
The treewidth of a graph $\Graph$ is defined as the minimum treewidth that can be achieved among all \treeDecompositions of $\Graph$. While
finding a \treeDecomposition with minimum treewidth is NP-hard, many algorithms exist that generate \treeDecompositions with small (\ie close to the minimum attainable for that graph) treewidth~\citep{Becker1996c-UAI-FastApproxOptJT, Bodlaender2006c-Chapter-TreewidthComputation, Thomas2009j-JCGS-EnumerateJT, Bodlaender2010j-IC-TreewidthComputationI, Bodlaender2010j-IC-TreewidthComputationII}.

One of the most popular \treeDecompositions is the \emph{junction tree decomposition}, which was introduced in~\cite{Jensen1994c-UAI-OptimalJT}.
In it, the graph $\Graph$ is first triangulated. Triangulation is done by adding a chord between any two nodes in every cycle of length $4$ or more. This eliminates all the cycles of length $4$ or more in the graph $\Graph$ to produce a chordal graph $\Graph_{c}$.
The collection of bags $\bag = \{ B_{\tau} \}_{\tau}$ in the junction tree is chosen as the set of all maximal cliques in the chordal graph $\Graph_{c}$.
Then, an \emph{intersection graph} $\calI$ on $\bag$ is built, which has a node for every bag in $\bag$ and an edge between two bags $B_{\tau}$ and $B_{\mu}$ if they have a non-empty intersection, \ie $|B_{\tau}\cap B_{\mu}| \geq 1$. The weight of every link $\{\tau, \mu\}$ in the intersection graph $\calI$ is set to $|B_{\tau}\cap B_{\mu}|$. Finally, the desired junction tree is obtained by extracting a maximum-weight spanning tree on the weighted intersection graph $\calI$. It is know that this extracted tree $\Tree$, with the bag $\calB$, is a valid \treeDecomposition of $\Graph$ that satisfies the connectedness and covering property.

 \subsection{Proof of \cref{prop:tw-hierarchy}}\label{app:prop:tw-hierarchy}

The proposition provides an upper-bound on the treewidth of a hierarchical graph.
We can deduce this bound from \cref{algo:td-hierarchical} by noting that:
(i) all the bags in the \treeDecomposition of an $\layers$-layered hierarchical graph are nothing but the bags from the \treeDecomposition of either the layer-$\layers$ graph $\calG[\calV_\layers]$ or the graphs $\calG[C(v)]$, for $v \in \calV$;
and (ii) the size of the bags of all \treeDecompositions of $C(v)$ are increased by one in \cref{line:addNodeToBags} of \cref{algo:td-hierarchical}. Since the treewidth of a \treeDecompositions  is defined as the size of its maximum bag minus one, and since the treewidth of the graph is the minimum attainable by any possible \treeDecompositions, the bound in eq.~\eqref{eq:twbound} follows.

 \subsection{Proof of \cref{lem:tw-room}}\label{app:lem:tw-room}

Let $\calG_R$ be the room graph, as described in the statement of the theorem. We first note that the nodes in $\calG_R$ that have degree one are those rooms that connect to only one other room. If we were to remove all the degree one rooms, in $\calG_R$, and form another graph $\calG^{'}_R$, then \emph{(i)~the treewidth $\calG_R$ and $\calG^{'}_R$ would be the same}.
We can see this by noting that one can form a \treeDecomposition{} for $\calG_R$ by appending the \treeDecomposition{} of $\calG^{'}_R$ with additional bags of size $2$, representing the degree-one room node $v$ in $\calG_R \backslash \calG^{'}_R$ and its connecting room in $\calG^{'}_R$.
Now we note that \emph{(ii)~the treewidth of $\calG^{'}_R$ is always bounded by two under the theorem's assumptions}.
This is because all nodes in $\calG^{'}_R$ have degree less than or equal to $2$ (by assumption). And, hence, $\calG^{'}_R$ will never have a complete graph of four nodes as a \subgraph.
This is enough to conclude that the graph $\calG^{'}_R$ will have a treewidth less than or equal to $2$~\citep{Bodlaender88alp-Dynamicprogramming}.

\subsection{Object and Room Labels}\label{app:labels}

\myParagraph{Objects}
For any configuration of \name{} that uses a 2D semantic segmentation network as input, we use the following set of object labels, which is derived from the mpcat40 label space~\citep{Chang173dv-Matterport3D}: \semantic{chair}, \semantic{table}, \semantic{picture}, \semantic{cabinet}, \semantic{cushion}, \semantic{sofa}, \semantic{bed}, \semantic{curtain}, \semantic{chest of drawers}, \semantic{plant}, \semantic{sink}, \semantic{toilet}, \semantic{stool}, \semantic{towel}, \semantic{mirror}, \semantic{tv monitor}, \semantic{bathtub}, \semantic{fireplace}, \semantic{lighting}, \semantic{shelving}, \semantic{blinds}, \semantic{seating}, \semantic{board panel}, \semantic{furniture}, \semantic{appliances}, \semantic{clothes}, \semantic{objects}.
In addition, the following labels are detected, but treated as structure: \semantic{wall}, \semantic{floor}, \semantic{door}, \semantic{window}, \semantic{stairs}, \semantic{ceiling}, \semantic{column}, \semantic{counter}, \semantic{railing}.
We manually construct a mapping from the ADE20k label space~\citep{Zhou17cvpr-ade20k} that the 2D semantic segmentation networks provide output in to this subset of the mpcat40 label space~\citep{Chang173dv-Matterport3D}.
The ground truth segmentation for the uHumans2 dataset has a smaller label space.
For the apartment, we use the following set of object labels: \semantic{chair}, \semantic{couch}, \semantic{computer}, \semantic{lamp}, \semantic{bed}, \semantic{table}, \semantic{trashcan}.
For the office, we use the following set of object labels: \semantic{chair}, \semantic{bench}, \semantic{couch}, \semantic{objects}, \semantic{painting}, \semantic{plant}, \semantic{trashcan}.
Both scenes also have similar structure labels that are detected (\eg{} \semantic{wall}, \semantic{floor}, \semantic{ceiling}).

\myParagraph{Rooms}
We predict a subset of the room category labels available in MP3D~\citep{Chang173dv-Matterport3D}, as we group several room category synonyms together.
First, MP3D includes several synonyms for ``outdoor'' regions in the provided ground-truth room labels (\ie{} \semantic{porch}, \semantic{balcony}, \semantic{outdoor}).
As we only focus on indoor applications, we group these outdoor labels with the original \semantic{unknown} label from MP3D, and ignore this label for both computing the training loss and label accuracy.
Additionally, we group \semantic{toilet} and \semantic{bathroom} together into a single label (\semantic{bathroom}), and group \semantic{other room} and \semantic{junk} with \semantic{unknown}.
This gives us the following room label space: \semantic{bedroom}, \semantic{closet}, \semantic{dining room}, \semantic{lobby}, \semantic{family room}, \semantic{garage}, \semantic{hallway}, \semantic{library}, \semantic{laundry room}, \semantic{kitchen}, \semantic{living room}, \semantic{conference room}, \semantic{lounge}, \semantic{office}, \semantic{game room}, \semantic{stairwell}, \semantic{utility room}, \semantic{theater}, \semantic{gym}, \semantic{balcony}, \semantic{bar}, \semantic{classroom}, \semantic{dining booth}, \semantic{spa}, \semantic{bathroom}.
Rooms that do not meet the criteria for inference are given the label \semantic{unknown}.

\subsection{Graph Neural Network Training}\label{sec:gnn_training}
We provide more details on (i) the pre-processing performed on the 3D scene graphs in the MP3D dataset, (ii) the implementation of the examined architectures, (iii) the training and testing runtime on the Stanford3d and MP3D datasets, (iv) the approach we used for hyper-parameter tuning.

\myParagraph{MP3D Graph Processing}
Here we describe additional pre-processing steps that we take to better condition the room classification training on the MP3D dataset.
For the room classification ablation in~\cref{sec:exp_results},
we use the ground-truth room geometry in MP3D to re-segment the places layer, and add intra-layer edges between the re-segmented rooms as described in \cref{sec:rooms_clustering}.
This allows us to disentangle the room classification accuracy from potential errors in the room clustering.
We then extract object-room graphs by connecting room and object nodes through places nodes in between.
We discard all room nodes that both have no adjacent rooms and contain two or fewer objects.

\myParagraph{Implementation}
We implement all graph learning architectures in PyTorch Geometric~\citep{Fey19iclrwk-pytorchGeometric}, which supports all four types of message passing functions we test on, as well as a heterogeneous data structure for learning on heterogeneous graphs.
We use the ReLU activation function and dropout in between message passing operations unless otherwise specified.
For semi-supervised learning on homogeneous graphs, we send the final node hidden states ---after performing message passing--- through a linear layer to project embeddings to room and object label spaces separately, and then through softmax layers for the final prediction.
For the neural tree, we add an additional mean pooling layer to combine leaf node hidden states after all message passing iterations on the \Htree.
For semi-supervised learning on heterogeneous graphs and room classification on both graph types, we match the hidden dimension of the last message passing iteration to the number of labels and hence remove the final linear layers before softmax.

\setlength{\tabcolsep}{5pt}
\begin{table}[t!]
    \toggleformat{}{\footnotesize}
    \centering
    \begin{tabular}{l cc cc}
    	\toprule
    	\multirow{2}{*}{Message Passing}  & \multicolumn{2}{c}{Original} & \multicolumn{2}{c}{\Htree} \\
    	& Training & Testing & Training  & Testing \\
    	\midrule
    	GCN         & 0.087s & 0.021s & 0.135s & 0.035s \\
    	GraphSAGE  	& 0.083s & 0.020s & 0.127s & 0.033s	\\
    	GAT       	& 0.092s & 0.022s & 0.169s & 0.042s	\\
    	GIN  		& 0.088s & 0.020s & 0.132s & 0.033s	\\
    	\bottomrule
    \end{tabular}
    \caption{Stanford3d: training and testing runtimes.}\label{tab:timing_homogeneous_s3d}
    \togglevspace{0mm}{0mm}
\end{table}

\setlength{\tabcolsep}{4pt}
\begin{table}[t!]
    \toggleformat{}{\footnotesize}
	\centering
	\begin{tabular}{llcccc}
		\toprule
		\multicolumn{2}{c}{\multirow{2}{*}{Graph Types}}  & \multicolumn{2}{c}{Original} & \multicolumn{2}{c}{\Htree} \\
		& & Training & Testing & Training  & Testing \\
		\midrule
		\multirow{2}{*}{Homogeneous} & absolute pos. & 0.092s & 0.022s & 0.169s & 0.042s \\
		& relative pos. & 0.100s & 0.023s & 0.181s & 0.045s \\
		\midrule
		\multirow{2}{*}{Heterogeneous} & absolute pos. & 0.294s & 0.079s & 1.023s & 0.275s \\
		& relative pos. & 0.329s & 0.083s & 1.142s & 0.298s \\
		\bottomrule
	\end{tabular}
	\caption{Stanford3d, GAT: training and testing runtimes for different graph types.}\label{tab:timing_gat_s3d}
    \togglevspace{0mm}{0mm}
\end{table}

\setlength{\tabcolsep}{\toggleformat{3.8pt}{3.25pt}}
\begin{table}[t!]
    \toggleformat{}{\footnotesize}
	\centering
	\begin{tabular}{llcccc}
		\toprule
		\multicolumn{2}{c}{\multirow{2}{*}{Graph Types}}  & \multicolumn{2}{c}{Original} & \multicolumn{2}{c}{\Htree} \\
		& & Training & Testing & Training  & Testing \\
		\midrule
		\multirow{2}{*}{w/o word2vec} & w/o room edges & 1.915s & 0.216s & 6.106s & 0.704s \\
		& with room edges & 1.960s & 0.222s & 6.498s & 0.750s \\
		\midrule
		\multirow{2}{*}{with word2vec} & w/o room edges & 2.020s & 0.261s & 6.637s & 0.906s \\
		& with room edges & 2.075s & 0.238s & 7.065s & 0.875s \\
		\bottomrule
	\end{tabular}
	\caption{MP3D, GAT: training and testing runtimes for different graph types.}\label{tab:timing_gat_mp3d}
    \togglevspace{0mm}{0mm}
\end{table}

\myParagraph{Runtime}
Here we report the time required for pre-processing the datasets, as well as training and testing our models.
We save all object-room scene graphs with at least one room and one object node for both Stanford3d and MP3D datasets.
The total \Htree construction time is \SI{2.8}{\second} for 482 Stanford3d scene graphs with a maximum of \SI{0.087}{\second} for a single graph.
For the MP3D dataset
the total \Htree construction time is \SI{516}{\second} for a total of 6114 graphs, with a maximum of \SI{0.84}{\second} for a graph.
\Cref{tab:timing_homogeneous_s3d} reports the (per epoch) train and test time for the standard GNN architectures ---GCN, GraphSAGE, GAT, GIN--- and the corresponding neural trees on the homogeneous Stanford3d scene graphs with the training batch size set to 128.
\Cref{tab:timing_gat_s3d,tab:timing_gat_mp3d} report timing for the GAT architecture with various input scene graph types.
We observe that the neural tree takes 1.5 to 2 times as long compared to the corresponding standard GNN for homogeneous graph inputs.
This is expected as the neural tree often uses more message passing iterations, and the \Htrees{} are larger than the original graphs.
For heterogeneous graphs, the neural tree takes around 3.5 times as long, as heterogeneous \Htrees are both larger than the corresponding original graphs and contain more node and edge types.

\setlength{\tabcolsep}{\toggleformat{3.8pt}{3.25pt}}
\begin{table}[t!]
    \toggleformat{}{\footnotesize}
    \centering
    \begin{tabular}{l l llll}
        \toprule
        \multicolumn{2}{c}{Message Passing}    & iter. & hidden dim. & learning rate & regularization \\
        \midrule
        \multirow{2}{*}{GCN}  & Original & 3       & $64$   &0.01       & 0.0   \\
           & \Htree{}     & 4  & $128$       &0.01        & 0.0   \\
        \midrule
        \multirow{2}{*}{GraphSAGE}  & Original   & 3       & $128$  & 0.005 & 0.001\\
           & \Htree{}     & 4  & $128$       & 0.005      & 0.001 \\
        \midrule
        \multirow{2}{*}{GAT} & Original & 2   & $128$   & 0.001         & 0.0001    \\
           & \Htree{}    & 4  & $128$    &0.0005      & 0.0001    \\
        \midrule
        \multirow{2}{*}{GIN} & Original  & 3   & $64$      & 0.005      & 0.001 \\
           & \Htree{}     & 4    & $128$  & 0.005     & 0.001 \\
        \bottomrule
    \end{tabular}
    \caption{Stanford3d: tuned hyper-parameters.}\label{tab:param_homogeneous_s3d}
    \togglevspace{0mm}{0mm}
\end{table}

\setlength{\tabcolsep}{5pt}
\begin{table}[t!]
    \toggleformat{}{\footnotesize}
    \centering
    \begin{tabular}{l l lll}
        \toprule
        \multicolumn{2}{c}{Graph Types}    & heads & learning rate & dropout \\
        \midrule
        \multirow{2}{*}{w/o word2vec}  & Original & 5   & 0.001    & 0.2        \\
                                       & \Htree   & 5   & 0.001    & 0.2        \\
        \midrule
        \multirow{2}{*}{with word2vec}  & Original  & 3  & 0.002       & 0.4    \\
                                        & \Htree    & 3  & 0.002       & 0.4        \\
        \bottomrule
    \end{tabular}
    \caption{MP3D: tuned hyper-parameters.}\label{tab:param_mp3d}
    \togglevspace{0mm}{0mm}
\end{table}

\myParagraph{Hyper-Parameter Tuning}
For replicating the experiments in~\cite{Talak21neurips-neuralTree} on Stanford3d, we tune hyper-parameters in the same order as~\cite{Talak21neurips-neuralTree}, by searching over the following sets:
\begin{itemize}
	\item Message passing iterations: [1, 2, 3, 4, 5, 6]
	\item Message passing hidden size: [16, 32, 64, 128, 256]
	\item Learning rate: [0.0005, 0.001, 0.005, 0.01]
	\item Dropout probability: [0.25, 0.5, 0.75]
	\item $L_2$ regularization strength: [0, 0.0001, 0.001, 0.01]
\end{itemize}
For both the standard GNN and the neural tree, the choice of the number of message passing iterations, message passing hidden dimensions, and learning rate has a significantly higher impact on accuracy than the other two hyper-parameters.
Therefore, these three hyper-parameters are first tuned using a grid search while keeping dropout and $L_2$ regularization to the lowest values.
With these three hyper-parameters fixed, the dropout and $L_2$ regularization are then tuned via another grid search.
A dropout ratio of 0.25 turns out to be the optimal choice in all cases.
The other tuned hyper-parameters ---used in the experiment in \cref{tab:mp_acc_ablation}--- are reported in \cref{tab:param_homogeneous_s3d}.
We find the best hyper-parameters to be consistent with those reported in~\cite{Talak21neurips-neuralTree}.
Any changes within the search space does not result in more than 0.5 standard deviations of improvement.

Therefore, we use the same sets of hyper parameters as in~\cite{Talak21neurips-neuralTree} and report them in \cref{tab:param_homogeneous_s3d}.
Apart from the four listed hyper-parameters, some architectures (GAT, GraphSAGE, GIN) have architecture-specific design choices and hyper-parameters.
In the case of GAT, we use 6 attention heads with concatenated output and ELU activation function (instead of ReLU) to be consistent with the original paper. For GraphSAGE, we use the GraphSAGE-mean from the original paper, which does mean-pooling after each convolution operation. In the case of GIN, we use the more general GIN-$\epsilon$ for better performance.
For the additional ablation study in \cref{tab:graph_acc_ablation}, we keep the same hyper parameters, though we set the hidden dimension to 64 for training on heterogeneous graphs.
We use the Adam optimizer and run optimization for 1000 epochs to achieve reasonable convergence during training.

We use a similar hyper-parameter tuning procedure for experiments on MP3D. The parameter range we explored are:
\begin{itemize}
	\item Message passing iterations: [2, 3, 4]
	\item Message passing hidden size: [16, 32, 64, 128]
	\item Number of attention heads: [1, 3, 5]
	\item Learning rate: [0.0005, 0.001, 0.002]
	\item Dropout probability: [0, 0.2, 0.4]
	\item $L_2$ regularization strength: [0, 0.0001, 0.001]
\end{itemize}
For MP3D, all parameters except $L_2$ regularization have significant impact on accuracy. Therefore, we perform grid search over these hyper parameters first, and then tune $L_2$ regularization.
For all setups with more than 1 attention head, we observe averaging over the output performs consistently better than concatenation and therefore use the average when using multi-heads attention.
For both approaches, we tune separate sets of hyper parameters for configurations with and without word2vec object semantic features.
We reuse these hyper parameters for configurations without room edges.
Following this approach, we find that 3 message passing iterations with a hidden dimension of 64 (for the first two iterations) and a $L_2$ regularization constant of 0.001 work best for all setups.
The rest of hyper parameters are summarized in \cref{tab:param_mp3d}.
Training is run for 800 epochs of SGD using the Adam optimizer.

\setlength{\tabcolsep}{\toggleformat{5pt}{4pt}}
\begin{table}[t!]
    \toggleformat{}{\footnotesize}
    \centering
    \begin{tabular}{l l cc}
        \toprule
        {Message Passing} & {PyG version} & Original & H-tree \\
        \midrule
        \multirow{2}{*}{GCN} &  $1.7.0^*$  & $40.88 \pm 2.28 \%$ & $\mathbf{50.63 \pm 2.25 \%}$ \\
            & 2.2.0 & $42.91 \pm    2.01\%$ & $\mathbf{51.45 \pm    2.00 \%}$ \\
        \midrule
        \multirow{2}{*}{GraphSAGE} &  $1.7.0^*$  & $59.54 \pm 1.35 \%$ & $\mathbf{63.57 \pm 1.54 \%}$ \\
            & 2.2.0 & {$ 56.97 \pm  2.02 \% $} & {$\mathbf{59.39 \pm    2.10 \%}$} \\
        \midrule
        \multirow{2}{*}{GAT} &  $1.7.0^*$  & $46.56 \pm 2.21 \%$ & $\mathbf{62.16 \pm 2.03 \%}$ \\
            & 2.2.0 & $45.06 \pm    2.32\%$ & {$\mathbf{53.86 \pm   2.06\%}$ }\\
        \midrule
        \multirow{2}{*}{GIN} &  $1.7.0^*$  & $49.25 \pm 1.15 \%$ & $\mathbf{63.53 \pm 1.38 \%}$ \\
            & 2.2.0 & $48.03 \pm    2.21\%$ & {$\mathbf{59.43 \pm   2.12 \%}$} \\
        \bottomrule
    \end{tabular}
    \caption{Stanford3d: Node classification accuracy using different versions of PyTorch Geometric and the \treeDecomposition from~\cite{Talak21neurips-neuralTree}. PyTorch Geometric version 1.7.0 (starred in the table) corresponds to the version used by \citet{Talak21neurips-neuralTree}.
    Best results in {\bf bold}.}\label{tab:mp_acc_ablation_extra}
    \togglevspace{0mm}{0mm}
\end{table}

\subsection{Additional Experiments on Semi-supervised Node Classification}\label{sec:stanford3d_extra}
The semi-supervised node classification experiment in \cref{tab:mp_acc_ablation} shows a significant accuracy change compared to the same experiment in~\cite{Talak21neurips-neuralTree}.
Our implementation differs from~\cite{Talak21neurips-neuralTree} in two ways: the PyTorch Geometric version used and the \Htree{} decomposition method.
Here, we provide additional experimental results with respect to these changes.

We use PyTorch Geometric 2.2.0 for the results in \cref{tab:mp_acc_ablation}, which has support for heterogeneous GNN operators.
The previous results from~\cite{Talak21neurips-neuralTree} used PyTorch Geometric 1.7.0, which was the latest version at the time of publication.
We keep the tree decomposition the same as in~\cite{Talak21neurips-neuralTree} and re-run the neural tree experiments using PyTorch Geometric 2.2.0.
\Cref{tab:mp_acc_ablation_extra} compares the results trained using different versions.
While the advantage of using the neural tree is retained, the table shows
 a significant accuracy penalty in moving from PyTorch Geometric 1.7.0 to 2.2.0, especially for the neural tree.
The only exception is GCN, where there is a slight improvement for both networks.
Using GraphSAGE, GAT, and GIN message passing architecture, standard GNNs trained on original graphs show 1 to 3\% decrease in accuracy moving to PyTorch Geometric 2.2.0.
For the neural tree models, there is about 4\% accuracy drop using GraphSAGE and GIN, and a more significant 8.3\% drop using GAT.
As a side note, using the proposed \treeDecomposition, we are able to get comparable accuracy using GAT on the \Htree{}  with relative position feature representations, compared to the approach in~\cite{Talak21neurips-neuralTree} (see \cref{tab:graph_acc_ablation}).

The neural tree results in \cref{tab:mp_acc_ablation_extra} computed using PyTorch Geometric 2.2.0 and those in \cref{tab:mp_acc_ablation} also differ in the \Htree{} decomposition method.
The former computes a tree decomposition of the entire graph, while the later breaks down hierarchical graphs into \subgraphs and then combines the decomposed trees according to~\cref{algo:td-hierarchical}.
These results are within 0.5 standard deviations when we use GraphSAGE or GAT.
The proposed tree decomposition gains a 3.19\% advantage with GCN, but a 4.43\% disadvantage with GIN.
This could be due to the difference in the \Htree{} construction, or to the PyTorch Geometric updates which make training certain message passing architectures much more sensitive.

Overall, regardless of PyTorch Geometric versions and \Htree{} decomposition methods, the neural tree maintains a clear advantage over standard GNNs on all message passing architectures.

\subsection{Loop Closure Ablation Parameters}\label{app:lcd}

\Cref{tab:params-V} reports key parameters used for ``V-LC (Permissive)'' and ``V-LC (Nominal)'' in  the loop closure ablation study in \cref{fig:lcd_experiment}.
For the meaning of these parameters we refer the reader to the open-source Kimera implementation from~\cite{Rosinol21ijrr-Kimera}.

\setlength{\tabcolsep}{\toggleformat{4pt}{3.5pt}}
\begin{table}[ht]
    \toggleformat{}{\footnotesize}
    \centering
    \begin{tabular}{ccc}
        \toprule
        Parameter & V-LC (Permissive) & V-LC (Nominal) \\
        \midrule
        L1 Score Threshold & 0.05 & 0.4 \\
        5pt RANSAC Inlier Threshold & \num{1e-5} & \num{1e-5} \\
        Lowe Matching Ratio & 0.9 & 0.9 \\
        \bottomrule
    \end{tabular}
    \caption{Visual loop closure parameters.}\label{tab:params-V}
    \togglevspace{0mm}{0mm}
\end{table}

\Cref{tab:params-SG} reports key parameters used for ``SG-LC'' in  the ablation study in \cref{fig:lcd_experiment}.
SG-LC does not use NSS (Normalized Similarity Scoring) to filter out matches.
As such, the matching threshold for the agent visual descriptors as shown in \cref{tab:params-V} was chosen to produce similar numbers of matches at the visual level as ``V-LC (nominal)''.
For the meaning of each parameter (and details on other parameters), we refer the reader to our open-source implementation at \hydraURL{}.

\begin{table}[ht]
    \toggleformat{}{\footnotesize}
    \centering
    \begin{tabular}{ccc}
        \toprule
        Parameter & SG-LC \\
        \midrule
        Agent L1 Match Threshold & 0.01 \\
        Object L1 Match Threshold & 0.3 \\
        Places L1 Match Threshold & 0.5 \\
        5pt RANSAC Inlier Threshold & \num{1e-6} \\
        Lowe Matching Ratio & 0.8 \\
        Object L1 Registration Threshold & 0.8 \\
        Object Minimum Inliers & 5 \\
        TEASER Noise Bound [m] & 0.1 \\
        \bottomrule
    \end{tabular}
    \caption{Scene graph loop closure parameters.}\label{tab:params-SG}
    \togglevspace{0mm}{0mm}
\end{table}

Finally, \cref{tab:params-SG-GNN} reports key model parameters used for the GNN architecture of ``SG-GNN'' in  the ablation study in \cref{fig:lcd_experiment}.
In this table, we denote a multi-layer perceptron (MLP) as \mlp{i, h_1, h_2, \ldots, h_n, o}, where $i$ is the input feature size, $o$ is the last (\ie{} output) layer size, and $h_1, h_2 \ldots, h_n$ are the hidden layer sizes.
All intermediate layers are connected via a ReLU nonlinearity activation function.
In our architecture, the input node and edge features are passed through node and edge MLP encoders and then fed through multiple iterations of message passing.
After message passing, the resulting node embedding are aggregated through another MLP (Graph Aggregation in \cref{tab:params-SG-GNN}), and then the final aggregated result is passed through another MLP (Graph Transform in \cref{tab:params-SG-GNN}).
As we follow the architecture from~\cite{Li19icml-GraphMatching}, we refer the reader to that paper for more  details.

\setlength{\tabcolsep}{\toggleformat{4pt}{3.0pt}}
\begin{table}[ht]
    \toggleformat{}{\footnotesize}
    \centering
    \begin{tabular}{ccc}
        \toprule
        Parameter & Places & Objects \\
        \midrule
        Message Passing Iterations & 3 & 3 \\
        Message Passing Channels & 256 & 128 \\
        Edge Encoder & \mlp{1, 3, 3, 1} & \mlp{1, 3, 3, 1} \\
        Node Encoder & \mlp{2, 3, 3} & \mlp{43, 128, 64} \\
        Graph Aggregation & \mlp{128, 128} & \mlp{128, 128} \\
        Graph Transform & \mlp{128, 96, 64, 64} & \mlp{128, 96, 64, 64} \\
        \bottomrule
    \end{tabular}
    \caption{Model Parameters for SG-GNN.}\label{tab:params-SG-GNN}
    \togglevspace{0mm}{0mm}
\end{table}

\begin{figure}[h!]
    \centering
    \subfloat[]{\centering
        \includegraphics[width=0.45\textwidth]{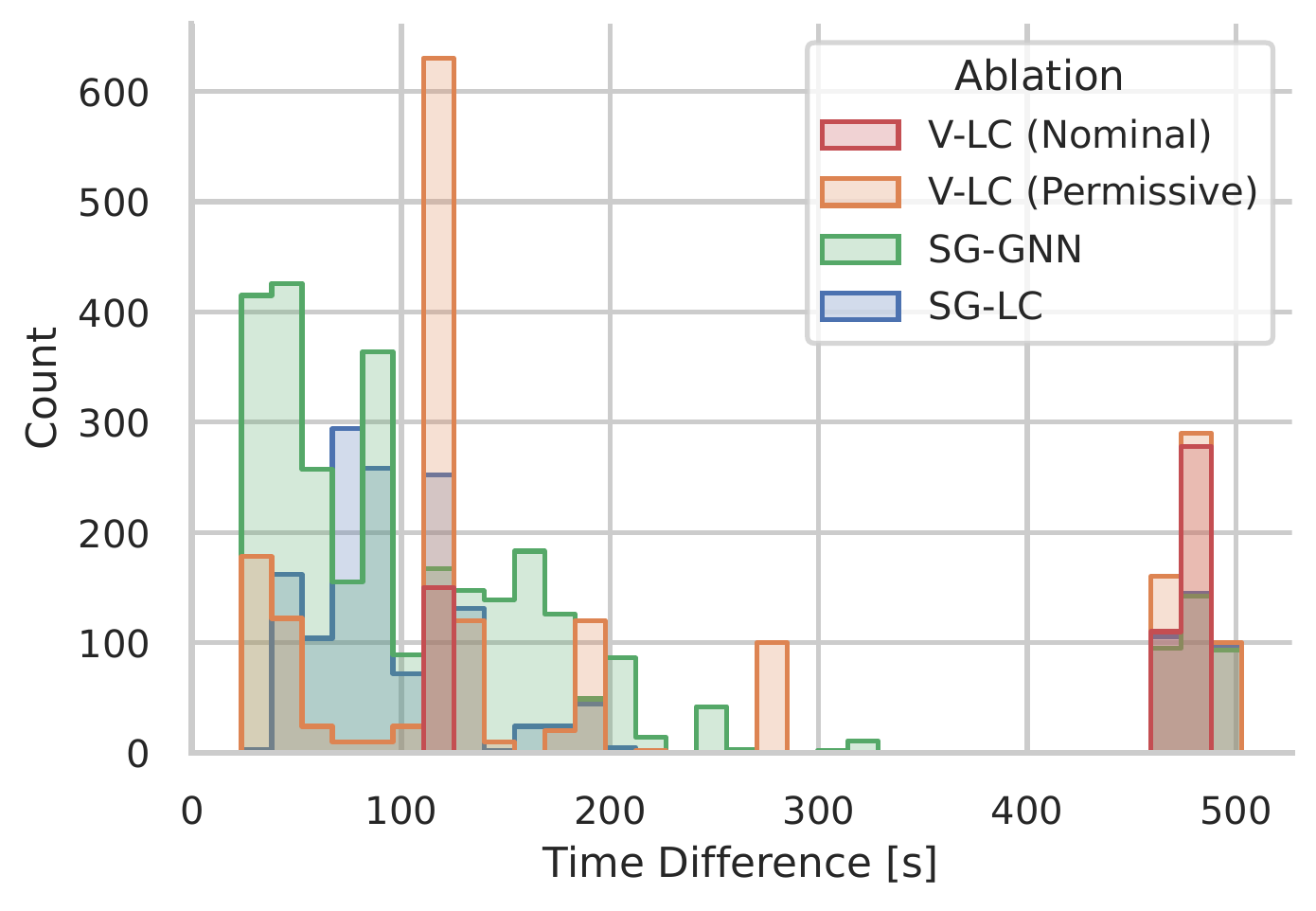}\label{fig:lcd_time_diff}
    } \\
    \subfloat[]{\centering
        \includegraphics[width=0.45\textwidth]{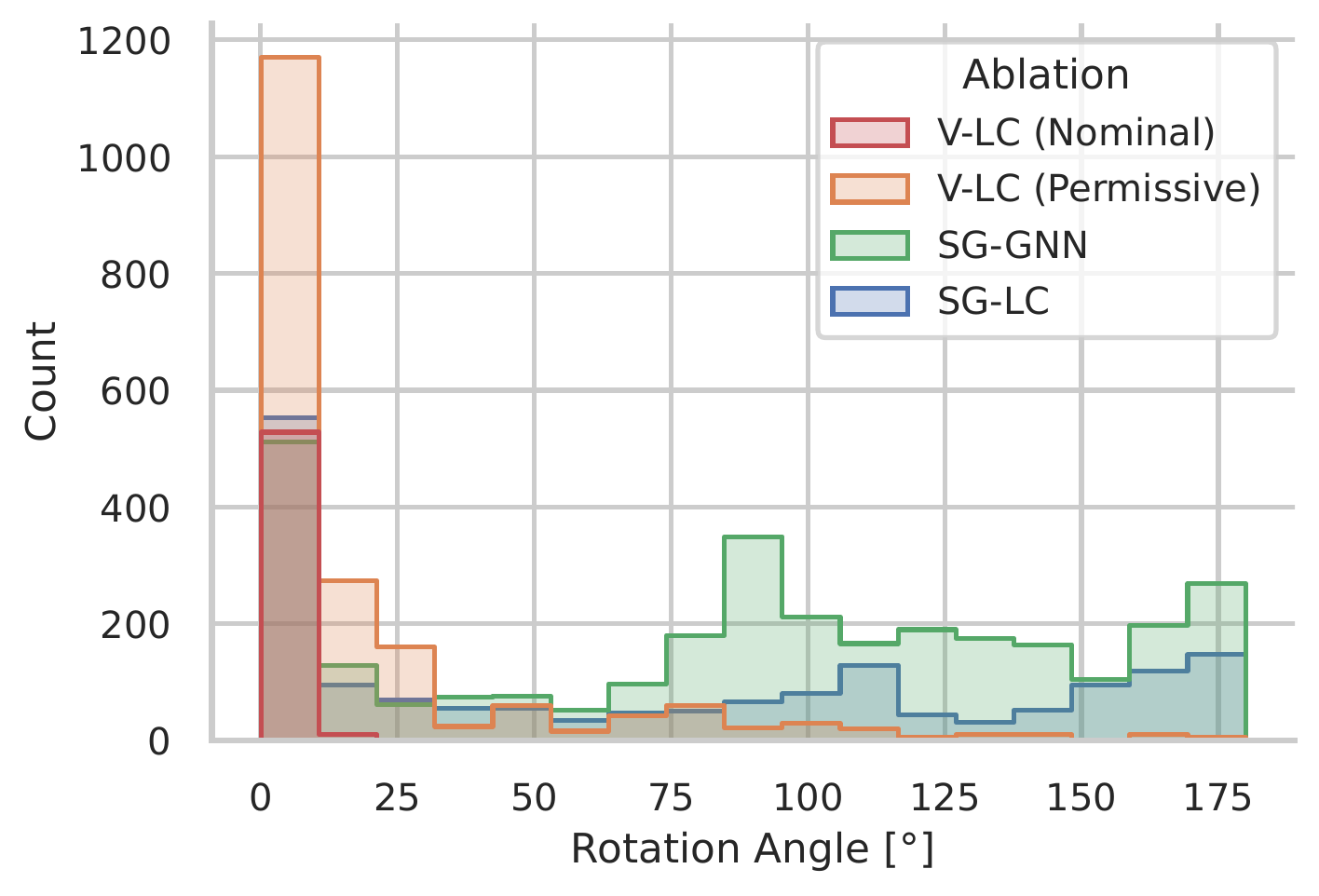}\label{fig:lcd_rot_angle}
    }
    \caption{(a) Distribution of time difference in seconds between current and matched agent poses for detected loop closures. (b) Distribution of rotation angle in degrees between current and matched agent poses for detected loop closures.}\label{fig:lcd_stats}
    \togglevspace{0mm}{0mm}
\end{figure}

\subsection{Additional Loop Closure Experiments}\label{app:lcd-stats}

Here we report additional results examining the properties of the loop closure detection configurations presented in \cref{fig:lcd_experiment}.
In particular, we show two different pieces of data for each loop closure that was detected over the five trials: (i) the timestamp difference in seconds between the current pose of the agent and the matched pose of the agent in \cref{fig:lcd_time_diff} and (ii) the rotation angle between the current pose of the agent and the matched pose of the agent in \cref{fig:lcd_rot_angle}.

We note two important trends.
One is that in \cref{fig:lcd_time_diff} the vision-based loop closure detection configurations (\ie{} ``V-LC (Nominal)'' and ``V-LC (Permissive)'') are centered around \SI{100}{\second} and \SI{500}{\second}, which correspond to the portions of the recorded data for the uHumans2 office scene where the robot revisits the same area in the scene with a similar viewpoint.
This is also supported by both vision-based configurations having a rotation angle between the current and matched pose centered around \SI{0}{\degree} in \cref{fig:lcd_rot_angle}.
In contrast, both ``SG-GNN'' and ``SG-LC'' have a wider distribution of time differences and rotation angles in \cref{fig:lcd_time_diff} and \cref{fig:lcd_rot_angle}, suggesting an improved viewpoint invariance.

Note that some of the loop closures resulting from ``SG-GNN'' and ``SG-LC'' have a small time difference (see left-most part of \cref{fig:lcd_time_diff}).
In these cases, the \subgraphs being registered often are not disjoint (\ie{} a subset of nodes is common to both), and the registration (via TEASER, as described in \cref{sec:LCD})  returns an identity pose between the two \subgraphs.
This relative pose already agrees with the current estimate of the relative pose between the current and the match agent node,  hence it does not provide any additional information to the optimization (but does act to constrain the deformation of the scene locally).
Regardless, ``SG-LC'' and ``SG-GNN'' outperform vision-only loop closures with respect to the overall system performance of \name{}.

The second trend to note is that ``SG-GNN'' has a wider distribution of time differences than ``SG-LC'', especially in the range of \SIrange{150}{250}{\second}.
While many of the loop closures detected by both of these configurations are from overlapping regions of the scene graph, ``SG-GNN'' does appear to provide loop closures over a slightly longer horizon than ``SG-LC''.

\end{document}